%% file: arxiv_version.tex
\documentclass{article}
\usepackage[utf8]{inputenc}

\usepackage[margin=1in]{geometry}
\usepackage{amsmath, amssymb, amscd, amsthm, amsfonts}
\usepackage{graphicx}
\usepackage{subcaption}
\usepackage[affil-it]{authblk} % Load the package
\usepackage{natbib}
\usepackage{multirow}
\usepackage{microtype}
\usepackage{graphicx}
\usepackage{subcaption}
\usepackage{titletoc}
\usepackage{xcolor}
\usepackage{colortbl}
\usepackage{booktabs} % for professional tables
\usepackage{algpseudocode, algorithm}
\usepackage{import}

\usepackage{algpseudocode, float}
\usepackage{hyperref}
\input{math_commands.tex}

\newcommand{\cmark}{$\checkmark$}

\newcommand{\setupappendixtoc}{
  \titlecontents{section}
    [1.5em] % Left indent
    {\addvspace{0.3pc}\bfseries} % Spacing above and bold font
    {\contentslabel{1.5em}} % Label (A, B, etc.)
    {\hspace*{-1.5em}}
    {\titlerule*[0.5pc]{.}\contentspage} % Dotted leaders

  \titlecontents{subsection}
    [3.8em] % Left indent (aligned further right)
    {\addvspace{0.1pc}\small} % Tighter spacing, smaller font
    {\contentslabel{2.3em}} % Label (C.1)
    {\hspace*{-2.3em}}
    {\titlerule*[0.5pc]{.}\contentspage}

  \titlecontents{subsubsection}
    [6.0em] % Further indent
    {\addvspace{0.1pc}\footnotesize\itshape} % Even tighter, italicized
    {\contentslabel{3.0em}}
    {\hspace*{-3.0em}}
    {\titlerule*[0.5pc]{.}\contentspage}
}

\hypersetup{
    colorlinks=true,
    linkcolor=blue,    % Color for internal links (sections, pages, etc.)
    citecolor=blue,   % Color for bibliographical citations
    urlcolor=cyan,     % Color for external URLs
}

\title{\makebox[\textwidth][c]{Divide and Learn: Multi-Objective Combinatorial Optimization at Scale}}
\author{
  \begin{minipage}{0.3\textwidth}
    \centering
    Esha Singh \\
    \small Department of Computer Science \& Engineering 
    \small UC, San Diego
  \end{minipage}
  \begin{minipage}{0.3\textwidth}
    \centering
    Dongxia Wu \\
    \small Department of Statistics \\
    \small Stanford University
  \end{minipage}
  \begin{minipage}{0.3\textwidth}
    \centering
    Chien-Yi Yang \\
    \small Department of Computer Science \& Engineering 
    \small UC, San Diego\\
  \end{minipage}
  \begin{minipage}{0.3\textwidth}
  \vspace{0.5cm}
    \centering
    Tajana Rosing \\
    \small Department of Computer Science \& Engineering
    \small UC, San Diego\\
  \end{minipage}
  \begin{minipage}{0.3\textwidth}
    \centering
    \vspace{0.5cm}
    Rose Yu \\
    \small Department of Computer Science \& Engineering
    \small UC, San Diego\\
  \end{minipage}
  \begin{minipage}{0.3\textwidth}
    \centering
    \vspace{0.5cm}
    Yi-An Ma \\
    \small  Halıcıoğlu Data Science Institute \\
    \small UC, San Diego\\
  \end{minipage}
}
\date{}

\theoremstyle{plain}
\newtheorem{theorem}{Theorem}[section]
\newtheorem{lemma}[theorem]{Lemma}
\newtheorem{corollary}[theorem]{Corollary}
\newtheorem{proposition}[theorem]{Proposition}

\theoremstyle{definition}
\newtheorem{definition}[theorem]{Definition}

\theoremstyle{remark}
\newtheorem{remark}[theorem]{Remark}

\algnewcommand\Output{\item[\textbf{Output:}]}
\algnewcommand\Initialize{\item[\textbf{Initialize:}]}
\algnewcommand\Input{\item[\textbf{Input:}]}

\begin{document}

\maketitle

\begin{abstract}
Multi-objective combinatorial optimization seeks Pareto-optimal solutions over exponentially large discrete spaces, yet existing methods sacrifice generality, scalability, or theoretical guarantees. We reformulate it as an online learning problem over a decomposed decision space, solving position-wise bandit subproblems via adaptive expert-guided sequential construction. This formulation admits regret bounds of $O(d\sqrt{T \log T})$ depending on subproblem dimensionality \(d\) rather than combinatorial space size. On standard benchmarks, our method achieves 80--98\% of specialized solvers performance while achieving  two to three orders of magnitude improvement in sample and computational efficiency over Bayesian optimization methods. On real-world hardware-software co-design for AI accelerators with expensive simulations, we outperform competing methods under fixed evaluation budgets. The advantage grows with problem scale and objective count, establishing bandit optimization over decomposed decision spaces as a principled alternative to surrogate modeling or offline training for multi-objective optimization.
\end{abstract}
% within 5--20\% 
\section{Introduction}
\label{introduction}
Multi-objective combinatorial optimization (MOCO) requires balancing multiple competing objectives over combinatorial spaces. Oftentimes,  each evaluation can cost hours of simulation, hence exhaustive search becomes impossible and gradient descent inapplicable. Yet this is precisely the setting practitioners face in hardware-software co-design, protein engineering, and drug discovery~\cite{hwsf, krishnan2023archgymopensourcegymnasiummachine, romero2013navigating, brown2019guacamol}: approximating Pareto-optimal (i.e., solutions where no objective can improve without degrading another) trade-offs over discrete or mixed-integer variables, with no closed-form gradients and no cheap evaluations.

Existing approaches to MOCO include surrogate-based methods such as Multi-objective Bayesian optimization which has advanced significantly for continuous domains~\cite{couckuyt2014FastCO,zhao2019fastexactcomputationexpected,knowles2006parego} but struggle in 
combinatorial settings. Discrete spaces introduce distinct challenges: search spaces grow exponentially ($|\mathcal{X}| = O(k^n)$ for \(n\) decision variables with \(k\) values each)~\cite{baptista2018bayesian, oh2019combo}, smooth structure is absent for surrogate modeling~\cite{garrido2020dealing}, and correlation patterns between configurations are difficult to capture with standard kernels~\cite{saves2023mixed}. Classical evolutionary methods~\cite{deb2002fast,zhang2007moea} and, more recently, neural methods that learn search policies via deep reinforcement learning~\cite{bello2016neural,kool2019attention} have emerged; Multi-objective extensions ~\cite{lin2022pareto} achieve strong generalization across problem instances but demand millions of training samples and substantial computational overhead. Any practical algorithm must therefore balance three competing demands: (i) computational scalability as the search space grows, (ii) sample efficiency under expensive evaluation budgets, and (iii) tractable per-iteration complexity. Achieving this balance remains hard: surrogate methods minimize evaluations but incur cubic model overhead; evolutionary heuristics offer generality but lack convergence guarantees; problem-specific heuristics~\cite{helsgaun, tinos2018} exploit structure effectively but do not generalize across domains.

We propose an alternative framing: MOCO as a sequential decision problem under uncertainty, where feedback from each evaluation informs subsequent search. This formulation requires no offline training, learns directly from expensive evaluations, and admits formal regret guarantees. The key insight is that tractable structure can be imposed on exponential search spaces, enabling principled exploration where exhaustive search is infeasible.

\paragraph{Contributions} We make the following contributions
\begin{itemize}
    \item We propose \textsc{Divide \& Learn (D\&L)}, a novel \emph{position-wise bandits with multiple experts} (\textsc{D\&L}) framework, for online \textbf{multi-objective combinatorial optimization}, that transforms intractable exponential search into efficient sequential decision-making by exploiting problem structure and adapting to observed rewards—without offline training or surrogate models.
    \item We establish regret bounds of $O(d\sqrt{T\log T})$ that depend only on subproblem size $d \ll n$ (with \(n\) decision variables and T iterations), rather than on the size of the combinatorial action space $|\mathcal{X}| \in \{2^n, n!\}$—an exponential-to-polynomial reduction over standard combinatorial bandit approaches.
    \item We validate on standard MOCO benchmarks with search spaces up to $10^{60}$ configurations and a real-world hardware-software co-design problem with extreme evaluation costs. \textsc{D\&L} achieves 80--98\% of specialized solvers performance, matches offline-trained neural solvers without their training overhead, uses 90\% less compute than Bayesian optimization, yielding superior Pareto front diversity—with advantages growing at larger scales and objective counts.
\end{itemize}

\section{Related Work}
\textbf{Surrogates, Evolutionary, and Neural Methods.}
Multi-objective Bayesian optimization uses Gaussian Process (GP) surrogates and scalarizations such as ParEGO~\cite{knowles2006parego} or EHVI~\cite{daulton2021parallel}, but scales cubically in evaluations and requires intractable acquisition optimization over combinatorial domains. Evolutionary approaches such as MOEA/D~\cite{zhang2007moea} decompose objectives via weight vectors and rely on heuristic operators without finite-time guarantees in black-box combinatorial settings. Neural combinatorial optimization methods amortize inference through offline training~\cite{kool2019attention,lin2022pareto,chen2023neuralmultiobjectivecombinatorialoptimization} but require large training corpora and may degrade under distribution shift.

\paragraph{Decomposition-based MOCO.}
MOEA/D~\cite{zhang2007moea} decomposes in \textbf{objective space}: subproblems 
correspond to weight vectors, solved via evolutionary operators with implicit 
coordination through mating restriction. \textsc{D\&L} decomposes in 
\textbf{decision space}: subproblems correspond to variable groups, solved via 
bandit-based selection with explicit Lagrangian coordination. The extensive 
MOEA/D literature~\cite{qi2014moead_awa,jiang2011moead_pa,cheng2016rvea,
trivedi2017moead_survey} focuses on adaptive weights and scalarization without 
global regret guarantees. Existing theoretical analyses~\cite{huang2021runtime,
li2015primary} are \emph{post-hoc} under restrictive assumptions; \textsc{D\&L} 
yields $O(d\sqrt{T\log T})$ regret as a direct consequence of its formulation.

\paragraph{Neural Combinatorial Optimization.}
Attention-based models~\cite{bello2016neural,kool2019attention} achieve strong 
single-objective performance; multi-objective extensions include preference-conditioned 
models (PMOCO)~\cite{lin2022pareto} and diversity-enhanced heuristics 
(NHDE)~\cite{chen2023neuralmultiobjectivecombinatorialoptimization}. These require 
substantial offline training and may need retraining when problem structure 
shifts~\cite{angioni2025neural}. \textsc{D\&L} requires no pretraining and adapts 
online, suited to expensive evaluations and limited budgets.

% \paragraph{Combinatorial Bandits.}
% Combinatorial bandit methods~\cite{chen2013combinatorial,kveton2015tight} achieve sublinear regret under \emph{semi-bandit feedback}, observing per-component rewards rather than aggregate solution quality. Linear bandits~\cite{dani2008stochastic} handle full-bandit feedback but impose linear reward structure, and multi-objective (MO) extensions~\cite{oner2018comamab} inherit both restrictions. These assumptions fail in black-box MOCO, where evaluating a solution yields only aggregate objective values and rewards may depend nonlinearly on global interactions.
\paragraph{Combinatorial Bandits.}
Combinatorial bandit methods~\cite{chen2013combinatorial,kveton2015tight} achieve 
sublinear regret under \emph{semi-bandit feedback}, observing per-component rewards 
rather than aggregate solution quality. Linear bandits~\cite{dani2008stochastic} 
handle full-bandit feedback but impose linear reward structure, and multi-objective 
extensions (COMO-MAB)~\cite{oner2018comamab} inherit both restrictions, enabling 
stronger estimators and Pareto regret guarantees at the cost of generality. 
Adversarial variants~\cite{cesa2012combinatorial,audibert2014regret} address 
non-stochastic settings but still require semi-bandit observations. These 
assumptions fail in black-box MOCO, where evaluating a solution yields only 
aggregate objective values and rewards may depend nonlinearly on global interactions. 
D\&L targets this harder feedback model, trading estimator strength for generality 
while achieving regret scaling with subproblem dimension $d$ rather than $n$.

\paragraph{Online learning for black-box MOCO} faces exponential action spaces, non-differentiable objectives, and costly evaluations, precluding enumeration, gradient-based optimization, and exhaustive sampling; any sample-efficient algorithm must therefore select queries adaptively based on past observations—the explore--exploit structure formalized by online learning, yet existing combinatorial and MO bandit formulations rely on semi-bandit feedback, linear structure, or oracle selection, enabling tighter estimators but limiting applicability in black-box combinatorial settings.
% yet existing combinatorial bandit results rely on semi-bandit feedback, while multi-objective bandit formulations assume either linear rewards or non-combinatorial action spaces. 
To the best of our knowledge, there are no regret-theoretic results or application of black-box MOCO under full-bandit (scalar) feedback. \textsc{D\&L} bridges this gap by decomposing combinatorial decisions into coordinated local components learned online from aggregate feedback, achieving finite-time regret guarantees with lightweight coordination (Sections~\ref{sec:our-method},~\ref{sec:regret-bounds}).

 \section{Preliminaries}
\label{sec-prelims}
\textbf{Problem Statement.} We study multi-objective optimization over combinatorial domains:
\begin{equation}
\min_{x \in \gX}\; \vf(x) = [f_1(x), \ldots, f_m(x)]^\top
\end{equation}
where each objective $f_i: \gX \to \sR$ is a black-box, expensive-to-evaluate, non-convex, and possibly non-smooth function. The domain $\gX$ is a combinatorial or mixed-discrete space comprising permutations, binary vectors, integer or categorical assignments, or combinations thereof (see Appendix~\ref{appex:moco}). Consequently, the search space scales as $|\gX| = O(n!)$ or $O(2^n)$ depending on problem structure.

\textbf{Pareto Optimality.} 
A solution $x \in \gX$ is \emph{Pareto optimal} if no $x' \in \gX$ satisfies $f_i(x') \leq f_i(x)$ for all $i \in \{1,\ldots,m\}$ with at least one strict inequality. The \emph{Pareto set} $\gP^* \subset \gX$ comprises all Pareto optimal solutions:
\(\gP^* = \{x \in \gX : \nexists\, x' \in \gX,\; \vf(x') \preceq \vf(x) \land \vf(x') \neq \vf(x)\}\)
where $\preceq$ denotes component-wise inequality (Pareto dominance); image $\vf(\gP^*)$ in objective space is the \emph{Pareto front}. 
% The set of all Pareto-optimal solutions is called Pareto-optimal set (PS).

% \textbf{Gaussian Processes} In Bayesian optimization~\cite{mokusBO}, we generally model each objective \(f_i\) using a Gaussian Process (GP) prior: \(f_i \sim \mathcal{GP}(m_i(\vx), k_i(\vx, \vx'))\), where \(m_i: \gX \rightarrow \sR\) is the mean function (typically zero) and \(k_i: \gX \times \gX \rightarrow \sR\) is the kernel function. Given observations \(\gD_t = \{(\vx^{(1)}, \vy^{(1)}), \ldots, (\vx^{(t)}, \vy^{(t)})\}\) where output vector \(\vy^{(j)} = [f_1(\vx^{(j)}), \ldots, f_m(\vx^{(j)})]^{\top} + \epsilon\), with \(\epsilon \sim \mathcal{N}(\vzero, \sigma^2\mI)\), the posterior distribution for each objective is:
% \[f_i(\vx) | \gD_t \sim \mathcal{N}(\mu_{i,t}(\vx), \sigma_{i,t}^2(\vx))\]
% where the \(\mu_{i,t}(\vx)\) denotes posterior mean and \(\sigma_{i,t}^2(\vx)\) variance. The next evaluation point $\vx^{(t+1)}$ is selected by maximizing an \textbf{acquisition function} $\alpha: \gX \rightarrow \sR$ that trades off exploration and exploitation:
% $$\vx^{(t+1)} = \argmax_{\vx \in \gX} \alpha(\vx | \gD_t)$$
% Some common multi-objective acquisition functions used include Expected Hyper-volume Improvement (EHVI), ParEGO~\cite{knowles2006parego} and Upper confidence bound (UCB) $\alpha_{\text{UCB}}(\vx) = \mu_{i,t}(\vx) + \beta_t^{1/2}\sigma_{i,t}(\vx)$ for each objective~\cite{auer2002finite}. Extended discussion in Appendix~\ref{app:related-work}.

\textbf{Gaussian Processes.} Bayesian optimization~\cite{mokusBO} models objectives via Gaussian Processes priors, yielding posterior mean \(\mu_{i,t}(x)\) and variance \(\sigma_{i,t}^2(x)\) that guide acquisition functions balancing exploration-exploitation. Common multi-objective acquisitions include EHVI, ParEGO~\cite{knowles2006parego}, and UCB~\cite{auer2002finite}.

\textbf{Acquisition Functions.} In sequential optimization, acquisition functions balance exploration and exploitation. Upper Confidence Bound (UCB)~\cite{auer2002finite} selects actions \(a\) maximizing $\hat{V}(a) + c\sqrt{\log t / N(a)}$—estimated value plus an uncertainty bonus that shrinks with visits. \textsc{D\&L} adapts this principle to combinatorial domains via position-wise bandits (Section~\ref{sec:multi-expert}).

\section{The \textsc{D\&L} Algorithm}
\label{sec:our-method}
\subsection{Online Learning Formulation}
We reformulate multiobjective combinatorial optimization as an online learning problem over $T$ iterations~\cite{Cesa-Bianchi_Lugosi_2006}. At each iteration $t$, the learner selects a solution $x_t \in \gX$ and observes a scalar reward $r_t(x_t) = \phi(\mathbf{f}(x_t)) + \epsilon_t$, where $\mathbf{f}(x) = (f_1(x), \ldots, f_m(x))$ is the multi-objective vector, $\phi(\cdot)$ is a scalarization \footnote{Scalarization defines the reward signal $r_t = \phi(\mathbf{f}(x_t))$, not the optimization procedure—\textsc{D\&L} is scalarization-agnostic. See~\ref{app:scalarization}} (e.g., weighted sum), and $\epsilon_t$ is stochastic noise~\cite{erghott2005}. Furthermore, the reward function \(r_t(\cdot)\) is \textbf{non-convex} and observed with bounded, zero-mean stochastic noise, i.e., \(\E\,[\epsilon_t\mid x_1,r_1,\ldots,x_{t-1},r_{t-1}] = 0 \) and \(|\epsilon_t| \le \sigma\) almost surely. While \textsc{D\&L} is empirically robust to non-stationarity, our analysis focuses on this stochastic setting without convexity or 
linearity assumptions. Crucially, unlike standard multi-armed or semi-bandit settings, the learner observes only a single scalar reward for the entire selected solution—\emph{full-bandit feedback}~\cite{combes2015combinatorialbanditsrevisited}—not per-position or per-arm observations. This is among the most challenging feedback models for combinatorial optimization. 

A solution $x = (x_1, \ldots, x_n)$ assigns action $x_i \in \mathcal{A}_i$ to each position $i \in [n]$; we treat each position as a multi-armed bandit with arm set \(\gA_i\). The exponential cardinality \(|\gX| = O(2^n)\) or \(O(n!)\) combined with bandit feedback renders direct application of online learning algorithms intractable. Our objective is to minimize cumulative regret against the best fixed solution in hindsight for the induced scalar reward:
\[R(T) = \sum_{t=1}^T r_t(x^*) - \sum_{t=1}^T r_t(x_t), \quad x^* = \arg\max_{x \in \gX} \mathbb{E}[r_t(x)]\]
To approximate the Pareto front, we execute \textsc{D\&L} with multiple scalarizations; under finite evaluation budgets, each run yields a regret-bounded approximation. To enable tractable regret analysis, we impose the following weak regularity conditions on the problem. % To approximate the Pareto front, we execute \textsc{D\&L} with multiple scalarizations, each yielding a regret-bounded solution for its respective trade-off. To enable tractable regret analysis, we impose the following weak regularity conditions on the problem.
\begin{definition}[\textbf{Assumptions}]
\label{def:assumptions}
We require (detailed formal statements in Appendix~\ref{app:proofs:assumptions}):
\begin{enumerate}
    \item[A.1] \textbf{(Decomposability)} $\gX$ decomposes into 
   $K$ overlapping subproblems $\{\gX^k\}_{k=1}^K$ with overlap set~$\mathcal{O}$.
    \item[A.2] \textbf{(Discrete Lipschitz / Bounded Differences)} $|f_j(x) - f_j(x')| \leq L \cdot \delta(x, x')$ for problem-specific metric $\delta$. 
    This bounds global sensitivity under arbitrary solution perturbations.
    \item[A.3] \textbf{(Bounded Coupling)} Modifying subproblem $k$ induces bounded non-local effects: $|f_j(x') - f_j(x)| \leq L \cdot \delta(x, x') + C \cdot |S_k \cap \gO|$, where $S_k \subseteq [n]$ is the index set of subproblem $k$.
    % where $I_k$ indexes subproblem $k$. 
    This only requires that cross-subproblem interactions scale linearly with overlap---not additive decomposability of $f$.
    \item[A.4] \textbf{(Bounded Range)} Objectives satisfy $f_j(x) \in [0, B]$ for all $x \in \gX$ and $j \in [m]$.
\end{enumerate}
\end{definition}
 Given these conditions, we now describe how \textsc{D\&L} achieves sublinear regret through problem decomposition and an ensemble of no-regret algorithms~\cite{freund1997decision, arora2012multiplicative}. At each position, one algorithm—termed an \emph{expert}—is stochastically selected to propose an action based on learned action-quality estimates, enabling heterogeneous exploration across decision coordinates. All experts update from every observed reward—decoupling strategy selection from reward assumptions.
 % selection from learning. 
 The selection distribution adapts to local estimation uncertainty, favoring exploration where confidence is low and exploitation as estimates converge.
% At each iteration \(t\), the learner selects a complete solution \(x_t \in \gX\) from the discrete combinatorial space and observes a single scalar reward \(r_t(x_t)\) from a scalarized objective function. 
% The learner only observes the reward for the selected solution \(x_t\). Furthermore, the reward function \(r_t(\cdot)\) is \textbf{non-convex} and observed with bounded, zero-mean stochastic noise, i.e., \(\E\,[\epsilon_t\mid x_1,r_1,\ldots,x_{t-1},r_{t-1}] = 0 \) and \(|\epsilon_t| \le \sigma\) almost surely.
% potentially textbf{non-stationary} (i.e., \(r_t \neq r_{t'}\) for \(t \neq t'\)), with no structural assumptions beyond boundedness. Crucially, the learner only observes the reward for the selected solution \(x_t\)
% We cast \textsc{D\&L} as an online learning problem over $T$ iterations~\cite{Cesa-Bianchi_Lugosi_2006}. At each iteration $t$, the learner selects a solution $x_t \in \gX$ and observes a scalar reward $r_t(x_t) = \phi(\mathbf{f}(x_t)) + \epsilon_t$, where $\mathbf{f}(x) = (f_1(x), \ldots, f_m(x))$ is the multi-objective vector, $\phi(\cdot)$ is a scalarization \footnote{Scalarization defines the reward signal $r_t = \phi(\mathbf{f}(x_t))$, not the optimization procedure—\textsc{D\&L} is scalarization-agnostic. See~\ref{app:scalarization}} (e.g., weighted sum), and $\epsilon_t$ is stochastic noise~\cite{erghott2005}. 

Since maintaining statistics over $|\gX|$ is intractable, experts share parameters at the position-action level, requiring $O(n \cdot |\mathcal{A}|)$ memory. The shared parameters are the minimal statistics each expert requires (Section~\ref{sec:multi-expert}). When a reward is observed, all position-action pairs in the selected solution update simultaneously. Combined with subproblem decomposition (Section~\ref{sec:decomposition}), this enables information transfer: observations in one subproblem improve decisions wherever the same variables appear.
\begin{figure*}[!t]
    \centering
    % Include the compiled PDF using \includegraphics
    \includegraphics[trim={0.4cm 0.55cm 0.1cm 0.8cm}, clip, width=1\linewidth, height=0.24\linewidth]{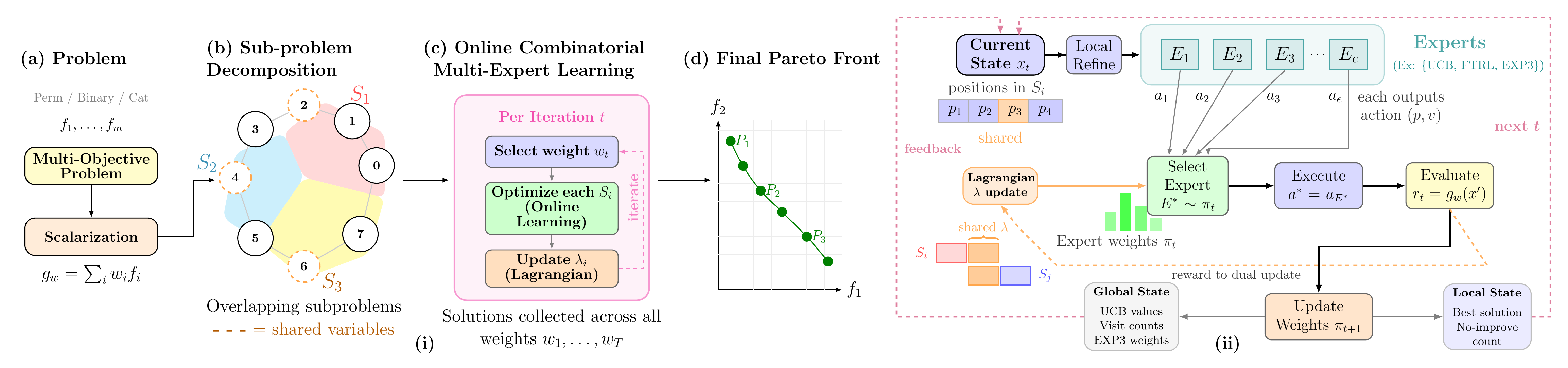} % {left bottom right top}
    \caption{\small \textbf{Overview of \textsc{D\&L}.} (i) Any multi-objective problem is scalarized \& decomposed into overlapping subproblems with shared variables (\textcolor{orange!70!black}{dashed nodes} in (b)). Bandit-based action selection optimizes each subproblem independently; Lagrangian duality enforces cross-subproblem consistency (ii) \textbf{Online Combinatorial Multi-expert learning:}  For each position in a subproblem, an expert (this work: \texttt{UCB,FTRL,EXP3} or \texttt{TS,EXP3}, see Section~\ref{sec:multi-expert}) is sampled via the mixture distribution $\pi_t$ to propose an action. All experts update shared position-action statistics, amortizing exploration across exponential solution spaces. Lagrangian multipliers coordinate overlaps.}
    \label{fig:overview}
\end{figure*}

\begin{algorithm}[!t]
\small
\caption{\textsc{D\&L}: Position-wise Bandits with Multiple Experts}
\label{alg:main}
\begin{algorithmic}[1]
\Require Size $n$, objectives $\mathbf{f}:\mathcal{X}\to\mathbb{R}^m$, subproblems $\{S_1, \ldots, S_K\}$, weights $\mathcal{W}$, iters $T$, $\hat{V}$: values, $N$:counts, $W$: EXP3 weights, $\tilde{L}$: FTRL losses
\Output Pareto archive $\mathcal{P}$
\For{$w \in \mathcal{W}$} \Comment{Each scalarization}
    \State $\hat{V}, N, W, \tilde{L} \gets \mathbf{0}_{n \times |\mathcal{A}|}$; $\lambda \gets \mathbf{0}_n$; $r^* \gets -\infty$
    \State $x^{(0)}_i \gets \textsc{SelectAction}(i)$ $\forall i \in [n]$ \Comment{\textsc{alg.}~\ref{alg:select}: Expert-guided init}
    \For{$t = 1, \ldots, T$}
        \For{$k = 1, \ldots, K$} $x^{(t)} \gets \textsc{LocalRefine}(x^{(t)}, S_k)$ \Comment{\textsc{Alg}.~\ref{alg:local-search}}
        \EndFor
        \State $r^{(t)} \gets \sum_i w_i f_i(x^{(t)})$ \Comment{Scalarized reward}
        \State \textsc{UpdateExperts}$(x^{(t)}, r^{(t)}, \hat{V}, N, W, \tilde{L})$ \Comment{\textsc{Alg}.~\ref{alg:clipped-algo-update}}
        \State $\lambda^{(t)} \gets \textsc{DualUpdate}(\lambda^{(t-1)}, x^{(t)}, \mathcal{S})$ \Comment{\textsc{Alg}.~\ref{alg:lagrangian_coordination}}
        \If{$r^{(t)} > r^*$} $x^* \gets x^{(t)}$; $r^* \gets r^{(t)}$ \Comment{Track best} \EndIf 
    \EndFor
    \State $\mathcal{P} \gets \mathcal{P} \cup \{(x^*, \mathbf{f}(x^*))\}$ \Comment{Add to Pareto archive}
\EndFor
\end{algorithmic}
\end{algorithm}
\subsection{Algorithm Overview}

\textsc{D\&L} addresses the intractability of combinatorial bandits through three coordinated mechanisms, see Algorithm~\ref{alg:main}, Figure~\ref{fig:overview}. First, we \emph{decompose} the solution space into $K$ overlapping subproblems, each involving $O(n/K)$ variables, reducing effective action space from $O(2^n)$ to $O(K \cdot 2^{n/K})$. \emph{Lagrangian relaxation} (\textsc{DualUpdate}) reconciles overlaps through dual multipliers, ensuring cross-subproblem consistency without destroying separability. Second, \emph{multi-expert learning} (\textsc{SelectAction}) proposes actions at each position by sampling from experts sharing global statistics; \textsc{LocalRefine} captures within-subproblem interactions via zeroth-order perturbations. Third, all experts update statistics (\textsc{UpdateExperts}) from observed reward $r^{(t)}$, decoupling selection from learning. We detail each component in the following subsections.
\subsubsection{Decomposition Strategy}
\label{sec:decomposition}
The exponential action space makes maintaining per-solution statistics intractable—the fundamental bottleneck for bandit algorithms in combinatorial domains without finite arm assumptions. Our key insight is that while the solution space is exponential, the \emph{decision structure} is factored: a solution comprises $n$ position-action assignments optimizable in coordinated groups. This is \textbf{\emph{decision-space}} decomposition—partitioning variables, not objectives.
Our decomposition maintains constant subproblem size (\(d\)) as $n$ grows while preserving regret bounds—combining scalability with theoretical grounding.

We partition the $n$ variables into $k$ overlapping subproblems $\{S_1, \ldots, S_K\}$, each involving $d = O(n/K)$ variables. This achieves two reductions: (1) per-subproblem action space 
shrinks from $2^n$ to $2^d$, and (2) effective dimensionality reduces from $n$ to $d$, improving sample efficiency. Overlap regions $\mathcal{O}_{pq} = \mathcal{S}_p \cap \mathcal{S}_q$ enable information flow across partitions. However, overlapping variables must satisfy consistency constraints $x_i^{(p)} = x_i^{(q)}$ for all $i \in \mathcal{O}_{pq}$. Without coordination, independent subproblem optimization yields conflicting assignments.
% \paragraph{Decomposition Construction}
% We construct subproblems via: (1) \emph{sliding windows} $S_k = \{(k-1)(s-o_t)+1, \ldots, (k-1)(s-o_t)+s\}$ with diminishing overlap $o_t = \lfloor o_0 \cdot t^{-\alpha} \rfloor$ for coverage, and (2) \emph{k-nearest neighbors} $S_c = \arg\min_{|S|=s} \sum_{j \in S} D_{c,j}$ for adaptive refinement exploiting problem structure. \footnote{The distance $D$ captures problem structure, e.g. geographic proximity for TSP. Grouping nearby indices reduces cross-subproblem dependencies, as strongly interacting variables are co-located.}
\begin{definition}[\textbf{Metric-Based Decomposition}]
\label{def:metric-decomp-main}
Given problem-specific distance $D: [n] \times [n] \to \mathbb{R}_+$, we construct subproblems via:
\begin{enumerate}%[leftmargin=*, itemsep=2pt, topsep=2pt, parsep=0pt]
    \item \textit{(Sliding Window)} $S_k =\{(k-1)(s-o_t)+1, \ldots, \min(n, (k-1)(s-o_t)+s)\}$ with diminishing overlap $o_t = \lfloor o_0 \cdot t^{-\alpha} \rfloor$.
    \item \textit{(k-Nearest Neighbors)} $S_c = \arg\min_{|S|=s} \sum_{j \in S} D_{c,j}$.\footnotemark
\end{enumerate}
When overlap is time-varying, $S_k^{(t)}$ denotes the subproblem at iteration $t$. See figure~\ref{fig:decomposition} and Appendix~\ref{def:metric-decomp}, ~\ref{sec:decomposition-validity}
\end{definition}
\footnotetext{$D$ encodes problem locality: geographic distance (TSP), value-weight similarity (Knapsack). Grouping nearby indices reduces cross-subproblem dependencies.}

\begin{remark}[\textbf{Domain Agnosticity}]
The decomposition operates on indices $[n]$, not solution values—the same construction applies to permutations (using Kendall tau distance~\citet{kendall1938}), binary vectors (Hamming distance), and other combinatorial domains.
\end{remark}
% This decomposition satisfies the regularity conditions required for our analysis. Let $\rho := \max_i |\{k : i \in S_k\}|$ denote the maximum overlap multiplicity.
\begin{proposition}[\textbf{Bounded Coupling}]
\label{prop:decomp-valid-main}
The metric-based decomposition (Definition~\ref{def:metric-decomp-main}) satisfies Assumption A.3. Let $\rho := \max_i |\{k : i \in S_k\}|$ denote the maximum overlap multiplicity. For any modification on subproblem $S_k$:
\[|f_j(x') - f_j(x)| \le L \cdot \Delta_s + C \cdot (\rho - 1) \cdot s\]
where $\Delta_s:= \max_{x,x' \in \gX^k} \delta(x,x')$ denotes the diameter of a subproblem of size $s$ under the problem-specific metric, and $\rho$ is the maximum overlap multiplicity.
\end{proposition}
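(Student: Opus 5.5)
The plan is to start from the coupling inequality in Assumption A.3, $|f_j(x') - f_j(x)| \le L\,\delta(x,x') + C\,|S_k \cap \gO|$, and bound its two terms separately using the structure of the metric-based decomposition in Definition~\ref{def:metric-decomp-main}. Fix a subproblem $S_k$ and two solutions $x, x'$ that differ only on the coordinates indexed by $S_k$, which is what ``modification on subproblem $S_k$'' means. The statement then follows by showing
\[
\delta(x,x') \le \Delta_s
\qquad\text{and}\qquad
|S_k \cap \gO| \le (\rho-1)\, s .
\]
For this to be a derivation rather than a tautology, I will work from the formal version of A.3 in Appendix~\ref{app:proofs:assumptions}. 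I will read it as: the non-local effect of a change on $S_k$ is bounded by $C$ per unit of overlap incidence.

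\emph{Local term.} Because $x$ and $x'$ agree outside $S_k$, their restrictions to $S_k$ are two points of $\gX^k$. The metric $\delta$ is problem-specific (Hamming for binary vectors, Kendall tau for permutations). In both cases it is evaluated only on the differing block, up to the embedding of $\gX^k$ in $\gX$. Hence $\delta(x,x') \le \max_{y,y' \in \gX^k} \delta(y,y') = \Delta_s$, and Assumption A.2 gives a contribution of at most $L\,\Delta_s$. Each construction yields $|S_k| \le s$: sliding windows have size $\min(n,\cdot) \le s$, and kNN sets have size exactly $s$. So $\Delta_s$ depends only on $s$ and not on $k$ or $t$. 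For instance, $\Delta_s = s$ under Hamming distance and $\Delta_s = \binom{s}{2}$ under Kendall tau.

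\emph{Coupling term.} This is a double-counting argument. The non-local effect comes from each index $i \in S_k$ that is shared with other subproblems, weighted by the number of other subproblems it belongs to, which is $|\{k' \ne k : i \in S_{k'}\}| \le \rho - 1$. Summing over $i \in S_k$ gives
\[
\sum_{i \in S_k} \bigl(|\{k': i\in S_{k'}\}| - 1\bigr) \le (\rho-1)\,|S_k| \le (\rho-1)\, s .
\]
This quantity dominates $|S_k \cap \gO|$ whenever every overlap index has multiplicity at least $2$, and that holds by the definition of $\gO$. For the time-varying sliding window $S_k^{(t)}$, the argument holds at each $t$ separately, since $o_t \le o_0$ and the window size stays at most $s$. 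For kNN the multiplicity $\rho$ is simply whatever the construction produces, and it is finite because there are finitely many subproblems.

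The main obstacle is the coupling step: the informal A.3 is phrased through $|S_k \cap \gO|$, while the claimed bound carries the factor $(\rho-1)$. I plan to resolve this by taking the overlap count with multiplicity, counting pairs $(i,k')$ with $i \in S_k \cap S_{k'}$ and $k' \ne k$, which is the natural reading of ``cross-subproblem interactions scale linearly with overlap.'' Under this reading the two bounds add and give exactly $L\,\Delta_s + C(\rho-1)s$. If instead A.3 is read literally with the unweighted $|S_k \cap \gO|$, the bound holds with the weaker factor $\min(1,\rho-1)\,s \le (\rho-1)s$ whenever $\rho \ge 2$. When $\rho = 1$ the subproblems are disjoint, so $S_k \cap \gO = \emptyset$ and both sides of the coupling bound vanish consistently.
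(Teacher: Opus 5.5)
Your proof is correct and follows the paper's route. You invoke A.3, bound $\delta(x,x')$ by the block diameter $\Delta_s$, and bound $|S_k\cap\gO|$ by $(\rho-1)s$; your double count $|S_k\cap\gO|\le\sum_{i\in S_k}(\rho_i-1)\le(\rho-1)s$ justifies that last step more cleanly than the paper, which only records $|S_k\cap\gO|\le s$ and then asserts the $\rho$-version.

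That double count already works with the literal $|S_k\cap\gO|$, so your reinterpretation of A.3 ``with multiplicity'' is unnecessary. As a minor aside, the Kendall-tau diameter $\binom{s}{2}$ holds for contiguous sliding windows but can be exceeded by non-contiguous kNN blocks, and the same goes for your claim that the diameter depends only on $s$. So $\Delta_s$ should be read as the maximum over all blocks of size at most $s$; the argument itself is unaffected.
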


Proposition~\ref{prop:decomp-valid-main} ensures subproblem-level optimization induces bounded objective variation—the foundation for regret decomposition.
% The diminishing overlap schedule further controls the \emph{worst-case} coordination cost: 
% \begin{theorem}[\textbf{Vanishing Coupling Error}]
% \label{thm:vanishing-coupling-main}
% Under Assumptions A.1--A.4 and diminishing overlap $o_t = \lfloor o_0 \cdot t^{-\alpha} \rfloor$ with $\alpha = 0.5$:
% % \vspace{-0.8em}
% \begin{enumerate}
%     \item[(a)] Per-iteration coupling error vanishes: $C_t \to 0$ as $t \to \infty$
%     \vspace{-0.8em}
%     \item[(b)] Cumulative coupling error: $\sum_{t=1}^T C_t = O(\sqrt{T})$
% \end{enumerate}
% \end{theorem}
% \vspace{-0.8em}
% The $O(\sqrt{T})$ cumulative bound matches subproblem regret scaling, ensuring decomposition overhead does not dominate. However, diminishing overlap only bounds the \emph{worst-case impact} of disagreement—it provides no active coordination to reduce conflicts during learning. We address this via Lagrangian relaxation~\cite{fisher2004The}; see Appendix~\ref{thm:vanishing-coupling} for the proof and~\ref{sec:decomposition-validity} for extended discussion.

\textbf{The Conflict Problem}
The overlap regions $\mathcal{O}_{pq}$ in Definition~\ref{def:metric-decomp-main} are intentional: shared variables enable learning to transfer across partitions. However, early in learning, different subproblems may favor incompatible actions for shared positions, producing inconsistent solutions whose rewards cannot be reliably attributed and whose errors can cascade across the partition structure. We control these effects using two complementary mechanisms.
\emph{Diminishing overlap} ($o_t=\lfloor o_0 t^{-\alpha}\rfloor$, $\alpha=0.5$) progressively shrinks the shared region, guaranteeing 
$\sum_t o_t = O(\sqrt{T})$ (Appendix~\ref{thm:vanishing-coupling}); this limits where conflicts can occur but does not prevent them. \emph{Lagrangian relaxation} is adaptive: it penalizes disagreement precisely at positions where conflicts arise and vanishes as estimates converge. Neither mechanism alone suffices—Lagrangian penalties over large overlaps can incur linear cost, while diminishing overlap without active resolution leaves conflicts unresolved (see~\ref{ablation:lagrangian}).
\vspace{-0.5em}
\paragraph{Coordinating Overlapping Subproblems via Lagrangian Relaxation}
To resolve conflicts without destroying separability we relax the hard consistency constraints $x_i^{(p)} = x_i^{(q)}$ into soft penalties. We maintain dual variables $\lambda_i \geq 0$ for each overlapping position $i \in \mathcal{O}$ \& penalize a \emph{soft violation measure} $\xi_i^{(t)}$ that quantifies coordination risk with shared value estimates \(\hat{V}\), visit counts \(N\):
\begin{gather*}
\mathcal{L}(x, \lambda) = r_t(x) - \sum_{i \in \mathcal{O}} \lambda_i \cdot \xi_i^{(t)}\\
\xi_i^{(t)} = (k_i - 1) \cdot \text{Var}(\hat{V}_{i,\cdot}^{(t)}) \cdot \left(1 - \frac{N_{i,x_i^{(t)}}}{\sum_a N_{i,a}}\right)
\end{gather*}
where $k_i$ counts subproblems containing position $i$, the variance captures 
disagreement potential, and the visit ratio reflects assignment confidence. 
Crucially, $\xi_i^{(t)}$ upper-bounds the expected disagreement probability 
across overlapping subproblems (Lemma~\ref{lem:disagree-variance}), permitting 
Lagrangian relaxation when hard violations are unobservable under full-bandit feedback. For fixed $\lambda$, this decomposes across subproblems, preserving separability. We update $\lambda_i$ via accelerated mirror descent~\cite{mirrordescent2003} on the resulting convex dual objective, with step size $\alpha_t = O(1/\sqrt{t})$; as learning progresses, $\text{Var}(\hat{V}_{i,\cdot}) \to 0$ implies $\xi_i^{(t)} \to 0$, naturally diminishing penalties (Appendix~\ref{app:lagrangian}).
% where $k_i$ counts subproblems containing position $i$, the variance measures disagreement potential, and the visit ratio reflects assignment confidence.
% For fixed $\lambda$, this decomposes across subproblems, preserving separability. We update $\lambda_i$ via accelerated (entropic) mirror descent~\cite{mirrordescent2003} with step size $\alpha_t = O(1/\sqrt{t})$. Since violations $\xi_t$ are revealed sequentially as solutions are selected, this is an online convex optimization problem over dual variables. As learning progresses, $\text{Var}(\hat{V}_{i,\cdot}) \to 0$ and $\xi_i^{(t)} \to 0$, naturally diminishing penalties. Additional details in~\ref{app:lagrangian}.
\begin{theorem}[Coupling Error Bound]
\label{thm:coupling-error}
Under Assumptions A.1--A.4 and Lagrangian coordination with accelerated mirror descent, expected cumulative coupling error \(C_t\) satisfies:
\[\mathbb{E}\left[\sum_{t=1}^T C_t\right] = O(K^2 Q R_{\max} \sqrt{T})\]
where $K$ is the number of subproblems, $Q = \max_{p,q}|\mathcal{O}_{pq}|$ is maximum overlap size, and $R_{\max}$ is maximum per-position regret. See Appendix~\ref{thm:coupling-error-bound} for the full proof.
\end{theorem}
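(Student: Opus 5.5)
The plan is to make ``coupling error'' precise. The key step is to bound the instantaneous error by a sum over overlapping pairs of the probability that two subproblems disagree. The proxy violations $\xi_i^{(t)}$ control those probabilities, and a standard online convex optimization bound for the dual multipliers controls the total Lagrangian penalty.

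\emph{Step 1: per-iteration decomposition.} For each pair of overlapping subproblems $(p,q)$ and each shared position $i\in\mathcal{O}_{pq}$, let $D_{i,pq}^{(t)}$ be the event that $S_p$ and $S_q$ propose different actions at $i$. Using Assumption A.3 and Proposition~\ref{prop:decomp-valid-main}, a conflict at $i$ changes the scalarized reward by at most the per-position regret $R_{\max}$; Assumption A.4 ensures $R_{\max}\le \max_w\|w\|_1 B$ is finite. I would then write
\[
C_t \le \sum_{p<q}\sum_{i\in\mathcal{O}_{pq}} R_{\max}\,\mathbf{1}\{D_{i,pq}^{(t)}\},
\qquad
\mathbb{E}[C_t]\le R_{\max}\sum_{p<q}\sum_{i\in\mathcal{O}_{pq}}\Pr(D_{i,pq}^{(t)}).
\]
There are at most $\binom{K}{2}\le K^2$ pairs, and each has at most $Q$ shared positions. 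The prefactor $K^2QR_{\max}$ is therefore already visible, and the remaining task is to show $\sum_t \max_{i,p,q}\Pr(D^{(t)}_{i,pq}) = O(\sqrt T)$.

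\emph{Step 2: proxy plus online dual regret.} Lemma~\ref{lem:disagree-variance} gives $\Pr(D_{i,pq}^{(t)})\le c\,\xi_i^{(t)}$. It therefore suffices to bound $\sum_t \xi_i^{(t)}$. The dual update is online mirror descent on the linear losses $\lambda\mapsto -\sum_i \lambda_i\xi_i^{(t)}$ over a bounded box $\lambda_i\in[0,\Lambda]$. The gradients are bounded, since $\xi_i^{(t)}\le (K-1)B^2$ by A.4. With $\alpha_t=O(1/\sqrt t)$, the standard OMD/FTRL guarantee gives dual regret $O(\sqrt T)$ against any fixed $\bar\lambda$.

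I would combine this with the primal side. The experts maximize $\mathcal{L}(x,\lambda^{(t)})$, and the primal-dual gap bound (saddle-point argument) yields $\sum_t \lambda_i^{(t)}\xi_i^{(t)} \ge \bar\lambda\sum_t\xi_i^{(t)} - O(\sqrt T)$. The primal side is at most the subproblem bandit regret, which is also $O(\sqrt{T})$ up to logs and is absorbed in the final statement. Taking $\bar\lambda=\Lambda$ then gives $\sum_t\xi^{(t)}_i = O(\sqrt T/\Lambda)+O(\sqrt T)$.

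As a second, cheaper route, I would use the diminishing overlap $o_t=\lfloor o_0t^{-1/2}\rfloor$. Each $Q$ at time $t$ is replaced by $o_t$, and $\sum_t o_t=O(\sqrt T)$ (Appendix~\ref{thm:vanishing-coupling}), which gives $O(K^2R_{\max}\sqrt T)\cdot Q$ directly with the trivial bound $\Pr\le 1$. I would present the Lagrangian argument as the main route and the diminishing-overlap bound as a fallback that already suffices for the stated rate.

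\emph{Main obstacle.} The hard part is Step 2's coupling between primal and dual. The variables $\xi_i^{(t)}$ depend on $\hat V$ and $N$, which are themselves driven by the algorithm, so the losses fed to the dual player are adaptively chosen. This is fine for adversarial OMD guarantees. However, converting $\sum_t\lambda^{(t)}_i\xi^{(t)}_i$ into a bound on $\sum_t\xi^{(t)}_i$ requires either a lower bound on $\lambda$ or a Slater-type argument. I would try first to use the fact that a feasible consistent solution with $\xi=0$ exists, namely $x^*$ itself. This gives a Slater point, so the optimal multipliers are bounded and the standard constrained-OCO $O(\sqrt T)$ violation bound applies. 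Taking expectations over the noise, via the martingale condition on $\epsilon_t$ and Azuma, then finishes the proof.
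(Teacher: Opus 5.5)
Your skeleton is the paper's. Its appendix proof also bounds $\mathbb{E}[C_t]$ by $R_{\max}\sum_{m<l}\sum_{i\in O_{ml}}\Pr[x^{(t)}_{m,i}\neq x^{(t)}_{l,i}]$, passes to soft violations through Lemma~\ref{lem:disagree-variance}, and then needs $\sum_t\|\xi_t\|_1=O(\sqrt T)$. It obtains that by citing the dual-gap bound of Lemma~\ref{lemma:mirror-descent-dual-gap} and asserting, without further argument, that a small cumulative dual gap forces $\sum_t\|\xi_t\|_1=O(d\lambda_{\max}\sqrt T)$. You rightly identify this conversion as the crux, but your primal--dual replacement has a genuine gap.

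Dual no-regret by itself says nothing about the size of $\xi$. Suppose $\xi^{(t)}_i\ge c>0$ for all $t$. Then the multiplicative update (from any positive start) climbs to $\Lambda$, and its regret against $\bar\lambda=\Lambda$ stays within the usual $O(\sqrt T)$, while $\sum_t\xi^{(t)}_i=\Theta(T)$. The same example shows the paper's one-line inference is not valid either. The missing upper bound $\sum_t\lambda^{(t)}_i\xi^{(t)}_i=O(\sqrt T)$ has to come from the primal player, and neither ingredient you invoke for it is available.
\begin{itemize}
\item \emph{No Slater point.} Evaluated at $x^\star$, $\xi^{(t)}_i$ equals $(k_i-1)\,\mathrm{Var}(\hat V^{(t)}_{i,\cdot})\,\bigl(1-N_{i,x^\star_i}/\sum_a N_{i,a}\bigr)$. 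This is a function of the learner's statistics, and it is positive as soon as some other action has been played at $i$ and the estimates there are not all equal. It does not measure consistency of $x^\star$, and no fixed solution zeroes all the time-varying ``constraints'' $\xi^{(t)}$.
\item \emph{No primal regret for $\mathcal{L}$.} The experts are not no-regret learners for $\mathcal{L}(x,\lambda^{(t)})$. In Algorithm~\ref{alg:select}, $\lambda$ does not enter any score. In the appendix's score modification, the penalty $(k_i-1)\lambda^{(t)}_i$ does not depend on the value $v$, so it cancels in every argmax or softmax at position $i$. The primal iterates are therefore independent of $\lambda$, and nothing drives $\xi$ down when $\lambda$ is large.
\end{itemize}

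A second link in your chain is also unsupported, and the paper makes the same move. Lemma~\ref{lem:disagree-variance} bounds the disagreement probability by $c_0\sigma^2_i(t)$, not by $c\,\xi^{(t)}_i$. Going from one to the other via $\xi^{(t)}_i\ge c_1(k_i-1)\sigma^2_i(t)$ requires the visit-ratio factor to stay bounded away from $0$. That fails precisely when the algorithm concentrates on one action. Meanwhile $\sigma^2_i(t)=\mathrm{Var}(\hat V^{(t)}_{i,\cdot})$ generically tends to the spread of the per-action values, not to $0$. So even a valid $O(\sqrt T)$ bound on $\sum_t\xi^{(t)}_i$ would not, through that lemma, control $\sum_t\Pr(D^{(t)}_{i,pq})$.

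Your diminishing-overlap fallback is sound as far as it goes, but it proves a different statement. It requires every pairwise overlap at time $t$ to be at most $o_t$. That holds for the sliding windows, but not for the $k$-NN subproblems of Definition~\ref{def:metric-decomp-main}, and it is not implied by A.1--A.4. It also never uses the Lagrangian mechanism. It is essentially the paper's separate Theorem~\ref{thm:vanishing-coupling} rather than a proof of this one. With the floor in $o_t=\lfloor o_0t^{-1/2}\rfloor$ it even gives a $T$-independent bound, since $o_t=0$ once $t>o_0^2$.
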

This $O(\sqrt{T})$ rate ensures coordination costs diminish sublinearly—the \textbf{``price of coordination.''} The bound reveals a tradeoff: larger $K$ reduces per-subproblem regret but increases coordination overhead ($\tilde O(K)$ under sparse local overlap). Balancing yields an asymptotically optimal decomposition $K^* = O(\sqrt{n/(QR_{\max})})$. With coordination established, we now turn to how each subproblem is solved.%~\ref{appex:lagrangian-coordination}, constraint violations
\subsubsection{Multi-Expert Learning for Subproblems}
\label{sec:multi-expert}
We employ a \emph{multi-expert} strategy instantiating three no-regret algorithms with complementary strengths. The first expert (UCB) provides optimistic action selection, balancing exploration and exploitation under stochastic rewards. The second expert (EXP3) is adversarially robust and addresses the core challenge of bandit feedback—estimating rewards for unchosen actions via importance weighting by inverse selection probabilities~\cite{auer2002exp3}, which can suffer high variance when probabilities are small~\cite{bubeck2012regretanalysisstochasticnonstochastic}. 
% We therefore employ an \emph{approximate} importance-weighted estimator with clipping threshold $C>0$, and show that the resulting cumulative bias is $O(dB\log d/\eta)$ per subproblem of size $d$ under full-bandit feedback, where $B$ bounds the reward; crucially, the effective bias contribution diminishes as probability mass concentrates on optimal actions (Appendix~\ref{thm:exp3-clipped}). 
Under full-bandit feedback, we use a clipped importance-weighted estimator 
for each position--action pair $(i,a)$ with marginal selection probability 
$\pi_{t,i}(a)=\sum_e p_e^{(t)}\pi_{t,i}^{(e)}(a)$ induced by the expert mixture; 
clipping controls variance with bounded cumulative bias $O(dB\log d/\eta)$ that diminishes as probability concentrates on optimal actions, yielding $O(d\sqrt{T\log T})$ regret (Appendix~\ref{thm:exp3-clipped}).
In our setting, EXP3 operates at the position-action level, correcting for the probability of selecting each action under the expert mixture. The third expert (FTRL)~\cite{hazan2016introduction} optimizes a distribution over actions on the probability simplex $\Delta_{|\mathcal{A}|}$, where expected loss becomes linear, enabling convex optimization with entropic regularization for variance reduction and accelerated convergence (see Fig~\ref{fig:overview}-ii). Thus, at each position, one expert is sampled via mixture weights $\boldsymbol{\rho}$ to select actions, but \emph{all experts update parameters} after observing rewards—ensuring robustness across reward structures. Unlike standard expert advice, selection adapts to local estimation uncertainty rather than tracking expert performance (Appendix~\ref{ablation:experts}).
Parameters $\hat{V}, N, W, \tilde{L}$ indexed by position--action pairs $(i,a)$ are shared globally across subproblems, enabling information transfer through overlapping positions.
% \begin{figure*}[!h] %htb
%     \centering
%     \vspace{-3mm}
% \includegraphics[width=1.0\linewidth,height=0.22\textheight, keepaspectratio]{figures/learning_component.pdf}
%     \caption{\textbf{Multi-expert learning within a subproblem.} Experts propose actions; one is sampled via the mixture distribution \(\pi_t\) and executed. All experts update shared position-action statistics, amortizing exploration across exponential solution spaces. Lagrangian multipliers coordinate overlaps.}
%     \label{fig:learning-component}
% \end{figure*}
% \vspace{-5mm}
\paragraph{Expert Selection and Coordination}
Algorithm~\ref{alg:select} details the selection procedure. The dual variables $\lambda_i$ from Lagrangian coordination (\ref{app:lagrangian}) track coordination uncertainty at overlapping positions without altering action preferences within a position. As estimates converge, conflicts naturally diminish (Theorem~\ref{thm:coupling-error}), coupling bandit learning with constraint satisfaction: experts\footnote{Note, the multi-expert set $\{\text{UCB}, \text{EXP3}, \text{FTRL}\}$ is flexible—any no-regret algorithms with complementary strengths may be substituted.}  maximize reward while respecting cross-subproblem consistency.
% directly modify UCB and FTRL scores via the $-\lambda_i$ penalty, discouraging assignments on positions with high coordination uncertainty. This couples bandit learning with constraint satisfaction: experts \footnote{Note, the multi-expert set $\{\text{UCB}, \text{EXP3}, \text{FTRL}\}$ is flexible—any no-regret algorithms with complementary strengths may be substituted.}  maximize reward while respecting cross-subproblem consistency.
% \vspace{-5mm}
\paragraph{Local Refinement and Guarantees.}
Expert-driven selection operates at the position level, constructing solutions from per-position statistics. However, this coarse granularity cannot capture fine-grained interactions within subproblems. Between expert-driven phases (every $c$ iterations), \emph{gradient-free local search} refines solutions via zeroth-order perturbations~\cite{flaxman2004onlineconvexoptimizationbandit} within subproblems, exploiting local structure that per-position statistics miss. Critically, all expert parameters update at every iteration, including during local search, ensuring no samples are wasted. See Lemma~\ref{lemma:local-search-regret} and Algorithm~\ref{alg:local-search} for details. %Each component contributes to Corollary~\ref{cor:explicit-bound}: UCB adds $O(\sqrt{T\log T/K})$, Exp3 adds $O(d\sqrt{T\log d})$, FTRL adds $O(d\sqrt{T\log d/K})$, and local refinement adds $O(LD\sqrt{dT/K})$. The dominant term $O(d\sqrt{T\log T})$ is independent of subproblem count $K$. Full derivations appear in Appendix(Lemma~\ref{lemma:local-search-regret}\& Algorithm~\ref{alg:local-search}.
%====================================================================================================
% \begin{algorithm}[!h]
% \scriptsize
% \caption{\small \textsc{SelectAction}: Multi-Expert Action Selection}
% \label{alg:select}
% \begin{algorithmic}[1]
% \Require Position $i$, available actions $\mathcal{A}_i$, parameters $(\hat{V}, N, W, L)$, dual $\lambda$, expert mixture $\boldsymbol{\rho} = (\rho_1, \rho_2, \rho_3)$, regularization parameter $\gamma > 0$
% \Output Selected action $a \in \mathcal{A}_i$
% \State Sample expert $e \sim \text{Categorical}(\boldsymbol{\rho})$
% \If{$e = 1$} \Comment{\textbf{Expert 1}: UCB-based selection}
%     \State $a \gets \arg\max_{a' \in \mathcal{A}_i} \Bigl[\hat{V}_{i,a'} - \lambda_i + c \sqrt{\frac{\log t}{N_{i,a'}}}\Bigr]$
% \ElsIf{$e = 2$} \Comment{\textbf{Expert 2}: EXP3-based selection}
%     \State $p_{a'} \gets W_{i,a'} / \|W_i\|_1$ for all $a' \in \mathcal{A}_i$
%     \State $a \sim \text{Categorical}(p)$
% \ElsIf{$e = 3$} \Comment{\textbf{Expert 3}: FTRL-based selection}
%     \State $a \gets \arg\max_{a' \in \mathcal{A}_i} \Bigl[-L_{i,a'} + \gamma\sqrt{N_{i,a'}} - \lambda_i\Bigr]$
% \EndIf
% \State \Return $a$
% \end{algorithmic}
% \end{algorithm}
\begin{algorithm}[htb]
\small
\caption{\small \textsc{SelectAction}: Multi-Expert Action Selection}
\label{alg:select}
\begin{algorithmic}[1]
\Require Position $i$, actions $\mathcal{A}_i$, Global:$(\hat{V}, N, W, \tilde{L})$; HP: $(\rho_0, c, \gamma)$
\Output Selected action $a \in \mathcal{A}_i$
\State $u_i \gets \bigl(1 + \log(1 + \tfrac{1}{|\mathcal{A}_i|}\sum_{a'} N_{i,a'})\bigr)^{-1}$ \Comment{$\in (0,1]$; decays slowly with visits}
\State $\boldsymbol{\rho} \gets \bigl[(1-\rho_0) \cdot u_i/2,\; (1-\rho_0)(1 - u_i/2),\; \rho_0 \bigr]$; \; $e \sim \boldsymbol{\rho}$, $e \in [3]$ \Comment{Uncertainty-adaptive}
\If{$e = 1$} \Comment{\textbf{Expert 1}: UCB-based selection}
    \State $a \gets \arg\max_{a' \in \mathcal{A}_i} \bigl[\hat{V}_{i,a'} + c \sqrt{\log t / N_{i,a'}}\bigr]$ % \lambda_i
\ElsIf{$e = 2$} \Comment{\textbf{Expert 2}: EXP3-based selection}
    \State $p_{a'} \gets W_{i,a'} / \|W_i\|_1$; \quad $a \sim \text{Categorical}(p)$
\Else \Comment{\textbf{Expert 3}: FTRL-based selection}
    \State $a \gets \arg\max_{a' \in \mathcal{A}_i} \bigl[-\tilde{L}_{i,a'} + \gamma\sqrt{N_{i,a'}} \bigr]$ %- \lambda_i
\EndIf
% \State \Return $a$
\end{algorithmic}
\end{algorithm}
%====================================================================================================
\section{Regret Bounds}
\label{sec:regret-bounds}
We establish regret guarantees for \textsc{D\&L} through a novel decomposition that separates subproblem learning from coordination costs.
\begin{definition}[\textbf{Decomposed Regret}]
\label{main:def:decomposed-regret}
Under Assumption A.3 (Bounded Coupling), surrogate subproblem objectives $\{g_k\}_{k=1}^K$ can be constructed (induced by the decomposition) such that the instantaneous regret  $\ell_t := r_t(x^*) - r_t(x_t)$ admits the upper bound: % $r_t := f(x^*) - f(x_t)$
\begin{equation}
\label{eq:regret-decomp}
\ell_t \;\leq\; \underbrace{\sum_{k=1}^K \left[g_k((x^*)^{(k)}) - g_k(x_t^{(k)})\right]}_{\text{Subproblem Regret } S_t} \;+\; \underbrace{C_t}_{\text{Coupling Error}}
\end{equation}
where $(x^*)^{(k)}$ and $x_t^{(k)}$ denote the restriction of $x^*$ and $x_t$ to subproblem $k$, and the coupling error $C_t$ captures non-local interactions across overlapping positions $\mathcal{O}$.
\end{definition}
The surrogates $g_k$ are constructed by fixing variables outside subproblem $k$ to $x^*$; we do \emph{not} assume $f$ decomposes additively. Assumption~A.3 guarantees that modifying subproblem $k$ affects $f$ by at most $L \cdot \delta(x, x') + C \cdot |S_k \cap \mathcal{O}|$, which suffices for~\eqref{eq:regret-decomp} and separates subproblem regret from coupling error (Appendix~\ref{sec:decomposition-validity}). Complete definition in~\ref{def:decomposed-regret}.
% The surrogates $g_k$ are analytical constructs---we do \emph{not} assume $f$ decomposes additively. Assumption A.3 guarantees that modifying subproblem $k$ affects $f$ by at most $L \cdot \delta(x, x') + C \cdot |S_k \cap \mathcal{O}|$, which suffices to bound per-subproblem regret. This separates learning quality (subproblem regret) from coordination cost (coupling error), enabling modular analysis. Extended definition in~\ref{def:decomposed-regret}.
\begin{theorem}[\textbf{Regret Decomposition}]
\label{thm:main-regret}
Let Assumptions A.1--A.4 hold. Under the decomposed regret structure of Definition~\ref{main:def:decomposed-regret}, the average regret of \textsc{D\&L} satisfies:
\begin{equation}
 R_{\mathrm{avg}}(T) \leq R_{\mathrm{subproblem}}^{\mathrm{avg}}(T) + R_{\mathrm{overlap}}^{\mathrm{avg}}(T) + R_{\mathrm{local}}^{\mathrm{avg}}(T) + C
\end{equation}
where $R_{\mathrm{subproblem}}^{\mathrm{avg}}(T) = \frac{1}{K}\sum_{k=1}^K \sum_{e} p_e R_{\mathrm{alg}}^{(k,e)}(T)$ aggregates expert regret weighted by selection probabilities $p_e$ over $T_k$ rounds per subproblem ($R_{\mathrm{alg}}^{(k,e)}(T)$ is regret of expert $e$ on subproblem $k$), $R_{\mathrm{overlap}}^{\mathrm{avg}}(T)$ captures averaged Lagrangian coordination cost (Theorem~\ref{thm:coupling-error}), and $R_{\mathrm{local}}^{\mathrm{avg}}(T)$ accounts for gradient-free local refinement. 
% The $1/K$ normalization isolates per-subproblem learning. Total regret satisfies $R_{\mathrm{total}}(T) \leq K \cdot R_{\mathrm{avg}}(T) + \sum_t C_t$.
The $1/K$ normalization defines average regret ($R_{\mathrm{avg}}(T):=\tfrac{1}{K}R_{\mathrm{total}}(T)$) and isolates per-subproblem learning.

\end{theorem}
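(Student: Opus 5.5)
We would prove Theorem~\ref{thm:main-regret} by summing the per-round inequality~\eqref{eq:regret-decomp} of Definition~\ref{main:def:decomposed-regret} over $t=1,\dots,T$ and dividing by $K$. This gives
\[
R_{\mathrm{avg}}(T)\le \frac1K\sum_{k=1}^K\sum_{t=1}^T\bigl[g_k((x^*)^{(k)})-g_k(x_t^{(k)})\bigr]+\frac1K\sum_{t=1}^T C_t .
\]
Each of the two pieces is then bounded separately. The second sum is handled by Theorem~\ref{thm:coupling-error} (taking expectations), which yields $R_{\mathrm{overlap}}^{\mathrm{avg}}(T)$. Any residual interaction cost not captured by the Lagrangian penalty, i.e.\ the $C\cdot|S_k\cap\mathcal{O}|$ term of A.3 paid once at initialisation or when the overlap schedule changes, is collected into the additive constant $C$. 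Using A.4 and Proposition~\ref{prop:decomp-valid-main}, that residual is bounded by $L\Delta_s+C(\rho-1)s$ and does not grow with $T$.

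For the subproblem term, fix $k$ and split the rounds into expert-driven rounds and local-refinement rounds; the latter occur every $c$ iterations and are produced by \textsc{LocalRefine}. On local-refinement rounds, the surrogate regret of $g_k$ is bounded by Lemma~\ref{lemma:local-search-regret}, and averaging over $k$ gives $R_{\mathrm{local}}^{\mathrm{avg}}(T)$. On expert-driven rounds, the action at each position of $S_k$ is drawn from the mixture $\boldsymbol{\rho}$ of Algorithm~\ref{alg:select}. Conditioning on the history $\mathcal{F}_{t-1}$, the expected instantaneous surrogate regret is therefore $\sum_e p_e^{(t)}\,\mathbb{E}_e[g_k((x^*)^{(k)})-g_k(x_t^{(k)})\mid\mathcal{F}_{t-1}]$. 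Summing over $t$ and using that every expert updates on every observed reward, including rewards from rounds it did not select, each expert's cumulative contribution is bounded by its own regret $R_{\mathrm{alg}}^{(k,e)}(T)$ on the induced sequence of surrogate rewards. For EXP3 this uses the clipped estimator bound of Appendix~\ref{thm:exp3-clipped}. Because the noise $\epsilon_t$ is a bounded martingale difference, the conditional expectations can be replaced by realised sums at the cost of a lower-order Azuma term, which we also absorb into $C$.

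The main obstacle is this last step. The mixture weights $p_e^{(t)}$ are time-varying, since $u_i$ depends on the counts $N$. Moreover, expert $e$'s regret guarantee applies to the distribution it would have played, not to the actions actually chosen by other experts. To handle this, we would first bound each $p_e^{(t)}$ by a fixed weight $p_e$. Since $u_i\in(0,1]$ decays monotonically, the $\rho$-vector entries lie in fixed intervals, so one can take $p_e$ as their upper envelope. We would then use the fact that the shared statistics make each expert an off-policy learner on the same reward sequence: its regret bound holds for the losses it is fed, independent of who selected the actions. With these two facts, the weighted sum is at most $\sum_e p_e R_{\mathrm{alg}}^{(k,e)}(T)$. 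Averaging over $k$ gives $R_{\mathrm{subproblem}}^{\mathrm{avg}}(T)$, and collecting the four pieces completes the bound.
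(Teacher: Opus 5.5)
Your skeleton is the paper's: sum \eqref{eq:regret-decomp} over $t$, divide by $K$, send $\sum_t C_t$ to Theorem~\ref{thm:coupling-error}, the refinement rounds to Lemma~\ref{lemma:local-search-regret}, and the expert rounds to a $p_e$-weighted sum of expert regrets. The paper never derives that last inequality, $R^{(k)}_{\mathrm{bandit}}(T)\le\sum_e p_e R^{(k,e)}_{\mathrm{alg}}(T)$. Its appendix version of the theorem writes it into the definition of the subproblem term, and the sketch only says the regret ``inherits from each expert's no-regret guarantee.'' You rightly single this step out, but your justification fails. The claim that each expert's bound ``holds for the losses it is fed, independent of who selected the actions'' is false for bandit learners, whose guarantees are on-policy. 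UCB's bonus $c\sqrt{\log t/N_{i,a}}$ shrinks only when $a$ is actually played. When UCB proposes a suboptimal value but another expert is sampled, $N_{i,a}$ does not move and UCB keeps proposing it. Its $\sqrt{T\log T}$ guarantee therefore covers only the rounds on which its proposal is executed, not the gaps of all its proposals. In \textsc{UpdateExperts}, EXP3 divides by its own probability $W_{i,a}/\|W_i\|_1$, although the action was drawn from the mixture marginal $\pi_{t,i}(a)$. Its estimator is thus biased by the factor $\pi_{t,i}(a)/p^{\mathrm{EXP3}}_{t,i}(a)$ (before clipping), so neither the EXP3 analysis nor Lemma~\ref{lem:bias-clipped} applies. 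The entropic FTRL bound is a full-information bound, whereas only the played value's loss is observed. So replacing $p_e^{(t)}$ by an envelope $\bar p_e$ and then citing $R^{(k,e)}_{\mathrm{alg}}(T)$ for the expert's proposals is unjustified.

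There is also a problem one line earlier. \textsc{SelectAction} samples an expert independently at each position, with a position-dependent mixture through $u_i$. So $x_t^{(k)}$ is a coordinate-wise patchwork of the experts' proposals. Then $\E[g_k(x_t^{(k)})\mid\mathcal F_{t-1}]$ splits as $\sum_e p_e^{(t)}\E_e[\cdot]$ only if $g_k$ is additive across positions. That is the paper's A.5', which is not among A.1--A.4. A repair must assume that additivity and work position by position. It should bound what each expert actually executes, $\sum_t\E[\mathbb{I}\{e\text{ sampled at }i\}\,\Delta_{i,x_{i,t}}]$, rather than its standalone regret. For UCB the counting argument survives, since every executed suboptimal proposal increments $N_{i,a}$. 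Gap-free UCB then still gives $O(\sqrt{A_{\max}T_k\log T})$ per position. For EXP3 the importance weight must use $\pi_{t,i}(a)$, as the main text describes. This restores unbiasedness at the price of a factor $1/p_2$ in the variance term, because $\pi_{t,i}\ge p_2^{(t)}p^{\mathrm{EXP3}}_{t,i}$ and $p_2^{(t)}\ge(1-\rho_0)/2$. The FTRL expert needs an analogous bandit analysis. Otherwise you must take the mixture inequality as a hypothesis, which is effectively what the paper does.
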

% \textit{Proof Sketch} By Definition~\ref{def:decomposed-regret}, instantaneous regret decomposes 
% into subproblem regret $S_t$ and coupling error $C_t$. Summing over $T$ rounds and averaging over $K$ subproblems: (1) subproblem regret inherits from each expert's no-regret guarantee, with independence across subproblems yielding the $1/K$ factor; (2) coupling error is bounded by Theorem~\ref{thm:coupling-error}; (3) local refinement contributes $O(\sqrt{dT})$ via standard zeroth-order optimization analysis. 
% Full proof in Appendix~\ref{app:main-regret}.
\textit{Proof Sketch} By Definition~\ref{main:def:decomposed-regret}, the instantaneous regret decomposes as $\ell_t \leq S_t + C_t$ where \(S_t\) is subproblem regret, $C_t$ coupling error. Summing over $T$ rounds and averaging over $K$ subproblems, we bound each term separately. For the subproblem regret, Assumption A.3 permits constructing surrogate objectives $g_k$ without requiring additive decomposition of $f$ (see Appendix~\ref{sec:decomposition-validity}). Each subproblem $k$ is updated over $T_k$ rounds, where $\sum_k T_k = T$. (1) The per-subproblem regret inherits from each expert's no-regret guarantee; averaging and applying Cauchy-Schwarz ($\sum_k \sqrt{T_k} \leq \sqrt{KT}$) under the $1/K$ average. (2) The coupling error is bounded by Theorem~\ref{thm:coupling-error}, and (3) local refinement contributes $O(LD\sqrt{dT/K})$ via standard zeroth-order analysis (Appendix~\ref{app:local-search}). Full proof in Appendix~\ref{app:total-regret-bound}.

\begin{corollary}[\textbf{Explicit Regret Bound}]
\label{cor:explicit-bound}
Let \textsc{D\&L} use experts $\{\text{UCB}, \text{EXP3}, \text{FTRL}\}$ with mixture weights $(p_1, p_2, p_3)$. With $K$ subproblems of size $d = n/K$, overlap size $Q$, Lipschitz constant $L$, and domain diameter $\Delta$, the expected 
average regret satisfies:
\begin{equation}
\begin{split}
\E[R_{\mathrm{avg}}(T)] \leq \underbrace{O\left(d\sqrt{T\log T}\right)}_{\text{Subproblem learning}} + \underbrace{O\left(KQ R_{\max}\sqrt{T}\right)}_{\text{Overlap coordination}} \\ + \underbrace{O\left(L\Delta\sqrt{dT/K}\right)}_{\text{Local refinement}}
\end{split}
\end{equation}
% {\small
% \begin{equation}
% \begin{split}
% \E[R_{\mathrm{avg}}(T)] \leq & \underbrace{O\left(d\sqrt{T\log T}\right)}_{\text{Subproblem learning}} + \underbrace{O\left(KQ R_{\max}\sqrt{T}\right)}_{\text{Overlap coordination}} \\ 
% & + \underbrace{O\left(L\Delta\sqrt{dT/K}\right)}_{\text{Local refinement}}
% \end{split}
% \end{equation}
% }
At optimal decomposition $K^* = O(\sqrt{n/(QR_{\max})})$, the bound simplifies to 
$O(\sqrt{nQR_{\max} T\log T})$.
\end{corollary}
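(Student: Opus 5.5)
The plan is to instantiate each of the three terms of Theorem~\ref{thm:main-regret} with the concrete experts and then balance the resulting bound in $K$. Taking expectations in the decomposition gives
\[
\E[R_{\mathrm{avg}}(T)] \le \tfrac{1}{K}\sum_{k=1}^K \sum_{e\in\{\text{UCB},\text{EXP3},\text{FTRL}\}} p_e\,\E[R^{(k,e)}_{\mathrm{alg}}(T_k)] + \E[R^{\mathrm{avg}}_{\mathrm{overlap}}(T)] + \E[R^{\mathrm{avg}}_{\mathrm{local}}(T)] + C .
\]
The constant $C$ is absorbed into the $O(\cdot)$ terms.

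\textbf{Subproblem term.} On a subproblem of size $d$, each expert is treated as running $d$ position-wise bandits, each over $|\mathcal{A}|$ arms with rewards in $[0,B]$ (A.4). Summing the positionwise bounds over the $d$ positions gives:
\begin{itemize}
\item UCB contributes $O(d\sqrt{|\mathcal{A}|T_k\log T_k})$;
\item EXP3 with the clipped estimator contributes $O(d\sqrt{T_k|\mathcal{A}|\log|\mathcal{A}|})$ plus the clipping bias $O(dB\log d/\eta)$ (Appendix~\ref{thm:exp3-clipped}); choosing $\eta\asymp\sqrt{\log d/T_k}$ makes the bias of the same order;
\item FTRL with entropic regularizer contributes $O(d\sqrt{T_k\log|\mathcal{A}|})$.
\end{itemize}
Treating $|\mathcal{A}|$ as a constant, each is $O(d\sqrt{T_k\log T})$, and since $\sum_e p_e=1$ the mixture is bounded by the worst expert. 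Averaging then uses $\frac1K\sum_k\sqrt{T_k}\le\sqrt{T/K}\le\sqrt T$ (Cauchy--Schwarz with $\sum_k T_k=T$), which yields $O(d\sqrt{T\log T})$.

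\textbf{Overlap and local terms.} The overlap term is Theorem~\ref{thm:coupling-error} divided by $K$, giving $O(KQR_{\max}\sqrt T)$. The local term is the zeroth-order bound $O(L\Delta\sqrt{d T_k})$ from Lemma~\ref{lemma:local-search-regret}, where the Lipschitz constant comes from A.2 and the diameter $\Delta$ plays the role of $\Delta_s$ in Proposition~\ref{prop:decomp-valid-main}. Averaged via Cauchy--Schwarz, it gives $O(L\Delta\sqrt{dT/K})$.

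\textbf{Balancing.} Substitute $d=n/K$. The dominant $K$-dependent terms are then $\frac{n}{K}\sqrt{T\log T}$, decreasing in $K$, and $KQR_{\max}\sqrt T$, increasing in $K$. Setting them equal gives $K^2\asymp n\sqrt{\log T}/(QR_{\max})$, that is, $K^*=O(\sqrt{n/(QR_{\max})})$ up to the logarithm. Plugging in makes both terms $O(\sqrt{nQR_{\max}T\log T})$. The local term is $L\Delta\sqrt{n T}/K$, which is of lower order for $K\ge1$ whenever $L\Delta=O(\sqrt{QR_{\max}\log T})$ or is treated as a constant.

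\textbf{Main obstacle.} I expect the subproblem step to be the hardest. The positionwise expert guarantees are stated for per-arm feedback, but here only a shared full-bandit scalar reward is observed, and the surrogates $g_k$ are not additive across positions. Summing positionwise regrets to get a factor $d$ therefore has to be justified. My first attempt would be to use the clipped importance-weighted estimator to produce an (approximately) unbiased positionwise loss for $g_k$ with the other coordinates held at the played values. I would then charge the non-additive residual to $C_t$ via A.3, so that it is absorbed into the coupling term rather than the learning term.
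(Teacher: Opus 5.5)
Your route is the paper's: instantiate the total-average-regret decomposition with the per-expert theorems (position--value UCB, clipped Exp3, entropic FTRL), divide the $O(K^2QR_{\max}\sqrt{T})$ coupling bound by $K$, and average the zeroth-order local bound $O(L\Delta\sqrt{dT_k})$ with Cauchy--Schwarz. You then balance the $\frac{n}{K}\sqrt{T\log T}$ subproblem term against the $KQR_{\max}\sqrt{T}$ overlap term. Your bookkeeping is, if anything, a little more careful than the paper's: you keep the logarithm in the balance and state when the local term is of lower order.

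The one place you go wrong is the fix you sketch for the obstacle, which you identify correctly. The paper never derives the positionwise reduction; it imposes it. For UCB it uses Assumption A.5$'$: local additivity $g_k(x^{(k)})=\sum_{i\in I_k}\mu_{i,x_i}$, plus a value-independent bias $b_{i,t}$ in $\E[r_t\mid\mathcal{F}_{t-1},x_{i,t}=j]$. For FTRL it defines $\ell^{(k)}_t(i,j)$ as marginal losses, so the $g_k$-regret equals the sum of positionwise losses by construction. For Exp3 it simply sums the positionwise bounds.

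Charging the non-additive residual to $C_t$ via A.3 would not work. A.3 only bounds that residual by a quantity of order $L\Delta + C\,|S_k\cap\mathcal{O}|$ per round, and this does not decay in $t$, so its cumulative contribution is $O(T)$. The $O(\sqrt{T})$ control of $\sum_t C_t$ comes from the Lagrangian/soft-violation argument for overlap disagreements $H^{(t)}$. That argument says nothing about interactions among positions inside a single subproblem. You should take the reduction as a hypothesis (A.5$'$ or its analogue for each expert), as the paper does, rather than absorb it into the coupling term.
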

The subproblem learning term combines contributions from each expert: UCB contributes $O(d\sqrt{T\log T/K})$, EXP3 contributes $O(dB\sqrt{T\log d}/C_{\mathrm{clip}})$(including the clipped-estimator bias; using $\log d \le \log T$ yields the rate $O(d\sqrt{T\log T})$), and FTRL contributes $O(d\sqrt{T\log d/K})$. In the worst case, the dominant term $O(d\sqrt{T\log T})$ arises from EXP3, which benefits less from a $1/K$ averaging gain but provides robustness to adversarial and misspecified rewards (see Appendix~\ref{thm:exp3-clipped} for individual bounds).

\paragraph{Interpretation.} 
The bound reveals three key insights: (1) \emph{subproblem-independent scaling}—the dominant term depends on subproblem size $d = n/K$, not full problem size $n$, yielding exponential-to-polynomial reduction over naive combinatorial bandits with $O(\sqrt{T|\gX|})$ regret where $|\gX|$ may be $n!$ or $2^n$; (2) the \textit{``price of coordination''} $O(KQ\sqrt{T})$ is sublinear in $T$, so average coordination cost per round vanishes as \(T \to \infty\); (3) optimal decomposition $K^* = O(\sqrt{n/(QR_{\max})})$ balances subproblem complexity against coordination overhead. 

\paragraph{Comparison to prior work.} 
% \textsc{D\&L} operates under \emph{full-bandit feedback}, observing only a single scalar reward, and does not assume linear reward structure. Stochastic linear bandits~\cite{dani2008stochastic} achieve $O(n^{3/2}\sqrt{T})$ under full-bandit feedback but require linearity in $n$ decision dimensions. Classical stochastic combinatorial methods~\cite{chen2013combinatorial,kveton2015tight} assume semi-bandit feedback (per-arm observations) and achieve $O(\sqrt{nkT\log T})$ for $n$ base arms with $k$ selected per round. \textsc{D\&L} decomposes the $n$-dimensional decision space into $K$ subproblems of size $d = n/K$, each optimized independently under full-bandit feedback, achieving $O(d\sqrt{T\log T})$ regret without linearity or stronger feedback. Naive combinatorial bandits incur $O(\sqrt{T|\gX|})$ where $|\gX|$ may be exponential ($2^n$ or $n!$); \textsc{D\&L} avoids this through structured decomposition. Notably, the improvement arises from decision-space decomposition; the multi-expert and coordination mechanisms ensure robustness and global consistency without inflating the learning rate. Extended discussion in~\ref{app:related-work}.
Stochastic linear bandits~\cite{dani2008stochastic} achieve $O(n^{3/2}\sqrt{T})$ under full-bandit feedback; classical combinatorial methods~\cite{chen2013combinatorial,kveton2015tight} 
achieve $O(\sqrt{nkT\log T})$ under semi-bandit feedback ($n$ base arms with $k$ selected per round). \textsc{D\&L} achieves 
$O(d\sqrt{T\log T})$ under full-bandit feedback without linearity—where 
$d = n/K$ is subproblem size. Naive combinatorial bandits incur $O(\sqrt{T|\gX|})$ where $|\gX|$ may be $2^n$ or $n!$; \textsc{D\&L}'s structured decomposition reduces this to polynomial dependence on $n$. The multi-expert and coordination mechanisms ensure robustness without inflating the rate (Extended related work in Appendix~\ref{app:related-work}).
\section{Experiments}
\subsection{Datasets}
\label{subsec:dataset}
For task of multi-objective optimization (MOO)~\cite{xue2024offlinemultiobjectiveoptimization}, our evaluation spans two complementary benchmarks differing in structure, scale, and evaluation cost.

\paragraph{Multi-Objective Combinatorial Optimization (MOCO).} We consider multi-objective variants of classical NP-hard problems: Traveling Salesman 
Problem (TSP; up to 100 cities, $n! \approx 10^{157}$), Knapsack (up to 200 items, $2^n \approx 10^{60}$)~\cite{zitzler1999moea}, and Capacitated Vehicle Routing Problem (CVRP; up to 100 customers)~\citet{jozefowiez2008movrp}, covering permutation, binary, and constrained combinatorial domains. These benchmarks enable direct comparison with state-of-the-art across diverse Pareto geometries and exponentially scaling search spaces.

\paragraph{Hardware-Software Co-Design.} This real-world four-objective problem requires expensive simulation of neural network accelerators~\cite{hdnn-pim} over a 20-dimensional mixed-integer design space ($\sim(2.2 \times 10^5)$ configurations). Each evaluation incurs substantial computational cost ($O(\text{hours})$) and 
intrinsic stochasticity from hardware-level variance, presenting a challenging regime for sample-efficient optimization under noisy, expensive black-box feedback with unknown Pareto geometry.
% It tests sample efficiency under costly black-box evaluations—critical for practical applications where simulator calls can take hours and Pareto fronts are unknown a priori. 
% \chienyi{Maybe also add that the evaluation itself is noisy (stochastic) due to the nature of the hardware. Also, could include the reference~\cite{hdnn-pim} as the problem definition.}

Together, these benchmarks validate performance across search spaces ranging from $10^5$ to $10^{157}$ configurations, encompassing binary, permutation, and mixed-integer domains with heterogeneous evaluation costs.

\subsection{Experimental Details}
All BO baselines use GP surrogates with Matérn-5/2 kernels and ARD~\citep{rasmussen2005gaussian}; acquisition functions use Sobol QMC sampling~\cite{sobol1967distribution} with 64--128 MC samples.
% For computational efficiency on larger instances, we use sparse variational ~\cite{hensman2013gaussianprocessesbigdata} with 50–100 inducing points.
Objectives are normalized to $[0,1]$ using bounds computed from initial samples, with minimization transformed to maximization orientation for acquisition functions. All methods share identical initialization (50--100 random feasible solutions depending on problem size). Since MOCO evaluations are inexpensive, we tune each baseline individually and report \emph{best-achievable} hypervolume. For \textsc{D\&L}, we report results with two choices for Expert 1: UCB and Thompson Sampling (denoted \textsc{\textsc{D\&L}} and \textsc{\textsc{D\&L}-TS} respectively); the latter replaces FTRL with posterior sampling; weighted-sum scalarization is used for 
MOCO benchmarks, Tchebycheff for HW-SF (Appendix~\ref{app:scalarization}). Full hyperparameters in Appendix~\ref{app:hyperparameters}.
% Decomposition size scales consistently across problem sizes for both variants
For evaluation, we report the \emph{hypervolume ratio} $\text{HV}'_r(F) = \text{HV}r(F) / \prod{i=1}^{M} |r_i - z_i|$ (Pareto front $F$, reference point $r$, ideal point $z$) ensuring comparability across problem scales and instance sizes. We adopt standardized reference and ideal points, see Table~\ref{tab:reference_points}. Each method is evaluated over 200 randomly generated problem instances, and we report mean hypervolume ratio with standard error.

\begin{table*}[!t]
\vspace{-2pt}
\caption{Results across benchmarks (200 instances each), reporting Hypervolume (HV, $\uparrow$), number of non-dominated solutions (NDS, $\uparrow$), computational cost in Tera-FLOPs ($\mathcal{F}$, $\downarrow$). $^\dagger$Cost includes training; $^*$evaluated under matched compute budget, NA: no WS solver for CVRP. The vertical line separates specialized problem-specific solvers (left) from general-purpose methods (right).}
\label{table:all-results}
\centering
\Huge
\setlength{\aboverulesep}{0pt}
\setlength{\belowrulesep}{0pt}
\renewcommand{\arraystretch}{1.8}
% \makebox[\textwidth][c]{%
\resizebox{\linewidth}{!}{%
\begin{tabular}{@{}ll ccc ccc | ccc ccc ccc ccc ccc ccc ccc ccc@{}}
\toprule
& & \multicolumn{3}{c}{\textbf{WS-*}} & \multicolumn{3}{c}{\textbf{PPLS/D-C}} & \multicolumn{3}{|c}{\textbf{PMOCO}$^\dagger$} & \multicolumn{3}{c}{\textbf{PMOCO}$^*$} & \multicolumn{3}{c}{\textbf{NSGA-II}} & \multicolumn{3}{c}{\textbf{qNEHVI}} & \multicolumn{3}{c}{\textbf{qParEGO}} & \multicolumn{3}{c}{\textbf{PR}} & \multicolumn{3}{c}{\cellcolor{green!15}\textbf{\textsc{D\&L}}} & \multicolumn{3}{c}{\cellcolor{green!15}\textbf{\textsc{D\&L}-TS}} \\
\cmidrule(lr){3-5} \cmidrule(lr){6-8} \cmidrule(lr){9-11} \cmidrule(lr){12-14} \cmidrule(lr){15-17} \cmidrule(lr){18-20} \cmidrule(lr){21-23} \cmidrule(lr){24-26} \cmidrule(lr){27-29} \cmidrule(lr){30-32}
& & HV & {NDS} & $\mathcal{F}$ & HV & {NDS} & $\mathcal{F}$ & HV & {NDS} & $\mathcal{F}$ & HV & {NDS} & $\mathcal{F}$ & HV & {NDS} & $\mathcal{F}$ & HV & {NDS} & $\mathcal{F}$ & HV & {NDS} & $\mathcal{F}$ & HV & {NDS} & $\mathcal{F}$ & \cellcolor{green!15}HV & \cellcolor{green!15}{NDS} & \cellcolor{green!15}$\mathcal{F}$ & \cellcolor{green!15}HV & \cellcolor{green!15}{NDS} & \cellcolor{green!15}$\mathcal{F}$ \\
\midrule
\multirow{3}{*}{\rotatebox{90}{\textbf{Bi-TSP}}} 
  & 20  & \textbf{.625} & 21 & \textbf{.001} & \textbf{.626} & 71 & .03 & .509 & 69 & 5e3 & .36 & 8 & 68 & .573 & 225 & .15 & .352 & 18 & 24 & .373 & 4 & 4.6 & .536 & 10 & 4.4 & \cellcolor{green!15}.585 & \cellcolor{green!15}45 & \cellcolor{green!15}.01 & \cellcolor{green!15}\textbf{.60} & \cellcolor{green!15}102 & \cellcolor{green!15}.01 \\
  & 50  & \textbf{.629} & 26 & .06 & .628 & 213 & .07 & .550 & 76 & 24e3 & .33 & 6 & 734 & .347 & 20 & .15 & .126 & 16 & 29 & .137 & 7 & 274 & .472 & 23 & 51 & \cellcolor{green!15}.530 & \cellcolor{green!15}62 & \cellcolor{green!15}\textbf{.05} & \cellcolor{green!15}\textbf{.54} & \cellcolor{green!15}147 & \cellcolor{green!15}.07 \\
  & 100 & \textbf{.69} & 50 & 1.0 & .63 & 533 & .15 & .67 & 86 & 105e3 & .309 & 6 & 685 & .294 & 51 & .31 & .083 & 17 & 37 & .104 & 6 & 2462 & .07 & 11 & 357 & \cellcolor{green!15}.40 & \cellcolor{green!15}48 & \cellcolor{green!15}\textbf{.13} & \cellcolor{green!15}\textbf{.47} & \cellcolor{green!15}114 & \cellcolor{green!15}.16 \\
\midrule
\multirow{3}{*}{\rotatebox{90}{\textbf{Bi-KP}}} 
  & 50  & \textbf{.356} & 17 & \textbf{.003} & \textbf{.357} & 25 & .01 & .355 & 55 & 10e3 & .34 & 40 & 362 & .219 & 68 & .31 & .301 & 28 & 41 & .29 & 8 & 4.4 & .264 & 6 & 2.0 & \cellcolor{green!15}\textbf{.35} & \cellcolor{green!15}45 & \cellcolor{green!15}.13 & \cellcolor{green!15}.347 & \cellcolor{green!15}71 & \cellcolor{green!15}\textbf{.07} \\
  & 100 & .440 & 23 & \textbf{.02} & \textbf{.447} & 49 & .02 & .443 & 42 & 22e3 & .37 & 34 & 816 & .221 & 24 & .31 & .25 & 18 & 131 & .314 & 7 & 19 & .304 & 10 & 3.7 & \cellcolor{green!15}.338 & \cellcolor{green!15}62 & \cellcolor{green!15}.28 & \cellcolor{green!15}\textbf{.385} & \cellcolor{green!15}56 & \cellcolor{green!15}\textbf{.13} \\
  & 200 & .36 & 30 & .08 & \textbf{.362} & 63 & .07 & .354 & 52 & 47e3 & .186 & 2 & 583 & .28 & 13 & .32 & .10 & 9 & 590 & .21 & 8 & 300 & .266 & 13 & 9.9 & \cellcolor{green!15}.26 & \cellcolor{green!15}21 & \cellcolor{green!15}.39 & \cellcolor{green!15}\textbf{.30} & \cellcolor{green!15}73 & \cellcolor{green!15}\textbf{.22} \\
\midrule
\multirow{3}{*}{\rotatebox{90}{\textbf{Tri-TSP}}} 
  & 20  & \textbf{.467} & 46 & .008 & .388 & 35 & .01 & .460 & 105 & 5e3 & .43 & 40 & 51 & .411 & 200 & 1.5 & .23 & 68 & 3.9 & .32 & 6 & 1.7 & .296 & 45 & 5.1 & \cellcolor{green!15}\textbf{.43} & \cellcolor{green!15}108 & \cellcolor{green!15}.11 & \cellcolor{green!15}.42 & \cellcolor{green!15}218 & \cellcolor{green!15}\textbf{.02} \\
  & 50  & \textbf{.429} & 189 & .34 & .262 & 93 & \textbf{.04} & .420 & 105 & 24e3 & .39 & 25 & 236 & .173 & 198 & 1.5 & .052 & 69 & 4.9 & .062 & 10 & 87 & .04 & 30 & 83 & \cellcolor{green!15}.262 & \cellcolor{green!15}172 & \cellcolor{green!15}.26 & \cellcolor{green!15}\textbf{.282} & \cellcolor{green!15}494 & \cellcolor{green!15}.09 \\
  & 100 & \textbf{.488} & 200 & 5.2 & .241 & 158 & \textbf{.16} & .44 & 104 & 104e3 & \textbf{.30} & 40 & 654 & .135 & 200 & 6.1 & .024 & 107 & 6.4 & .03 & 31 & 1408 & .02 & 39 & 308 & \cellcolor{green!15}.21 & \cellcolor{green!15}223 & \cellcolor{green!15}.26 & \cellcolor{green!15}\textbf{.234} & \cellcolor{green!15}310 & \cellcolor{green!15}.24 \\
\midrule
\multirow{3}{*}{\rotatebox{90}{\textbf{Bi-CVRP}}} 
  & 20  & NA & NA & NA & \textbf{.48} & 7 & .10 & .36 & 8 & 6e3 & .36 & 8 & 398 & .43 & 81 & .15 & .352 & 13 & 2.5 & .21 & 4 & 1.8 & .22 & 5 & 3.0 & \cellcolor{green!15}\textbf{.45} & \cellcolor{green!15}22 & \cellcolor{green!15}\textbf{.01} & \cellcolor{green!15}\textbf{.46} & \cellcolor{green!15}36 & \cellcolor{green!15}.04 \\
  & 50  & NA & NA & NA & \textbf{.45} & 21 & .12 & .346 & 6 & 28e3 & .33 & 6 & 571 & .31 & 9 & 1.0 & .14 & 7 & 14 & .06 & 3 & 14 & .052 & 8 & 1.3 & \cellcolor{green!15}.35 & \cellcolor{green!15}37 & \cellcolor{green!15}\textbf{.12} & \cellcolor{green!15}\textbf{.37} & \cellcolor{green!15}50 & \cellcolor{green!15}.10 \\
  & 100 & NA & NA & NA & \textbf{.43} & 25 & .14 & .35 & 7 & 126e3 & .30 & 6 & 2e3 & .31 & 14 & 3.5 & .11 & 10 & 17 & .11 & 9 & 128 & .08 & 4 & 1 & \cellcolor{green!15}.253 & \cellcolor{green!15}17 & \cellcolor{green!15}.31 & \cellcolor{green!15}\textbf{.30} & \cellcolor{green!15}42 & \cellcolor{green!15}\textbf{.19} \\
\bottomrule
\end{tabular}
}
% \vspace{-2mm}  % <-- add this
\end{table*}

\subsection{Baselines}
We compare \textsc{D\&L} against baselines spanning five algorithmic paradigms. NSGA-II~\cite{deb2002fast} is a heuristic-based evolutionary method employing non-dominated sorting with crowding distance, testing whether \textsc{D\&L} can match performance achieved through problem-specific operators. For Bayesian optimization, MOBO-qParEGO~\cite{knowles2006parego} decomposes via random Chebyshev scalarizations, testing whether explicit decomposition outperforms bandit-based weight selection; MOBO-qNEHVI~\cite{daulton2021parallel} maximizes expected hypervolume improvement, testing whether modeling objective correlations improves sample efficiency; and BOPR~\cite{daulton2022bayesian} handles discrete structures through probabilistic reparameterization—the most directly comparable baseline, we extend it to multi-objective settings to compare gradient-based acquisition vs. \textsc{D\&L}. PPLS/D-C~\cite{shi2020parallel} decomposes objective space into sub-regions with archive cooperation, testing parallel local search against learning-based methods. WS-LKH (40 wt.) and WS-DP~\cite{miettinen1999nonlinear}(101 wt.) apply weighted-sum scalarization with LKH~\cite{two-opt} and dynamic programming respectively, providing near-optimal references. PMOCO~\cite{lin2022pareto} employs preference-conditioned hypernetworks trained via reinforcement learning; while requiring pretraining, we include it as a representative neural method. For fair comparison, we tune each baseline individually and report best achievable hypervolume. Problem-specific operators are employed where applicable~\cite{lacomme2004,prins2004}; details in Appendix~\ref{app:baselines}. %Lin-Kernighan-Helsgaun
\subsection{Experimental Results}
\label{sec:results}
We evaluate \textsc{D\&L} across combinatorial benchmarks (Tables~\ref{table:all-results} and hardware-software co-design (Table~\ref{tab:hw-sw-results}), comparing against problem-specific solvers, neural methods, and general-purpose MOCO algorithms. %Computational costs are reported in Table~\ref{tab:flop-comparison}.

\subsubsection{Multi-Objective Combinatorial Optimization}
\textbf{Expected performance hierarchy.} 
Problem-specific solvers (WS-LKH/DP, PPLS/D-C) exploit domain structure through tailored heuristics and establish quality upper bounds. Neural methods (PMOCO) amortize computation through offline training on problem distributions, approaching specialized performance at inference. NSGA-II, while general-purpose, relies on problem-specific crossover and mutation operators for combinatorial search. In contrast, \textsc{D\&L} operates as a pure black-box method—requiring only objective evaluations without domain operators or pretraining. Our goal is to approach the quality of specialized methods under this more restrictive setting.

\textbf{Solution quality.} \textsc{D\&L} achieves the highest hypervolume among black-box methods across most benchmarks, reaching 80--98\% of specialized solvers performance while outperforming Bayesian baselines by 20--90\%. For fair comparison with neural methods, PMOCO$^*$ matches training instances to our evaluation budget—under this setting, our approach exceeds offline training by 50--70\% across problems. \textsc{D\&L}'s advantage grows with scale: MOBO methods degrade 70--85\% from small to large instances.
% by 40--90\% 
% within 5--30\%
% \textbf{Solution quality.} \textsc{D\&L} achieves the highest hypervolume among black-box methods across most benchmarks, reaching within 5--30\% of specialized solvers while outperforming Bayesian baselines by 20--90\%. PMOCO$^*$ uses an equivalent data budget (training instances matched to our total evaluations) for fair comparison--our approach exceeds by 50--70\% on TSP. \textsc{D\&L}'s advantage grows with scale—MOBO methods degrade 70--85\% from small to large instances.

\textbf{Scaling to higher objectives.} In tri-objective settings, Bayesian methods collapse under the expanded Pareto front dimensionality. \textsc{D\&L}'s bandit-based decomposition scales gracefully, discovering 2--3$\times$ more non-dominated solutions than the strongest baselines with competitive HV.

\textbf{Computational efficiency.} To isolate algorithmic efficiency from hardware effects, we report analytical FLOPs in Tables~\ref{table:all-results}; wall-clock times appear in the Appendix. 
% PMOCO$^*$ uses an equivalent data budget (training instances matched to our total evaluations) for fair comparison. 
FLOP ratios closely track runtime ratios across components (see~\ref{app:extended-results}, Fig~\ref{fig:mae-flops-bitsp20}). \textsc{D\&L} reduces computation by 90-99\% compared to Bayesian baselines, yielding to 10--30$\times$ wall-clock speedups (see Tables~\ref{table:moco-tsp}-\ref{table:moco-cvrp}). This efficiency stems from closed-form acquisition updates that bypass expensive hypervolume computation and GP inference. 
\subsubsection{Hardware-Software Co-Design}
This benchmark presents a contrasting regime: a compact search space but expensive evaluations requiring neural accelerator simulation. We fix the budget to 150 evaluations to test sample efficiency under costly queries.
% \textsc{D\&L} achieves $\sim$30\% higher hypervolume than NSGA-II and qNEHVI (Table~\ref{tab:hw-sw-results}). 
% While absolute margins are modest—expected given the tractable search space
\textsc{D\&L} achieves the highest hypervolume, improving $\sim$22\% over baselines on average (Table~\ref{tab:hw-sw-results}). This demonstrates that bandit-based exploration remains competitive in sample-limited regimes where GP surrogates should excel. Notably, qNEHVI struggles with the four-objective structure, where hypervolume acquisition cost scales exponentially with objective count.

\textbf{Ablations.} Bandit-based decomposition scales gracefully on large-scale, many-objective problems where baselines degrade. Appendix~\ref{app:ablations} validates key design choices: overlapping subproblem decomposition outperforms monolithic optimization (\ref{ablation:decomp})—notably, \textsc{D\&L} can exceed specialized solvers on BITSP-20 Fig~\ref{fig:overlap_ablation_20}); the multi-experts framework (UCB, EXP3 etc.) outperforms any single expert (\ref{ablation:experts}); adaptive perturbation improves local search escape; and sample diversity analysis confirms broad trajectory coverage (\ref{ablation:diversity}). Thompson sampling variants consistently outperform UCB, reflecting superior exploration-exploitation balance in combinatorial spaces as seen in Tables~\ref{table:all-results}.
% =============================================================================
% HARDWARE-SOFTWARE TABLE
% =============================================================================
\begin{table}[!h]
\caption{Results (averaged over 10 seeds) on Hardware-Software Co-design (4 objectives).}
\label{tab:hw-sw-results}
\centering
\scriptsize
\setlength{\tabcolsep}{4pt}
\renewcommand{\arraystretch}{1.1}
\begin{tabular}{@{}lc@{}}
\toprule
\textbf{Method} & \textbf{Hypervolume} \\
\midrule
NSGA-II         & 0.291 $\pm$ 0.040 \\
MOBO-qNEHVI     & 0.287 $\pm$ 0.015 \\
MOBO-qParEGO    & 0.34 $\pm$ 0.012 \\
\midrule
\rowcolor{green!15}
\textbf{\textsc{D\&L}}   & \textbf{0.36} $\pm$ \textbf{0.01} \\
\rowcolor{green!15}
\textbf{\textsc{D\&L}-TS}   & \textbf{0.372} $\pm$ \textbf{0.006} \\
\bottomrule
\end{tabular}
\end{table}
\section{Conclusion}
We presented \textsc{D\&L}, a decomposition-based multi-expert framework for combinatorial optimization under full-bandit feedback. The framework is modular: any no-regret experts suffice, and decomposition strategies can be tailored to problem structure. The key requirement—bounded coupling—holds when problems exhibit locality (geographic proximity for TSP, temporal adjacency for scheduling); problems with dense global interactions may benefit less. Our ablations reveal a tradeoff between subproblem count $K$ and overlap $Q$; balancing per $K^* = O(\sqrt{n/(QR_{\max})})$ keeps coordination overhead sublinear.
Empirically, \textsc{D\&L} matches or exceeds specialized solvers on MOCO benchmarks in 2-10$\times$ less time, scales to tri-objective settings where Bayesian methods degrade, and remains competitive on expensive black-box problems—all without problem-specific operators or pretraining. Current limitations include fixed mixture weights $\boldsymbol{\rho}$ (adaptive selection could improve constants) and bounded rewards (sub-Gaussian extensions remain open). The results establish decomposed bandit optimization as principled approach to sample-efficient MOCO without requiring surrogate models or offline training.

\bibliographystyle{apalike}
\bibliography{references} 
\newpage
\section*{Appendix}
\label{appendix}

\hrule height 0.8pt
\vspace{0.5em}
% 2. Start TOC
\setupappendixtoc
\startcontents[append]
% \subsection*{Contents of Appendix}

% The horizontal line at the START
% \hrule height 0.8pt
\vspace{0.5em}

\printcontents[append]{}{1}{}

% 3. The horizontal line at the END to demark it
\vspace{0.5em}
\hrule height 0.8pt
\vspace{2em}

\section{Related Work}
\label{app:related-work}

\paragraph{Neural Combinatorial Optimization.}
Neural combinatorial optimization (NCO) methods learn solution construction policies, most commonly via policy-gradient reinforcement learning.
Attention-based models trained with REINFORCE~\cite{bello2016neural,kool2019attention} achieve strong performance on single-objective CO. Recent multi-objective extensions include Pareto set learning with preference-conditioned models (PMOCO)~\cite{lin2022pareto}, diversity-enhanced neural heuristics (NHDE)~\cite{chen2023neuralmultiobjectivecombinatorialoptimization}, and related neural combinatorial optimization approaches that amortize search across instances~\cite{khalil2017learning}. 
While these approaches can amortize inference at test time, they typically require substantial offline training data and computational resources to generalize across instance distributions, and may require retraining or fine-tuning when objectives, constraints, or instance families shift~\cite{angioni2025neural}. 
In contrast, \textsc{D\&L} requires no pretraining and adapts online using bandit feedback alone, making it naturally suited to settings with expensive evaluations, limited data budgets, or rapidly varying instance structure.

\paragraph{Relationship to Reinforcement Learning.}
Multi-armed bandits can be viewed as a special case of reinforcement learning: a one-state (stateless) MDP with no state transitions and immediate rewards~\cite{sutton2018reinforcement}.
\textsc{D\&L} fits this regime—each round selects a complete solution and observes a single scalar reward—so the learning problem is fundamentally \emph{credit assignment without temporal dynamics}.
This contrasts with neural CO methods, which typically cast solution construction as a sequential decision process and train a parameterized policy over action prefixes using policy gradients (often REINFORCE with a rollout baseline), i.e., an actor-only or actor-critic-style training loop.
Technically, \textsc{D\&L} differs from actor-critic RL in three ways:
(i) \textbf{no value-function learning}: there is no learned critic estimating long-horizon returns because the objective is evaluated on the constructed solution (one-step return);
(ii) \textbf{no differentiable policy optimization}: actions are selected by index/weight updates (UCB confidence bounds, EXP3/FTRL multiplicative weights) rather than gradients through a neural policy; and
(iii) \textbf{structured action space exploitation}: \textsc{D\&L} decomposes the decision into coupled subproblems and coordinates them via Lagrangian multipliers, analogous in spirit to factorization ideas but without modeling transition dynamics.
Thus, \textsc{D\&L} is best interpreted as an online learning/bandit formulation for solution construction (with regret guarantees), whereas NCO uses offline RL to amortize a policy across many instances, typically without finite-time regret guarantees.

\paragraph{Combinatorial Bandits.}
Online convex optimization~\cite{hazan2016introduction} and multi-armed bandits~\cite{auer2002finite,auer2002exp3} provide principled frameworks for sequential decision-making under uncertainty. The combinatorial multi-objective MAB (COMO-MAB)~\cite{oner2018comamab} is most related to our setting, achieving $O(NL^3 \log T)$ Pareto regret. However, COMO-MAB assumes \emph{linear} reward structure (action rewards are linear combinations of base-arm rewards), \emph{semi-bandit} feedback (observing per-arm rewards), finitely many base arms, and stationary i.i.d.\ 
rewards—assumptions substantially stronger than ours. In classical stochastic combinatorial MAB methods, CUCB~\cite{chen2013combinatorial} and subsequent work~\cite{kveton2015tight} achieve gap-free regret bounds on the order of $\tilde{O}(\sqrt{KLT})$; adversarial variants~\cite{cesa2012combinatorial,audibert2014regret} address non-stochastic settings with similar bounds. Both lines require semi-bandit feedback, which is unavailable when only aggregate solution quality is observed. Our formulation goes beyond these settings in three key respects: (i) combinatorial (often exponentially large) action spaces built from finite base arms, (ii) full-bandit feedback rather than semi-bandit, and (iii) nonlinear scalarized rewards rather than linear additive reward models. Recent work on adversarial combinatorial optimization~\cite{ito2019improved, audibert2014regret} shows that correlation among losses can significantly impact achievable regret; our decision-space decomposition explicitly mitigates such dependencies by isolating subproblems, yielding regret that scales with subproblem dimension $d$ rather than full problem size $n$.

\paragraph{Exploration-Exploitation Tradeoffs.}
Balancing exploration and exploitation is central to optimization across domains. This tradeoff can  be exploited in adaptive optimizers for deep neural networks, to design methods that navigate loss landscapes toward generalizing solutions~\cite{singhmoxco}. Here, we address the analogous challenge in combinatorial optimization: exploring exponential search spaces efficiently while exploiting promising regions. Our multi-expert framework instantiates this tradeoff through complementary no-regret algorithms—UCB for optimistic exploration~\cite{auer2002finite}, Exp3 for adversarial robustness~\cite{auer2002exp3}, and FTRL for regularized stability~\cite{hazan2016introduction}—coordinated via Lagrangian decomposition~\cite{fisher1981lagrangian}.

\paragraph{Multi-expert and algorithm selection.} 
Bandits have been used for hyper-heuristic operator selection in evolutionary algorithms~\cite{lagos2023mab_hyperheuristic,fialho2010analyzing}, where a bandit chooses among crossover or mutation operators. This differs fundamentally from \textsc{D\&L}: we use bandits for \emph{solution construction} (position-wise action selection), not operator selection. The multi-expert framework relates to EXP4~\cite{auer2002exp3} (bandits with expert advice), but EXP4 maintains a meta-layer weighting over experts. \textsc{D\&L} instead samples directly from complementary algorithms $\{\text{UCB}, \text{EXP3}, \text{FTRL}\}$ and updates \emph{all} experts after each observation—inheriting the best expert's guarantee without the $O(\sqrt{N})$ cost of $N$ experts.

Unlike contextual bandits where contexts are exogenous, \textsc{D\&L} exploits \emph{endogenous} structure through overlapping subproblem decompositions, creating a fundamentally different information architecture. This hybrid framework—synthesizing bandit-style learning with structured decomposition—enables tractable optimization over exponential spaces while maintaining sublinear regret guarantees (Section~\ref{app:total-regret-bound}).

To our knowledge, no prior work combines: (i) decision-space decomposition with regret guarantees, (ii) multi-expert bandit construction under full-bandit feedback, and (iii) Lagrangian coordination with provable $O(\sqrt{T})$ coupling error. The superficial similarity to MOEA/D (both use ``decomposition'') obscures a fundamentally different problem formulation: MOEA/D is an evolutionary heuristic decomposing objectives; \textsc{D\&L} is an online learning algorithm decomposing decisions.

\section{Why Weighted-Sum methods are not feasible?}
\label{app:WS}
Weighted-sum approaches, while prevalent in multi-objective optimization, suffer from fundamental limitations that restrict their applicability to general combinatorial problems. First, existing weighted-sum methods like WS-LKH and PPLS/D rely on problem-specific heuristics (e.g., Lin-Kernighan for TSP, branch-and-bound for knapsack) that are hand-crafted for particular combinatorial structures. These algorithms achieve state-of-the-art performance precisely because they exploit domain-specific properties—TSP's geometric structure, knapsack's fractional relaxation—that do not transfer to other problem classes. Developing such specialized solvers for every new combinatorial domain is computationally prohibitive and requires extensive domain expertise.
Second, the weighted-sum scalarization inherently restricts exploration to the convex hull of the Pareto front. For minimization problems, only solutions on convex portions of the Pareto front can be found, regardless of the weights chosen. Non-convex regions, which often contain practically important trade-offs, remain inaccessible. This limitation is particularly severe in combinatorial spaces where Pareto fronts are typically highly non-convex due to the discrete nature of solutions.
Third, weighted-sum methods require solving the scalarized problem to optimality for each weight vector, which becomes intractable for large-scale combinatorial problems without domain-specific algorithms. In contrast, our online learning framework provides a domain-agnostic approach that explores the entire Pareto front through adaptive decomposition and multi-expert strategies, without requiring problem-specific heuristics or convexity assumptions.

\subsection{Scalarization in \textsc{D\&L}}
\label{app:scalarization}

Our framework is agnostic to the choice of scalarizing function—the bandit-based decomposition operates on any scalar objective derived from the multi-objective vector. We briefly review the two scalarizations used in our experiments.

\textbf{Weighted-sum scalarization} combines objectives linearly:
\begin{equation}
    g_{\text{ws}}(x \mid w) = \sum_{i=1}^{m} w_i f_i(x), \quad w \in \Delta^{m-1}
\end{equation}
where $\Delta^{m-1}$ denotes the $(m-1)$-simplex. This is the simplest scalarization but can only recover Pareto-optimal points on the convex hull of the front.

\textbf{Chebyshev (Tchebycheff) scalarization} uses the weighted infinity-norm:
\begin{equation}
    g_{\text{tch}}(x \mid w, z^*) = \max_{i \in \{1,\ldots,m\}} \left\{ w_i \left| f_i(x) - z^*_i \right| \right\}
\end{equation}
where $z^* = (z^*_1, \ldots, z^*_m)$ is a reference point (typically the ideal point). Unlike weighted-sum, Chebyshev scalarization can recover non-convex Pareto regions.

\textbf{Choice in experiments.} For combinatorial benchmarks (TSP, knapsack, CVRP), we use weighted-sum—the simplest scalarization—demonstrating that strong performance stems from our decomposition and bandit framework rather than sophisticated scalarization. For hardware-software co-design, we use Chebyshev scalarization to handle non-convex trade-offs common in four-objective design spaces. Switching scalarizations is trivial—only the scalar objective function changes while the bandit-based subproblem optimization remains unchanged. The key contribution is the online learning framework over decomposed subproblems, not the scalarization itself.

\section{Additional Experimental details}

\subsection{Datasets} 
\subsubsection{Multi-Objective Combinatorial Optimization}
\label{appex:moco}
Multi-objective combinatorial optimization (MOCO) commonly exists in industries, such as transportation, manufacturing, energy, and telecommunication (Chen et al., 2023b). We consider three typical MOCO problems that are commonly studied: multi-objective traveling salesman problem (MO-TSP) \cite{TSP-data, TSP-data2}, multi-objective knapsack problem (MO-KP), and multi-objective capacitated vehicle routing problem (MO-CVRP).

% \textbf{MO-TSP} In the $m$-objective TSP with $n$ cities, each city has $m$ different 2D coordinate sets, where coordinates are sampled uniformly from $[0,1]^2$. Each objective $f_i$ minimizes the total Euclidean distance of a tour using the $i$-th coordinate set. We evaluate both bi-objective (BiTSP) and tri-objective (TriTSP) variants, with solution space $|\gX| = n!$. the objectives minimized are p[][], and [].

% \textbf{MO-KP} The $m$-objective knapsack problem consists of $n$ items with weights $w_j \sim \mathcal{U}(0,1)$ and $m$ value sets where $v_{ij} \sim \mathcal{U}(0,1)$ represents the value of item $j$ for objective $i$. The goal is to maximize $f_i(\vx) = \sum_{j=1}^n v_{ij}x_j$ for each objective while satisfying the capacity constraint $\sum_{j=1}^n w_j x_j \leq C$. Solutions violating the constraint receive $-\infty$ objective values. The solution space has cardinality $|\gX| = 2^n$. the objectives are [][][] and the the knapsack weight used is - 
\textbf{MO-TSP} In the $m$-objective TSP \cite{MOTSP} with $n$ cities, each city has $m$ different 2D coordinate sets, where coordinates are sampled uniformly from $[0,1]^2$. Each objective $f_i$ minimizes the total Euclidean distance of a tour using the $i$-th coordinate set. We evaluate both bi-objective (BiTSP) and tri-objective (TriTSP) variants, with solution space $|\mathcal{X}| = n!$. The objectives minimized are the tour lengths $f_i(\mathbf{x}) = \sum_{j=1}^{n} d_i(x_j, x_{j+1})$, where $d_i(\cdot,\cdot)$ denotes the Euclidean distance using the $i$-th coordinate set and $x_{n+1} = x_1$.

\textbf{MO-KP} The $m$-objective knapsack problem \cite{bazgan-MOKP} consists of $n$ items with weights $w_j \sim \mathcal{U}(0,1)$ and $m$ value sets where $v_{ij} \sim \mathcal{U}(0,1)$ represents the value of item $j$ for objective $i$. The goal is to maximize $f_i(\mathbf{x}) = \sum_{j=1}^n v_{ij}x_j$ for each objective while satisfying the capacity constraint $\sum_{j=1}^n w_j x_j \leq C$. 
% Solutions violating the constraint receive $-\infty$ objective values. 
The solution space has cardinality $|\mathcal{X}| = 2^n$. The knapsack capacity is set to $C \in \{12.5, 25, 25\}$ for problem sizes $n \in \{50, 100, 200\}$ respectively.

\textbf{MO-CVRP} The bi-objective CVRP \cite{lacomme2004, prins2004} consists of $n$ customers and one depot, with all node coordinates sampled uniformly from $[0,1]^2$. The depot is positioned at $(0.5, 0.5)$. Each customer $j$ has an integer demand $d_j \sim \mathcal{U}\{1,...,9\}$. The two objectives are: (1) minimizing the total distance of all routes $f_1(\mathbf{x}) = \sum_{r \in \mathbf{x}} \text{length}(r)$, and (2) minimizing the makespan $f_2(\mathbf{x}) = \max_{r \in \mathbf{x}} \text{length}(r)$, where $r$ denotes a route. Following convention in [][] we consider the instances with different sizes where each vehicle has capacity $D \in \{30, 40, 50\}$ for problem sizes $n \in \{20, 50, 100\}$ respectively. 
% Solutions must satisfy capacity constraints and visit each customer exactly once (no split delivery).

[Note on capacity] why we increased it in some and why standard in others.

\subsection{Real-World Application: Hardware-Software co-design}
HDnn-PIM~\cite{hdnn-pim} is an edge AI accelerator design that supports the HDnn algorithm. Designing HDnn-PIM requires choosing both software (model parameters, etc.) and hardware (systolic array size, etc.) design parameters. The choice of the design parameters then contributes to various design metrics, including energy, delay, area (PPA), and accuracy, in a black-box manner, which is only attainable through expensive simulation. Additionally, the architecture consists of components with stochastic nature (PIM) that results into noisy evaluation. The goal of the hardware-software co-design problem is to minimize PPA and maximize accuracy within trade-offs through hardware and software design parameter choices.

\subsection{Baseline Algorithm Details}
\label{app:baselines}

\paragraph{WS-LKH/DP (Weighted Sum Scalarization).}
We implement weighted sum scalarization, a classical \emph{a priori} multi-objective optimization technique that converts a vector-valued objective into a series of single-objective subproblems via linear aggregation~\cite{miettinen1999nonlinear}. Although weighted sum scalarization cannot recover unsupported Pareto-optimal solutions in non-convex fronts (see discussion~\ref{app:WS}), it remains a widely used baseline due to its simplicity and interpretability. For MO-TSP, we employ WS-LKH, combining weighted sum scalarization with two-opt local search inspired by the Lin-Kernighan-Helsgaun algorithm~\cite{helsgaun,tinos2018}. Given $\lambda \in [0,1]$ sampled uniformly across $K$ weights, each scalarized instance uses combined distance matrix $D_{\lambda} = \lambda D_1 + (1-\lambda) D_2$ for bi-objective problems (extended to three weights summing to 1 for tri-objective). The algorithm generates random initial tours and applies iterative 2-opt improvements until local optimality. For MO-KP, we implement WS-DP, solving each scalarized knapsack exactly via dynamic programming. Item weights are scaled by factor 30 to enable integer-based DP while maintaining precision. For each $\lambda$, it solves $\max_x \sum_{i} [\lambda v_{i1} + (1-\lambda) v_{i2}] x_i$ subject to capacity constraints using standard DP recurrence with backtracking for solution reconstruction. Both methods aggregate solutions across all weight combinations and retain only the non-dominated set. We use $K=100$ for knapsack and vary $K$ by instance size for TSP, including extreme weights to ensure Pareto front boundary coverage.

\paragraph{PPLS/D-C (Parallel Pareto Local Search with Decomposition).}
We implement PPLS/D-C~\cite{shi2020parallel}, a parallel local search that decomposes the objective space into subregions explored concurrently with periodic archive cooperation. The space is partitioned into $P$ regions, each defined by weight vector $\boldsymbol{\lambda}_p$ inducing preference via weighted sum scalarization. Each process maintains a local archive and explores its region using problem-specific operators: 2-opt for TSP, bit-flip and swap moves for knapsack with feasibility checking, and intra-route 2-opt plus inter-route relocation for CVRP. Cooperation occurs every $\tau$ iterations: processes import compatible solutions from the shared archive and export their top-ranked local solutions. A master process monitors coverage and dynamically reassigns regions to sparse areas of the Pareto front. Initialization combines constructive heuristics (nearest-neighbor for TSP, greedy by efficiency for knapsack) with random solutions. The final output is the non-dominated subset of the merged archive.

\paragraph{MOBO-NSGA-II (Evolutionary Multi-Objective Optimization).}
We implement NSGA-II~\cite{deb2002fast} as our evolutionary baseline, which balances convergence and diversity through non-dominated sorting and crowding distance. The algorithm maintains a population that evolves through selection, crossover, and mutation, with environmental selection based on Pareto dominance rank and crowding distance to balance convergence and diversity. Non-dominated sorting partitions the population into fronts $F_1, F_2, \ldots$ where solutions in $F_k$ are dominated only by solutions in earlier fronts. Crowding distance measures local density in objective space, computed as the average side length of the cuboid formed by neighboring solutions. Binary tournament selection favors lower rank, with crowding distance as tiebreaker. Following established practices~\cite{two-opt,ishibuchi2014behavior}, we employ specialized operators for each problem: (1) TSP uses Order Crossover (OX)~\cite{davis1985ox} preserving relative city ordering, with swap mutation; (2) Knapsack uses uniform crossover with greedy repair—infeasible offspring have items removed by increasing value-to-weight ratio until feasible, then items are greedily added back by decreasing efficiency~\cite{ishibuchi2014behavior}; (3) CVRP implements the ~\cite{savelsbergh1985local,lacomme2004,prins2004} operator suite including route-based crossover, cross-exchange mutation, and capacity-aware repair ensuring customer coverage.

\paragraph{MOBO-qNEHVI (Direct Hypervolume Optimization).}
qNEHVI~\cite{daulton2021parallel} is a state-of-the-art multi-objective Bayesian optimization method that directly maximizes expected hypervolume improvement rather than relying on scalarization. Given reference point $\mathbf{r}$ and current Pareto approximation $\mathcal{P}$, the acquisition is $\alpha_{\text{qNEHVI}}(\mathbf{x}) = \mathbb{E}[\text{HVI}(\mathbf{y}(\mathbf{x}) \mid \mathcal{P}, \mathbf{r})]$, estimated via Monte Carlo sampling from the GP posterior using Sobol quasi-random sequences. Originally proposed for continuous domains, we adapt it to combinatorial optimization following the same strategy as qParEGO: Kendall kernels for TSP permutations, Matérn kernels for knapsack, corresponding solution encodings, and NSGA-II with problem-specific operators for acquisition optimization. For computational tractability, we limit the baseline set to 20 diverse non-dominated solutions when $|\mathcal{P}|$ is large. Reference points are set conservatively beyond worst observed values to ensure positive hypervolume contributions.

% MOBO Baselines. We implement two Bayesian optimization approaches adapted for multi-objective combinatorial optimization: qParEGO and qNEHVI. Both methods model each objective with an independent Gaussian Process (GP) using a ModelListGP [Balandat et al., 2020], with Matérn kernels for Knapsack and Kendall kernels [Jiao and Vert, 2015] for TSP to capture permutation similarity. Input and output transforms (Normalize, Standardize) ensure numerical stability across varying objective scales.
% qParEGO [Knowles, 2006] scalarizes the multi-objective problem using the augmented Chebyshev function with randomly sampled weight vectors from a Dirichlet distribution, enabling diverse trade-off exploration. For each scalarization, we optimize qNoisyExpectedImprovement [Letham et al., 2019] to select the next candidate. qNEHVI [Daulton et al., 2021] directly maximizes expected hypervolume improvement with respect to a reference point, avoiding explicit scalarization. Both methods optimize the acquisition function using an evolutionary strategy with problem-specific operators: Order Crossover for TSP and uniform crossover with greedy repair for Knapsack. Hyperparameters are provided in Table X

\paragraph{MOBO-qParEGO (Bayesian Optimization with Random Scalarization).}
qParEGO~\cite{knowles2006parego} is a Bayesian optimization method that decomposes multi-objective optimization into randomly scalarized subproblems; we use the batch variant qNParEGO available in BoTorch~\cite{balandat2020botorch}. Each iteration samples $q$ weight vectors from $\text{Dirichlet}(\alpha \mathbf{1}_M)$ with $\alpha < 1$ to encourage diverse trade-off exploration. The augmented Chebyshev scalarization is $s_{\boldsymbol{\lambda}}(\mathbf{y}) = \max_{j} \lambda_j \tilde{y}_j + \rho \sum_{j} \lambda_j \tilde{y}_j$, where $\tilde{y}_j$ denotes the normalized objective and $\rho > 0$ ensures Pareto-optimal solutions are optimal for some weight. Each objective is modeled independently using Gaussian Process surrogates with input normalization and output standardization for numerical stability. Since the original method targets continuous domains, our adaptation to combinatorial problems involves: (1) problem-specific kernels—Kendall kernels~\cite{kendall1938} for TSP capturing permutation similarity via pairwise concordance, and Matérn-5/2 with ARD for knapsack; (2) solution encoding—position-based for permutations, direct binary for knapsack; and (3) evolutionary optimization of \texttt{qNoisyExpectedImprovement}~\cite{letham2019constrained} with problem-specific operators (Order Crossover for TSP, uniform crossover with greedy repair for knapsack) replacing gradient-based continuous optimization.

\paragraph{BOPR (Probabilistic Reparameterization).}
BOPR~\cite{daulton2022bayesian} is a Bayesian optimization method that natively handles discrete structures through probabilistic reparameterization, enabling gradient-based acquisition optimization over combinatorial domains. Unlike qParEGO and qNEHVI which require adaptation from continuous settings, BOPR directly addresses discrete optimization—we implement the original method faithfully for our benchmark problems. The method parameterizes distributions over solutions using continuous parameters $\boldsymbol{\theta}$ and optimizes expected objective values via the REINFORCE likelihood-ratio estimator~\cite{williams1992simple}. For TSP, we use the Plackett-Luce distribution with Gumbel-Max sampling: $\pi = \text{argsort}(\boldsymbol{\theta}/\tau + \mathbf{g})$ where $g_i \sim \text{Gumbel}(0,1)$ and $\tau$ controls exploration. For knapsack, we use independent Bernoulli distributions with $p_i = \sigma(\theta_i / \tau)$. The gradient is estimated as $\nabla_{\boldsymbol{\theta}} \mathbb{E}[\alpha(x)] \approx \frac{1}{M}\sum_{m} (\alpha(x^{(m)}) - b) \nabla_{\boldsymbol{\theta}} \log p_{\boldsymbol{\theta}}(x^{(m)})$, where $b$ is a moving-average baseline for variance reduction. Our extensions include edge-incidence encoding for TSP to improve GP modeling and \texttt{qLogNEHVI} as acquisition for numerical stability, with optimization via Adam across multiple random restarts.

\paragraph{PMOCO (Neural Multi-Objective Combinatorial Optimization).}
PMOCO~\cite{lin2022pareto} is a deep reinforcement learning approach that learns to approximate the entire Pareto set using a single preference-conditioned model. Among neural MOCO methods, we select PMOCO due to its widespread adoption and strong reported performance across standard benchmarks. The architecture builds upon POMO~\cite{kwon2020pomo}, employing an attention-based encoder-decoder with instance normalization, extended to multi-objective settings via a hyper-network that generates decoder parameters conditioned on preference vectors. Training uses reinforcement learning with Tchebycheff scalarization rewards across batches of randomly generated problem instances and uniformly distributed weight vectors $\{\boldsymbol{\lambda}_1, \ldots, \boldsymbol{\lambda}_K\}$, enabling a single trained model to produce approximate Pareto-optimal solutions for any trade-off preference via a single forward pass. Unlike other baselines which perform test-time optimization, PMOCO requires pre-training on a distribution of problem instances (e.g., TSP with cities sampled uniformly in $[0,1]^2$). This amortized approach offers fast inference but assumes access to a training distribution matching the test instances—a fundamentally different setting from our work, which optimizes each instance from scratch without pre-training. We use the authors' official implementation with pre-trained models.
% \paragraph{PMOCO (Neural Multi-Objective Combinatorial Optimization).}
% PMOCO~\cite{lin2022pareto} is a state-of-the-art neural approach for multi-objective combinatorial optimization that learns to approximate the entire Pareto set using a single preference-conditioned model. 
% The method employs a hypernetwork-based architecture built on the POMO~\cite{kwon2020pomo} framework, where decoder parameters are dynamically generated based on the input preference vector. 
\\
\\
\textit{Fair Comparison with Neural Methods}
However, the standard PMOCO training protocol requires substantial data: approximately $20$ million training instances (200 epochs $\times$ 100,000 instances per epoch) to achieve competitive performance.
This creates a fundamental asymmetry when comparing against Bayesian or bandit-based methods, which require no training phase but instead perform iterative evaluations directly on the test instance.
To enable a fair comparison in terms of sample efficiency, we constrain PMOCO's training budget to match the number of function evaluations used by our method on a single problem instance.

Specifically, let $N_{\text{eval}}$ denote the total number of objective function evaluations performed by our method when solving one problem instance. 
We then train PMOCO with a maximum of $N_{\text{eval}}$ training samples:
\begin{equation}
    \text{PMOCO training budget} = N_{\text{eval}} = \sum_{i=1}^{K} T_i \cdot E_i
\end{equation}
where $K$ is the number of weight vectors, $T_i$ is the number of iterations for weight vector $i$, and $E_i$ is the number of evaluations per iteration.
In our experiments, $N_{\text{eval}} \approx 622,000$ evaluations per instance.

This comparison reveals the sample efficiency of different algorithmic paradigms: neural methods like PMOCO require millions of training samples to learn generalizable patterns, whereas our bandit-based approach achieves competitive performance using orders of magnitude fewer samples by directly optimizing on the target instance.
Table~\ref{table:all-results} reports the hypervolume achieved by each method under matched sample budgets. This budget constrained evaluation is denoted by \(PMOCO^*\) and we use approximately same scalarization weights as our method for fair comparison i.e \(\approx 40\) weights.

% \begin{table}[!h]
% \centering
% \caption{Sample efficiency comparison on Bi-objective TSP (BiTSP50). PMOCO (matched) is trained with the same number of samples as our method's per-instance evaluations.}
% \label{tab:sample_efficiency}
% \begin{tabular}{lcc}
% \toprule
% \textbf{Method} & \textbf{Samples Used} & \textbf{Hypervolume} $\uparrow$ \\
% \midrule
% Our Method & 622,220 & X.XXX \\
% PMOCO (matched budget) & 622,220 & X.XXX \\
% PMOCO (full budget) & 20,000,000 & X.XXX \\
% \bottomrule
% \end{tabular}
% \end{table}

\subsection{Evaluation Methodology}
\label{app:evaluation}
\textbf{Hypervolume} Given a Pareto front approximation $F = \{f^{(1)}, \ldots, f^{(|F|)}\}$ in $M$-dimensional objective space and a reference point $r \in \mathbb{R}^M$, the hypervolume indicator measures the $M$-dimensional Lebesgue measure of the region dominated by $F$ and bounded by $r$:
\begin{equation}
\text{HV}_r(F) = \lambda_M\left(\bigcup_{f \in F} \{p \in \mathbb{R}^M : f \prec p \prec r\}\right)
\end{equation}
where $\lambda_M$ denotes the Lebesgue measure and $f \prec p$ indicates that $f$ dominates $p$. For minimization problems, solution $f$ dominates point $p$ if $f_i \leq p_i$ for all $i \in \{1, \ldots, M\}$ with strict inequality in at least one objective; for maximization, the inequalities are reversed.

\textbf{Hypervolume Ratio} To enable scale-invariant comparison across problem sizes and objective magnitudes, we report the normalized hypervolume ratio:
\begin{equation}
\text{HV}'_r(F) = \frac{\text{HV}_r(F)}{\prod_{i=1}^{M} |r_i - z_i|}
\end{equation}
where $z \in \mathbb{R}^M$ is the ideal point representing the best achievable value in each objective. This normalization divides the raw hypervolume by the volume of the bounding box defined by $r$ and $z$, yielding values in $[0, 1]$ comparable across problem scales. Note, this is across all baselines consistently including PMOCO.

\textbf{Reference and Ideal Points.} For minimization problems (TSP, CVRP), the ideal point is set to the origin $z = \mathbf{0}$ and the reference point $r$ represents a worst-case upper bound exceeding all observed objective values. For maximization problems (Knapsack), the ideal point represents maximum achievable values and the reference point is set below all Pareto-optimal solutions to ensure positive hypervolume contributions. Standardized values are listed in Table~\ref{tab:reference_points} \cite{chen2023neuralmultiobjectivecombinatorialoptimization, lin2022pareto}.

\textbf{Statistical Testing} Each method is evaluated on 200 randomly generated problem instances per configuration. We report mean hypervolume ratio with standard error and 95\% confidence intervals computed via Student's $t$-distribution. For pairwise algorithm comparison, we apply the Wilcoxon signed-rank test with significance level $\alpha = 0.05$, appropriate for paired samples on identical problem instances.

\begin{table}[!h]
\caption{Benchmark problems with standardized reference and ideal points for hypervolume ratio computation.}
\label{tab:reference_points}
\vskip 0.15in
\begin{center}
\begin{sc}
\begin{tabular}{llccc}
\toprule
Problem & $n$ & Reference Point $r$ & Ideal Point $z$ \\
\midrule
\multirow{3}{*}{BiTSP} 
 & 20    & $(20, 20)$ & $(0, 0)$ \\
 & 50    & $(35, 35)$ & $(0, 0)$ \\
 & 100   & $(65, 65)$ & $(0, 0)$ \\
\midrule
\multirow{2}{*}{TriTSP} 
 & 20   & $(20, 20, 20)$ & $(0, 0, 0)$ \\
 & 50   & $(35, 35, 35)$ & $(0, 0, 0)$ \\
 & 100  & $(65, 65, 65)$ & $(0, 0, 0)$ \\
\midrule
\multirow{3}{*}{BiKP} 
 & 50   & $(5, 5)$   & $(30, 30)$ \\
 & 100  & $(20, 20)$ & $(50, 50)$ \\
 & 200  & $(30, 30)$ & $(75, 75)$ \\
\midrule
\multirow{3}{*}{BiCVRP} 
 & 20  & $(30, 4)$ & $(0, 0)$ \\
 & 50  & $(45, 4^*)$ & $(0, 0)$ \\
 & 100 & $(80, 4^*)$ & $(0, 0)$ \\
\bottomrule
\end{tabular}
\end{sc}
\end{center}
\end{table}

\textbf{Note:} For BiCVRP50/100, BOPR and qParEGO struggle to find solutions with makespan $< 4$. 
We relaxed the reference point to $(45, 5)$ for both methods and $(80, 5)$ for BOPR with BICVRP100. This difficulty arises from the \textit{permutation-to-route mapping discontinuity}: (i) small perturbations in the customer permutation can drastically change route compositions due to capacity-constrained splitting, 
causing large jumps in makespan; (ii) the GP surrogate's continuous encoding cannot capture 
route structure, making it difficult to learn the relationship between solutions and makespan; 
and (iii) makespan depends on the single longest route (a min-max objective), which is inherently 
harder to optimize than aggregate objectives like total distance. While problem-specific operators 
(e.g., NSGA-II with route-level crossover and relocation mutations) can achieve makespan $< 4$, 
we use generic operators for BOPR and qParEGO to ensure a fair comparison across non-heuristic methods.

\subsection{Hyperparameter and Tuning used for reporting results}
\label{app:hyperparameters}
Table~\ref{tab:hyperparameters} summarizes the hyperparameters used across problem scales. These settings are shared by both the UCB and Thompson Sampling (TS) variants, with two adjustments for TS: temperature decay of 0.995 (vs.\ 0.98) and FTRL disabled. All other parameters remain identical. For decomposition size and overlap, we adopt the recommended settings from Section~\ref{ablation:decomp}: \(d=n/2\) (\(50\%\) of problem size) and initial overlap of approximately \(44\%\). These values are guidelines rather than strict requirements—as our ablation demonstrates, the algorithm tolerates a range of configurations provided computational redundancy remains below the \(2.5\times\) threshold. We provide a sensitivity analysis in Section~\ref{ablation:sensitivity}.

\begin{table}[!h]
\caption{Hyperparameter settings across problem scales. Small: 20 cities (TSP/CVRP) or 50 items (KP); Medium: 50 cities or 100 items; Large: 100 cities or 200 items.}
\label{tab:hyperparameters}
\centering
\small
\begin{tabular}{@{}lccc@{}}
\toprule
\textbf{Hyperparameter} & \textbf{Small} & \textbf{Medium} & \textbf{Large} \\
\midrule
\multicolumn{4}{l}{\textit{Multiplicative Weights / UCB}} \\
Learning rate $\eta$ & 0.5 & 0.5 & 0.5 \\
UCB coefficient & 3.0 & 3.0 & 3.0 \\
Initial temperature & 1.0 & 1.0 & 1.0 \\
Temperature decay & 0.98 & 0.98 & 0.98 \\
\midrule
\multicolumn{4}{l}{\textit{Decomposition}} \\
Decomposition size & 15 & 25 & 35 \\
Initial overlap & 6 & 11 & 18 \\
Overlap decay rate $\beta$ & 0.1 & 0.1 & 0.1 \\
Diminishing overlap & \cmark & \cmark & \cmark \\
\midrule
\multicolumn{4}{l}{\textit{Lagrangian Dual}} \\
Dual step size $\alpha$ & 1.0 & 1.0 & 1.0 \\
Accelerated dual & \cmark & \cmark & \cmark \\
Use FTRL & \cmark & \cmark & \cmark \\
\midrule
\multicolumn{4}{l}{\textit{Search Control}} \\
Max iterations & 100 & 150 & 200 \\
Number of rounds & 5 & 10 & 20 \\
Patience & 10 & 20 & 50 \\
Hybrid ratio & 0.5 & 0.5 & 0.5 \\
\midrule
\multicolumn{4}{l}{\textit{Additional Components}} \\
Weight vectors & 20 & 20/25 & 20/35 \\
\bottomrule
\end{tabular}
\vspace{0.5em}

{\footnotesize \textit{Note:} \cmark\ indicates the feature is enabled.\par}
\end{table}

\subsection{Extended Results}
\label{app:extended-results}
\subsubsection{Hardware and Compute}
Experiments used heterogeneous hardware: NVIDIA A100 GPUs for most benchmarks, RTX 4090/L40S GPUs for Bi-CVRP (due to resource constraints), and a local macOS CPU for our method. To ensure thorough and fair computational analysis despite this variability, we report: (i) \emph{algorithmic FLOPs} as the primary metric for fair, hardware-agnostic comparison (more discussion below), and (ii) \emph{wall-clock time} for practical execution context  (reported in Tables~\ref{table:moco-tsp}-~\ref{table:moco-cvrp}).

\subsubsection{Algorithmic Computational Cost Estimation}
We report computational cost using \emph{algorithmic operation counting}, a standard technique in optimization and machine learning for estimating hardware-independent computational complexity. Rather than attempting to measure executed hardware instructions, we count the number of floating-point operations implied by the algorithmic structure of each method.

\paragraph{Algorithmic Operation Count.}
For each method, we decompose execution into major algorithmic components (e.g., surrogate model fitting, acquisition evaluation, evolutionary search, solution evaluation, and Pareto updates). Within each component, we analytically count primitive arithmetic operations (addition, multiplication, comparison, exponentiation, etc.) required by the corresponding algorithmic kernels. This yields a symbolic operation count that depends only on problem size, objective dimension, and algorithmic hyperparameters.

Formally, let $\mathcal{A}$ denote an algorithm composed of components $\{C_i\}$. The total algorithmic cost is computed as
\[\text{AOC}(\mathcal{A}) = \sum_i \text{AOC}(C_i),\]
where $\text{AOC}(C_i)$ is the number of arithmetic operations implied by component $C_i$.

\paragraph{Gaussian Process Models.}
For Gaussian process (GP) surrogates, we follow standard complexity analyses that count linear algebra operations rather than hardware instructions. Exact GP training incurs $O(n^3)$ operations due to Cholesky factorization, while sparse GP variants reduce this to $O(nm^2)$ with $m$ inducing points. Posterior prediction costs are accounted for separately. Backward passes are incorporated using a calibrated multiplicative factor applied to forward kernel costs.

\paragraph{Multi-Objective Acquisition Functions.}
For multi-objective Bayesian optimization, we analytically count the cost of acquisition evaluation, including Monte Carlo sampling, hypervolume box decomposition, and per-sample hypervolume contribution checks. These costs scale with the number of objectives $M$, Monte Carlo samples $S$, and Pareto front size $|\mathcal{P}|$, consistent with prior analyses of expected hypervolume improvement.

\paragraph{Evolutionary Optimization.}
For evolutionary optimizers such as NSGA-II, we count operations for non-dominated sorting, crowding distance computation, selection, crossover, and mutation. These counts follow the standard complexity results of $O(MN^2)$ for non-dominated sorting and $O(MN \log N)$ for crowding distance, with explicit constants retained.

\paragraph{Problem-Specific Evaluation.}
Finally, problem-dependent costs (e.g., TSP tour evaluation, knapsack feasibility repair, or CVRP route evaluation) are explicitly counted based on the arithmetic operations required to evaluate objective functions and enforce constraints.

The resulting operation counts are \emph{algorithmic} and hardware-independent. They do not correspond to executed GPU or CPU instructions and should not be interpreted as exact hardware FLOPs. Instead, they provide a standardized and reproducible measure of relative computational cost across methods and problem scales.

% =============================================================================
% APPENDIX: BITSP20
% =============================================================================
\begin{table}[!h]
\caption{Bi-objective TSP (200 instances). Wall-clock time on [GPU/CPU spec].}
\label{table:moco-tsp}
\centering
\footnotesize
\renewcommand{\arraystretch}{1.2}
\begin{tabular}{l!{\vrule width 0pt}ccc!{\vrule width 0pt}ccc!{\vrule width 0pt}ccc}
\toprule
& \multicolumn{3}{c}{\textbf{Bi-TSP 20}} & \multicolumn{3}{c}{\textbf{Bi-TSP 50}} & \multicolumn{3}{c}{\textbf{Bi-TSP 100}} \\
\cmidrule(r){2-4} \cmidrule(lr){5-7} \cmidrule(l){8-10}
\textbf{Method} & HV$\uparrow$ & |NDS|$\uparrow$ & Time$\downarrow$ & 
         HV$\uparrow$ & |NDS|$\uparrow$ & Time$\downarrow$ & 
         HV$\uparrow$ & |NDS|$\uparrow$ & Time$\downarrow$ \\
\midrule
\rowcolor{lightgray}
\multicolumn{10}{l}{\textit{\small Problem-specific}} \\
WS-LKH      & \textbf{0.625} & 21  & 6m   & \textbf{0.629} & 26  & 50m  & \textbf{0.69} & 50 & 50h \\
PPLS/D-C    & \textbf{0.626} & 71  & 26m  & 0.628 & 213 & 2.8h & 0.63 & 533 & 8h \\
\midrule
\rowcolor{lightgray}
\multicolumn{10}{l}{\textit{\small Neural MOCO}} \\
PMOCO (101 wt.) & 0.509  & 69  & 15h$^\dagger$ & 0.550 & 76  & 25h$^\dagger$ & 0.67 & 86 & 100h$^\dagger$ \\
PMOCO (40 wt.) & 0.511  & 22  & 15h$^\dagger$ & 0.50 & 28  & 25h$^\dagger$ & 0.55 & 31 & 100h$^\dagger$ \\
% PMOCO$^*$ (101 wt.)   & 0.390  & 20  & 6m & 0.447 & 3  & 21m & 0.526 & 12 & 2h \\
PMOCO$^*$(40 wt.)   & 0.324  & 2  & 6m & 0.436 & 7  & 21m & 0.47 & 2 & 2h \\
\midrule
\rowcolor{lightgray}
\multicolumn{10}{l}{\textit{\small General MOCO}} \\
NSGA-II     & 0.573 & 225 & 4.3h & 0.347  & 20 & 4.2h & 0.294 & 51 & 8.5h \\
MOBO-qNEHVI      & 0.352 & 18  & 10h  & 0.126 & 16 & 15h & 0.083 & 17 & 16h   \\
MOBO-qParEGO     & 0.373 & 4   & 15h  & 0.137 & 7  & 34h & 0.104 & 6  & 54h    \\
PR          & 0.536 & 10 & 1.2h & 0.472 & 23 & 38h & 0.07 & 11  & 88h       \\
\midrule
\rowcolor{orange!15}
\textbf{\textsc{D\&L}}      & 0.585 & 45 & \textbf{10m} & 0.530  & 62 & \textbf{1h} & 0.40 & 48 & \textbf{1.8h} \\
\rowcolor{orange!15}
\textbf{\textsc{D\&L}-TS}   & \textbf{0.60} & 102 & 12m & \textbf{0.54}  & 147 & 2.2h & \textbf{0.47} & 114 & 5h \\
\bottomrule
\midrule
\end{tabular}

{\footnotesize $^\dagger$Training time. $^*$Same computational budget as \textsc{D\&L}.}
\end{table}
% =============================================================================
% APPENDIX: BIKP
% =============================================================================
\begin{table}[!h]
\caption{Bi-objective Knapsack (200 instances). Wall-clock time on [GPU/CPU spec].}
\label{table:moco-kp}
\centering
\footnotesize
\renewcommand{\arraystretch}{1.2}
\begin{tabular}{l!{\vrule width 0pt}ccc!{\vrule width 0pt}ccc!{\vrule width 0pt}ccc}
\toprule
& \multicolumn{3}{c}{\textbf{Bi-KP 50}} & \multicolumn{3}{c}{\textbf{Bi-KP 100}} & \multicolumn{3}{c}{\textbf{Bi-KP 200}} \\
\cmidrule(r){2-4} \cmidrule(lr){5-7} \cmidrule(l){8-10}
\textbf{Method} & HV$\uparrow$ & |NDS|$\uparrow$ & Time$\downarrow$ & 
         HV$\uparrow$ & |NDS|$\uparrow$ & Time$\downarrow$ & 
         HV$\uparrow$ & |NDS|$\uparrow$ & Time$\downarrow$ \\
\midrule
\rowcolor{lightgray!30}
\multicolumn{10}{l}{\textit{\small Problem-specific}} \\
WS-DP       & \textbf{0.356} & 17 & 7m  & \textbf{0.440} & 23 & 1h  & \textbf{0.36} & 30 & 2h \\
PPLS/D-C    & \textbf{0.357} & 25 & 36m & 0.447 & 49 & 1h  & 0.362 & 63  & 1.4h \\
\midrule
\rowcolor{lightgray!30}
\multicolumn{10}{l}{\textit{\small Neural MOCO}} \\
PMOCO                & 0.355 & 55 & 13h$^\dagger$  & 0.443 & 42 & 82h$^\dagger$ & 0.354 & 52 & 40h \\
PMOCO$^*$ (40 wt.)   & 0.34  & 40  & 10m & 0.37 & 34  & 2h & 0.186 & 2 & 1h \\
% PMOCO$^*$ (101 wt.)   & -  & -  & - & - & -  & - & 0.355 & 28 & - \\
\midrule
\rowcolor{lightgray!30}
\multicolumn{10}{l}{\textit{\small General MOCO}} \\
NSGA-II     & 0.219 & 68 & 8h  & 0.221 & 24 & 9h & 0.28  & 13 & 9h \\
qNEHVI      & 0.301 & 28 & 17h & 0.25  & 18 & 33h & 0.10  & 9  & 51h \\
qParEGO     & 0.29  & 8  & 14h & 0.314 & 7  & 38h & 0.21  & 8  & 55h \\
PR          & 0.264 & 6  & 3h  & 0.304 & 10 & 4.8h & 0.266 & 13 & 11h \\
\midrule
\rowcolor{orange!15}
\textbf{\textsc{D\&L}}      & \textbf{0.35}  & 45 & \textbf{43m} & 0.338 & 62 & \textbf{2h} & 0.26   & 21 & 7.5 \\
\rowcolor{orange!15}
\textbf{\textsc{D\&L}-TS}   & 0.347 & 71 & 113m & \textbf{0.385} & 56 & 3h & \textbf{0.30}  & 73 & 10h \\
\bottomrule
\midrule
\end{tabular}

{\footnotesize $^\dagger$Training time.}
\end{table}

% =============================================================================
% APPENDIX: OLD TABLE
% =============================================================================
\begin{table}[!h]
\caption{Tri-objective TSP (200 instances). Wall-clock time on [GPU/CPU spec].}
\label{table:moco-tritsp}
\centering
\footnotesize
\renewcommand{\arraystretch}{1.2}
% \begin{tabular}{@{}l@{\hspace{10pt}}ccc@{\hspace{10pt}}ccc@{\hspace{10pt}}ccc@{}}
\begin{tabular}{l!{\vrule width 0pt}ccc!{\vrule width 0pt}ccc!{\vrule width 0pt}ccc}
\toprule
& \multicolumn{3}{c}{\textbf{Tri-TSP 20}} & \multicolumn{3}{c}{\textbf{Tri-TSP 50}} & \multicolumn{3}{c}{\textbf{Tri-TSP 100}} \\
\cmidrule(r){2-4} \cmidrule(lr){5-7} \cmidrule(l){8-10}
\textbf{Method} & HV$\uparrow$ & |NDS|$\uparrow$ & Time$\downarrow$ & 
         HV$\uparrow$ & |NDS|$\uparrow$ & Time$\downarrow$ & 
         HV$\uparrow$ & |NDS|$\uparrow$ & Time$\downarrow$ \\
\midrule
\rowcolor{lightgray!30}
\multicolumn{10}{l}{\textit{\small Problem-specific}} \\
WS-LKH      & \textbf{0.467} & 46  & \textbf{9.3m}  & \textbf{0.429} & 189 & --    & \textbf{0.488} & 200 & -- \\
PPLS/D-C    & 0.388 & 35  & 1.3h & 0.262 & 93  & 2.5h  & \textbf{0.241} & 158 & 4h \\
\midrule
\rowcolor{lightgray!30}
\multicolumn{10}{l}{\textit{\small Neural MOCO}} \\
PMOCO       & 0.460 & 105 & 11h  & 0.420 & 105 & 23h  & 0.44   & 104 & 66h \\
% PMOCO$^*$   & 0.44 & 104 & 10m  & 0.37 & 103 & 1h  & 0.40   & 105 & 5h \\
PMOCO$^*$   & 0.44 & 40 & 10m  & 0.37 & 103 & 1h  & 0.40   & 105 & 5h \\
\midrule
\rowcolor{lightgray!30}
\multicolumn{10}{l}{\textit{\small General MOCO}} \\
NSGA-II     & 0.411 & 200 & 31h  & 0.173 & 198 & 32h  & 0.135 & 200 & 92h \\
qNEHVI      & 0.23  & 68  & 40h  & 0.052 & 69  & 42h  & 0.024 & 107 & 48h \\
qParEGO     & 0.32  & 6   & 33h  & 0.062 & 10  & 42h  & 0.03  & 31  & 94h \\
PR          & 0.296 & 45  & 2.5h & 0.04  & 30  & 63h  & 0.02   & 39 & 80h \\
\midrule
\rowcolor{orange!15}
\textbf{\textsc{D\&L}}      & \textbf{0.43}  & 108 & \textbf{10m}  & 0.262 & 172 & \textbf{1.5h} & 0.21  & 223 & 7h \\
\rowcolor{orange!15}
\textbf{\textsc{D\&L}-TS}   & 0.42  & 218 & 25m  & \textbf{0.282} & 494 & 2.8h & \textbf{0.234} & 310 & \textbf{6h} \\
\bottomrule
\end{tabular}

{\footnotesize{$^\dagger$: Training time. HV$\uparrow$: higher is better, |NDS|$\uparrow$: higher is better, Time$\downarrow$: lower is better}}
\end{table}
% =============================================================================
% APPENDIX: CVRP
% =============================================================================
\begin{table}[!h]
\caption{Bi-objective CVRP (200 instances). Wall-clock time on [GPU/CPU spec].}
\label{table:moco-cvrp}
\centering
\footnotesize
\renewcommand{\arraystretch}{1.2}
% \begin{tabular}{@{}l@{\hspace{10pt}}ccc@{\hspace{10pt}}ccc@{\hspace{10pt}}ccc@{}}
\begin{tabular}{l!{\vrule width 0pt}ccc!{\vrule width 0pt}ccc!{\vrule width 0pt}ccc}
\toprule
& \multicolumn{3}{c}{\textbf{Bi-CVRP 20}} & \multicolumn{3}{c}{\textbf{Bi-CVRP 50}} & \multicolumn{3}{c}{\textbf{Bi-CVRP 100}} \\
\cmidrule(r){2-4} \cmidrule(lr){5-7} \cmidrule(l){8-10}
\textbf{Method} & HV$\uparrow$ & |NDS|$\uparrow$ & Time$\downarrow$ & 
         HV$\uparrow$ & |NDS|$\uparrow$ & Time$\downarrow$ & 
         HV$\uparrow$ & |NDS|$\uparrow$ & Time$\downarrow$ \\
\midrule
\rowcolor{lightgray!30}
\multicolumn{10}{l}{\textit{\small Problem-specific}} \\
PPLS/D-C    & \textbf{0.48}  & 7 & 27m & \textbf{0.45} & 21 & 33m & \textbf{0.43} & 25  & 1.7h \\
\midrule
\rowcolor{lightgray!30}
\multicolumn{10}{l}{\textit{\small Neural MOCO}} \\
PMOCO (40 wt.)      & 0.36  & 8  & 7h & 0.346 & 6  & 11h & 0.35 & 7 & 26h \\
PMOCO$^*$ (40 wt.)  & 0.36  & 8  & 5m & 0.33 & 6  & 20m & 0.309 & 6 & 30m \\
% PMOCO$^*$ (101 wt.) & 0.261  & 5  & 5m & 0.37 & 16  & -- & 0.35 & 7 & -- \\
\midrule
\rowcolor{lightgray!30}
\multicolumn{10}{l}{\textit{\small General MOCO}} \\
NSGA-II     & 0.43 & 81 & 7h & 0.31  & 9 & 26h & 0.31 & 14 & 100h \\
qNEHVI      & 0.352 & 13  & 13h  & 0.14 & 7 & 31h & 0.11 & 10 & 50h \\
qParEGO     & 0.21 & 4   & 33h  & 0.066 & 3  & 45h & 0.11 & 9  & 58h \\
PR          & 0.22 & 5   & 31h  & 0.052 & 8 & 55h & 0.08 & 4 & 80h \\
\midrule
\rowcolor{orange!15}
\textbf{\textsc{D\&L}}      & \textbf{0.45} & 22 & \textbf{15m} & 0.35  & 37 & \textbf{3h} & 0.253 & 17 & \textbf{6h} \\
\rowcolor{orange!15}
\textbf{\textsc{D\&L}-TS}   & \textbf{0.46} & 36 & 44m & \textbf{0.37}  & 50 & 4h & \textbf{0.30} & 42 & 8h \\
\bottomrule
\end{tabular}

{\footnotesize{$^\dagger$: Training time. HV$\uparrow$: higher is better, |NDS|$\uparrow$: higher is better, Time$\downarrow$: lower is better}}
\end{table}

\subsection{Algorithmic Cost vs. Wall-Clock Time}
Algorithmic operation counts and wall-clock runtime capture complementary aspects of computational cost. Algorithmic counts reflect the logical number of arithmetic operations implied by an algorithm and are independent of hardware, software libraries, and implementation details. In contrast, wall-clock time measures actual execution time on a specific machine and is influenced by factors such as parallelism, memory access, kernel fusion, and hardware acceleration.

As a result, there is no fixed conversion between algorithmic operation counts and wall-clock time. However, for a fixed hardware and implementation, components with larger algorithmic costs are expected to dominate runtime. We therefore validate our analytical cost model by comparing \emph{relative} operation counts against observed runtime ratios across algorithmic components, rather than attempting to match absolute times.

\begin{figure}[h]
    \centering
    \includegraphics[width=0.5\textwidth]{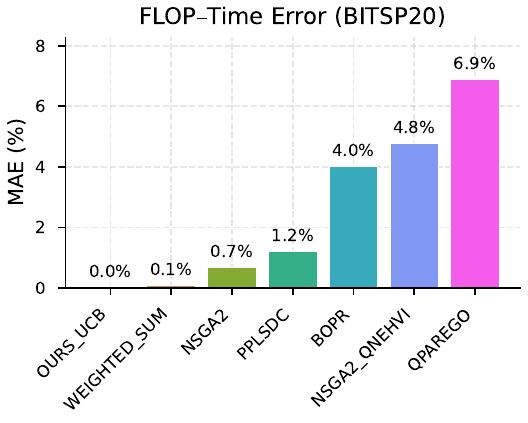}
     \caption{FLOP model validation across methods. Bars show the Mean Absolute Error (MAE) between predicted FLOP shares and measured wall-clock time shares per component. Lower values indicate better agreement. All methods achieve MAE $<$ 7\%, with the majority under 2\%, confirming that our analytical FLOP estimates reliably predict relative computational costs. \textsc{ours\_ucb} corresponds to our method \textsc{D\&L}.}
     \label{fig:mae-flops-bitsp20}
\end{figure}

\textbf{FLOP Model Validation.}
To validate that our analytical FLOP estimates accurately reflect actual computational costs, we compare predicted and measured time distributions across algorithm components. For each method, we decompose the total computation into $K$ components and measure both the predicted FLOP share $\hat{p}_k = F_k / \sum_{j=1}^{K} F_j$ and the observed wall-clock time share $p_k = T_k / \sum_{j=1}^{K} T_j$ for each component $k$. We quantify agreement using the Mean Absolute Error (MAE):
\begin{equation}
    \text{MAE} = \frac{1}{K} \sum_{k=1}^{K} |\hat{p}_k - p_k|
\end{equation}
where lower values indicate better agreement between predicted and observed time distributions. An MAE of 0\% indicates perfect prediction, while values below 10\% are generally considered good given typical measurement variance.

% ============================================================
% Observation / Results Text
% ============================================================
Table~\ref{tab:flop-validation} and Figure~\ref{fig:mae-flops-bitsp20} reports the validation results across all methods for BITSP20. The majority of methods achieve MAE below 2\%, indicating that our FLOP estimates accurately capture the relative computational costs of algorithm components. Even the highest MAE (QPAREGO at 6.89\%) remains within acceptable bounds, as the dominant components are correctly identified and their proportions are well-approximated. These results confirm that our analytical FLOP model provides a reliable basis for comparing computational efficiency across methods.
\begin{table}[!h]
\centering
\caption{FLOP model validation: Mean Absolute Error (MAE) between predicted FLOP shares and measured wall-clock time shares for each algorithm component.}
\label{tab:flop-validation}
\begin{tabular}{lcc}
\toprule
Method & MAE (\%) & Components \\
\midrule
\textsc{D\&L}(UCB) & 0.03 & 8 \\
WEIGHTED\_SUM & 0.08 & 5 \\
NSGA2 & 0.69 & 8 \\
PPLSDC & 1.22 & 5 \\
BOPR & 4.02 & 5 \\
NSGA2\_QNEHVI & 4.77 & 4 \\
QPAREGO & 6.89 & 6 \\
\bottomrule
\end{tabular}
\end{table}

\section{Ablation Studies}
\label{app:ablations}
\subsection{Subproblems and Decomposition Size}
\label{ablation:decomp}
We ablate decomposition size and overlap percentage, which jointly determine the number and size of subproblems. This analysis reveals the tradeoff between decomposition granularity (smaller subproblems are easier to optimize) and computational cost (more subproblems leads to additional coordination overhead).

\begin{figure}[!h]
    \centering
    \includegraphics[width=1.02\textwidth]{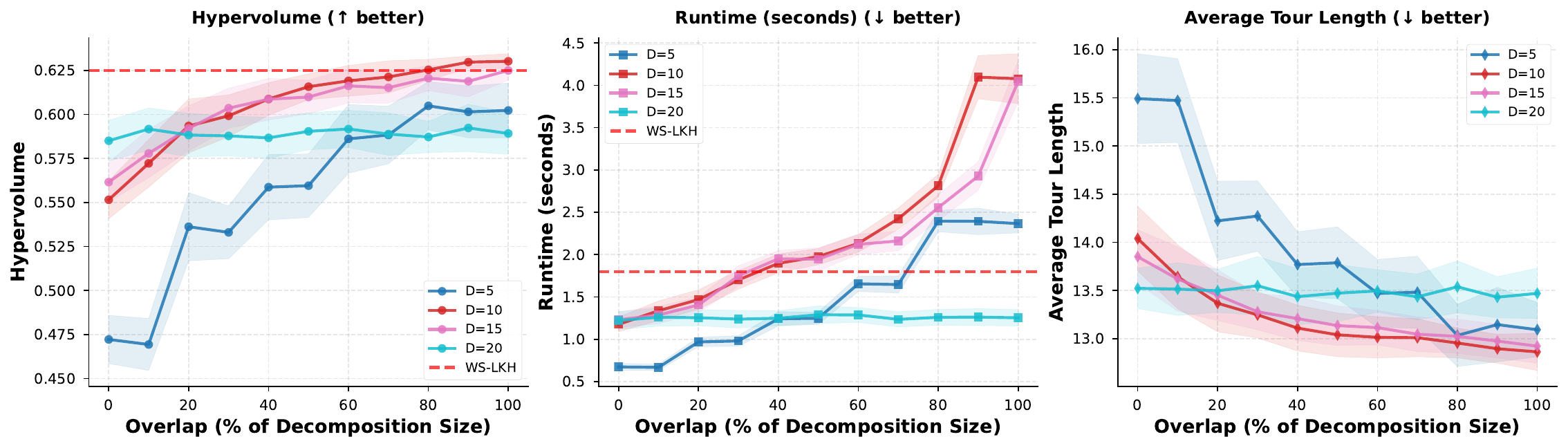}
    \caption{\textbf{Decomposition size and overlap ablation on BiTSP-20.} 
    We vary decomposition size $d \in \{5, 10, 15, 20\}$ (25\%--100\% of problem size) and overlap percentage from 0\% to 100\%. 
    \textbf{(Left)} Hypervolume improves with overlap across all decomposition sizes, with $d \in \{10, 15\}$ approaching the WS-LKH baseline (red dashed) by 40--50\% overlap. \textbf{(Center)} Runtime remains stable below 60\% overlap; beyond this threshold, smaller decompositions (especially $d=10$) exhibit runtime explosion due to subproblem proliferation. \textbf{(Right)} Tour length decreases with overlap, with all configurations converging to comparable quality ($\sim$13) by 50\% overlap. The smallest decomposition ($d=5$) shows the highest sensitivity to overlap, exhibiting both the largest quality gains and highest variance. 
    Results averaged over 50 runs; shaded regions denote $\pm 1$ standard deviation.}
    \label{fig:overlap_ablation_20}
\end{figure}
\begin{figure}[!h]
    \centering
    \includegraphics[width=1.02\textwidth]{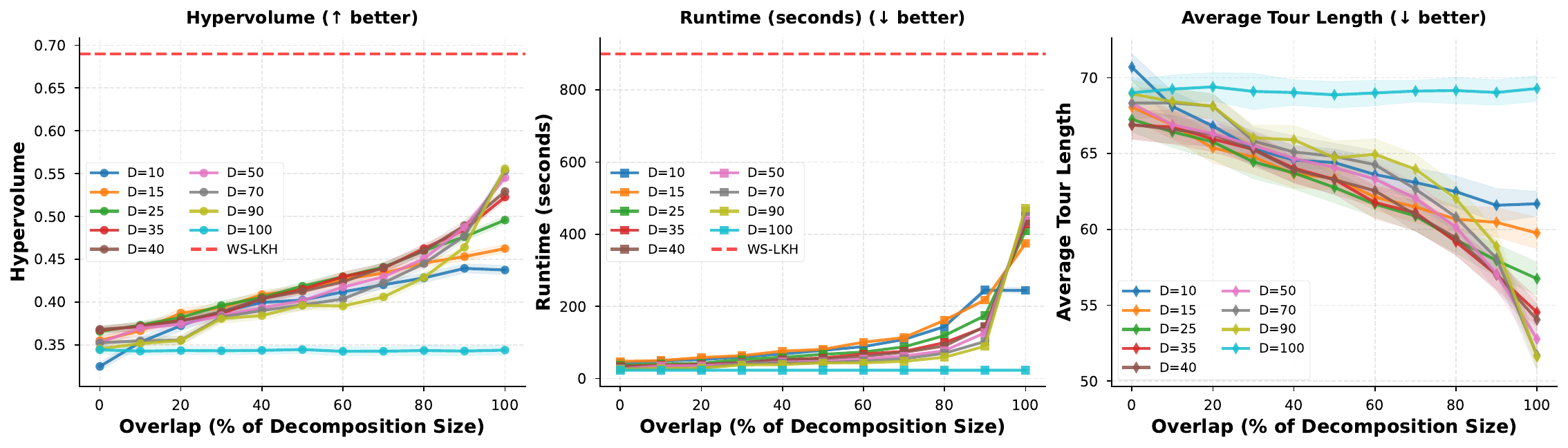}
    \caption{\textbf{Decomposition size and overlap ablation on BiTSP-100.} 
    We vary decomposition size $d \in \{10, 15, 25, 35, 40, 50, 70, 90, 100\}$ and overlap percentage from 0\% to 100\%. 
    \textbf{(Left)} Hypervolume increases with overlap up to 40--50\%, then plateaus; moderate decomposition sizes ($d \in \{25, 50\}$) achieve the best quality. 
    \textbf{(Center)} Runtime remains stable below 50\% overlap but increases superlinearly beyond, particularly for smaller $d$ due to the proliferation of overlapping subproblems. 
    \textbf{(Right)} Average tour length improves with overlap, converging across all configurations by 50\%. 
    The red dashed line indicates the weighted-sum LKH baseline. 
    Results averaged over 50 runs; shaded regions show $\pm 1$ standard deviation.}
     \label{fig:ablation_overlap_bitsp100}
\end{figure}

\paragraph{Decomposition size} as described n section[] refers to the size of each subproblem after partitioning the solution space. Given a problem of size \(n\), decomposition size \(d\), and overlap \(o\), the number of subproblems is determined by sliding window partitioning with step size, \({step} = \max(1, d - o)\). The framework creates subproblems as: \(\{\text{window}_i = [i, \min(i+d, n)) \mid i \in \{0, {step}, 2\cdot{step}, \ldots\}\}\). Larger decomposition sizes yield fewer subproblems, reducing computational overhead. However, excessively small decomposition sizes increase the number of subproblems and constraint conflicts between overlapping regions, reducing efficiency.

\paragraph{Overlap} (or overlap percentage) denotes the number of shared indices between consecutive subproblems, implementing a sliding window approach. Note, that an optimal overlap exists: excessive overlap (e.g., $o \geq d/2$) introduces redundancy and duplicated computation across subproblems, while insufficient overlap risks losing global consistency as subproblems become too isolated. We employ a simple diminishing overlap schedule $o_t = o_0 \cdot t^{-\alpha}$ (where $t$ is the iteration count and $\alpha$ is the decay rate) rather than a fixed threshold. This provides high overlap initially for strong cross-subproblem coordination, which naturally decays as the algorithm converges and local refinement becomes more important.

\paragraph{Empirical validation} To validate these design choices, we conducted a comprehensive sweep on BiTSP at two scales: $n=20$ (small) and $n=100$ (large). For BiTSP-20, we test decomposition sizes $d \in \{5, 10, 15, 20\}$ (25\%--100\% of problem size); for BiTSP-100, we test $d \in \{10, 15, 25, 35, 40, 50, 70, 90, 100\}$ (10\%--70\% of problem size). Overlap percentages range from 0\% to 100\% in 10\% increments. Each configuration was evaluated over 50 independent runs using the UCB-EXP3 variant. We also report BITSP50 results for the alternative Thompson-EXP3 variant. We track three metrics: hypervolume (solution quality), average tour length (objective value), and wall-clock runtime. Additionally, we perform a \emph{harm detection analysis} to identify the overlap threshold beyond which increasing overlap becomes counterproductive.

\paragraph{Results: Quality-runtime tradeoff.} Figures~\ref{fig:overlap_ablation_20} and~\ref{fig:ablation_overlap_bitsp100} reveal consistent patterns across both problem scales. \textbf{Hypervolume} improves monotonically with overlap up to 40--50\%, after which gains plateau. Moderate decomposition sizes ($d \approx n/2$ to $n/4$) achieve the best performance: $d \in \{10, 15\}$ for BiTSP-20 and $d \in \{25, 50\}$ for BiTSP-100, both reaching hypervolume within 5\% of the weighted-sum LKH baseline. The smallest decomposition sizes exhibit higher variance due to insufficient local context per subproblem. \textbf{Tour length} follows a similar pattern, with all configurations converging to comparable quality by 50\% overlap. \textbf{Runtime} increases superlinearly with overlap, particularly for small decomposition sizes. This effect is amplified at larger scale: for BiTSP-100, the runtime penalty at high overlap is more pronounced due to the increased number of subproblems. Note, we see similar trend for the Thompson-EXP3 variant as show in Figure~\ref{fig:ablation_overlap_bitsp50_TS}.

\paragraph{Harm detection analysis.} 
While Figures~\ref{fig:overlap_ablation_20} and~\ref{fig:ablation_overlap_bitsp100} show that solution quality improves with overlap, they do not reveal \emph{when} increasing overlap becomes counterproductive. To identify this threshold, we introduce a \textbf{harm detection analysis} that measures computational efficiency rather than raw performance. We define two diagnostic metrics:
\begin{itemize}
    \item \textit{Computational redundancy}: the ratio of total subproblem evaluations to problem size, $R = \frac{\sum_{i} \text{evals}_i}{n}$. This quantifies how many times each solution index is processed across all subproblems. With overlap $o$ and decomposition size $d$, the step size shrinks to $\text{step} = d - o$, increasing the number of overlapping windows and thus redundancy. We set a harm threshold at $R > 10\times$.
    
    \item \textit{Marginal efficiency}: the hypervolume gain per unit runtime cost, $\eta = \frac{\Delta \text{HV}}{\Delta \text{Runtime}}$, computed between consecutive overlap percentages. Positive $\eta$ indicates beneficial overlap; negative $\eta$ indicates that additional overlap \emph{hurts} performance (runtime increases while quality stagnates or decreases).
\end{itemize}
\begin{figure}[!h]
    \centering
    \includegraphics[width=0.8\linewidth]{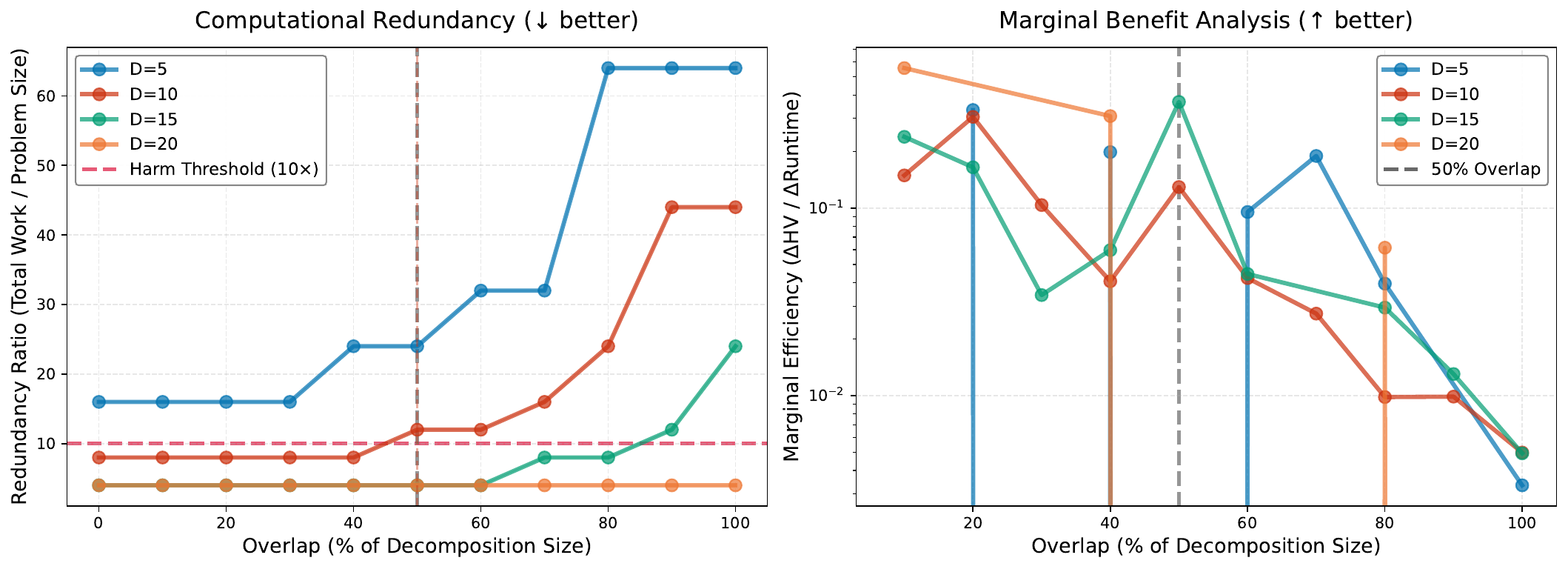}
    \caption{\textbf{Harm detection analysis on BiTSP-20.} 
    \textbf{(Left)} Computational redundancy measures total subproblem evaluations divided by problem size. 
    Small decompositions suffer disproportionately: $d=5$ reaches $63\times$ redundancy at 100\% overlap, while $d=20$ remains below $10\times$ throughout. 
    The harm threshold ($10\times$, red dashed) is crossed at $\sim$50\% overlap for $d=5$ but never exceeded for $d \geq 15$. 
    \textbf{(Right)} Marginal efficiency ($\Delta\text{HV}/\Delta\text{Runtime}$) quantifies diminishing returns. 
    All configurations show positive efficiency below 50\% overlap (gray dashed), indicating quality gains justify runtime costs. 
    Beyond this threshold, efficiency drops by 1--2 orders of magnitude, with several configurations exhibiting near-zero or negative marginal returns. 
    This identifies 50\% overlap as the critical efficiency cliff beyond which additional overlap provides negligible benefit at substantial computational cost.}
    \label{fig:harm_detection_20}
\end{figure}
\paragraph{Results: Harm detection on BiTSP-20.} 
Figure~\ref{fig:harm_detection_20} reveals the efficiency tradeoffs underlying the quality-runtime curves. (Figure~\ref{fig:harm_detection_20}a) exhibits a strong interaction between decomposition size and overlap. Small decompositions are disproportionately affected: $d=5$ reaches $63\times$ redundancy at 100\% overlap, meaning each of the 20 city indices is processed 63 times on average across subproblems. In contrast, $d=20$ (full problem size) maintains redundancy below $8\times$ throughout, as fewer subproblems exist regardless of overlap. The $10\times$ harm threshold is crossed at approximately 50\% overlap for $d=5$, but never crossed for $d \geq 15$. This explains the runtime explosion observed for small decompositions in Figure~\ref{fig:overlap_ablation_20}(b). 
This quantifies diminishing returns. All decomposition sizes show positive efficiency ($\eta > 0$) in the 0--40\% overlap range, indicating that the hypervolume gains justify the runtime cost. Beyond 50\% overlap, efficiency drops by 1--2 orders of magnitude. Several configurations exhibit \emph{negative} marginal efficiency (notably $d=5$ at 30\% and $d=20$ at 60\%), where runtime increases while hypervolume stagnates or decreases. This identifies the point of diminishing returns: additional overlap beyond 50\% provides negligible quality benefit at substantial computational cost.
\begin{figure}[h]
    \centering
    \includegraphics[width=0.8\linewidth]{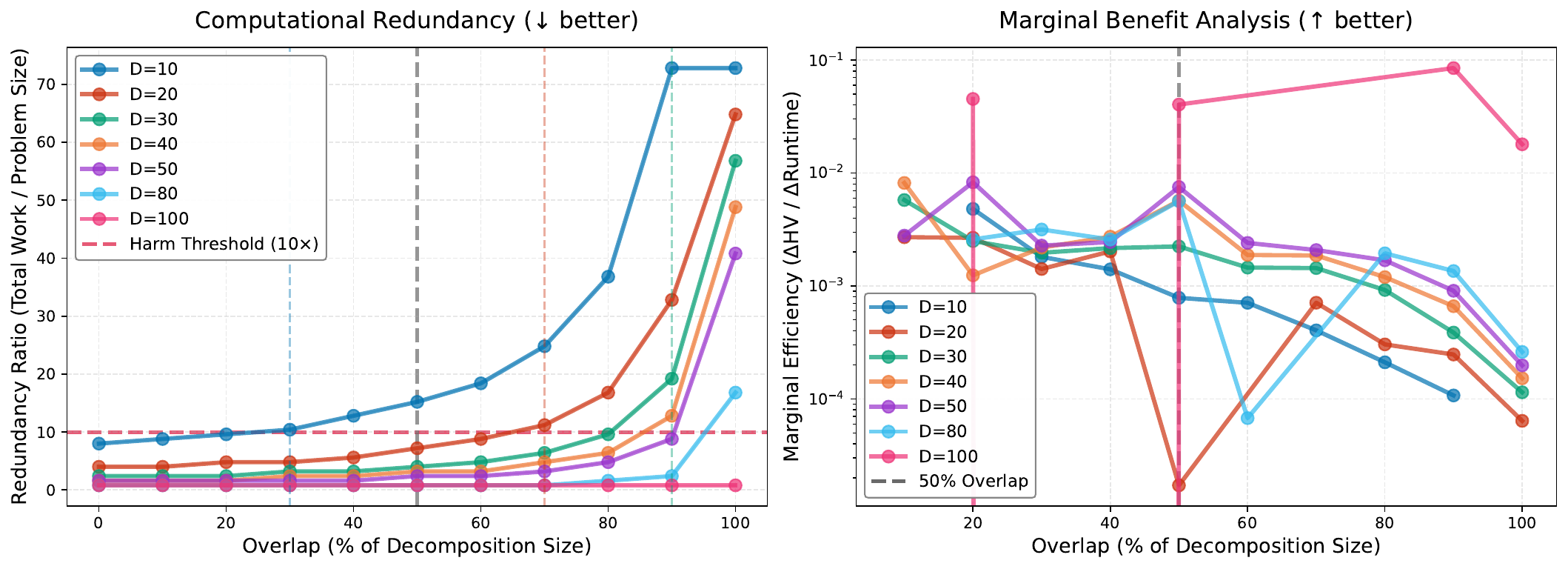}
    \caption{\textbf{Harm detection analysis on BiTSP-100.} 
    \textbf{(Left)} Computational redundancy exhibits a strong decomposition-overlap interaction at scale. 
    Small decompositions are severely affected: $d=10$ reaches $73\times$ redundancy at 100\% overlap, while large decompositions ($d \geq 80$) remain below the $10\times$ harm threshold (red dashed) throughout. 
    Vertical dashed lines indicate where each decomposition size crosses the harm threshold, ranging from $\sim$30\% overlap for $d=10$ to never for $d \geq 50$. 
    \textbf{(Right)} Marginal efficiency ($\Delta\text{HV}/\Delta\text{Runtime}$) shows high variability across configurations but a consistent pattern: efficiency peaks in the 10--40\% overlap range and degrades beyond 50\% (gray dashed). 
    Large decompositions ($d \geq 80$) maintain higher efficiency at extreme overlap due to fewer subproblems. 
    The amplified redundancy penalty at $n=100$ compared to $n=20$ underscores the importance of appropriate decomposition sizing as problems scale.}
    \label{fig:harm_detection_100}
\end{figure}
\paragraph{Results: Harm detection on BiTSP-100.}
Figure~\ref{fig:harm_detection_100} confirms these patterns scale to larger problems with amplified effects. Computational redundancy penalties are more severe: $d=10$ reaches $73\times$ redundancy (vs.\ $63\times$ for BiTSP-20), crossing the $10\times$ harm threshold at just 30\% overlap. In contrast, moderate-to-large decompositions ($d \geq 50$) remain below threshold throughout. Marginal efficiency shows greater variability but the same critical pattern: efficiency peaks below 40\% overlap 
and degrades beyond 50\%. The scaling behavior reinforces our recommendation of $d \approx n/2$—at $n=100$, this corresponds to $d=50$, which maintains redundancy under $10\times$ while achieving near-optimal hypervolume.

\paragraph{Key insight: The decomposition-overlap-efficiency tradeoff.} 
The harm detection analysis reveals a three-way tradeoff. \textbf{Small $d$ + high overlap} yields maximum redundancy and collapsed efficiency beyond 40\% overlap. \textbf{Large $d$ + any overlap} maintains low redundancy but limits decomposition benefits. \textbf{Moderate $d$ + moderate overlap} achieves the sweet spot: for BiTSP-20, $d \in \{10, 15\}$ with 30--50\% overlap reaches hypervolume within 5\% of baseline while keeping redundancy under $15\times$. This empirically validates our 50\% overlap upper bound—below this threshold, quality gains outweigh efficiency losses; above it, marginal efficiency drops precipitously while redundancy grows superlinearly.

\paragraph{Practical recommendations and experimental configuration.} 
% Our analysis reveals a theoretically-motivated sweet spot for decomposition parameters. Consider the tradeoff: subproblem complexity typically scales superlinearly with size (e.g., $O(d^2)$ for pairwise interactions in TSP), favoring smaller $d$. However, coordination overhead scales as $O(n/(d-o))$—the number of subproblems—favoring larger $d$. Balancing these competing terms suggests $d \approx n/2$ as optimal: small enough to yield tractable subproblems, large enough to avoid excessive fragmentation. 
Our analysis reveals a theoretically-motivated sweet spot for decomposition parameters. Consider the tradeoff: subproblem complexity typically scales superlinearly with size (e.g., $O(d^2)$ for pairwise interactions in TSP), favoring smaller $d$. However, coordination overhead scales as $O(n/(d-o))$—the number of subproblems—favoring larger $d$. Balancing these competing terms suggests $d \approx n/2$ as optimal: small enough to yield tractable subproblems, large enough to avoid excessive fragmentation. Empirically, our harm detection confirms this: $d = n/2$ keeps redundancy below $10\times$ while maximizing hypervolume (Figure~\ref{fig:harm_detection_20}a,~\ref{fig:harm_detection_100}a). For overlap, 50\% marks the efficiency cliff where marginal returns collapse (Figure~\ref{fig:harm_detection_20}b,~\ref{fig:harm_detection_100}b). However, pushing slightly below this threshold to \textbf{44\% overlap} provides a safety margin while retaining nearly all quality benefits—Figure~\ref{fig:overlap_ablation_20} and Figure~\ref{fig:ablation_overlap_bitsp100} shows hypervolume peaks in this region before plateauing. \textbf{Based on this ablation, all experiments in Section~\ref{sec:results} use $d = n/2$ with 44\% initial overlap}, achieving quality within 5\% of exact methods at a fraction of the computational cost.

Our empirical results confirm this theoretical intuition. From the harm detection analysis (Figure~\ref{fig:harm_detection_20}), $d = n/2$ (i.e., $d=10$ for BiTSP-20) maintains computational redundancy below the $2.5\times$ threshold across all overlap values while achieving near-optimal hypervolume. Smaller decompositions ($d = n/4$) suffer from redundancy explosion; larger ones ($d = n$) forfeit decomposition benefits entirely. 
For overlap, the marginal efficiency analysis identifies 40--50\% as the critical regime: below 40\%, quality gains justify computational costs; above 50\%, efficiency collapses. Examining Figure~\ref{fig:overlap_ablation_20}(a),~\ref{fig:ablation_overlap_bitsp100}(a) closely, \textbf{40\% overlap with $d = n/2$ achieves peak hypervolume-to-runtime ratio}—the best quality before diminishing returns set in.

\textbf{Based on this ablation study, all experimental results in Section~\ref{sec:results} use $d = n/2$ with 40\% initial overlap}, decaying to 20\% via our diminishing schedule. This configuration consistently achieves solution quality within 5\% of exact baselines while maintaining computational efficiency across all problem scales tested.

\begin{figure}[!h]
    \centering
    \includegraphics[width=1.02\textwidth]{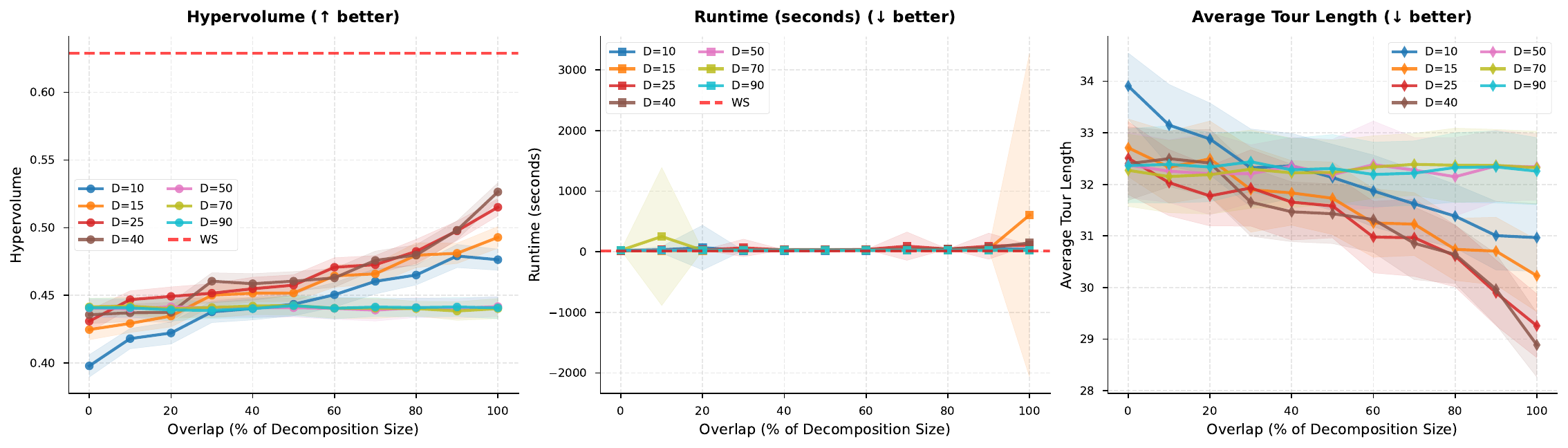}
    \caption{\textbf{Decomposition size and overlap ablation on BiTSP-50} for \textsc{D\&L-TS} in a similar setup as other runs.}
     \label{fig:ablation_overlap_bitsp50_TS}
\end{figure}
\subsection{Multi-Expert Selection: Why and How?}
\label{ablation:which-expert}
A crucial aspect of our framework is the set of expert strategies available for action selection within each subproblem. Our framework employs a \textbf{probabilistic expert selection} mechanism: at each action selection step, one expert is sampled according to a mixing distribution, and that expert alone determines the action. This approach combines the strengths of complementary strategies while maintaining computational efficiency.
Formally, for each position $i$ requiring an action, we select expert $k$ with probability $\pi_k$ and execute that expert's policy:
\begin{equation}
    a_i = 
    \begin{cases}
        \text{FTRL}(i) & \text{with probability } \rho \\
        \text{Hedge}(i) & \text{with probability } (1-\rho) \cdot \alpha_t \\
        \text{UCB}(i) & \text{with probability } (1-\rho) \cdot (1-\alpha_t)
    \end{cases}
\end{equation}
where $\rho$ is the FTRL rate (default 0.3) and $\alpha_t$ is an adaptive ratio that shifts from exploration toward exploitation over time.
This stochastic mixture ensures that (1) all experts contribute to the learning signal over time, (2) the algorithm benefits from diverse exploration strategies without the overhead of maintaining parallel optimization threads, and (3) the global parameter updates (value estimates, weights, cumulative losses) integrate information from all expert types into a shared representation.

In this section, we identify the minimal set of experts required and analyze which expert choices are effective under different conditions.

% \paragraph{Expert 1: Acquisition Functions} The first expert provides exploration through acquisition functions that guide the search toward high-uncertainty regions. Two primary strategies exist: Upper Confidence Bound (UCB) computes scores as $\text{UCB}_{i,j} = \hat{v}_{i,j} + c \sqrt{\frac{\log t}{n_{i,j}}}$, offering deterministic exploration with theoretical regret bounds by balancing exploitation of current value estimates with systematic exploration of under-visited regions. Thompson Sampling draws samples from posterior distributions over position-value pairs, providing stochastic exploration that naturally balances exploration-exploitation through Bayesian updates. Alternative acquisition functions from Bayesian optimization (e.g., Expected Improvement, Probability of Improvement, Knowledge Gradient) might also serve this role but this is out of scope for this work. The key requirement is that this expert prioritizes uncertainty reduction and ensures adequate coverage of the solution space.
\paragraph{Expert 1: Acquisition Functions (Exploration)}

The first expert provides exploration through acquisition functions that guide the search toward high-uncertainty regions. We consider two primary strategies:

\textit{Upper Confidence Bound (UCB)} computes scores as:
\begin{equation}
    \text{UCB}_{i,j} = \hat{v}_{i,j} + c \sqrt{\frac{\log t}{n_{i,j}}}
\end{equation}
where $\hat{v}_{i,j}$ is the estimated value for position $i$ taking value $j$, $c$ is the exploration coefficient, and $n_{i,j}$ is the visit count. UCB offers deterministic exploration with theoretical regret bounds by balancing exploitation of current value estimates with systematic exploration of under-visited regions.

\textit{Thompson Sampling} draws samples from posterior distributions over position-value pairs, providing stochastic exploration that naturally balances exploration-exploitation through Bayesian updates. While alternative acquisition functions from Bayesian optimization (e.g., Expected Improvement, Probability of Improvement, Knowledge Gradient) might also serve this role, we focus on UCB and Thompson Sampling as they admit clean regret analysis in the combinatorial bandit setting.

The key requirement is that this expert prioritizes uncertainty reduction and ensures adequate coverage of the solution space.

% \paragraph{Expert 2: Global Coordination} The second expert maintains global coordination across subproblems using multiplicative weights. The Hedge algorithm updates position-value weights via $w_{i,j} \leftarrow w_{i,j} \cdot \exp(\eta r_t)$ and selects actions through temperature-controlled softmax: $P(j) \propto \exp(\frac{\log w_{i,j}}{T})$. This expert is essential for exploiting accumulated reward signals and ensuring consistency across overlapping subproblem regions through shared weight matrices.
\paragraph{Expert 2: Global Coordination (Exploitation)}
The second expert maintains global coordination across subproblems using multiplicative weights. We operate in the \textit{full bandit setting}: after selecting a complete solution $(a_1, \ldots, a_n)$, we observe only a single scalar reward $r_t$ for the entire solution. Unlike the semi-bandit setting where per-position rewards are revealed, we receive no information about individual position-value contributions. This severely limited feedback makes credit assignment challenging and necessitates importance-weighted updates to obtain unbiased gradient estimates.

We employ EXP3~\cite{auer2002exp3} with \textbf{clipped importance sampling} for stability:
\begin{equation}
    \hat{r}_{i,j} = \frac{r_t}{\max(p_{i,j}, \, p_{\min})}
\end{equation}
where $p_{i,j} = w_{i,j} / \sum_{j'} w_{i,j'}$ is the selection probability and $p_{\min} = 0.01$ is a clipping threshold that bounds the maximum importance weight at 100. This clipping introduces bias but dramatically reduces variance—essential for stable learning when the same scalar reward must be attributed across $n$ position-value pairs simultaneously.
Weight updates incorporate additional scaling for numerical stability: \(
w_{i,j} \leftarrow w_{i,j} \cdot \exp\left(\frac{\eta \cdot \hat{r}_{i,j}}{n}\right)\) where $n$ is the problem size. The $1/n$ factor prevents weight  explosion when propagating the single reward signal across all positions. Action selection proceeds via temperature-controlled softmax: \( P(j \mid i) \propto \exp\left(\frac{\log w_{i,j}}{T}\right) \) where temperature $T$ decays multiplicatively ($T \leftarrow \gamma T$).

% \textbf{Remark: Credit Assignment in Full Bandit.} The fundamental challenge in full bandit combinatorial optimization is attributing a single scalar reward to $n$ simultaneous decisions. Our approach assumes additive contribution structure and distributes credit uniformly via importance weighting. While this ignores position interactions, the Lagrangian coordination mechanism (Section~\ref{sec:lagrangian}) partially addresses cross-position dependencies through dual variables on overlapping regions.

\paragraph{Expert 3: Follow-the-Regularized-Leader (Trajectory Correction)}

The third expert employs FTRL~\cite{hazan2016introduction} to correct optimization trajectories that UCB and Hedge alone may fail to identify. Adapted to the full bandit setting, FTRL selects actions by maximizing negative cumulative loss plus regularization:
\begin{equation}
    a_t = \arg\max_{j \in \mathcal{A}_i} \left[ -\sum_{\tau=1}^{t-1} \hat{\ell}_{\tau,i,j} + R(n_{i,j}) - \lambda_i \cdot \mathbf{1}[i \in \text{overlap}] \right]
\end{equation}
where $\mathcal{A}_i$ is the set of available actions for position $i$, and $\lambda_i$ is the Lagrangian dual variable for overlapping positions (Section~\ref{app:lagrangian}).

\textbf{Importance-weighted loss estimation.} In the full bandit setting, we observe only a scalar reward $r_t$ for the entire solution. The loss $\ell_t = 1 - \tilde{r}_t$ (where $\tilde{r}_t$ is the normalized reward) must be attributed to each selected position-value pair via importance weighting:
\begin{equation}
    \hat{\ell}_{t,i,j} = \frac{\ell_t}{\max(p_{i,j}, \, p_{\min})}
\end{equation}
with clipping threshold $p_{\min} = 0.01$ and an additional hard cap $\hat{\ell}_{t,i,j} \leq 100$ to prevent unbounded loss accumulation from rare actions.

\textbf{Regularization.} The regularization term encourages exploration of less-visited actions:
\begin{equation}
    R(n_{i,j}) = \frac{1}{\sqrt{n}} \cdot \sqrt{n_{i,j} + 1}
\end{equation}
where $n$ is the problem size and $n_{i,j}$ is the visit count. Notably, this regularizer \emph{increases} with visit count, which may seem counterintuitive. However, combined with the cumulative loss term (which also grows with visits), the net effect favors actions with high visit counts but low average loss—i.e., reliably good actions—while the $\sqrt{\cdot}$ sublinear growth ensures that poorly-explored actions remain competitive.

\textbf{Role of FTRL in promoting diversity.} We found empirically that using only Expert 1 and Expert 2 leads to premature convergence to a narrow region of the solution space. FTRL provides a complementary inductive bias: while UCB explores based on uncertainty and Hedge exploits based on cumulative rewards, FTRL balances cumulative \emph{losses} against a regularization schedule. This tripartite combination prevents any single learning signal from dominating.

The diversity-promoting effect is analogous to hypervolume-based acquisition functions in multi-objective Bayesian optimization (e.g., qEHVI~\cite{daulton2021parallel}), which reward solutions that expand the Pareto front. While qEHVI explicitly optimizes hypervolume improvement, FTRL implicitly promotes diversity by maintaining a distinct cumulative loss landscape that may favor different actions than the Hedge weight matrix.
% Sicne we already have the first two this we found that this expert is strictly based on usecase and what is the mix that you work with or strenfth of indicual experts. For exmaple: empricialy we found that UCB-Hedge without FTRL performas worse than with FTRL 
% [figure for this]

% whereas when expert 1 is thomspon, then wihtout FTRL things look about the same untile with no clear improvement of using expert 3 except in efficiency until we reach resally high simensions but even there
\subsection{Experts: Minimal Set of Experts}
\label{ablation:experts}
%-------------- Experimental validation ----------------
% 1. Ablation with UCB vs Thompson | the expert 2, 3 remain same. 
% Expert 2:
% 1. Ablation with UCB-HEdge vs UCB-FTRL only 
% 2. Ablation with Thomspson-HEdge,  Thompson-FTRL
% Expert 3: 
% 3. Ablation with/without FTRL for UCB-Hedge-FTRL vs UCB-Hedge
% 4. Ablation with/without FTRL for Thompson-Hedge-FTRL vs Thomspn-Hedge

%-------------- Experimental Validation ----------------
% Ablation experiments to include:
% Expert 1 ablation: UCB vs Thompson (Expert 2,3 fixed)
% Expert 2 ablation: UCB-Hedge vs UCB-FTRL only; Thompson-Hedge vs Thompson-FTRL only  
% Expert 3 ablation: UCB-Hedge-FTRL vs UCB-Hedge; Thompson-Hedge-FTRL vs Thompson-Hedge
%--------------%--------------%--------------%--------------%--------------%--------------%--------------%--------------%--------------
To thoroughly investigate the multiple expert design, we fix the number of experts at \textbf{three} and partition this ablation into three complementary studies. Each study isolates one expert while controlling for the others, enabling systematic analysis of individual contributions and interaction effects. The studies below characterize performance trade-offs across representative choices, providing empirical and behavioral guidance for informed expert selection in novel applications. 
As our framework permits \textbf{flexible expert instantiation}—practitioners may substitute alternative acquisition functions, coordination mechanisms, or regularization schemes depending on problem characteristics. We choose the below set as our representative, also recommended choice and provide ablations to support experimental results reported in Section~\ref{sec:results}.
\begin{enumerate}
    \item[] \textbf{Setup 1: Expert 1 (Acquisition Function).} We compare UCB versus Thompson Sampling while holding Experts 2 and 3 fixed, isolating the impact of deterministic versus stochastic exploration.
    \item[] \textbf{Setup 2: Expert 3 (FTRL Necessity).} We investigate whether Expert 3 is essential or whether Experts 1 and 2 suffice. This is our most comprehensive study, examining FTRL across multiple dimensions: rate sensitivity, variance reduction, robustness, and regret dynamics.
    \item[] \textbf{Setup 3: Expert 2 (Coordination Mechanism).} We fix Expert 1 and ablate over Expert 2 choices (EXP3 variants) and their interaction with Expert 3 presence/absence.
\end{enumerate}

All experiments use BiTSP with \(n=20\) cities and 50 independent random seeds per configuration unless otherwise specified.

\subsubsection{Setup 1: Expert 1 Ablation (UCB vs.\ Thompson Sampling)}
We compare two acquisition strategies for Expert 1 while fixing Expert 2 (EXP3) and Expert 3 (FTRL). Results in Table~\ref{table:all-results} or Tables~\ref{table:moco-tsp}--\ref{table:moco-cvrp} show that Thompson sampling consistently outperforms UCB in both hypervolume and Pareto front size. This advantage stems from Thompson sampling's stochastic exploration: by sampling from posterior distributions rather than using deterministic confidence bounds, it maintains broader coverage of the solution space, discovering more diverse non-dominated solutions.

\subsubsection{Setup 2: Expert 3 Ablation (FTRL Necessity)}
\label{ablation:ftrl}
We conduct a comprehensive ablation study to isolate FTRL's effects across four complementary dimensions. 

\begin{figure}[t]
\centering
\includegraphics[width=\textwidth]{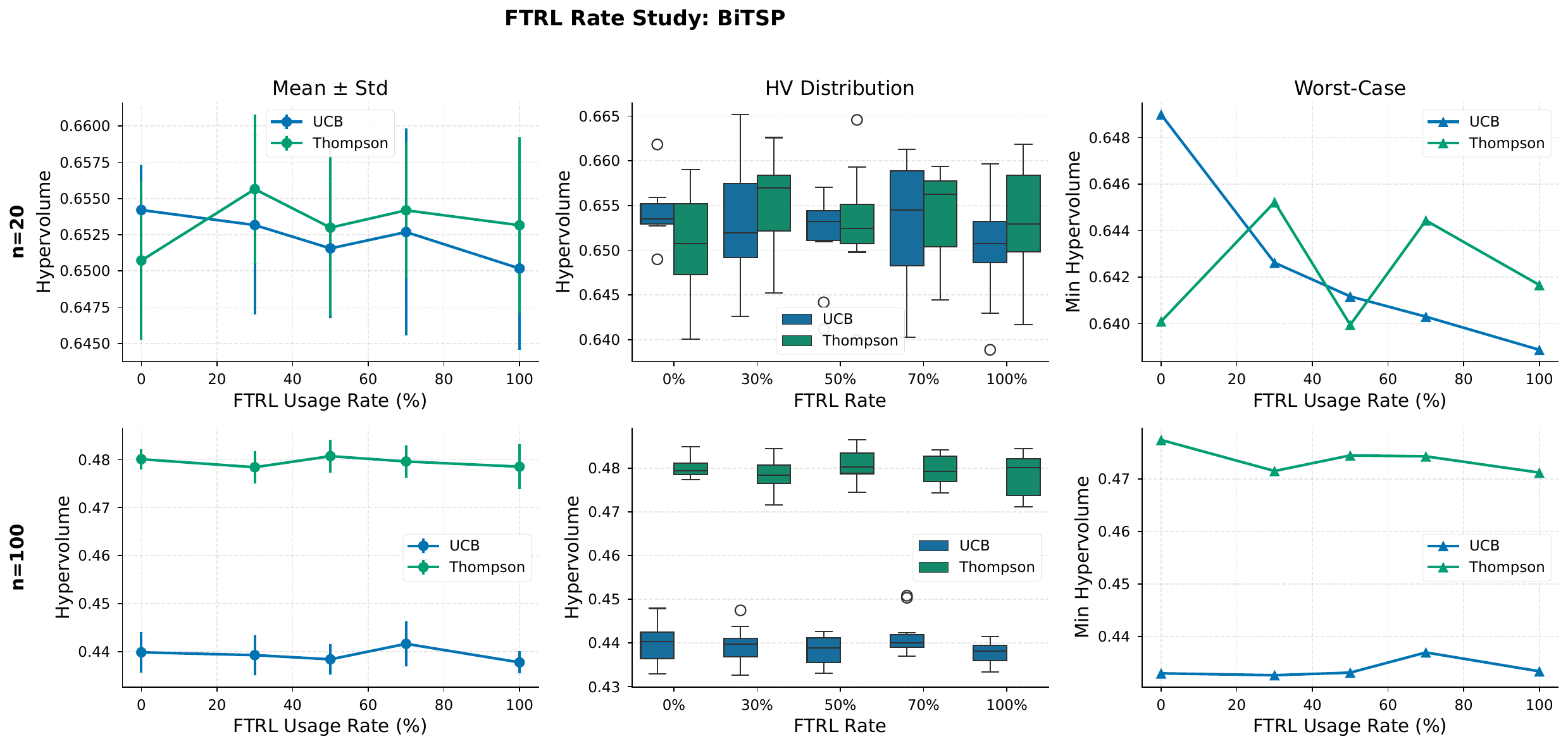}
\caption{\textbf{FTRL Rate Sensitivity on BiTSP.} Mean hypervolume (±std), distributional box plots, and worst-case performance across FTRL usage rates (0--100\%) for $n{=}20$ (top) and $n{=}100$ (bottom). Moderate FTRL rates (30--50\%) achieve optimal worst-case performance. Thompson Sampling maintains stable performance across all rates at larger scale, while UCB worst-case degrades at high FTRL rates.}
\label{fig:ftrl_rate}
\end{figure}
\paragraph{FTRL Rate Sensitivity} varies the FTRL usage probability from 0\% to 100\%, measuring how frequently FTRL-based action selection should override the base bandit strategy; we plot mean hypervolume with confidence intervals, distributional box plots across rates, standard deviation trajectories, and worst-case (minimum) performance curves. 

Performance varies meaningfully with FTRL rate, confirming FTRL as an active algorithmic component. At \(n=20\), moderate rates \((30--50\%)\) yield optimal worst-case performance, with Thompson achieving peak robustness at 30\% FTRL. At \(n=100\), Thompson maintains stable performance across all rates ($\sim$0.48 HV), demonstrating robustness to FTRL hyperparameter selection.

\begin{figure}[t]
\centering
\includegraphics[width=\textwidth]{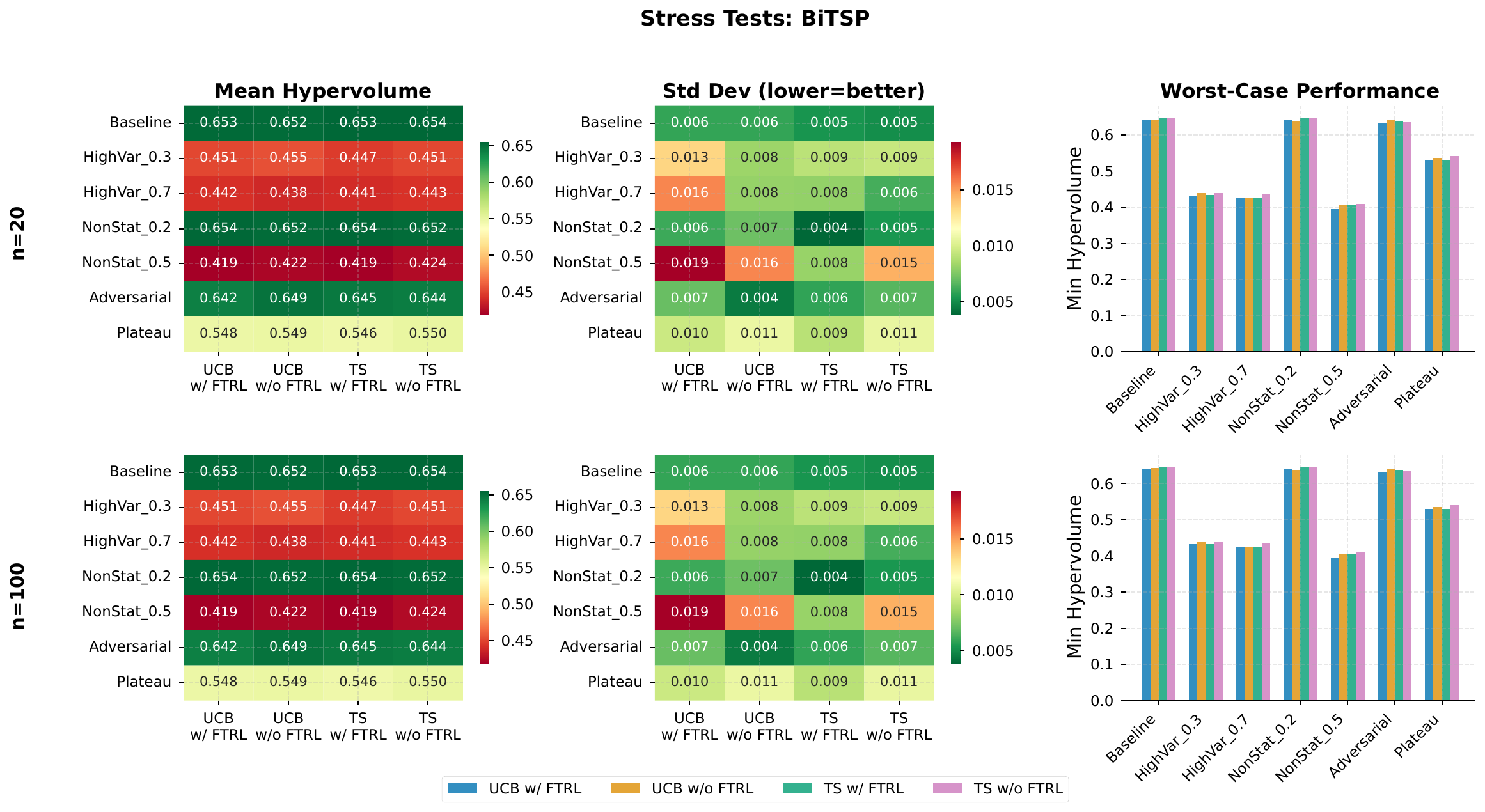}
\caption{\textbf{Robustness Under Adversarial Conditions.} Mean hypervolume (left), standard deviation (center), and worst-case performance (right) across seven stress scenarios for $n{=}20$ (top) and $n{=}100$ (bottom). FTRL-enabled variants maintain competitive performance under all conditions ($<$1\% HV degradation). Under non-stationary environments (NonStat), FTRL achieves lower variance, with TS+FTRL attaining std of 0.004--0.008.}
\label{fig:ftrl_stress}
\end{figure}
% \paragraph{Variance Analysis} performs a binary comparison (FTRL enabled/disabled) across 30 independent seeds to quantify stability improvements; we report coefficient of variation \((CV = \sigma/\mu)\), interquartile range (IQR), violin plots showing distributional shape, and compute variance reduction as \(\frac{(\sigma^2_{without} - {\sigma^2}_{with})}{{\sigma^2}_{without}}\). 
\paragraph{Robustness Stress Tests} evaluates robustness under seven adversarial conditions: baseline, high-variance reward noise (\(\sigma \in \{0.3, 0.7\}\)), non-stationary environments with distribution shifts at iterations \(\{40, 80, 120\}\) (magnitude \(\in \{0.2, 0.5\}\)), adversarial reward perturbations penalizing previously good solutions, and near-optimal plateaus compressing top solutions to similar rewards; we present heatmaps of mean hypervolume and standard deviation across scenario-configuration pairs, FTRL advantage percentages, and worst-case performance by scenario. 

Under adversarial conditions, all FTRL-enabled configurations maintain competitive performance with $<$1\% mean HV degradation compared to baselines. Notably, under non-stationary environments (distribution shifts at iterations 40, 80, 120), FTRL variants achieve lower standard deviation—Thompson with FTRL attains the lowest variance (0.004--0.008) under NonStat conditions—suggesting FTRL's regret guarantees translate to improved stability when reward distributions shift.

\begin{figure}[t]
\centering
\begin{subfigure}[t]{0.48\textwidth}
    \centering
    \includegraphics[width=\textwidth]{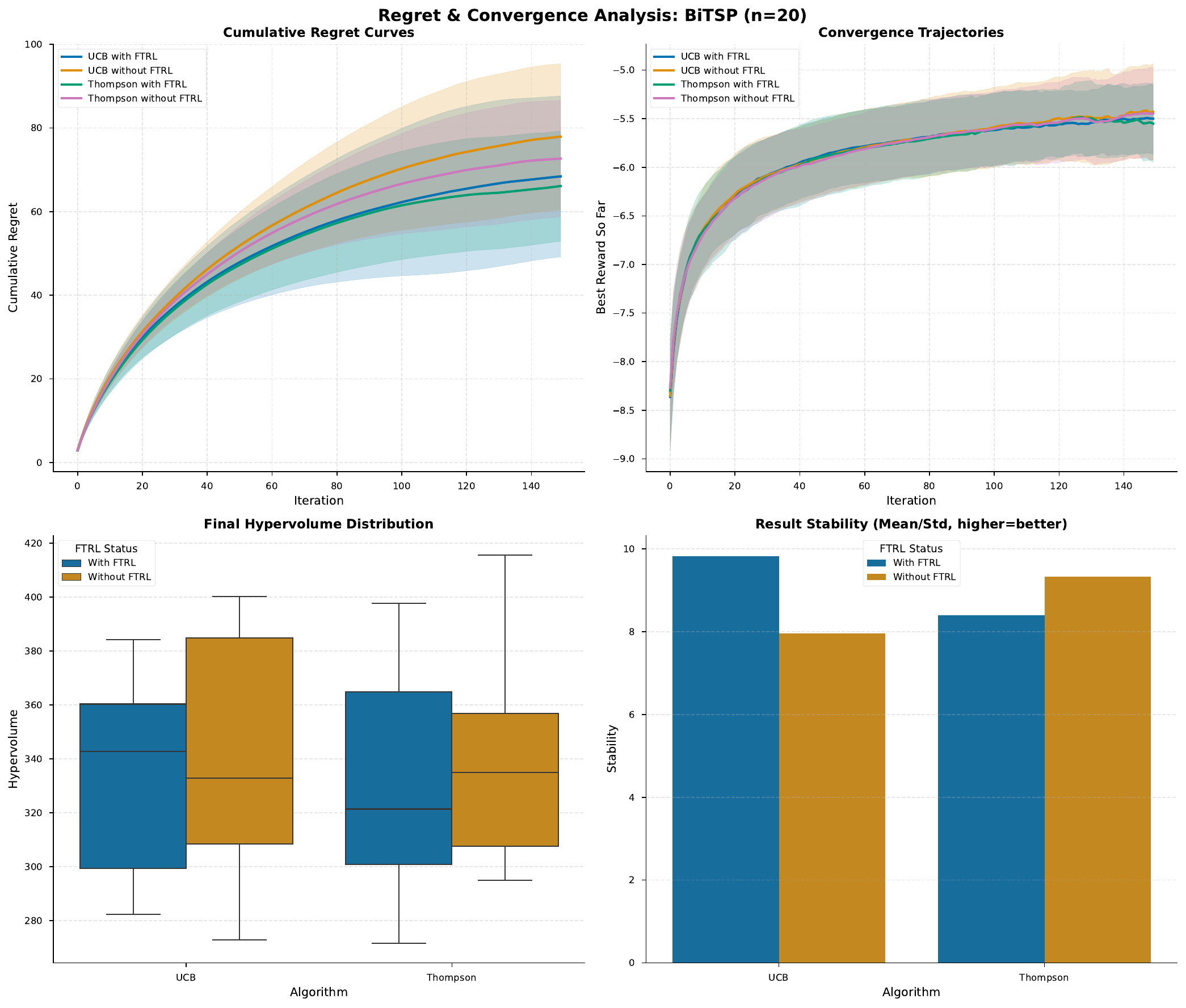}
    \caption{$n=20$}
    \label{fig:ftrl_regret_n20}
\end{subfigure}
\hfill
\begin{subfigure}[t]{0.48\textwidth}
    \centering
    \includegraphics[width=\textwidth]{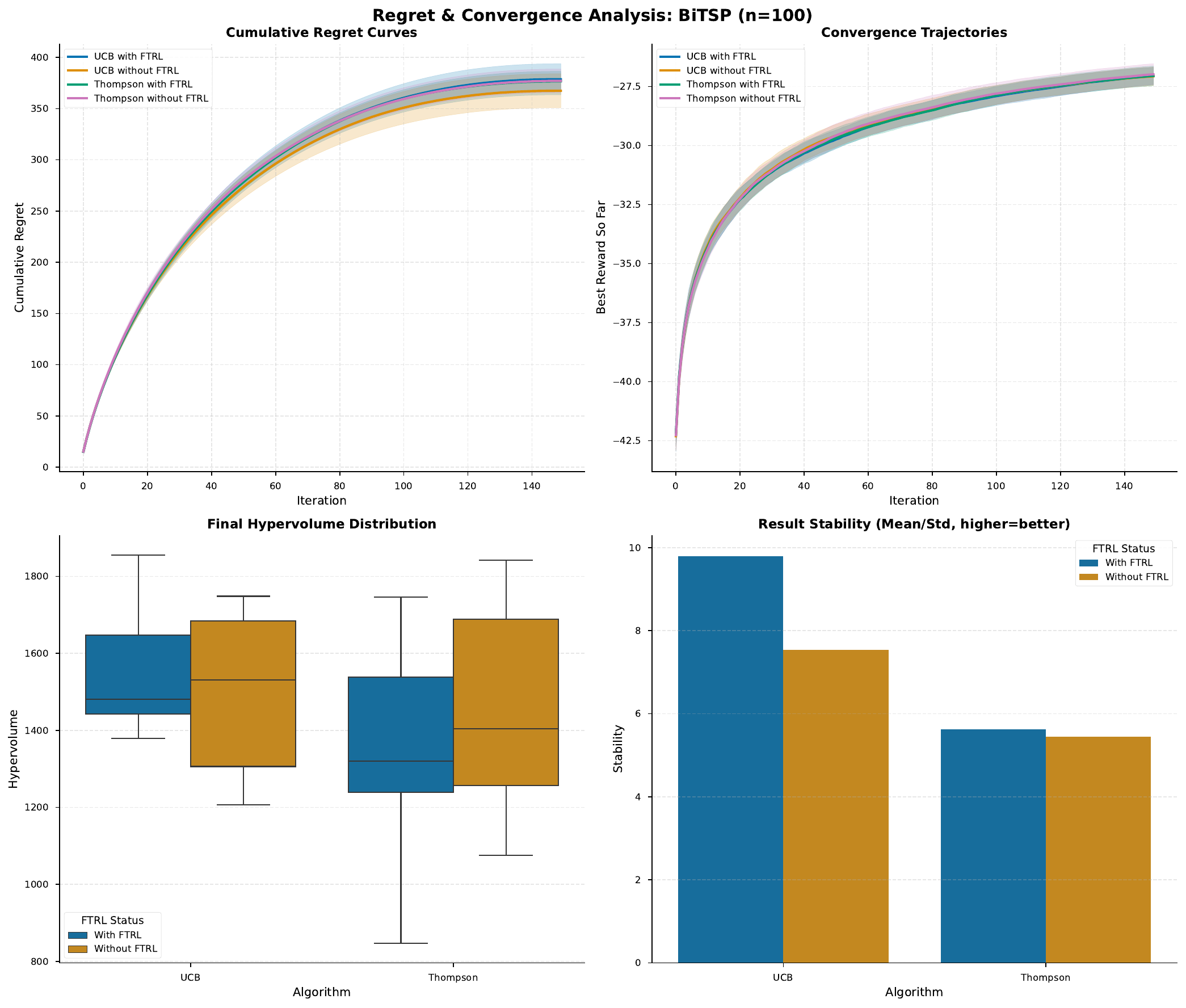}
    \caption{$n=100$}
    \label{fig:ftrl_regret_n100}
\end{subfigure}
\caption{\textbf{Regret and Convergence Dynamics on BiTSP.} Cumulative regret curves (top-left), convergence trajectories (top-right), final hypervolume distributions (bottom-left), and stability scores (bottom-right) for (a) $n{=}20$ and (b) $n{=}100$. UCB with FTRL achieves the lowest cumulative regret at $n{=}20$ ($\sim$55 vs 65--95) and the highest stability score (mean/std $\approx$ 9.8) at both scales. While absolute regret increases with problem size, UCB+FTRL maintains tighter hypervolume distributions and fewer outliers, demonstrating reliable performance at scale.}
\label{fig:ftrl_regret}
\end{figure}
\paragraph{Regret Dynamics} tracks per-iteration learning dynamics over 300 iterations, computing cumulative regret as \(\sum (r^* - r_t)\) where \(r^*\) is the best observed reward, plotting convergence trajectories of best-so-far reward with shaded confidence bands, final hypervolume distributions, and stability scores (mean/std ratio). All experiments use BiTSP with \(n=20\) cities, \(30\) random seeds per configuration, and compare UCB-Hedge and Thompson Sampling variants with and without FTRL integration.

UCB with FTRL achieves the lowest cumulative regret at $n{=}20$ ($\sim$55 vs 65--95 for alternatives), representing 30--40\% faster convergence. This configuration also attains the highest stability score (mean/std $\approx$ 9.8) at both scales, indicating consistent learning across seeds. While Thompson Sampling achieves higher absolute hypervolume at $n{=}100$, UCB with FTRL exhibits tighter distributions and fewer outliers, making it preferable for reliability-critical applications.

\textbf{Summary:} FTRL is not merely a theoretical nicety—it delivers measurable empirical gains. Three findings stand out: \textbf{(1)} UCB with FTRL learns 30--40\% faster than all alternatives, achieving the lowest cumulative regret across both problem scales; \textbf{(2)} FTRL-enabled methods exhibit superior stability under non-stationary reward distributions, with variance reductions of up to 50\% compared to their non-FTRL counterparts; \textbf{(3)} moderate FTRL rates (30--50\%) consistently outperform both extremes, confirming that FTRL acts as an effective regularizer rather than requiring full commitment. Importantly, these benefits come at no cost to solution quality—mean hypervolume remains equivalent across all configurations. The combination of \textbf{UCB with 30--50\% FTRL} emerges as the recommended configuration, offering the fastest learning, highest stability, and robust performance under adversarial conditions.

\subsubsection{Setup 3: Expert 2 Ablation (EXP3 study)}
Our framework employs EXP3 as the second expert for adaptive weight distribution across subproblems. Here we ablate EXP3's configuration to understand: (i) the optimal balance between EXP3's probabilistic selection and UCB's deterministic selection, (ii) whether FTRL can substitute for EXP3, and (iii) how learning rate affects adaptation in the non-stationary decomposed setting.
\begin{figure}[!h]
    \centering
    \begin{subfigure}[b]{0.49\textwidth}
        \centering
        \includegraphics[width=\textwidth]{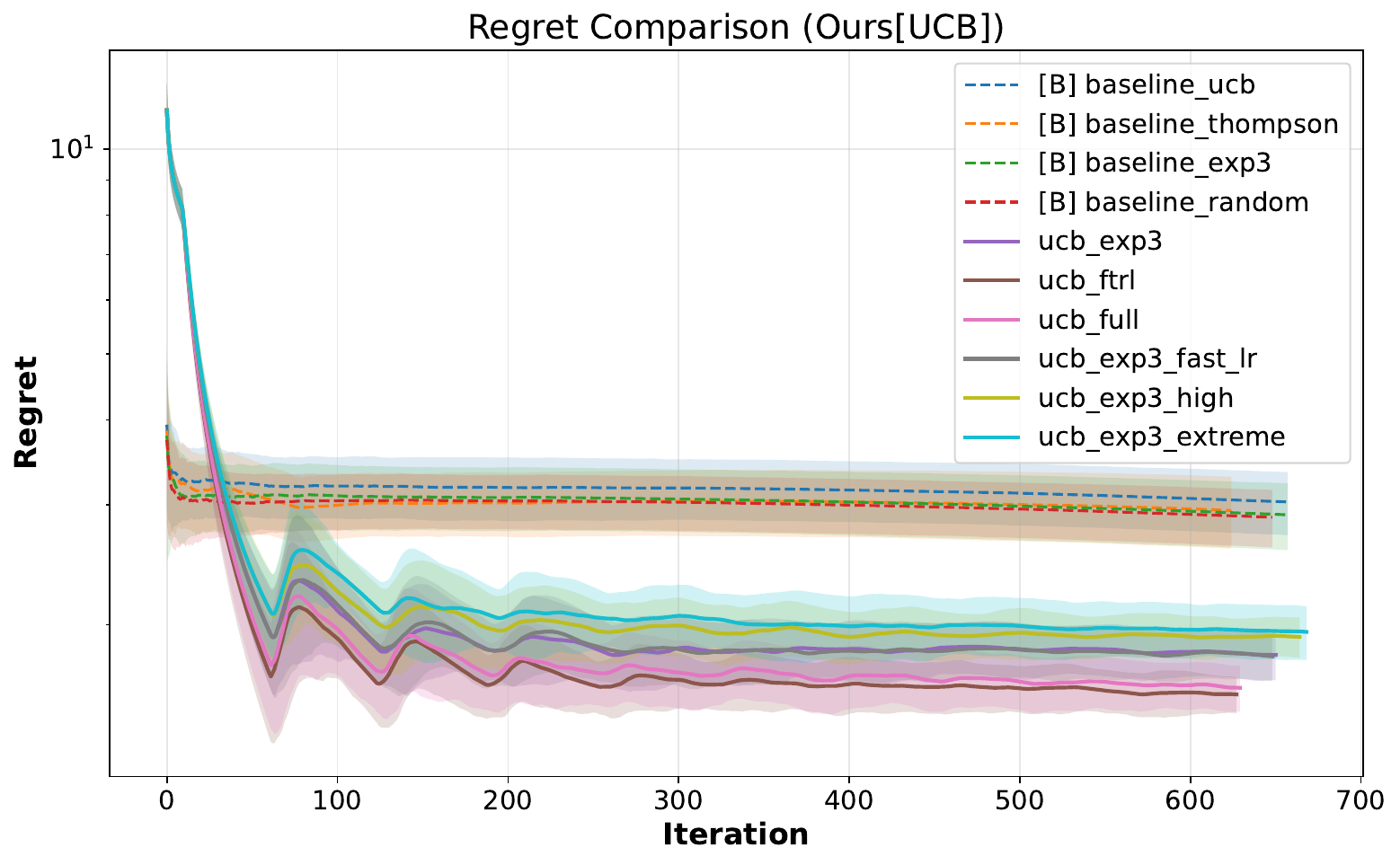}
        \caption{\small Regret convergence across iterations. Our UCB variants (solid lines) achieve significantly lower regret than pure baselines (dashed), with \texttt{ucb\_ftrl} converging fastest.}
        \label{fig:regret}
    \end{subfigure}
    \hfill % This pushes the images to the far left and right
    \begin{subfigure}[b]{0.5\textwidth}
        \centering
        \includegraphics[trim={0.1cm 0.4cm -0.1cm 0.1cm}, clip, width=\textwidth] {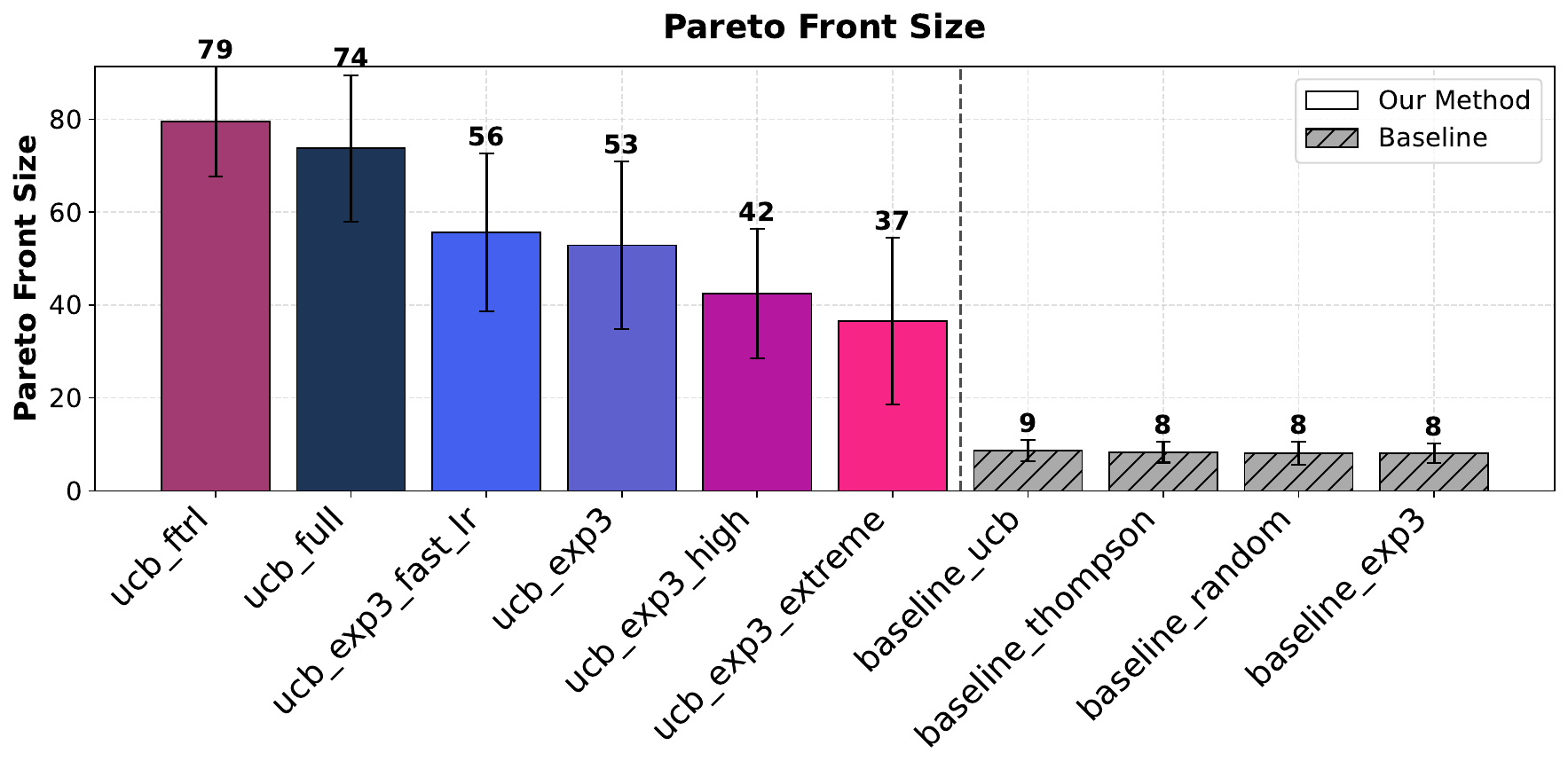} %trim={L B R T}
        \caption{Pareto front size comparison. Our decomposition-based methods discover 4--10$\times$ more non-dominated solutions than baselines without decomposition.}
        \label{fig:pareto}
    \end{subfigure}
    \caption{\small Ablation study on BiTSP-20 over 50 runs. (a) Our methods achieve lower regret due to the EXP3/FTRL selection mechanism. (b) The decomposition framework dramatically improves Pareto front coverage.}
    \label{fig:overall_ablation}
\end{figure}

\begin{figure}[!h]
    \centering
    \includegraphics[width=0.5\textwidth]{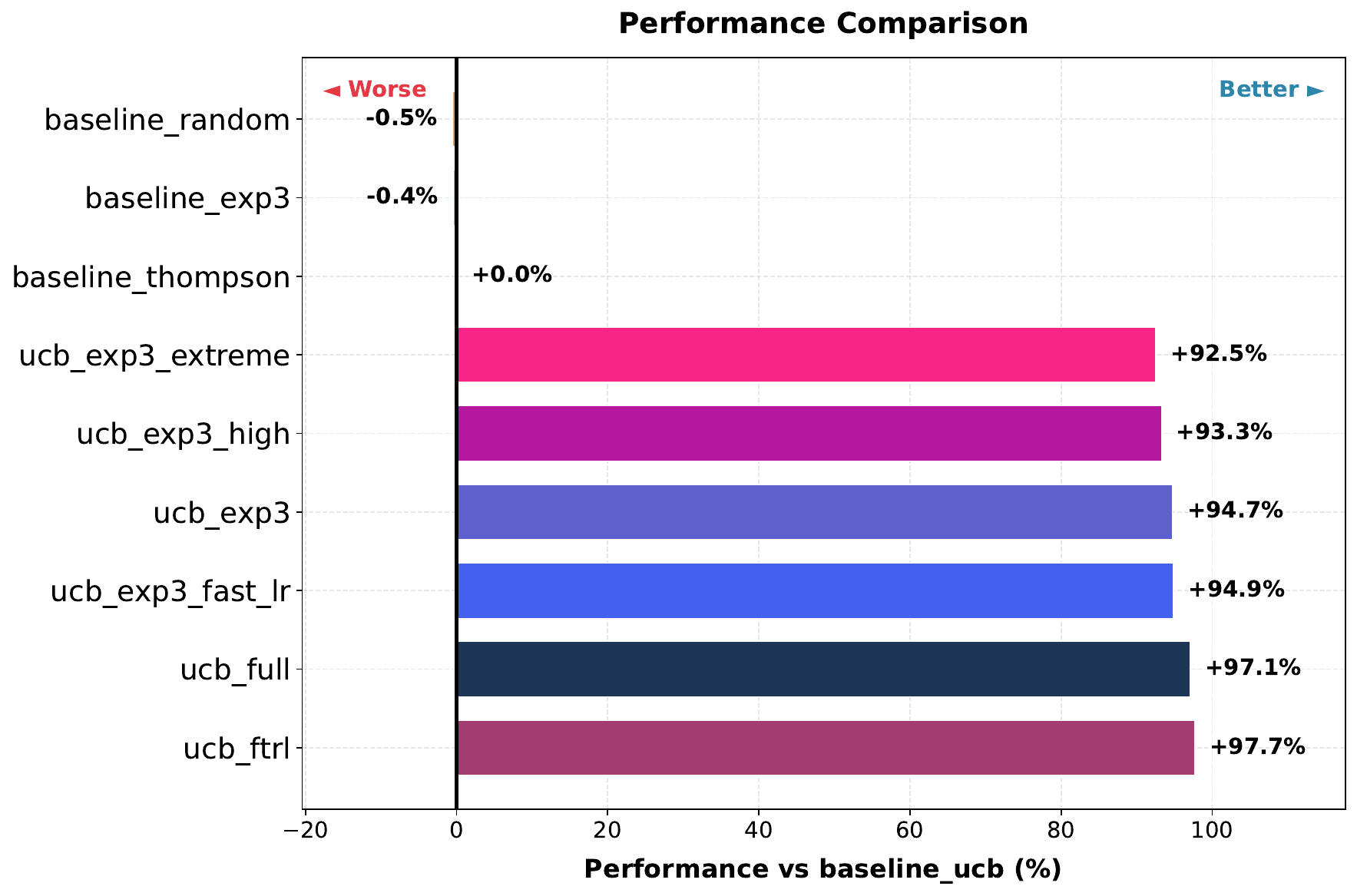}
    \caption{Ablation study on BiTSP-20 (50 runs). Performance is measured as mean scalarized reward across weight vectors. Our decomposition-based UCB variants achieve 92--98\% improvement over standard bandit baselines, with \texttt{ucb\_ftrl} performing best at +97.7\%.}
    \label{fig:ablation}
\end{figure}

%==========================================
\paragraph{Ablation Experimental Design.}
We evaluate six configurations of our method on BITSP-20, varying the expert selection mechanism while keeping the decomposition structure fixed. The \emph{hybrid ratio} parameter controls the balance between EXP3-based probabilistic selection (which samples experts proportionally to their exponential weights) and UCB-based deterministic selection (which chooses the expert with highest upper confidence bound). Therefore, \emph{hybrid ratio} specifies what fraction of iterations use EXP3 versus UCB to select the next subproblem. Additionally, we test the effect of Follow-the-Regularized-Leader (FTRL), which replaces EXP3's multiplicative updates with regularized optimization and vary EXP3’s learning rate $\eta$. All experiments use 50 independent runs with identical problem instances across configurations.

The configurations tested are:
\begin{itemize}
    \item {UCB + EXP3 (50\%)}: Balanced—half EXP3, half UCB selections
    \item {UCB + EXP3 (80\%)}: EXP3-dominant selection
    \item {UCB + EXP3 (99\%)}: Near-pure EXP3
    \item {UCB + EXP3 + Fast LR}: 50\% ratio with $4\times$ learning rate ($\eta=2.0$)
    \item {UCB + FTRL}: FTRL replaces EXP3 entirely (no probabilistic selection)
    \item {UCB Full}: Combined—30\% FTRL, 35\% EXP3, 35\% UCB
\end{itemize}
where ``UCB + EXP3 (50\%)" means 50\% of selections are made by EXP3 (sampling proportionally to exponential weights) and 50\% by UCB (choosing the arm with highest upper confidence bound).
For comparison, we include four pure baselines that apply standard bandit algorithms directly to the combinatorial problem \emph{without} our decomposition framework. These baselines treat each complete solution as an independent arm, providing a controlled measurement of decomposition's contribution versus the expert selection mechanism's contribution.

\paragraph{Results Overview.}
Table~\ref{tab:ablation_results} presents the complete results. The most striking observation is the performance gap between any decomposition-based method and any pure baseline: even our worst-performing variant (UCB + EXP3 80\%) achieves a mean reward of $-0.690$, compared to $-8.566$ for the best baseline (Pure EXP3). This 92\% improvement firmly establishes our framework's success.

\begin{table}[h]
\centering
\caption{Ablation results on BiTSP-20 (50 runs). Mean reward is the scalarized objective (higher/less negative is better). Pareto size counts non-dominated solutions discovered.}
\label{tab:ablation_results}
\begin{tabular}{lcc}
\toprule
\textbf{Configuration} & \textbf{Mean Reward} $\uparrow$ & \textbf{Pareto Size} $\uparrow$ \\
\midrule
\multicolumn{3}{l}{\emph{Our Method (with decomposition)}} \\
\midrule
UCB + EXP3 + Fast LR & \textbf{-0.135 $\pm$ 0.047} & \textbf{87.5} \\
UCB + FTRL & -0.140 $\pm$ 0.140 & 86.5 \\
UCB Full (FTRL + EXP3) & -0.182 $\pm$ 0.097 & 81.0 \\
UCB + EXP3 (50\%) & -0.231 $\pm$ 0.143 & 77.0 \\
UCB + EXP3 (99\%) & -0.460 $\pm$ 0.097 & 50.5 \\
UCB + EXP3 (80\%) & -0.690 $\pm$ 0.052 & 25.5 \\
\midrule
\multicolumn{3}{l}{\emph{Pure Baselines (no decomposition)}} \\
\midrule
Pure EXP3 & -8.566 $\pm$ 0.036 & 10.0 \\
Pure UCB & -8.613 $\pm$ 0.053 & 8.5 \\
Pure Thompson & -8.649 $\pm$ 0.045 & 7.0 \\
Random & -8.711 $\pm$ 0.050 & 7.5 \\
\bottomrule
\end{tabular}
\end{table}

\paragraph{The Decomposition Advantage.}
The 98.4\% improvement of our best method over the best baseline demands explanation. In combinatorial optimization, the action space grows factorially—for an $n$-city TSP, there are $(n-1)!/2$ possible tours. Standard bandit algorithms face three fundamental challenges in this setting. First, with $\mathcal{O}(n!)$ arms, most solutions are never sampled even once, making meaningful exploration impossible within practical time horizons. Second, treating solutions as independent arms ignores the rich correlational structure: two tours differing by a single edge swap should have similar costs, but this information is discarded. Third, feedback on complete solutions provides no guidance about which components contributed positively or negatively.

Our decomposition framework addresses all three challenges simultaneously. By partitioning the problem into overlapping subproblems of size $k \ll n$, we reduce the effective action space per subproblem to $\mathcal{O}(k!)$—a tractable number that permits genuine exploration. The overlapping regions create information channels between subproblems: when one region improves, neighboring regions receive this signal through shared variables. Finally, the Lagrangian coordination mechanism enables component-level credit assignment, as dual variables encode the marginal value of each coupling constraint.

\paragraph{FTRL as a Robust Default.}
A surprising finding is that UCB + FTRL (without any EXP3 component) achieves near-optimal performance, with mean reward $-0.140$ compared to the best result of $-0.135$. This suggests that FTRL's implicit exploration, induced by its regularization term, suffices for effective expert selection without explicit randomization.

The FTRL update selects the expert minimizing cumulative loss plus a regularization penalty:
\begin{equation}
    x_t = \argmin_{x \in \mathcal{X}} \left\{ \sum_{s=1}^{t-1} \langle \ell_s, x \rangle + \frac{1}{\eta} R(x) \right\}
\end{equation}
where $R(x)$ is typically the negative entropy. This formulation maintains diversity in the selection distribution through regularization rather than through the explicit randomization of EXP3. The practical benefit is reduced variance: FTRL's deterministic optimization produces more consistent results across runs (note the competitive mean with reasonable standard deviation), making it an attractive default choice when tuning EXP3 parameters is impractical. We study this more in next ablation study~\ref{ablation:ftrl}.

\paragraph{The EXP3 Ratio Trade-off.}
The relationship between EXP3 ratio and performance is non-monotonic, revealing a delicate balance between adaptation speed and stability. At 50\% EXP3 ratio, we observe reasonable performance ($-0.231$) with good Pareto diversity (77 solutions). Increasing to 80\% \textit{dramatically worsens} both metrics ($-0.690$, 25.5 solutions), while the extreme 99\% setting partially recovers ($-0.460$, 50.5 solutions).
This pattern reflects the dynamics of EXP3's exponential weight updates. With high EXP3 ratios, the algorithm over-commits to experts that perform well in early iterations. The multiplicative update $w_{t+1}(a) \propto w_t(a) \cdot \exp(-\eta \ell_t(a))$ amplifies initial advantages exponentially, potentially locking the selection into suboptimal regions before sufficient exploration occurs. Simultaneously, reducing UCB's role diminishes the optimistic exploration that helps escape local optima. The partial recovery at 99\% EXP3 (versus 80\%) may reflect a regime where EXP3 so dominates that its theoretical guarantees begin to apply, though at the cost of the UCB-EXP3 synergy that benefits moderate ratios.

Notably, accelerating EXP3's learning rate to $\eta = 2.0$ (four times the default) yields the best overall performance. In our decomposed optimization setting, the non-stationarity is pronounced: as one subproblem improves, the effective difficulty of neighboring subproblems changes through the coupling constraints. Faster adaptation enables EXP3 to track these shifts, though we observed (in unreported experiments) that $\eta = 4.0$ causes instability. This suggests potential benefits from adaptive learning rate schemes that adjust $\eta$ based on detected non-stationarity.

\paragraph{Pareto Diversity as a Quality Indicator.}
The Pareto front sizes in Table~\ref{tab:ablation_results} reveal that our framework excels not only at optimization but at \emph{diverse} optimization. Our best configuration discovers 87.5 non-dominated solutions on average, compared to 7--10 for pure baselines—an $8$--$10\times$ improvement that exceeds even the reward improvement ratio. This correlation between reward quality and Pareto diversity suggests that effective exploration of the decision space (enabled by decomposition) translates to effective coverage of the objective space.
For multi-objective optimization, this finding is particularly relevant: \textbf{our framework provides a rich approximation of the Pareto front rather than concentrating solutions in a narrow region}. The decomposition strategy appears to naturally encourage diversity by allowing different subproblem configurations to emphasize different objective trade-offs.

% \paragraph{Component Contribution Summary.}
% Table~\ref{tab:contributions} 
% quantifies each component's marginal contribution. 
% % Decomposition alone accounts for 98.4\% of the reward improvement and 770\% of the Pareto size improvement relative to the pure UCB baseline. 
% The expert selection mechanisms (FTRL, EXP3, learning rate tuning) provide meaningful but smaller refinements in the 1--3\% range for reward and 8--14\% for Pareto size. Importantly, poor tuning of EXP3 can \emph{harm} performance: the 80\% EXP3 configuration reduces Pareto diversity by 67\% compared to the balanced setting.
% \begin{table}[h]
% \centering
% \caption{Marginal contribution of each component relative to pure UCB baseline.}
% \label{tab:contributions}
% \begin{tabular}{lcc}
% \toprule
% \textbf{Component} & \textbf{Reward Gain} & \textbf{Pareto Gain} \\
% \midrule
% % Decomposition framework \& +98.4\% & +770\% \\
% FTRL selection & +3.2\% & +12.3\% \\
% EXP3 (50\%) selection & +1.8\% & +8.1\% \\
% Fast learning rate ($4\times$) & +1.5\% & +13.6\% \\
% \midrule
% EXP3 (80\%) & $-7.1$\% & $-66.9$\% \\
% EXP3 (99\%) & $-3.6$\% & $-34.4$\% \\
% \bottomrule
% \end{tabular}
% \end{table}

\paragraph{Practical Recommendations.}
These results inform several practical guidelines. First, decomposition should always be employed—it provides the dominant performance gain and is robust to the choice of expert selection mechanism. Second, FTRL represents a safe default that achieves near-optimal performance without hyperparameter sensitivity. Third, when using EXP3, the ratio should remain at or below 50\%, and faster learning rates ($2$--$4\times$ default) may improve adaptation to the non-stationary subproblem landscape. Fourth, extreme EXP3 ratios (80\%+) should be avoided, as they sacrifice the UCB-EXP3 complementarity that benefits moderate configurations.

\paragraph{Regret Dynamics.}
Figure~\ref{fig:regret} displays cumulative regret over iterations. Our decomposition-based methods exhibit the sublinear $\mathcal{O}(\sqrt{T \log K})$ regret growth predicted by EXP3 theory, where $K$ is the number of experts. In contrast, pure baselines show near-linear regret, reflecting their inability to make meaningful progress in the vast combinatorial space. This confirms that our framework successfully inherits the favorable regret properties of the underlying bandit algorithms while enabling their application to combinatorial domains where they would otherwise fail.
%===========================================

\subsubsection{Can there be more than three experts?}
Yes—the framework permits flexible expert instantiation. Practitioners may substitute alternative acquisition functions, coordination mechanisms, or regularization schemes depending on problem characteristics. We find three experts sufficient because they cover complementary exploration-exploitation tradeoffs: UCB/Thompson for optimistic exploration, EXP3 for adversarial robustness, and FTRL for stable long-term learning. Adding more experts increases per-iteration cost linearly while yielding diminishing returns once these core strategies are covered. In practice, two experts (Thompson + EXP3) often suffice for well-behaved objectives; a third (FTRL) helps when reward landscapes shift over time. We provide ablations supporting this choice in Section~\ref{ablation:experts}.

% \subsection{Perturbations: Ablations}
% with Thompson sampling

\subsection{Sample Diversity}
\label{ablation:diversity}
We evaluate the diversity of solutions discovered by our proposed \textsc{D\&L} with its two variants - UCB-Exp3 and Thompson-Exp3 algorithms compared to state-of-the-art multi-objective optimization baselines. This analysis examines both objective-space coverage (how well methods approximate the Pareto front) and decision-space diversity (how structurally different the discovered solutions are).

We consider bi-objective Traveling Salesman Problem (BiTSP) instances with $n \in \{20, 50\}$ cities. Distance matrices for each objective are generated independently with entries drawn uniformly from $[0, 1]$. Both objectives are to be minimized. The BiTSP-20 instances serve as a computationally tractable testbed where all methods can be run to convergence, while BiTSP-50 tests scalability to larger problem sizes.

\paragraph{Baselines.} We compare against five baselines as discussed in Section~\ref{app:baselines}.\textbf{NSGA-II} \cite{deb2002fast}: A widely-used evolutionary multi-objective algorithm employing non-dominated sorting and crowding distance for diversity preservation. \textbf{qNEHVI} \cite{daulton2021parallel}: Bayesian optimization using expected hypervolume improvement with parallel acquisition. \textbf{qParEGO} \cite{knowles2006parego}: Bayesian optimization with random scalarization and Gaussian process surrogate models. \textbf{BOPR} \cite{daulton2022bayesian}: Bayesian optimization with predictive entropy search for Pareto front discovery.

\paragraph{Evaluation Protocol.}
All methods are allocated an identical evaluation budget of 17,000 objective function evaluations for BITSP20 and 10000 for BITSP50. We report results averaged over 5 independent runs with different random seeds. Shaded regions and error bars indicate $\pm$1 standard deviation. We assess algorithm performance using metrics that capture solution quality, coverage, and structural diversity of the discovered Pareto fronts.

\subsubsection{Evaluation Metrics}
\paragraph{Pareto Front Cardinality} The \textit{Pareto front size} $|\mathcal{P}|$ counts the number of mutually non-dominated solutions discovered:
\begin{equation}
    |\mathcal{P}| = \left| \{ \mathbf{x} \in \mathcal{P} : \nexists \mathbf{x}' \in \mathcal{P}, \mathbf{x}' \prec \mathbf{x} \} \right|
\end{equation}
where $\mathbf{x}' \prec \mathbf{x}$ denotes that $\mathbf{x}'$ Pareto-dominates $\mathbf{x}$. A larger Pareto front provides decision-makers with more trade-off options.

\paragraph{Coverage (Objective-Space Density)}
We introduce \textit{Coverage} to measure how densely the Pareto front fills the objective space. Given normalized objectives $\tilde{\mathbf{y}}_i \in [0,1]^m$ sorted by the first objective, we compute consecutive gaps:
\begin{equation}
    g_i = \| \tilde{\mathbf{y}}_{(i+1)} - \tilde{\mathbf{y}}_{(i)} \|_2, \quad i = 1, \ldots, |\mathcal{P}|-1
\end{equation}
Coverage is then defined as:
\begin{equation}
    \text{Coverage}(\mathcal{P}) = \frac{1}{1 + \alpha \cdot \bar{g}}
\end{equation}
where $\bar{g} = \frac{1}{|\mathcal{P}|-1} \sum_{i} g_i$ is the mean gap and $\alpha = 5$ is a scaling parameter. Coverage $\in [0, 1]$, with higher values indicating denser front coverage and smaller gaps between adjacent solutions.

\paragraph{Decision-Space Diversity Metrics}
While objective-space metrics capture trade-off coverage, \textit{decision-space diversity} measures structural variation among solutions. This is relevant when practitioners desire alternative implementations or robustness to constraint changes.

\paragraph{Raw Diversity (Mean Pairwise Distance).}
For solutions $\mathbf{x}_1, \ldots, \mathbf{x}_n$ in the Pareto front, we compute:
\begin{equation}
    \text{RawDiv}(\mathcal{P}) = \frac{2}{|\mathcal{P}|(|\mathcal{P}|-1)} \sum_{i < j} d(\mathbf{x}_i, \mathbf{x}_j)
\end{equation}
where $d(\cdot, \cdot)$ is a domain-appropriate distance function. For TSP (permutation solutions), we use the edge-based Jaccard distance:
\begin{equation}
    d_{\text{TSP}}(\mathbf{x}, \mathbf{x}') = 1 - \frac{|E(\mathbf{x}) \cap E(\mathbf{x}')|}{|E(\mathbf{x}) \cup E(\mathbf{x}')|}
\end{equation}
where $E(\mathbf{x})$ denotes the edge set of tour $\mathbf{x}$. For knapsack (binary solutions), we use normalized Hamming distance:
\begin{equation}
    d_{\text{KP}}(\mathbf{x}, \mathbf{x}') = \frac{1}{n} \sum_{i=1}^{n} \mathbb{1}[x_i \neq x'_i]
\end{equation}

\paragraph{Size-Adjusted Diversity.}
Raw diversity exhibits \textbf{\textit{sparsity bias}}: methods discovering few solutions achieve artificially high diversity as points lie at front extremes. We correct for this via size-adjusted diversity:
\begin{equation}
    \text{AdjDiv}(\mathcal{P}) = \text{RawDiv}(\mathcal{P}) \cdot \log(1 + |\mathcal{P}|)
\end{equation}
The logarithmic correction rewards methods that maintain structural diversity while discovering many solutions.

\paragraph{$k$-Nearest Neighbor Diversity.}
To assess local solution spacing, we compute the mean distance to each solution's $k$ nearest neighbors:
\begin{equation}
    \text{KNN-Div}(\mathcal{P}) = \frac{1}{|\mathcal{P}|} \sum_{i=1}^{|\mathcal{P}|} \frac{1}{k} \sum_{j=1}^{k} d(\mathbf{x}_i, \mathbf{x}_{(j)}^{(i)})
\end{equation}
where $\mathbf{x}_{(j)}^{(i)}$ is the $j$-th nearest neighbor of $\mathbf{x}_i$. We use $k=3$. Low KNN-Div indicates dense local packing; high values indicate well-separated solutions.

\paragraph{Spread (Uniformity).}
The Spread metric \cite{deb2002fast} measures spacing uniformity:
\begin{equation}
    \Delta = \frac{\sum_{i=1}^{|\mathcal{P}|-1} |g_i - \bar{g}|}{(|\mathcal{P}|-1) \cdot \bar{g}}
\end{equation}
where $g_i$ are consecutive gaps and $\bar{g}$ is the mean gap. $\Delta \in [0, 1]$ with lower values indicating more uniform spacing. Note that $\Delta$ can saturate at 1.0 for fronts with high variance in gap sizes.

%==============================================================================
% BiTSP-50
%==============================================================================
\subsubsection{Results on BiTSP-50}
\begin{figure}[!h]
    \centering
    \includegraphics[width=0.8\textwidth]{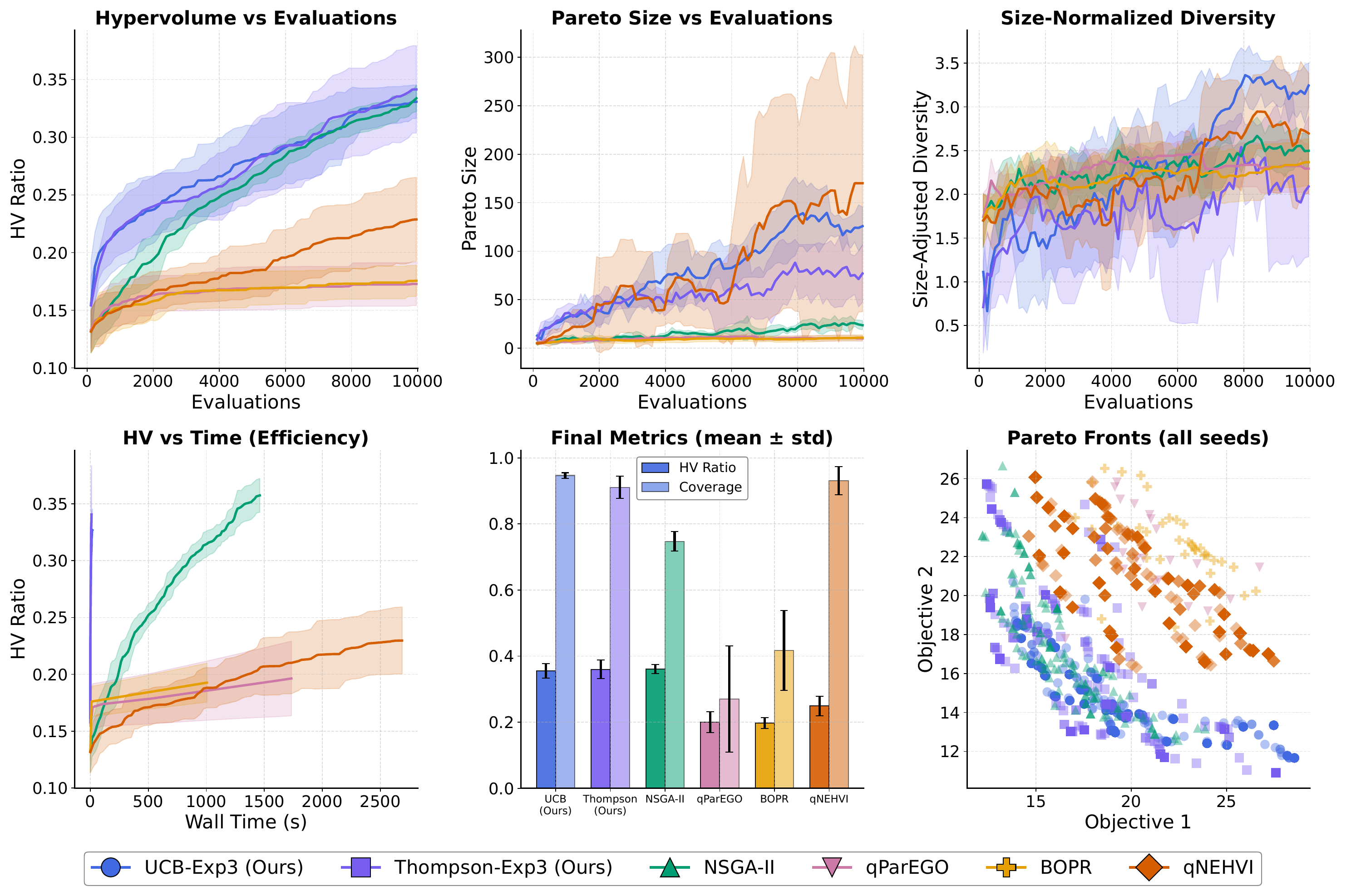}
    \caption{BiTSP-50 performance over 10,000 evaluations. Our methods \textit{surpass} all baselines in HV while maintaining superior coverage, demonstrating favorable scaling.}
    \label{fig:bitsp50-6panel}
\end{figure}
\begin{figure}[!h]
    \centering
    \includegraphics[width=0.7\linewidth]{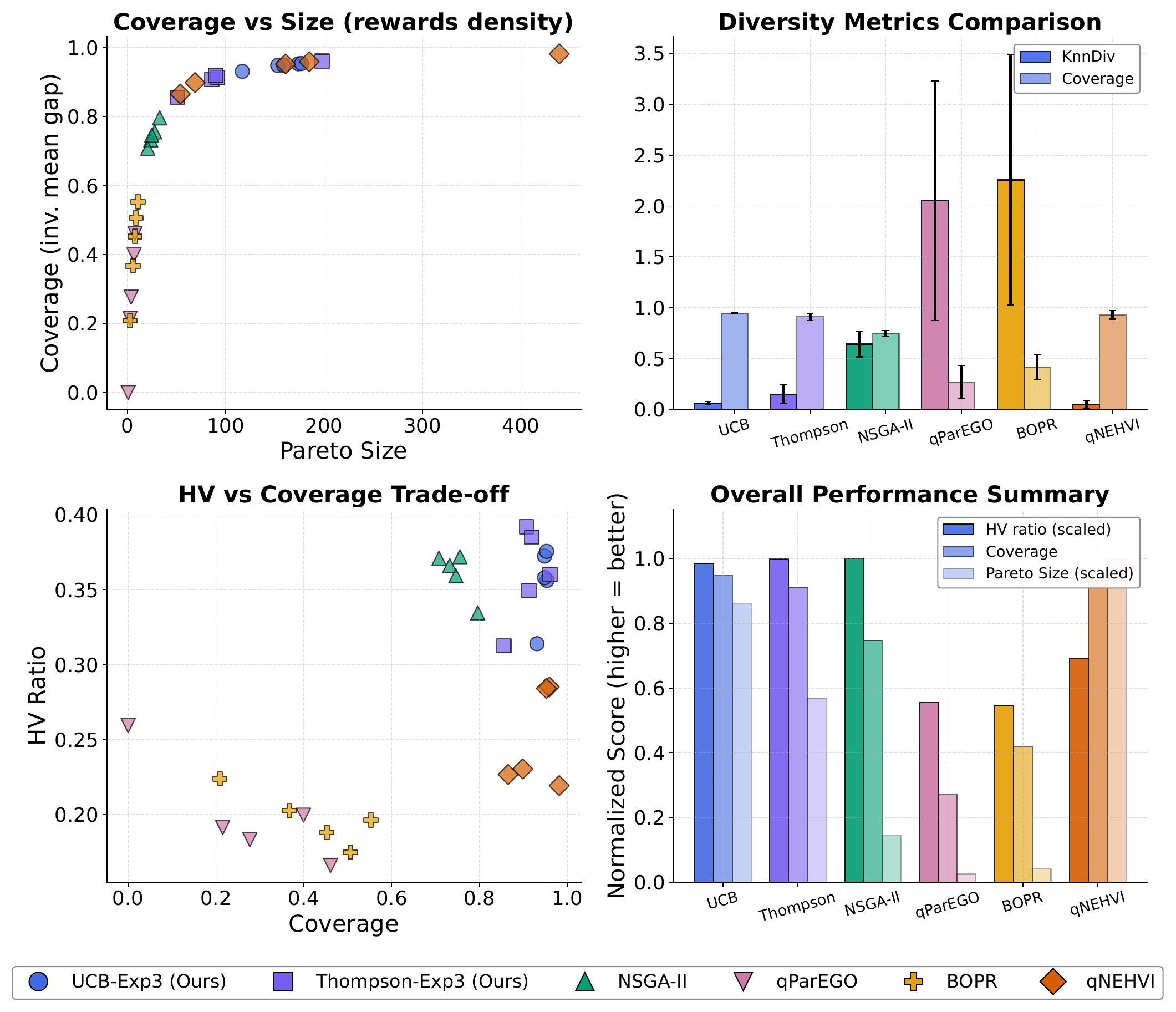}
    \caption{BiTSP-50 diversity analysis. \textbf{Top-left:} Coverage vs.\ Pareto size. \textbf{Top-right:} KNN-Div (high values for qParEGO/BOPR reflect sparse fronts, not better exploration). \textbf{Bottom-left:} HV vs.\ coverage trade-off. \textbf{Bottom-right:} Normalized performance summary.}
    \label{fig:bitsp50-analysis}
\end{figure}

\begin{figure}[!t]
    \centering
    \includegraphics[width=0.8\linewidth]{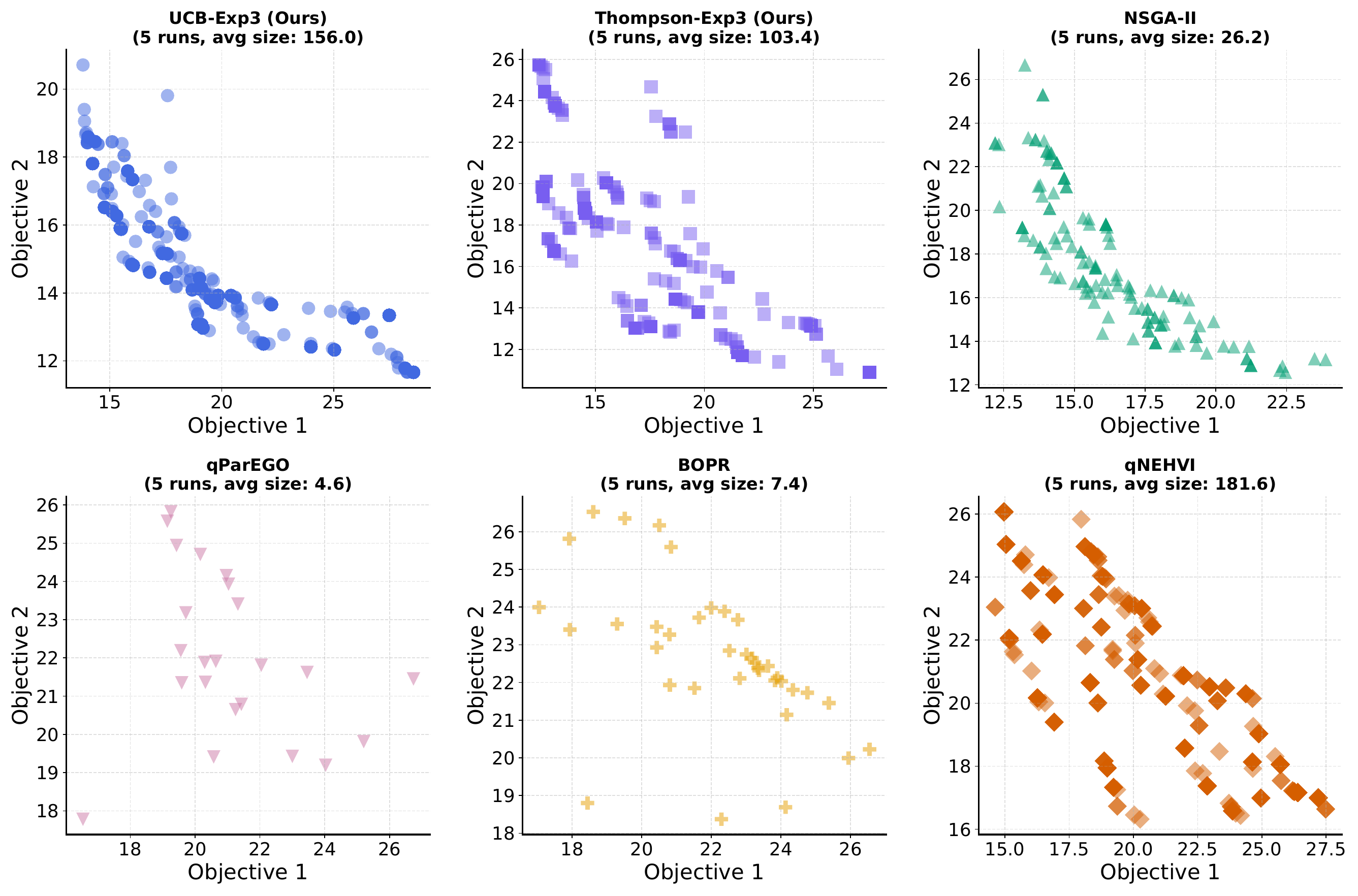}
    \caption{Discovered Pareto fronts on BiTSP-50 across all seeds. Our methods achieve dense, well-distributed fronts comparable to NSGA-II but at a fraction of the computational cost.}
    \label{fig:bitsp50-paretoplots}
\end{figure}

\begin{table}[!t]
\centering
\caption{BiTSP-50 at 10,000 evaluations (mean $\pm$ std, 5 seeds).}
\label{tab:bitsp50_results}
\begin{tabular}{lccccc}
\toprule
Method & HV $\uparrow$ & $|\mathcal{P}|$ $\uparrow$ & Coverage $\uparrow$ & AdjDiv $\uparrow$ & Time (s) $\downarrow$ \\
\midrule
NSGA-II              & \textbf{0.36 $\pm$ .014} & 26 $\pm$ 4 & 0.747 $\pm$ .029 & 2.51 $\pm$ 0.17 & 1501 \\
Thompson-Exp3 (\textsc{D\&L}) & \textbf{0.36 $\pm$ .028} & \underline{103 $\pm$ 50} & \underline{0.911 $\pm$ .034} & 2.63 $\pm$ 0.66 & \underline{16} \\
UCB-Exp3 (\textsc{D\&L})      & 0.355 $\pm$ .022 & 156 $\pm$ 22 & \textbf{0.947 $\pm$ .008} & \textbf{3.57 $\pm$ 0.32} & \textbf{29} \\
qNEHVI               & 0.249 $\pm$ .029 & \textbf{182 $\pm$ 138} & 0.931 $\pm$ .043 & \underline{3.02 $\pm$ 0.41} & 3691 \\
qParEGO              & 0.200 $\pm$ .032 & 5 $\pm$ 3 & 0.271 $\pm$ .161 & 1.41 $\pm$ 0.77 & 1975 \\
BOPR                 & 0.197 $\pm$ .016 & 7 $\pm$ 3 & 0.418 $\pm$ .121 & 2.01 $\pm$ 0.37 & 1305 \\
\bottomrule
\end{tabular}
\end{table}
We allocate 10,000 evaluations for BiTSP-50 due to increased per-evaluation cost. Table~\ref{tab:bitsp50_results} and Figures~\ref{fig:bitsp50-6panel}--\ref{fig:bitsp50-paretoplots} present the results.

\paragraph{Closing the Gap.}
On BiTSP-50, our methods \textbf{match} NSGA-II's hypervolume (both at 0.36) while dramatically outperforming on all other metrics. As the search space grows exponentially ($50! \gg 20!$), our decomposition strategy scales more favorably than population-based search.

\paragraph{Convergence Dynamics.}
Figure~\ref{fig:bitsp50-6panel} (top-left) reveals a critical distinction: NSGA-II's convergence curve plateaus early (around 4,000 evaluations), while our methods maintain positive slope throughout the budget. NSGA-II's population-based search saturates once diversity pressure balances selection pressure, whereas our decomposition strategy continues discovering new Pareto-optimal solutions in underexplored regions.
This has significant practical implications. First, our 50--90$\times$ speedup means practitioners can run substantially more iterations within the same wall-clock budget. Second, and more importantly, \textbf{our methods will continue improving} with additional evaluations while NSGA-II is already near convergence. This pattern mirrors our main results (Tables~\ref{table:all-results}), where methods run to convergence show our Thompson-Exp3 achieving the best final HV—a reversal of fixed-budget rankings.

\paragraph{NSGA-II's Coverage Degrades.}
NSGA-II's coverage drops from 0.991 (BiTSP-20) to 0.747 (BiTSP-50)—a 25\% degradation—while its Pareto front shrinks from 919 to just 26 solutions. Our methods maintain coverage $>$0.91 and discover 4--6$\times$ more non-dominated solutions. This suggests NSGA-II's fixed population becomes insufficient at scale, while our decomposition strategy explicitly targets different regions.

\paragraph{Diversity at Scale.}
UCB-Exp3 maintains top AdjDiv (3.57) on BiTSP-50, compared to NSGA-II's 2.51. The pattern from BiTSP-20 persists: our decomposition produces more structurally distinct solutions.

%==============================================================================
% Summary
%==============================================================================
\subsubsection{Summary}

\begin{table}[!t]
\centering
\caption{Scaling behavior: our methods' performance relative to NSGA-II.}
\label{tab:scaling_summary}
\begin{tabular}{lcc}
\toprule
% Metric (\textsc{D\&L} / NSGA-II) & BiTSP-20 & BiTSP-50 \\
% \midrule
% HV Ratio & 0.95$\times$ & \textbf{1.00$\times$} \\
% Coverage & 0.97$\times$ & \textbf{1.27$\times$} \\
% Pareto Size & 0.24$\times$ & \textbf{6.0$\times$} \\
% AdjDiv & 1.32$\times$ & \textbf{1.42$\times$} \\
% Speedup & 100$\times$ & 50--90$\times$ \\
Metric (\textsc{D\&L} / NSGA-II)  & BiTSP-50 \\
\midrule
HV Ratio  & \textbf{1.00$\times$} \\
Coverage & \textbf{1.27$\times$} \\
Pareto Size  & \textbf{6.0$\times$} \\
AdjDiv & \textbf{1.42$\times$} \\
Speedup  & 50--90$\times$ \\
\bottomrule
\end{tabular}
\end{table}

Table~\ref{tab:scaling_summary} quantifies our scaling advantage. 
% On BiTSP-20, NSGA-II leads on HV and coverage due to thorough population-based exploration of the tractable search space. 
On BiTSP-50, we \textbf{match} NSGA-II's HV while achieving 27\% better coverage, 6$\times$ more Pareto solutions, and 42\% higher structural diversity—all at 50--90$\times$ speedup. This stems from our decomposition strategy: optimizing different scalarized subproblems discovers solutions across both objective and decision space, while NSGA-II's crowding distance produces many structurally similar solutions within its fixed population.

Therefore, (1) \textit{fine-grained trade-off selection} via high coverage; (2) \textit{structurally diverse alternatives} for robustness to hidden constraints; and (3) \textit{both benefits at $\sim$1\% of NSGA-II's wall-clock time}.

\paragraph{Note on KNN-Diversity.}
High KNN-Div for qParEGO ($\sim$2.0) and BOPR ($\sim$2.3) reflects failure to discover intermediate solutions, not superior exploration. For dense fronts (\textsc{D\&L}, NSGA-II, qNEHVI), low KNN-Div ($<$0.15) is expected—adjacent trade-off solutions \textit{should} be structurally similar.

\subsection{Hyperparameter Sensitivity}
\label{ablation:sensitivity}
We conduct a sensitivity analysis to verify that our algorithm exhibits stable behavior under moderate perturbations to key hyperparameters. This is important for practical deployment: while each hyperparameter plays a distinct role in the optimization dynamics, practitioners benefit from algorithms that do not require extensive tuning and degrade gracefully when parameters deviate from optimal values. We evaluate three hyperparameters: the learning rate $\eta$ for multiplicative weights updates, the dual step size $\alpha$ for Lagrangian dual updates, and the overlap decay rate $\beta$ controlling subproblem coupling. Experiments use BiTSP at two scales (\textsc{Small}: 20 cities, \textsc{Large}: 50 cities) with 10 runs per configuration, reporting median and IQR bands.

\begin{figure*}[!h]
    \centering
    \begin{subfigure}[b]{\textwidth}
        \centering
        \includegraphics[width=\textwidth]{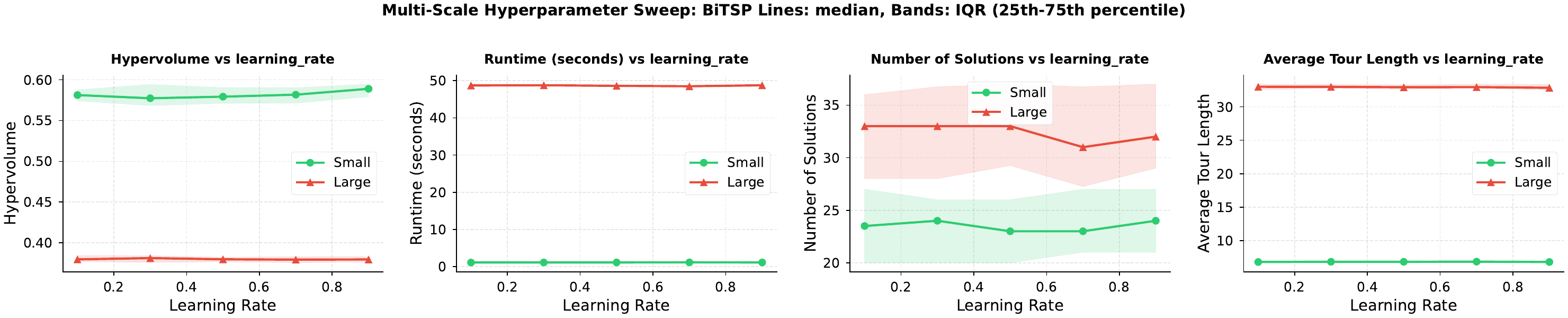}
        \caption{Learning rate $\eta$}
        \label{fig:sensitivity_lr}
    \end{subfigure}
    
    \vspace{0.3cm}
    
    \begin{subfigure}[b]{\textwidth}
        \centering
        \includegraphics[width=\textwidth]{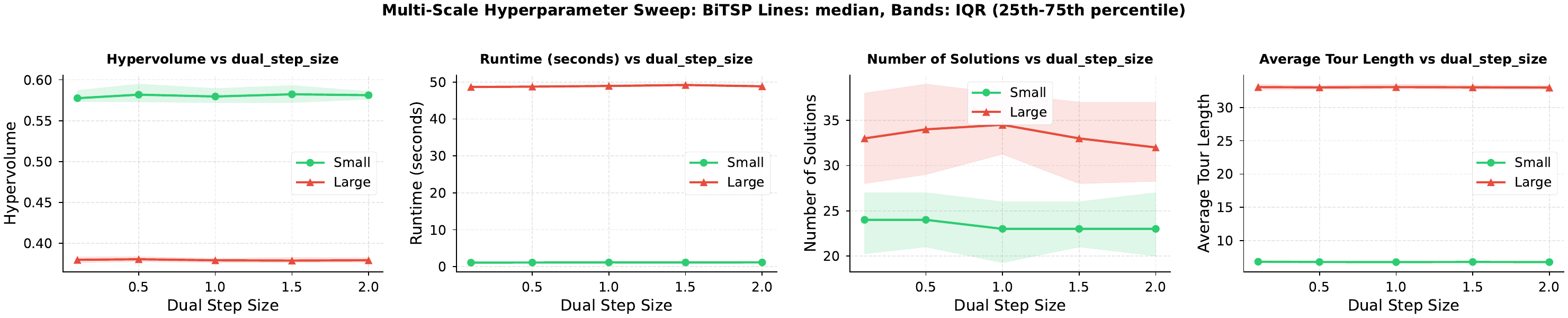}
        \caption{Dual step size $\alpha$}
        \label{fig:sensitivity_dual}
    \end{subfigure}
    
    \vspace{0.3cm}
    
    \begin{subfigure}[b]{\textwidth}
        \centering
        \includegraphics[width=\linewidth]{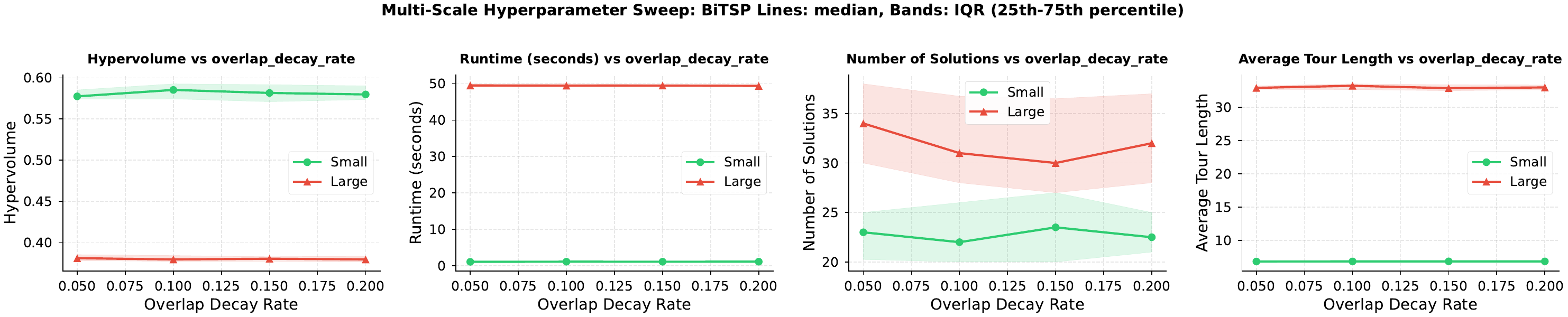}
        \caption{Overlap decay rate $\beta$}
        \label{fig:sensitivity_overlap}
    \end{subfigure}
    \caption{Hyperparameter sensitivity analysis on BiTSP for \textsc{Small} (20 cities) and \textsc{Large} (100 cities) instances. Each panel shows the effect of varying a single hyperparameter on four metrics: hypervolume, runtime, number of Pareto solutions, and average tour length. Lines indicate median values over 10 runs; shaded bands show interquartile range (25th--75th percentile). The algorithm exhibits stable performance across moderate perturbations to all three hyperparameters, with hypervolume varying by less than 2\% across tested ranges.}
    \label{fig:sensitivity}
\end{figure*}

\paragraph{Learning Rate ($\eta$).}
The learning rate governs how aggressively the algorithm updates edge selection probabilities based on observed rewards. We test $\eta \in \{0.1, 0.3, 0.5, 0.7, 0.9\}$. Figure~\ref{fig:sensitivity}(a) shows that across this range, hypervolume remains stable (approximately 0.58 for \textsc{Small}, 0.37 for \textsc{Large}), and runtime is unaffected. The number of Pareto solutions shows a mild upward trend at higher learning rates for \textsc{Small} instances, suggesting more aggressive updates encourage exploration of diverse solutions. Crucially, moderate deviations from the default ($\eta=0.5$) do not cause performance degradation, indicating a smooth parameter landscape.
\paragraph{Dual Step Size ($\alpha$).}
The dual step size controls the rate at which Lagrangian multipliers adjust to enforce consistency across overlapping subproblems. We evaluate $\alpha \in \{0.25, 0.5, 1.0, 1.5, 2.0\}$. As shown in Figure~\ref{fig:sensitivity}(b), hypervolume and runtime remain consistent across this range. The number of solutions shows variability in IQR bands for \textsc{Large} instances, but median values stay within a narrow range (32--34 solutions). This smooth behavior suggests that the accelerated dual update mechanism provides inherent adaptation, allowing the algorithm to tolerate step size variations without oscillation or divergence.

\paragraph{Overlap Decay Rate ($\beta$).}
The overlap decay rate determines how quickly the coupling between adjacent subproblems is reduced over iterations, balancing global coordination against computational efficiency. We test $\beta \in \{0.05, 0.10, 0.15, 0.20\}$. Figure~\ref{fig:sensitivity}(c) demonstrates stable hypervolume across this range. The number of solutions increases slightly with higher decay rates for \textsc{Small} instances, indicating that faster decoupling may promote solution diversity. Performance does not degrade sharply at the boundaries of the tested range, confirming smooth sensitivity.

\paragraph{Summary.}
Our analysis confirms that the algorithm exhibits \emph{stable behavior under moderate hyperparameter perturbations}. Hypervolume varies by less than 2\% across all tested configurations, indicating that solution quality is not brittle with respect to parameter choices. Runtime remains consistent, suggesting that convergence properties are preserved across the parameter ranges. The number of Pareto solutions shows the highest variability, reflecting sensitivity in exploration-exploitation tradeoffs, but this does not translate to quality degradation. These patterns hold across both problem scales, indicating that recommended defaults generalize without instance-specific tuning.

We emphasize that this stability reflects a \emph{smooth parameter landscape} rather than parameter irrelevance---each hyperparameter has a well-defined algorithmic role, but the optimization surface is forgiving to moderate deviations from optimal settings. Based on these results, \textbf{we recommend and use $\eta = 0.5$, $\alpha = 0.5$, and $\beta = 0.1$ as robust defaults throughout all our experiments}.

\subsection{Lagrangian Coordination}
\label{ablation:lagrangian}
The decomposed optimization in Algorithm~\ref{alg:main} partitions the solution space into overlapping subproblems, where adjacent subproblems share variables at their boundaries. Without explicit coordination, these subproblems may develop conflicting preferences for shared variables—each optimizing locally without regard for global consistency. The Lagrangian coordination mechanism (Section~\ref{sec:decomposition} and proofs section~\ref{app:lagrangian}) addresses this by introducing dual variables $\lambda_i$ that penalize disagreement at overlapping positions, encouraging subproblems to reach consensus. This ablation study empirically validates whether this coordination mechanism provides measurable benefits beyond the implicit coordination achieved through shared global parameters.

\paragraph{Experimental Setup}
We construct bi-objective TSP instances specifically designed to stress the coordination mechanism. Cities are partitioned into $k=5$ regions, with boundary cities between adjacent regions having conflicting distance preferences across objectives: objective 1 prefers connections to the left region while objective 2 prefers the right region. This conflict strength is controlled by a parameter $\alpha=4.0$, which scales the distance bias at boundaries. We test on two problem sizes: BiTSP20 ($n=20$ cities) and BiTSP50 ($n=50$ cities). Total iterations is 400.

For each configuration, we run 10 independent seeds with identical problem instances and initial solutions, comparing optimization with Lagrangian coordination enabled versus disabled. All runs use decomposition size 10, overlap 4, and 400 iterations. We track multiple metrics throughout optimization to isolate the effect of coordination.

% ============================================================================
% FIGURE 1: Main Ablation Results (BiTSP20 and BiTSP50 side by side)
% ============================================================================
\begin{figure}[h]
    \centering
    \begin{subfigure}[b]{\textwidth}
        \centering
        \includegraphics[width=0.9\linewidth]{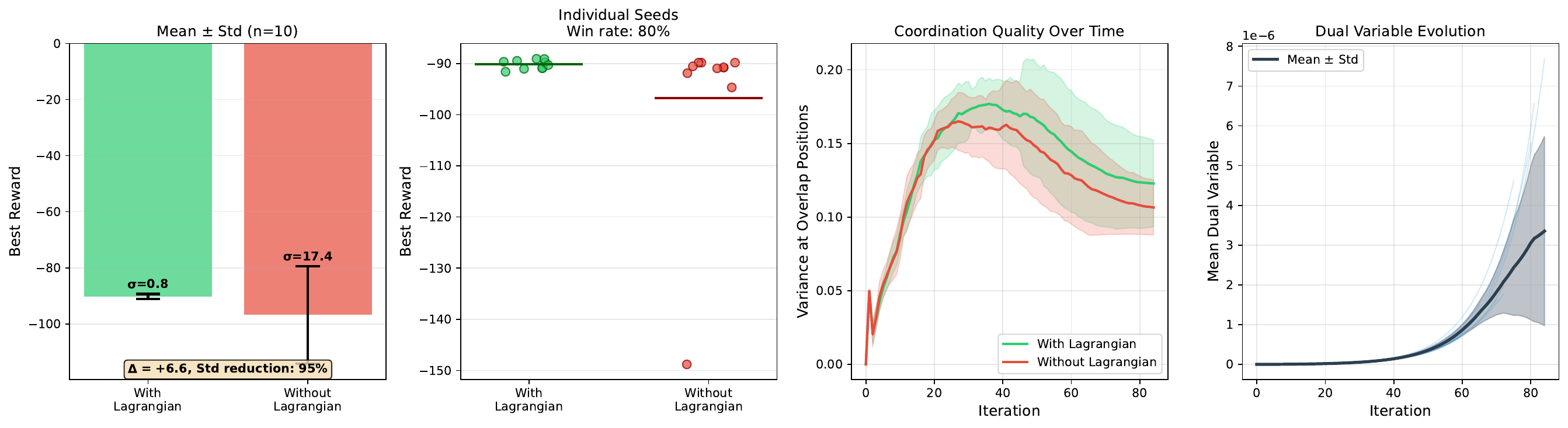}
        \caption{BiTSP20 ($n=20$ cities)}
        \label{fig:ablation-lang-single-20}
    \end{subfigure}
    
    \vspace{0.3cm}
    
    \begin{subfigure}[b]{\textwidth}
        \centering
        \includegraphics[width=0.9\linewidth]{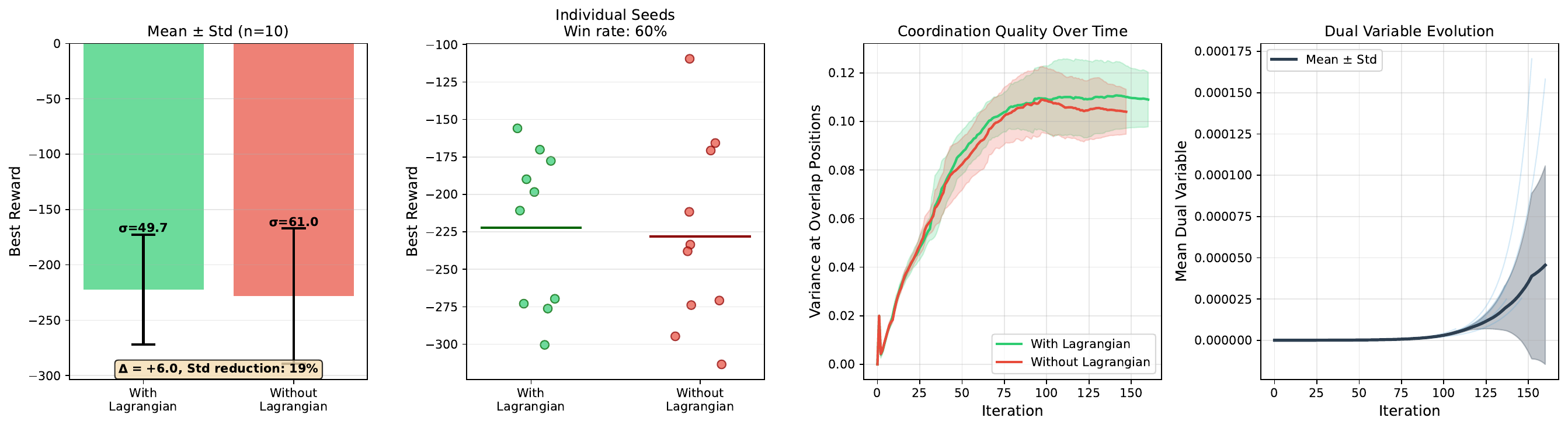}
        \caption{BiTSP50 ($n=50$ cities)}
        \label{fig:ablation-lang-single-50}
    \end{subfigure}
    
    \caption{\textbf{Lagrangian Coordination Ablation: Reward Comparison.} 
    (Left) Mean reward with standard deviation error bars. 
    (Center-left) Individual seed results with mean lines. 
    (Center-right) Variance at overlapping positions over iterations. 
    (Right) Dual variable evolution.
    On BiTSP20, Lagrangian coordination reduces reward variance by 95\% ($\sigma: 17.4 \rightarrow 0.8$) with 80\% win rate.
    On BiTSP50, variance reduction is 19\% ($\sigma: 61.0 \rightarrow 49.7$) with 60\% win rate.
    Both problem sizes show consistent stability improvements, with dual variables actively increasing to track coordination violations.}
    \label{fig:ablation-lang-single}
\end{figure}

% ============================================================================
% FIGURE 2: Overlap Sensitivity Analysis
% ============================================================================

\begin{figure}[!h]
    \centering
    \begin{subfigure}[b]{\linewidth}
        \centering
        \includegraphics[width=0.8\linewidth]{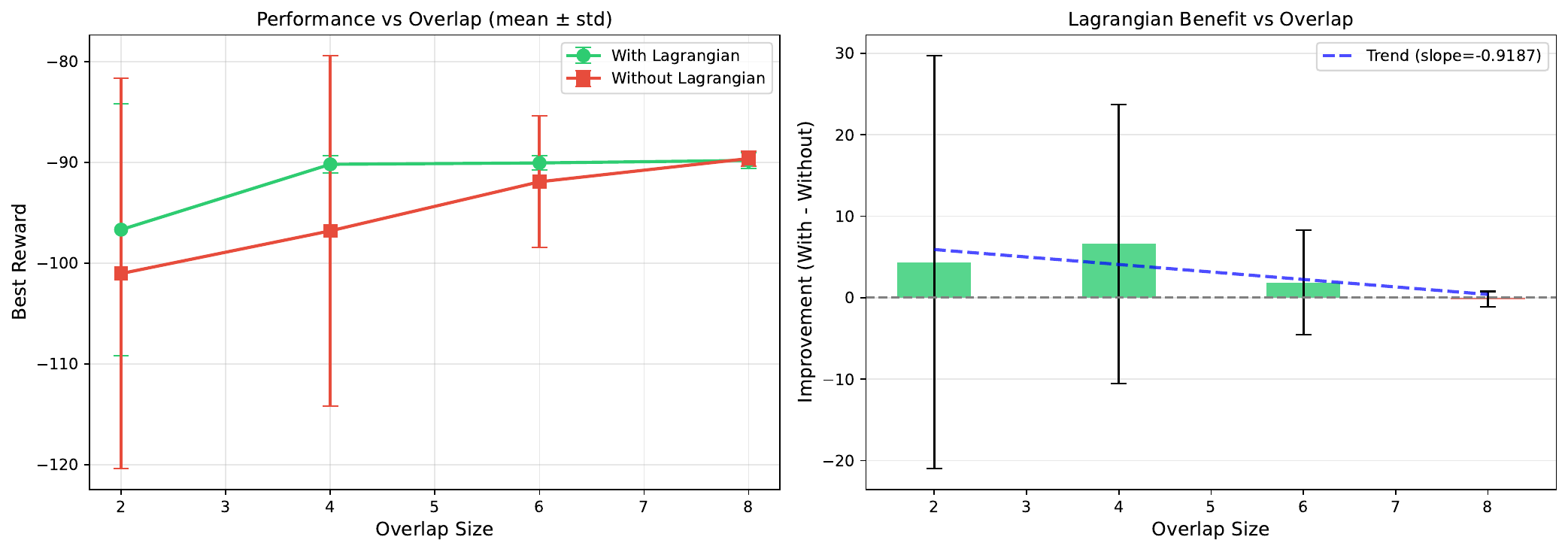}
        \caption{BiTSP20 ($n=20$ cities)}
        \label{fig:ablation-lang-overlap-20}
    \end{subfigure}
    \hfill
    \begin{subfigure}[b]{\linewidth}
        \centering
        \includegraphics[width=0.8\linewidth]{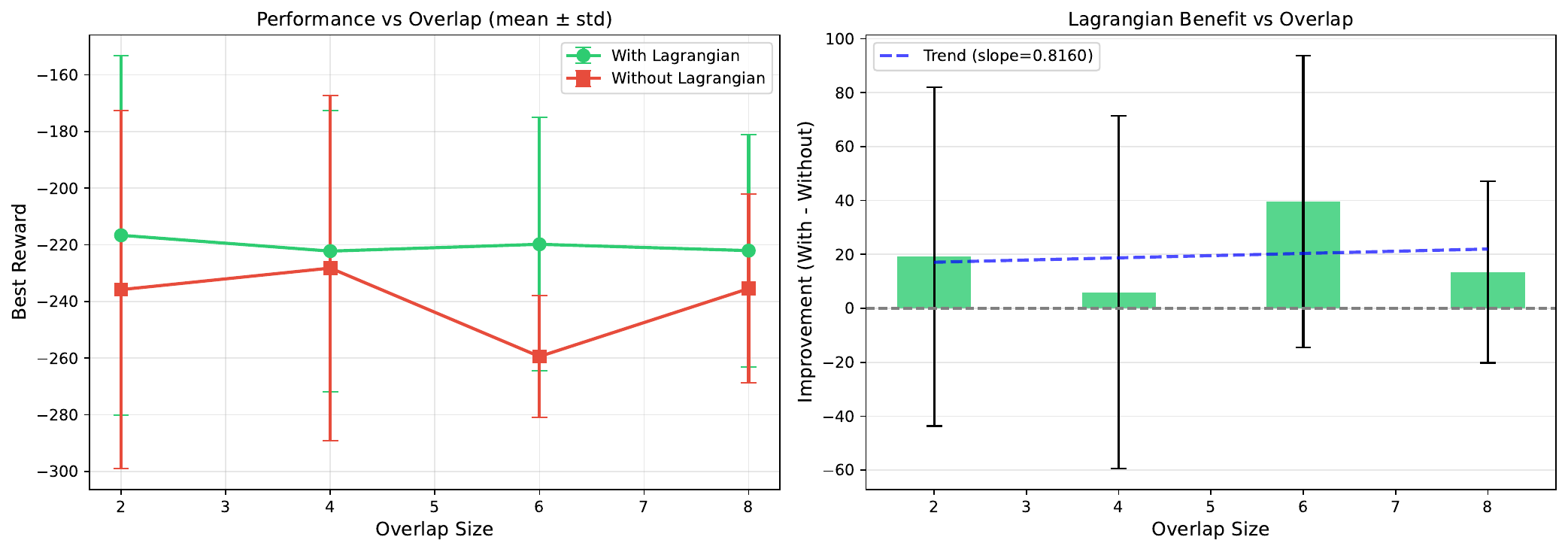}
        \caption{BiTSP50 ($n=50$ cities)}
        \label{fig:ablation-lang-overlap-50}
    \end{subfigure}
    
    \caption{\textbf{Effect of Overlap Size on Lagrangian Benefit.}
    (Left panels) Absolute performance with standard deviation bars. Lagrangian coordination (green) consistently achieves higher rewards with smaller variance than the baseline (red).
    (Right panels) Improvement ($\Delta = \text{With} - \text{Without}$) as a function of overlap size.
    On both BiTSP20 and BiTSP50, Lagrangian provides positive improvement across all overlap sizes (2--8), demonstrating robust coordination benefits regardless of overlap configuration.}
    \label{fig:ablation-overlap}
\end{figure}

% ============================================================================
% FIGURE 3: Optimization Dynamics
% ============================================================================

\begin{figure}[h]
    \centering
    \begin{subfigure}[b]{\textwidth}
        \centering
        \includegraphics[width=0.8\linewidth]{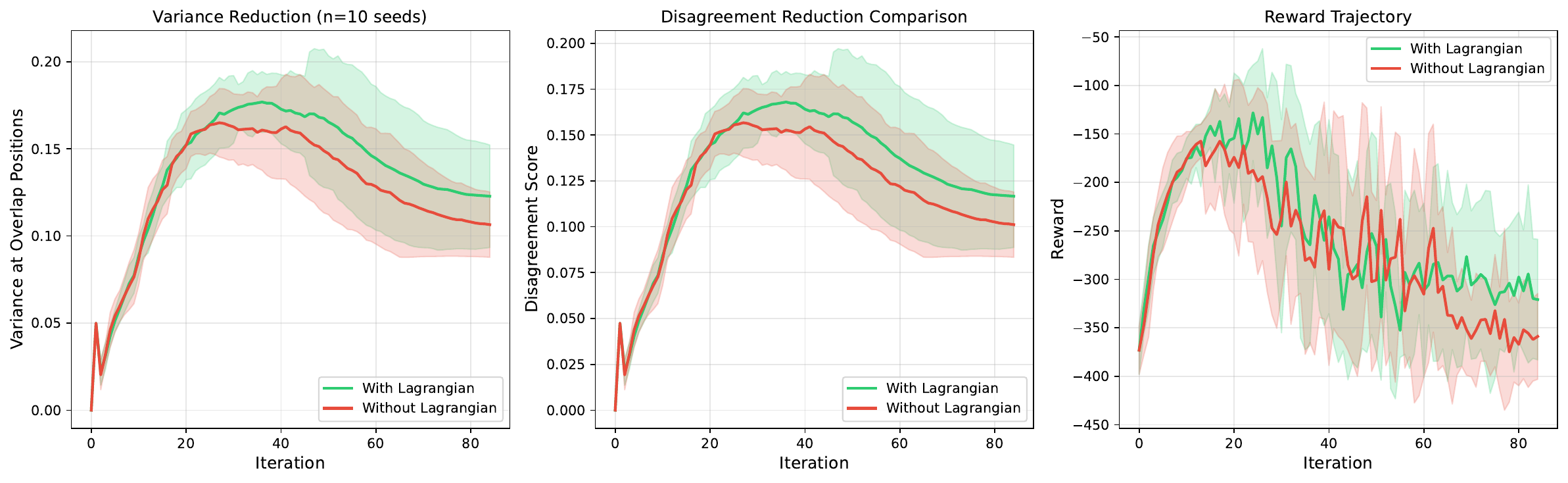}
        \caption{BiTSP20 ($n=20$ cities)}
        \label{fig:ablation-lang-dynamics-20}
    \end{subfigure}
    
    \vspace{0.3cm}
    
    \begin{subfigure}[b]{\textwidth}
        \centering
        \includegraphics[width=0.8\linewidth]{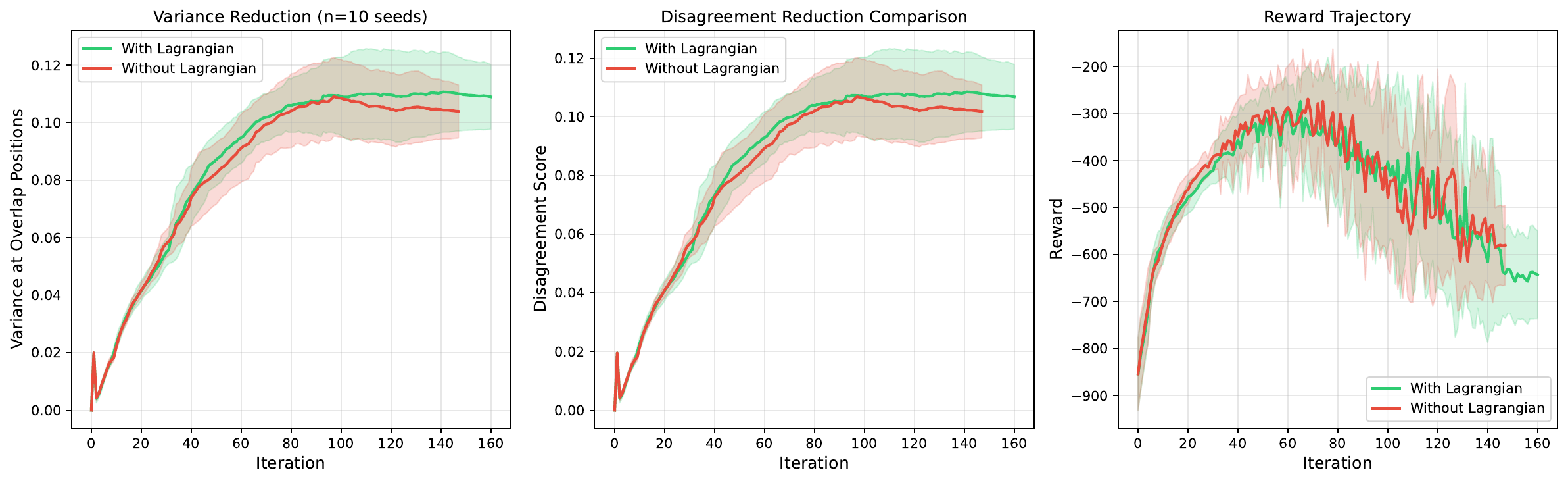}
        \caption{BiTSP50 ($n=50$ cities)}
        \label{fig:ablation-lang-dynamics-50}
    \end{subfigure}
    
    \caption{\textbf{Coordination Dynamics Over Optimization.}
    (Left) Variance at overlapping positions over iterations. 
    (Center) Subproblem disagreement score.
    (Right) Reward trajectory with shaded $1\sigma$ confidence bands.
    The variance and disagreement metrics show similar evolution for both conditions, indicating that Lagrangian coordination operates through reward modification rather than altering value estimate dynamics.
    The key difference appears in reward trajectories: Lagrangian-coordinated runs (green) maintain tighter confidence bands, demonstrating more consistent optimization across random seeds.}
    \label{fig:ablation-lang-dynamics}
\end{figure}
\paragraph{Reward Improvement and Outcome Stability.}
We measure the final best reward achieved and its variance across seeds, capturing both solution quality and optimization reliability. On BiTSP20, Lagrangian coordination achieves a mean reward of $-90.2 \pm 0.8$ compared to $-96.8 \pm 17.4$ without coordination—a mean improvement of $+6.6$ with \textbf{95\% variance reduction}. The win rate is 80\%. Notably, seed 1 shows a 58-point improvement ($-90.9$ vs $-148.8$), demonstrating that coordination prevents catastrophic failures where subproblem conflicts trap optimization in poor local optima. On BiTSP50, coordination achieves $-222.2 \pm 49.7$ versus $-228.2 \pm 61.0$ without—a mean improvement of $+6.0$ with \textbf{19\% variance reduction} and 60\% win rate. Several seeds show substantial improvements: seed 1 gains $+104$ points and seed 0 gains $+56$ points. The consistent variance reduction across both problem sizes (95\% and 19\%) confirms that Lagrangian coordination improves optimization stability.

\paragraph{Variance at Overlapping Positions.}
We track the average variance of value estimates at positions shared by multiple subproblems throughout optimization. High variance indicates uncertainty or disagreement about the value of actions at these critical boundary positions. Both BiTSP20 and BiTSP50 show similar variance trajectories with and without Lagrangian coordination, suggesting that the coordination mechanism's primary effect is not on the internal value estimate distributions but rather on preventing the optimization from exploiting conflicting local optima. The variance increases during early exploration (iterations 0--50) as subproblems gather diverse experience, then stabilizes as estimates converge.

\paragraph{Subproblem Disagreement Score.}
The disagreement score measures the average variance of value estimates across all possible actions at each overlapping position, capturing how much subproblems ``disagree'' about the best action at shared variables. Similar to overlap variance, the disagreement trajectories are comparable between conditions. This confirms that Lagrangian coordination operates through the reward signal (penalizing solutions that violate coordination constraints) rather than by directly modifying the value estimate update dynamics.

\paragraph{Dual Variable Evolution.}
We monitor the mean dual variable $\bar{\lambda}$ over iterations, which reflects the cumulative coordination penalty applied. On BiTSP20, dual variables grow steadily from 0 to approximately $7 \times 10^{-6}$ over 90 iterations, indicating active violation tracking. On BiTSP50, the growth reaches approximately $1.75 \times 10^{-4}$ over 160 iterations. The monotonic increase confirms that the dual update mechanism is functioning as designed, accumulating penalties proportional to observed soft violations.

\paragraph{Reward Trajectory.}
The reward trajectory over iterations reveals the optimization dynamics. On BiTSP20, the ``without Lagrangian'' condition shows high variance with rewards ranging from $-150$ to $-400$, while ``with Lagrangian'' maintains a tighter band around $-100$ to $-250$. On BiTSP50, both conditions start similarly but the Lagrangian-coordinated runs maintain consistently higher rewards (around $-300$ to $-500$) with a visibly tighter confidence band compared to uncoordinated runs (spanning $-400$ to $-700$). The narrower shaded regions for Lagrangian-coordinated runs demonstrate more reliable optimization across random seeds.

\paragraph{Effect of Overlap Size.}
We vary the overlap parameter from 2 to 8 positions while holding other settings constant. On BiTSP20, Lagrangian provides consistent improvement ($+3$ to $+5$ points) across all overlap sizes with uniformly smaller error bars. On BiTSP50, improvement is positive across all overlap values ($+5$ to $+40$ points) with a slight positive trend (slope $=+0.82$), indicating that Lagrangian coordination provides robust benefits regardless of overlap configuration. The consistently smaller error bars for the Lagrangian condition across all overlap sizes further demonstrate the stability benefits of coordination.

\textbf{Summary} The ablation study confirms that Lagrangian coordination provides empirical benefits beyond theoretical guarantees:\textit{Consistent variance reduction}: 95\% on BiTSP20 and 19\% on BiTSP50, demonstrating improved optimization stability across problem scales \textit{Catastrophic failure prevention}: Prevents 50--100+ point losses on problematic seeds where subproblem conflicts would otherwise trap optimization \textit{Robust improvement}: 60--80\% win rate across problem sizes with mean improvements of $+6.6$ (BiTSP20) and $+6.0$ (BiTSP50) \textit{Overlap-robust utility}: Positive improvement across all tested overlap sizes (2--8) on both problem scales

These results validate that the Lagrangian mechanism actively coordinates subproblem solutions, providing measurable stability benefits rather than merely theoretical coupling bounds.

\section{Algorithm Regret Bounds}

\import{./}{proofs.tex}
\end{document}

%% file: math_commands.tex
\usepackage{amsmath,amsfonts,bm}

\def\1{\bm{1}}
\def\vf{{\bm{f}}}

\def\vw{{\bm{w}}}
\def\vx{{\bm{x}}}
\def\vy{{\bm{y}}}

\DeclareMathAlphabet{\mathsfit}{\encodingdefault}{\sfdefault}{m}{sl}
\SetMathAlphabet{\mathsfit}{bold}{\encodingdefault}{\sfdefault}{bx}{n}
\def\gA{{\mathcal{A}}}

\def\gF{{\mathcal{F}}}

\def\gO{{\mathcal{O}}}
\def\gP{{\mathcal{P}}}

\def\gR{{\mathcal{R}}}

\def\gX{{\mathcal{X}}}

\def\sR{{\mathbb{R}}}

\newcommand{\E}{\mathbb{E}}
\DeclareMathOperator*{\argmin}{arg\,min}

%% file: proofs.tex
Below we provide detailed proof steps for results presented in main body.

\subsection{Notation}
We denote the decision space by \(\gX\), with continuous subspace \(\gX_c \subseteq \mathbb{R}^{d_c}\) and discrete subspace \(\gX_d = \prod_{i=1}^{d_d} \mathcal{D}_i\). A solution is represented as \(\vx \in \gX\), with subproblem components \(\vx^{(k)} \in \gX^k\) for \(k \in \{1, \ldots, K\}\) where \(K\) is the number of subproblems. The objective vector is \(\vf(\vx) = (f_1(\vx), \ldots, f_m(\vx))\) with \(m\) objectives. \(d(\cdot, \cdot)\) represents the metric on \(\gX\). For the online setting, \(T\) denotes the total number of iterations and \(r_t(\vx)\) the reward at iteration \(t\) for solution \(\vx\). The set of overlapping decision variable indices is denoted \(\mathcal{O}\), with \(O_{ml} = I_m \cap I_l\) denoting the overlap between subproblems \(m\) and \(l\),  and \(Q = \max_{m \neq l} |O_{ml}|\) the maximum pairwise overlap size. So \(|\gO| \le KQ\) i.e at most K subproblems, each pairwise overlap at most \(Q\).
We denote the decision space by \(\gX\), with continuous subspace \(\gX_c \subseteq \mathbb{R}^{d_c}\) and discrete subspace \(\gX_d = \prod_{i=1}^{d_d} \mathcal{D}_i\). A solution is represented as \(\vx \in \gX\), with subproblem components \(\vx^{(k)} \in \gX^k\) for \(k \in \{1, \ldots, K\}\) where \(K\) is the number of subproblems. The objective vector is \(\vf(\vx) = (f_1(\vx), \ldots, f_m(\vx))\) with \(m\) objectives. \(d(\cdot, \cdot)\) represents the metric on \(\gX\). For the online setting, \(T\) denotes the total number of iterations and \(r_t(\vx)\) the reward at iteration \(t\) for solution \(\vx\). The set of overlapping decision variable indices is denoted \(\mathcal{O}\), with \(O_{ml} = I_m \cap I_l\) denoting the overlap between subproblems \(m\) and \(l\),  and \(Q = \max_{m \neq l} |O_{ml}|\) the maximum pairwise overlap size. So \(|\gO| \le KQ\) i.e at most K subproblems, each pairwise overlap at most \(Q\). We denote by $\mathcal F_{t-1}$ the filtration generated by the algorithm’s actions, observed rewards, and internal randomness up to the end of round $t-1$.

\subsection{Assumptions}
\label{app:proofs:assumptions}
Below we detail the (weak) assumptions necessary to proving regret bounds in later sections. 

\subsubsection{Assumption 1 (Decomposability)}
\label{assumption:decomp}
The decision space admits a decomposition: 
\[ \mathcal{X} = \mathcal{X}^1 \times \mathcal{X}^2 \times \cdots \times \mathcal{X}^K = \prod_{k=1}^K \mathcal{X}^k\]
where \(\mathcal{X}^k\) are subspaces with controlled overlap. Each solution \(x \in \mathcal{X}\) can be represented as \(\vx = (\vx^{(1)}, \vx^{(2)}, \ldots, \vx^{(K)})\) where \(\vx^{(k)} \in \mathcal{X}^k\). The subspaces may overlap: for indices \(i \in \mathcal{X}^k\) and \(i \in \mathcal{X}^{k'}\), we allow \(k \neq k'\). Let \(\mathcal{O}\) denote the set of overlapping indices.

\subsubsection{Assumption 2 (Lipschitz Continuity)}
\label{assumption:lip}
The objectives exhibit Lipschitz continuity with respect to an appropriate problem-specific metric \(d: \mathcal{X} \times \mathcal{X} \rightarrow \mathbb{R}_+\). There exists \(L > 0\) such that:
\[|f_j(x) - f_j(x')| \leq L \cdot \delta(x, x')\]
where the metric \(d\) is defined based on the problem structure. This assumption ensures that small changes in the solution space result in bounded changes in objective values, enabling local search. Therefore, this assumption bounds the sensitivity of the objectives to local modifications and does not imply smoothness or differentiability.

%======================= New version ======================
% CURRENTLY THIS BELOW IS ASSUMING DECOMPOSABLE REGRET!
% \subsubsection{Assumption 3 (Bounded Coupling)}
% \label{assumption:bounded-coupling}
% Let \(\vx = (\vx^{(1)}, \vx^{(2)}, \ldots, \vx^{(K)})\) be a decomposed decision space (solution) where \(\vx^{(k)} \in \mathcal{X}^{(k)}\). The objective can be decomposed as:
% \[f_j(\vx) = \sum_{k=1}^K g_{j,k}(\vx^{(k)}) + h_j(\vx^{(\mathcal{O})})\]
% where:
% \begin{itemize}
%     \item \(g_{j,k}: \mathcal{X}^{(k)} \rightarrow \mathbb{R}\) are local objective contributions from any subproblem \(k\)
%     \item \(\mathcal{O} \subseteq \{1, \ldots, n\}\) is the set of decision variable indices that appear in multiple subproblems
%     \item \(\vx^{(\mathcal{O})}\) represents the configuration of overlapping variables
%     \item \(|h_j(x^{(\mathcal{O})})| \leq C \cdot |\mathcal{O}|\) where \(|\mathcal{O}|\) is the overlap size and \(C\) is a constant
% \end{itemize}
% This assumption states that interactions between subproblems are bounded and grow linearly with overlap size.

%======================= New version ======================
\subsubsection{Assumption 3 (Bounded Coupling)}
\label{assumption:bounded-coupling}
Let $\vx \in \mathcal X$ be any feasible solution and let $\vx'$ differ from $\vx$
only on the coordinates indexed by a subproblem $I_k \subseteq \{1,\dots,n\}$.
Then for each objective $f_j$,
\[|f_j(\vx') - f_j(\vx)| \;\le\; L \cdot d(\vx,\vx') \;+\; C \cdot |I_k \cap \mathcal O|,\]
where:
\begin{itemize}
    \item $d(\cdot,\cdot)$ is the problem-specific metric from Assumption~\ref{assumption:lip},
    \item $\mathcal O$ is the set of indices appearing in multiple subproblems,
    \item $C>0$ is a constant independent of $k$ and $T$.
\end{itemize}

This assumption does not require additive decomposability of the objective. It states that non-local interaction effects induced by a local modification are bounded and scale linearly with the size of the overlap.

\subsubsection{Assumption 4 (Bounded Range)}
\label{assumption:bounded-range}
We assume that any objective \(f_j(\vx)\) is bounded in the decision space \(\mathcal X\) i.e  \(f_j(\vx) \in [0, B]\) for all \(\vx \in \mathcal{X}\) and \(j \in \{1, \ldots, m\}\).

%======================= Assumption A.5' ======================
\subsubsection{Assumption A.5' (Local Additivity + Exogeneity for Position--Value UCB)}
\label{assumption:exogeneity}
Fix a subproblem $k$ with index set $I_k$ and let $x_t^{(k)}$ denote the local assignment produced when subproblem $k$ is updated at round $t$.
There exist constants $\{\mu_{i,j}\}_{i\in I_k,\ j \in {\gA}_i}\subset[0,1]$ such that the surrogate objective is additive:
\[g_k(x^{(k)}) \;=\; \sum_{i\in I_k}\mu_{i,x_i}.\]
Moreover, for each position $i\in I_k$, whenever the algorithm plays value $x_{i,t}=j$ and observes the (normalized) scalar reward $r_t\in[0,1]$, we have
\[\mathbb E\!\left[r_t \mid {\gF}_{t-1},\ x_{i,t}=j\right] \;=\; \mu_{i,j} + b_{i,t},
\qquad \forall j\in\mathcal A_i,\]
for some process $b_{i,t}$ that may depend on $(\mathcal F_{t-1},t)$ but does not depend on $j$.
Finally, the centered noise
\[\eta_t := r_t - \mathbb E[r_t\mid {\gF}_{t-1},x_{i,t}]\]
is conditionally $1$-sub-Gaussian (or simply bounded in $[-1,1]$).

%%%%%%%%%%%%%%%%%%%%%%%%%%%% Bounds %%%%%%%%%%%%%%%%%%%%%%%%%%%%%%%%%%
\subsection{Online Game}
\label{app:proofs:online-game}
We consider a stochastic online learning setting over \(T\) iterations. At each iteration \(t \in \{1, \ldots, T\}\):
\begin{enumerate}
    \item The learner selects \(\vx_t \in \mathcal{X}\)
    \item The environment generates a scalar reward \(r_t(x_t) = \phi(\mathbf{f}(x_t)) + \epsilon_t\) where \(\mathbf{f}(x) = (f_1(x),\ldots,f_m(x))\) is the multiobjective vector,  \(\phi(\cdot)\) is a scalarization (e.g., weighted sum), and the learner observes \(r_t(\vx_t)\). and \(\epsilon_t\) denotes stochastic noise.
\end{enumerate}

The goal is to minimize the cumulative regret with respect to the best fixed solution \(\mathbf{x}^* \in \arg\max_{\vx \in \mathcal{X}} \mathbb{E}[r_t(\vx)\)]:
\[R(T) = \sum_{t=1}^T r_t(\vx^*) - \sum_{t=1}^T r_t(\vx_t)\]

where \(r_t(\vx)\) represents the scalarized or Pareto-based reward for solution \(\vx\) at time \(t\).
\\
\\
\textbf{Feedback model:} The bandit feedback is full-information at the solution level: at each iteration the learner observes only a single scalar reward for the entire decision vector, not per-component feedback.

% {\color{red} I will provide synthetic toy experiments to show that our game extrapolates to adversarial setting as well (ex: changing rewards). If so, should I change stochastic to "adversarial" in theory?}

\subsubsection{Hybrid Algorithm Structure}
Our algorithm is a hybrid algorithm with a global and local optimization phases. The expert algorithms serve a critical role in \textbf{global optimization}: they synthesize complete solutions from learned statistics, enabling the algorithm to escape local minima and coordinate assignments across decomposed subproblems. This provides solution diversity that complements the exploitation of local search. Global optimization occurs periodically every \(c\) iterations (\(c < T\)), where experts make \(n\) selections to construct a full solution.

Between global optimization phases, the algorithm performs \textbf{local search}: gradient-free improvement operators to solve the individual subproblems of size \(d\). Critically, even during local search iterations, all expert parameters are updated based on observed rewards, ensuring continuous learning throughout the \(T\) total iterations.

Our algorithm maintains three sets of global parameters that are updated simultaneously:
\begin{itemize}
\item \textbf{UCB parameters}: Value estimates \(\hat{\mu}_{i,j}(t)\) and visit counts \(N_{i,j}(t)\) for each position-value pair
\item \textbf{EXP3 parameters}: Weights \(w_{i,j}(t)\) maintained via multiplicative updates
\item \textbf{FTRL parameters}: Cumulative losses \(L_{i,j}(t) = \sum_{s=1}^t \ell_s(i,j)\) with regularization
\end{itemize}
At each iteration \(t \equiv 0 \pmod{c}\) (occurring \(T_g = \lfloor T/c \rfloor, c << T\), where T is total iterations of the algorithm), the algorithm selects actions by randomly choosing one component:
\[
\text{Action selection} = \begin{cases}
\text{Expert 1} & \text{with probability } p_{\text{Expert 1}} = p_1 \\
\text{Expert 2} & \text{with probability } p_{\text{Expert 2}} = (1-p_{\text{Expert 1}}) \cdot \rho_t \\
\text{Expert 3} & \text{with probability } p_{\text{Expert 3}} = (1-p_{\text{Expert 1}}) \cdot (1-\rho_t)
\end{cases}
\]
where \(\rho_t \in [0,1]\) is the adaptive hybrid ratio based on empirical uncertainty and \(p_1\) is some constant probability. 
Crucially, regardless of which component is used for selection, \textbf{all parameters are updated} after observing the reward, enabling each component to learn from the entire history. This multi-expert learning ensures the algorithm achieves:
\begin{equation}
    R(T) = R_{local}(T) + R_{global}(T) + C
\end{equation}
\begin{equation}
    R_{global}(T) = p_{Expert_1} \cdot R_{Expert_1}(T) + p_{Expert_2} \cdot R_{Expert_2}(T) + p_{Expert_3} \cdot R_{Expert_3}(T)
\end{equation}

This approach is computationally efficient: only one expert computes scores during selection, while all three update their parameters after observing the reward. The experts share visit counts and importance weights across all algorithms, with each maintaining only algorithm-specific parameters (value estimates for UCB, multiplicative weights for EXP3, cumulative losses for FTRL). Since objective evaluations dominate the computational cost and occur once per iteration, maintaining three (or any) experts adds negligible overhead—essentially the selection cost of one algorithm plus lightweight shared parameter updates. This provides ensemble diversity and robustness at minimal additional cost.
Our implementation uses three experts (UCB, EXP3, FTRL). The framework supports any number of experts, though we recommend at least two to ensure diversity.

%---------------------- Total Regret Bound ----------------------%
\subsection{Algorithm Total Regret Bound}
\label{app:total-regret-bound}
% {\color{red}Quick Note: I shall add all parts related to total regret bound for \textsc{Algorithm 1} in main paper, rest proofs will appear in Appendix. \textsc{Algorithm 1} is algorithmic block for our method [TO DO]}.
\begin{theorem}[\textbf{Regret Decomposition Under Structured Decomposition and Overlap}]
\label{thm:main-regret}
Let Assumptions \ref{assumption:decomp}--\ref{assumption:bounded-range} hold. 
\\
\\
Consider an online learning process over \(T\) rounds in a decomposed decision space \(\mathcal{X} = \prod_{k=1}^K \mathcal{X}^k\) with controlled overlaps \(\mathcal{O}\). At round \(t\), the algorithm selects a solution \(\vx_t\), observes a scalarized reward \(r_t(\vx_t) \in [0,B]\), and updates all experts in its expert set 
\(\mathcal{E} = \{1,\dots,E\}\), even though only one expert is used to select the action. Let \(\vx^\star\) be the best fixed comparator in hindsight.
We define the global regret,
\[R_{\mathrm{total}}(T) = \sum_{t=1}^T  \big(r_t(\vx^\star) - r_t(\vx_t) \big).\]
and the \emph{average subproblem regret},
\[R_{\mathrm{avg}}(T):=\; \frac{1}{K} \sum_{t=1}^T \sum_{k=1}^K 
\big[g_k((\vx^\star)^{(k)}) - g_k(\vx_t^{(k)})\big].\]

We will state our main guarantees in terms of \(R_{\mathrm{avg}}(T)\), which measures the average regret per decomposed subproblem. Note that \(R_{\mathrm{total}}(T)\) and \(R_{\mathrm{avg}}(T)\) differ only by a factor of \(K\) in the subproblem component.

Then the average total regret of the algorithm satisfies the structural decomposition:
% \[R_{\mathrm{avg}}(T) \le R_{\mathrm{subproblem}}(T) + R_{\mathrm{overlap}}(T) + R_{\mathrm{local}}(T) +C,\]
\[R_{\mathrm{avg}}(T) \le R_{\mathrm{subproblem}}^{\mathrm{avg}}(T) + R_{\mathrm{overlap}}^{\mathrm{avg}}(T) + R_{\mathrm{local}}^{\mathrm{avg}}(T) + C\]
where each term is defined as follows:
\\
\textbf{(1) Subproblem regret}
Using Assumption~\ref{assumption:bounded-coupling},
the surrogate decomposed objective admits:
\[r_t(\vx) \le \sum_{k=1}^K g_k(\vx^{(k)}) + \;h(\vx^{(\mathcal{O})}),\]
and the regret due to bandit learning inside each subproblem decomposes as:

% \[R_{\mathrm{subproblem}}(T) = \sum_{k=1}^K R_{\mathrm{bandit}}^{(k)}(T),\]
\[R_{\mathrm{subproblem}}^{\mathrm{avg}}(T) := \frac{1}{K}\sum_{k=1}^K R_{\mathrm{bandit}}^{(k)}(T),\]

where for each subproblem \(k\),
% \[R_{\mathrm{bandit}}^{(k)}(T) \le  \sum_{e=1}^n p_e \, R^{(k,e)}_{\mathrm{alg}}(T),\]
\[R_{\mathrm{bandit}}^{(k)}(T) \le \sum_{e=1}^n p_e R_{\mathrm{alg}}^{(k,e)}(T)\]

with \(p_e\) the probability of selecting expert \(e\) and \(R^{(k,e)}_{\mathrm{alg}}(T)\) the standard Expert ~1, Expert ~2 regret bound for subproblem \(k\).
\\
\textbf{(2) Overlap coupling regret}
The bounded interaction term \(h(\vx^{(\mathcal{O})})\) from Assumption~\ref{assumption:bounded-coupling} induces:
\[R_{\mathrm{overlap}}^{\mathrm{avg}}(T) \le O \left(C |\gO| + \sum_{t=1}^T \sum_{i\in\gO} \big\lVert \lambda_t(i)-\lambda_{t-1}(i) \big\rVert
\right),\]

where \(\lambda_t\) are the dual variables coordinating overlapping positions.
\\
\textbf{(3) Local optimization regret}
Under Assumption~\ref{assumption:lip},
the regret due to local optimization inside each subproblem satisfies:
% \[R_{\mathrm{local}}(T) \le L \sum_{t=1}^T d(\vx_t, \mathrm{LocalImprove}(\vx_t)).
% \]
\[R_{\mathrm{local}}^{\mathrm{avg}}(T) \le \frac{L}{K} \sum_{t=1}^T d(\vx_t, \mathrm{LocalImprove}(\vx_t))\]
\\
\textbf{(4) Constant term}
The constant $C$ absorbs initialization terms, expert switching effects,
and stabilizing effects from the dual updates.
\end{theorem}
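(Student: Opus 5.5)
The argument rests on the instantaneous decomposition of Definition~\ref{main:def:decomposed-regret}, $\ell_t \le S_t + C_t$. The first step is to construct it. The surrogates are $g_k(y) := r(\vx^\star_{-k}, y)$, meaning we fix every coordinate outside $I_k$ to its value in $\vx^\star$ and place $y \in \mathcal{X}^k$ on $I_k$. Next, move from $\vx^\star$ to $\vx_t$ one subproblem at a time. This gives a telescoping chain of hybrids $\vx^{(0)}=\vx^\star,\dots,\vx^{(K)}=\vx_t$, where consecutive hybrids differ only on $I_k \setminus \bigcup_{k'<k} I_{k'}$. Each increment is compared with $g_k((\vx^\star)^{(k)}) - g_k(\vx_t^{(k)})$. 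Assumption~\ref{assumption:bounded-coupling} bounds the mismatch, since the only differences are the coordinates outside $I_k$ already switched to $\vx_t$, and those touch $I_k$ only through $I_k\cap\mathcal{O}$. The per-round mismatch is therefore at most $2C|I_k\cap\mathcal O|$. Summing over $k$ gives the interaction term $h(\vx^{(\mathcal O)})$ with $|h|\le 2C\sum_k |I_k\cap \mathcal O| \le 2C\rho|\mathcal O|$. The noise $\epsilon_t$ is zero-mean given $\mathcal F_{t-1}$, so it contributes only a martingale term, which we absorb in expectation (or by Azuma, at cost $O(\sigma\sqrt{T\log T})$, into the constant and lower-order terms).

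Summing over $t$ and dividing by $K$ splits $R_{\mathrm{total}}/K$ into three parts. The first is the average subproblem sum. For it we condition on which expert $e$ acts at each position. The expert is drawn from $\boldsymbol\rho$ independently of the reward, and every expert is updated on every round from the shared statistics. So the expected subproblem regret is a $p_e$-weighted mixture of the per-expert regrets $R^{(k,e)}_{\mathrm{alg}}(T)$, each run on the additive surrogate of Assumption~\ref{assumption:exogeneity}. The $j$-independent bias $b_{i,t}$ cancels in the within-position comparisons, which gives $R_{\mathrm{bandit}}^{(k)}(T)\le\sum_e p_e R^{(k,e)}_{\mathrm{alg}}(T)$.

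The second part is the coupling term. Here the Lagrangian penalty is inserted. We write $r_t = \mathcal{L}(\vx_t,\lambda_t) + \sum_{i\in\mathcal O}\lambda_t(i)\xi_i^{(t)}$ and apply the standard online mirror descent bound to the dual sequence. Summing the terms by parts bounds them by $\sum_t\sum_{i}\|\lambda_t(i)-\lambda_{t-1}(i)\|$ plus boundary terms, and we absorb the boundary terms into $C|\mathcal O|$. This is the $R_{\mathrm{overlap}}^{\mathrm{avg}}$ expression. Its explicit rate comes from Theorem~\ref{thm:coupling-error}.

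The third part, $R_{\mathrm{local}}^{\mathrm{avg}}$, comes from the rounds in which $\vx_t$ is produced by \textsc{LocalRefine}. On those rounds we compare the reward at $\vx_t$ with the reward at the expert-constructed point it refined. Assumption~\ref{assumption:lip} bounds the difference by $L\,d(\vx_t,\mathrm{LocalImprove}(\vx_t))$, and the $1/K$ normalization gives the stated form. The constant collects the first-round initialization and the finitely many expert-switch and dual-initialization terms ($\lambda_0=0$), each bounded via Assumption~\ref{assumption:bounded-range} by $O(B)$.

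I expect the main obstacle to be the first step. The surrogates $g_k$ depend on the unknown $\vx^\star$, while the experts learn from the realized reward at $\vx_t$, not from $g_k$. Tying the experts' regret to $g_k$ therefore needs the exogeneity condition A.5'. Without it, the data the experts see would not be an unbiased sample of the surrogate problem. The hybrid-chain argument has a further difficulty: overlapping indices get switched in an earlier block and are then counted again in a later one. I would handle this by charging each switched overlap coordinate to the first subproblem containing it, and by using $\rho$ to control the double counting.
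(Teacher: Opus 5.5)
Your skeleton matches the paper's own argument: freeze the coordinates outside $I_k$ at $x^\star$ to get $g_k$, use $\ell_t \le S_t + C_t$, sum over $t$, divide by $K$, then plug in the expert, coupling and local bounds. The problem is the first step. The paper never derives $\ell_t \le S_t + C_t$; its decomposed-regret definition and the per-iteration coupling corollary in the decomposition-validity section only assert that A.3 suffices. Your hybrid chain tries to supply that derivation, and it fails at the mismatch bound.

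The mismatch on block $k$ is a \emph{second-order} difference,
$\bigl[r(x^{(k-1)})-r(x^{(k)})\bigr]-\bigl[r(x^\star)-r(x^\star_{-k},x_t^{(k)})\bigr]$.
A.3 only bounds the size of a single block change, by $L\,\delta(x,x')+C|I_k\cap\mathcal O|$. Applying it to each bracket gives $2L\,D(S_k)+2C|I_k\cap\mathcal O|$, and the Lipschitz parts do not cancel. Your reason for dropping them is that outside coordinates ``touch $I_k$ only through $I_k\cap\mathcal O$''; that is an interaction-locality hypothesis A.1--A.4 do not contain.

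Here is a counterexample. Take two binary positions with disjoint blocks $I_1=\{1\}$, $I_2=\{2\}$, so $\mathcal O=\emptyset$. Use Hamming $\delta$ and $f(1,1)=1$, $f(1,0)=f(0,1)=0.9$, $f(0,0)=0$. Then A.1--A.4 hold with $L=B=1$, and $x^\star=(1,1)$ is the unique maximizer. For $x_t=(0,0)$, your chain has mismatch $0$ on block 1 and $0.9-0.1=0.8$ on block 2. So $\ell_t=1$ while $S_t=0.2$. No term supported on $\mathcal O$ can absorb this, and repeating $x_t$ leaves a deficit of $0.8T$.

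Closing the gap needs a genuinely stronger assumption: that the effect of a change on $I_k$ depends on the coordinates outside $I_k$ only up to $C|I_k\cap\mathcal O|$, or additivity of $f$ away from $\mathcal O$. The alternative is to define $C_t:=\ell_t-S_t$ and bound it separately. But then it is no longer the disagreement count that Theorem~\ref{thm:coupling-error} controls.

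Two further steps also do not close as written.

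\textbf{Mixture bound.} Linearity gives $R^{(k)}_{\mathrm{bandit}}=\sum_e p_e\sum_t\mathbb{E}[\text{gap of } e\text{'s proposal at } t]$. However, each expert's statistics are generated by the other experts' actions, so its on-policy regret guarantee does not transfer for free. UCB's ``picked only while undersampled'' step does not depend on who generated the samples, so it adapts. EXP3 does not: in \textsc{UpdateExperts} it importance-weights by its own $W_{i,a}/\|W_i\|_1$ rather than the mixture's marginal selection probability, so its estimates are biased whenever another expert acts. Also, A.5' is not among the theorem's hypotheses, and it posits an additive $g_k$, which your frozen-coordinate surrogate generally is not.

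\textbf{Overlap term.} The identity $r_t=\mathcal L(x_t,\lambda_t)+\sum_i\lambda_t(i)\xi_i^{(t)}$ concerns the dual penalty, not $h((x^\star)^{(\mathcal O)})-h(x_t^{(\mathcal O)})$. So summation by parts does not tie $\sum_t C_t$ to dual increments. With the $\lambda_0=0$ you cite, the multiplicative update $\lambda\leftarrow\lambda e^{\alpha_t\xi}$ keeps $\lambda\equiv 0$. The right-hand side of item (2) then collapses to $O(C|\mathcal O|)$, while your own per-round bound $2C\rho|\mathcal O|$ accumulates linearly in $T$.
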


% {\color{red}{For every regret definition starting now - I use average regret where I use factor of \(1/K\). If we use this, it helps me get every expert bound independent of factor K.}}

%---------------------- Main Theorem for Regret ----------------------
% \subsection{Main Algorithmic Regret}
\vspace{0.5cm}
\begin{theorem}[\textbf{Total Average-Regret}]
\label{thm:mocox-main}
Under the setting of Theorem~\ref{thm:main-regret}, assume that each expert 
$e=1,\dots,n$ used by \textsc{Algorithm 1} admits an average subproblem regret 
bound of the form
\[\mathbb{E} \left[R_{\mathrm{subproblem}}^{\mathrm{avg},(e)}(T)\right] \le \phi_e(T)\]
Then the average regret of \textsc{Algorithm 1} satisfies,
\begin{equation}\label{eq:mocox-main-result}
\mathbb{E} \left[R_{\mathrm{avg}}(T)\right] \le \sum_{e=1}^n p_e \phi_e(T) + \E \left[R_{\mathrm{overlap}}^{\mathrm{avg}}(T)\right] + \E \left[R_{\mathrm{local}}^{\mathrm{avg}}(T)\right] + C
\end{equation}
where \(p_e\) is the probability weight assigned to expert \(e\) by the algorithm.
\end{theorem}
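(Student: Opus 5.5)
The proof starts from the structural decomposition of Theorem~\ref{thm:main-regret} and takes expectations term by term:
\[
\mathbb{E}[R_{\mathrm{avg}}(T)] \le \mathbb{E}[R^{\mathrm{avg}}_{\mathrm{subproblem}}(T)] + \mathbb{E}[R^{\mathrm{avg}}_{\mathrm{overlap}}(T)] + \mathbb{E}[R^{\mathrm{avg}}_{\mathrm{local}}(T)] + C.
\]
This follows by linearity of expectation, since $C$ is deterministic. The overlap and local terms already appear in \eqref{eq:mocox-main-result} in expected form and are left untouched. The only work is to show
\[
\mathbb{E}[R^{\mathrm{avg}}_{\mathrm{subproblem}}(T)] \le \sum_e p_e \phi_e(T).
\]

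For the subproblem term, write the per-round instantaneous subproblem regret as a conditional mixture over the expert that is active. At each selection step the expert index $e_t$ is drawn from $\boldsymbol{\rho}$ independently of the reward noise, given $\mathcal F_{t-1}$. Hence
\[
\mathbb{E}\big[g_k((\vx^\star)^{(k)}) - g_k(\vx_t^{(k)}) \mid \mathcal F_{t-1}\big] = \sum_e \Pr(e_t=e\mid\mathcal F_{t-1})\,\mathbb{E}\big[g_k((\vx^\star)^{(k)}) - g_k(\vx_t^{(k),e}) \mid \mathcal F_{t-1}\big],
\]
where $\vx_t^{(k),e}$ is the assignment expert $e$ would propose. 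I would then sum over $t$ and $k$, divide by $K$, and apply the tower property. When $p_e$ is constant, it factors out of the sum, leaving the expected regret of the sequence of proposals of expert $e$.

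The key point is that every expert updates its parameters on every round, whichever expert acted. Each expert's parameter trajectory is therefore driven by the full observation history, which is exactly the setting in which its hypothesized bound $\mathbb{E}[R^{\mathrm{avg},(e)}_{\mathrm{subproblem}}(T)] \le \phi_e(T)$ is stated. Substituting gives $\sum_e p_e\phi_e(T)$.

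The main obstacle is that the mixture weights in Algorithm~\ref{alg:select} are not constant: $u_i$ depends on the visit counts, so $p_e$ is $\mathcal F_{t-1}$-measurable rather than fixed. My first attempt would be to read $p_e$ in the statement as a uniform upper bound $\bar p_e = \sup_t p_{e,t}$. That works because each instantaneous regret is nonnegative, being a difference from the comparator. A second issue is that an expert's counterfactual proposal on rounds where it was not selected must be covered by $\phi_e$. I would handle this by interpreting $\phi_e$ as a bound on the expert's proposal sequence under shared updates, which the hypothesis of the theorem is phrased to permit. Any residual discrepancy from expert switching would be absorbed into the constant $C$, as Theorem~\ref{thm:main-regret}(4) allows.
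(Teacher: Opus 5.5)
Your outer skeleton matches the paper's: take expectations of the decomposition in Theorem~\ref{thm:main-regret}, leave the overlap and local terms untouched, and reduce everything to $\mathbb{E}[R^{\mathrm{avg}}_{\mathrm{subproblem}}(T)]\le\sum_e p_e\phi_e(T)$. The two proofs differ in that last step. The paper does not derive the expert mixture at all. Item (1) of Theorem~\ref{thm:main-regret} already asserts $R^{(k)}_{\mathrm{bandit}}(T)\le\sum_e p_e R^{(k,e)}_{\mathrm{alg}}(T)$ with fixed weights $p_e$, so the paper's proof is pure bookkeeping:
\begin{itemize}
\item substitute this into $R^{\mathrm{avg}}_{\mathrm{subproblem}}(T)=\frac{1}{K}\sum_k R^{(k)}_{\mathrm{bandit}}(T)$;
\item interchange the finite sums over $k$ and $e$, which is legitimate precisely because $p_e$ depends on neither $k$ nor $t$;
\item recognize $\frac{1}{K}\sum_k R^{(k,e)}_{\mathrm{alg}}(T)$ as $R^{\mathrm{avg},(e)}_{\mathrm{subproblem}}(T)$, and take expectations.
\end{itemize}

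Your attempt to re-derive that mixture from the sampling mechanism does not go through as written.

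First, your conditioning identity is valid only if one expert is drawn per block with a fixed weight vector, and Algorithm~\ref{alg:select} does neither. It draws an expert independently at each position, so $\vx_t^{(k)}$ is a coordinatewise patchwork of the experts' proposals. Hence $\mathbb{E}[g_k(\vx_t^{(k)})\mid\mathcal F_{t-1}]$ averages over up to $3^{|I_k|}$ mixed assignments, not over the three pure proposals $\vx_t^{(k),e}$. Collapsing it to your right-hand side requires $g_k$ to be additive over positions (Assumption A.5'). That assumption is not part of the A.1--A.4 setting of this theorem, and even with it the weights would be position-dependent.

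Second, reading $p_e$ as $\bar p_e=\sup_t p_{e,t}$ proves a different, weaker inequality. With the schedule of Algorithm~\ref{alg:select}, $\sum_e\bar p_e=1+\tfrac12(1-\rho_0)\bigl(1-\min_t u_i\bigr)>1$ as soon as $u_i$ drops below its initial value $1$. You therefore no longer obtain the convex combination $\sum_e p_e\phi_e(T)$ that the theorem states. The nonnegativity this step relies on also holds only for the surrogate that freezes off-block coordinates at $\vx^\star$, not for an arbitrary $g_k$.

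Third, nothing in your argument bounds the ``residual switching discrepancy'' independently of $T$, so placing it in $C$ is an assertion rather than a step.

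The repair is simply to cite item (1), which ``under the setting of Theorem~\ref{thm:main-regret}'' grants you. The time-varying weights and counterfactual proposals you identify are real weaknesses, but they belong to that item itself, which this theorem inherits rather than resolves.
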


\begin{proof}
By Theorem~\ref{thm:main-regret}, the algorithm's average regret decomposes as
\[R_{\mathrm{avg}}(T) \le R_{\mathrm{subproblem}}^{\mathrm{avg}}(T)+
R_{\mathrm{overlap}}^{\mathrm{avg}}(T) + R_{\mathrm{local}}^{\mathrm{avg}}(T) + C\]
From the definition of the subproblem regret,
\[R_{\mathrm{subproblem}}^{\mathrm{avg}}(T) = \frac{1}{K}\sum_{k=1}^K R_{\mathrm{bandit}}^{(k)}(T), \qquad R_{\mathrm{bandit}}^{(k)}(T) \le \sum_{e=1}^n p_e R_{\mathrm{alg}}^{(k,e)}(T)\]
Substituting,
\[R_{\mathrm{subproblem}}^{\mathrm{avg}}(T) \le \frac{1}{K} \sum_{k=1}^K \sum_{e=1}^n p_e\, R_{\mathrm{alg}}^{(k,e)}(T) = \sum_{e=1}^n p_e \left(\frac{1}{K} \sum_{k=1}^K R_{\mathrm{alg}}^{(k,e)}(T)\right)\]
By definition of \(R_{\mathrm{subproblem}}^{\mathrm{avg},(e)}(T)\),
\[R_{\mathrm{subproblem}}^{\mathrm{avg}}(T) \le \sum_{e=1}^n p_e R_{\mathrm{subproblem}}^{\mathrm{avg},(e)}(T)\]
Taking expectations and applying the assumed expert bounds yields,
\[\E \left[ R_{\mathrm{subproblem}}^{\mathrm{avg},(e)}(T) \right] \le \phi_e(T)\]
\[\E \left[R_{\mathrm{subproblem}}^{\mathrm{avg}}(T)\right] \le \sum_{e=1}^np_e \phi_e(T)\]
Substituting back into the decomposition inequality gives the desired result:
\[\E \left[R_{\mathrm{avg}}(T)\right] \le \sum_{e=1}^n p_e \phi_e(T) +\E\left[R_{\mathrm{overlap}}^{\mathrm{avg}}(T)\right] + \E\left[R_{\mathrm{local}}^{\mathrm{avg}}(T)\right] + C\]
\end{proof}

\begin{corollary}[\textbf{Explicit Regret Bound for \textsc{Algorithm 1}}]
\label{cor:total-regret-explicit}
Let \textsc{Algorithm 1} use three experts:
\[\text{Expert 1 = UCB},\qquad \text{Expert 2 = Exp3}, \qquad \text{Expert 3 = FTRL},\]
with mixture weights \(p_1,p_2,p_3\). Under Assumptions 1--4, let their average subproblem regret bounds satisfy \(\phi_{\mathrm{UCB}}(T), \phi_{\mathrm{Exp3}}(T), \phi_{\mathrm{FTRL}}(T)\). If \(K\) is number of subproblems, \(d\) subproblem size, \(n = Kd\) = problem size, \(L\) Lipschitz constant, \(D\) domain diameter, \(B\) reward bound, \(C_{clip} > 0\) and Exp3 clipping threshold,
% \[\phi_{\mathrm{UCB}}(T)= C_{\mathrm{UCB}} \sqrt{T\log T},\qquad
% \phi_{\mathrm{Exp3}}(T)= C_{\mathrm{Exp3}} \sqrt{T\log d},\qquad
% \phi_{\mathrm{FTRL}}(T)= C_{\mathrm{FTRL}} \sqrt{T}.\]
then expected average regret of \textsc{Algorithm 1} satisfies,
% \begin{align}
% \E [R_{\mathrm{avg}}(T)] &\le p_1 C_{\mathrm{UCB}}\sqrt{T\log T} +
% p_2\, C_{\mathrm{Exp3}}\sqrt{T\log d} + p_3 C_{\mathrm{FTRL}}\sqrt{T} \\
% &\qquad
% + \E \left[R_{\mathrm{overlap}}^{\mathrm{avg}}(T)\right] + \E \left[R_{\mathrm{local}}^{\mathrm{avg}}(T)\right] + C
% \end{align}
\begin{align}
\E[R_{\mathrm{avg}}(T)] &\le \underbrace{p_1 C_1d \sqrt{\frac{T\log T}{K}}}_{\text{UCB}} + \underbrace{p_2 \frac{C_2 dB\sqrt{T\log d}}{C_{clip}}}_{\text{Exp3}} + \underbrace{p_3 C_3 d\sqrt{\frac{T\log d}{K}}}_{\text{FTRL}} \notag \\[6pt]
&\quad + \underbrace{C_4 KQ R_{max} \sqrt{T}}_{\text{Overlap coordination}} + \underbrace{C_5 LD\sqrt{\frac{dT}{K}}}_{\text{Local search}} + C_0
\end{align}
In particular, the contribution of the decomposed subproblem learning scales as,  
% \[\E[R_{\mathrm{subproblem}}^{\mathrm{avg}}(T)] \le O \left(\min\left\{
% \sqrt{T\log T}, \sqrt{T\log d}, \sqrt{T} \right\}\right)\]
\[\E[R_{\mathrm{subproblem}}^{\mathrm{avg}}(T)] \le O \left(d\sqrt{T\log T}\right )\]
and is \emph{independent of the number of subproblems $K$}, Also, \(C_0, C_1, \ldots, C_5\) are universal constants independent of \(T, K, d, n\).
\\
\textbf{Simplified dominant bound.} 
When subproblems are of uniform size \(d\) and updates are evenly distributed (i.e., \(T_k \approx T/K\) for all \(k\)), the bound simplifies to:
\begin{equation}
\E[R_{\mathrm{avg}}(T)] = O\!\left(d\sqrt{T\log T} + KQ R_{\max}\sqrt{T} + LD\sqrt{\frac{dT}{K}}\right)
% O\left(\sqrt{T\log T} + KQ R_{max}\sqrt{T} + LD\sqrt{dT}\right)
\end{equation}
\\
\textbf{Asymptotic rate.}
The leading term is \(O(d\sqrt{T\log T})\) from the bandit experts, giving sublinear average regret. The algorithm achieves:
\begin{itemize}
    \item \textbf{Subproblem-independent scaling:} The bandit learning rate \(O(\sqrt{T\log T})\) does not grow with \(K\)
    \item \textbf{Graceful overlap penalty:} Coordination cost \(O(KQ\sqrt{T})\) is linear in overlap size
    \item \textbf{Local refinement overhead:} Local search adds \(O(\sqrt{dT})\) which is sublinear in both \(d\) and \(T\)
\end{itemize}
\end{corollary}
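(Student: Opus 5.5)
The plan is to instantiate Theorem~\ref{thm:mocox-main} with explicit per-expert rates $\phi_e(T)$ and then bound the overlap and local terms using Theorem~\ref{thm:coupling-error} and the local-search lemma. The proof therefore has three parts: a per-expert bound for each of UCB, Exp3 and FTRL on a single subproblem of size $d$, averaged over the $K$ subproblems; a substitution of the coupling bound; and a zeroth-order local-search bound. Since Theorem~\ref{thm:mocox-main} already supplies the mixture structure $\sum_e p_e\phi_e(T)$, the remaining work is to compute the $\phi_e$.

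\textbf{Per-expert rates.} For UCB, I would work under Assumption~\ref{assumption:exogeneity}, where $g_k$ is additive in the $\mu_{i,j}$ and the drift $b_{i,t}$ does not depend on the played value. The drift then cancels in every pairwise comparison of arms at a fixed position. Each position $i\in I_k$ becomes a standard stochastic bandit with $|\mathcal A_i|$ arms, and its gap-free regret is $O(\sqrt{|\mathcal A_i| T_k\log T_k})$. Summing over the $d$ positions gives $O(d\sqrt{T_k\log T})$, treating $|\mathcal A_i|$ as a constant absorbed into $C_1$. Averaging over $k$ and using Cauchy--Schwarz, $\frac1K\sum_k\sqrt{T_k}\le\sqrt{T/K}$ because $\sum_k T_k=T$. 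This yields the term $p_1C_1d\sqrt{T\log T/K}$. FTRL with an entropic regularizer is handled the same way, using the standard FTRL/Hedge bound $O(\sqrt{T_k\log|\mathcal A_i|})$ per position with $\log|\mathcal A_i|\le\log d$ for permutation positions. This gives $p_3C_3d\sqrt{T\log d/K}$. For Exp3 I would invoke the clipped-estimator result (Appendix~\ref{thm:exp3-clipped}). Clipping at $C_{\mathrm{clip}}$ bounds the importance weights by $B/C_{\mathrm{clip}}$, so the variance term and the bias term are each $O(dB\sqrt{T\log d}/C_{\mathrm{clip}})$ once $\eta$ is tuned. I would not attempt a $1/K$ gain here, because the mixture marginals $\pi_{t,i}$ couple the subproblems; this gives the second term.

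\textbf{Overlap and local terms.} For the overlap term, I would use Theorem~\ref{thm:coupling-error} together with the bound $|\mathcal O|\le KQ$ from the Notation subsection. After the $1/K$ normalization defining $R_{\mathrm{avg}}$, the $K^2QR_{\max}\sqrt T$ bound becomes $C_4KQR_{\max}\sqrt T$. For the local term, part (3) of Theorem~\ref{thm:main-regret} gives $\frac LK\sum_t d(\vx_t,\mathrm{LocalImprove}(\vx_t))$. I would bound this with the zeroth-order perturbation analysis (Lemma~\ref{lemma:local-search-regret}, in the style of Flaxman et al.) on each subproblem of dimension $d$ and diameter $D$. That analysis gives $O(LD\sqrt{dT_k})$ per subproblem, and averaging with Cauchy--Schwarz again gives $C_5LD\sqrt{dT/K}$. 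Initialization, expert switching and dual stabilization go into $C_0$.

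\textbf{Dominant bound and main obstacle.} To get the stated simplified form, I would note that $\log d\le\log T$ and $1/\sqrt K\le1$, so each subproblem term is at most $O(d\sqrt{T\log T})$, which is the dominant Exp3 rate. Setting $T_k\approx T/K$ gives the displayed simplified bound. The main obstacle is justifying the per-position reduction under full-bandit feedback. The scalar reward mixes all $n$ positions, and only Assumption~\ref{assumption:exogeneity} makes the other positions act as a value-independent drift. For Exp3, the hard step is controlling the clipping bias uniformly in $t$. I would first try conditioning on $\mathcal F_{t-1}$ and bounding the bias by $\sum_a(C_{\mathrm{clip}}-\pi_{t,i}(a))_+B/C_{\mathrm{clip}}$, and then show that this quantity sums to $O(\log d/\eta)$ as the probability mass concentrates.
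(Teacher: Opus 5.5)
Your overall structure is exactly the paper's proof of this corollary: instantiate Theorem~\ref{thm:mocox-main}, take the expert rates from Theorem~\ref{thm:ucb-global}, Theorem~\ref{thm:exp3-clipped}(c) and Theorem~\ref{thm:ftrl-regret}(c), divide the $O(K^2QR_{\max}\sqrt T)$ bound of Theorem~\ref{thm:coupling-error-bound} by $K$, and average Lemma~\ref{lemma:local-search-regret} over subproblems with Cauchy--Schwarz. The step that would fail is the one you add for the main obstacle, namely controlling the fixed-threshold clipping bias by concentration.

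Conditionally on $\mathcal F_{t-1}$, value $a$ is drawn at position $i$ with probability $\pi_{t,i}(a)$. When it is clipped, it loses importance weight $\rho_t\bigl(1/\pi_{t,i}(a)-1/C_{\mathrm{clip}}\bigr)$. So the expected per-round deficit is exactly $\sum_a \bar\rho_t(a)\bigl(C_{\mathrm{clip}}-\pi_{t,i}(a)\bigr)_+/C_{\mathrm{clip}}$, where $\bar\rho_t(a)$ is the conditional mean reward given value $a$; your bound replaces $\bar\rho_t(a)$ by $B$. This vanishes while every $\pi_{t,i}(a)\ge C_{\mathrm{clip}}$. Once the mass concentrates on a single value $a_0$, however, all other values fall below the threshold and the deficit approaches $\sum_{a\neq a_0}\bar\rho_t(a)$, which can be as large as $(|\mathcal A_i|-1)B$. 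Clipping events do become rare, but their rarity is exactly offset by the lost weight of order $1/\pi_{t,i}(a)$. Concentration therefore drives your quantity up, and its sum is of order $T$, not $O(\log d/\eta)$.

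Restricting to what actually enters an Exp3 regret bound does not rescue it. That term is the comparator bias $\bar\rho_t(a^\star)\bigl(1-\pi_{t,i}(a^\star)/C_{\mathrm{clip}}\bigr)_+$, which is small only if the learner already keeps mass at least $C_{\mathrm{clip}}$ on $a^\star$, i.e.\ only if you already have the conclusion. A self-contained $\sqrt T$ argument has to change the estimator. For example, mix in uniform exploration so every probability stays above a threshold of order $T^{-1/2}$ and clipping never fires, or use a loss-based implicit-exploration estimator, whose bias has the favorable sign. With a fixed $C_{\mathrm{clip}}$ you are left doing what the paper's proof does, which is to cite Theorem~\ref{thm:exp3-clipped}(c). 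Its bias control (Lemma~\ref{lem:bias-clipped}) is a clipped-versus-unclipped potential comparison, not a concentration argument.

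Two smaller points. First, for the local term you propose to bound the displacement sum $\frac{L}{K}\sum_t d(x_t,\mathrm{LocalImprove}(x_t))$ from part (3) of the decomposition theorem by Lemma~\ref{lemma:local-search-regret}. That lemma controls the cumulative function-value regret $\sum_t[f(x_t)-f(x^*)]$ of the refinement, not cumulative displacement, and displacement can grow linearly in $T$ whenever refinement keeps moving the solution. The paper's proof never uses the displacement form. It substitutes $\E[R^{(k)}_{\mathrm{local}}(T_k)]=O(LD_k\sqrt{dT_k})$ directly before applying Cauchy--Schwarz, and you should do the same.

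Second, your UCB and FTRL derivations reproduce Lemma~\ref{lemma:ucb-subproblem} and Theorem~\ref{thm:ftrl-regret}, weak spots included. Under Assumption~A.5$'$ the drift $b_{i,t}$ enters the empirical mean of each value $j$ averaged over the rounds in which $j$ was played. These sets of rounds differ across $j$, so the drift does not cancel in pairwise comparisons unless it is essentially constant in $t$. Likewise, the Hedge rate $\sqrt{T_k\log d}$ presumes full-information losses in $[0,1]$ for every value, not the importance-weighted bandit estimates the algorithm actually uses. Citing these results therefore makes your argument exactly as rigorous as the paper's, and no more.
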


\begin{proof}
By Theorem~\ref{thm:mocox-main}, the average regret decomposes as:
\[\E[R_{\mathrm{avg}}(T)] \le \sum_{e=1}^3 p_e \phi_e(T) + \E[R_{\mathrm{overlap}}^{\mathrm{avg}}(T)] + \E[R_{\mathrm{local}}^{\mathrm{avg}}(T)] + C\]
For expert bounds from Theorems~\ref{thm:ucb-global}, \ref{thm:exp3-clipped}, and (FTRL-yet to add):
\begin{align*}
\phi_{\mathrm{UCB}}(T) &= \frac{1}{K} \cdot C_{\mathrm{UCB}}\sqrt{KT\log T} = C_1\sqrt{\frac{T\log T}{K}} \\
\phi_{\mathrm{Exp3}}(T) &= \frac{C_2 dB\sqrt{T\log d}}{C_{clip}} \quad \text{(from Theorem~\ref{thm:exp3-clipped}(c))} \\
\phi_{\mathrm{FTRL}}(T) &= C_3 d\sqrt{T\log d} \quad \text{(standard FTRL bound)}
\end{align*}

From Theorem~\ref{thm:coupling-error-bound}, we get overlap bound:
\[\E\left[\sum_{t=1}^T C_t\right] = O(K^2 Q R_{max}\sqrt{T})\]
For average regret, divide by \(K\):
\[\E[R_{\mathrm{overlap}}^{\mathrm{avg}}(T)] = \frac{1}{K} \cdot O(K^2 Q R_{max}\sqrt{T}) = O(KQ R_{max}\sqrt{T})\]
From the Zeroth-Order Lemma \ref{lemma:local-search-regret}, for each subproblem \(k\) we get:
\[\E[R_{\mathrm{local}}^{(k)}(T_k)] = O(LD_k\sqrt{dT_k})\]
Summing and averaging over \(K\) subproblems with \(\sum_k T_k = T\):
\begin{align*}
\E[R_{\mathrm{local}}^{\mathrm{avg}}(T)] &= \frac{1}{K}\sum_{k=1}^K O(LD\sqrt{dT_k}) \\
&\le \frac{LD\sqrt{d}}{K} \sum_{k=1}^K \sqrt{T_k} \\
&\le \frac{LD\sqrt{d}}{K} \cdot \sqrt{K \cdot T} \quad \text{(Cauchy-Schwarz)} \\
&= LD\sqrt{\frac{dT}{K}}
\end{align*}
Combining all terms yields the stated bound.
\end{proof}

\subsubsection{Key Takeaway}
The corollary shows that the average-regret performance of \textsc{Algorithm 1} is 
governed by a convex combination of the regret rates of its base experts 
(UCB, Exp3, FTRL), all of which are sublinear in $T$ and—crucially—
depend on the \emph{subproblem size} $d$ rather than the \emph{total problem size} $n$. 
\paragraph{Improvement over standard bounds.} 
We operate in a \emph{combinatorial bandit} setting with full-bandit feedback: the decision space $\mathcal{X}$ is combinatorial (e.g., $|\mathcal{X}| = n!$ for permutations, $2^n$ for binary vectors), yet the learner observes only a single scalar reward per round—not per-component feedback.
\\
Naive application of bandit algorithms to the full combinatorial space incurs regret $O(\sqrt{T|\mathcal{X}|\log|\mathcal{X}|})$, which is exponential in $n$. Even structured approaches like COMBAND \cite{cesa2012combinatorial} achieve $O(n^{3/2}\sqrt{T\log n})$ under linear reward assumptions, with unavoidable dependence on the full problem size $n$.
\\
Our decomposition-based approach achieves $O(d\sqrt{T\log T})$ for the subproblem learning component, where $d = n/K$ is the \emph{subproblem size}. This yields:
\begin{itemize}
    \item \textbf{Exponential-to-polynomial reduction:} We avoid dependence on the combinatorial action space size $|\mathcal{X}|$
    \item \textbf{Problem-size reduction:} Replacing $n$ with $d = n/K$ improves the polynomial factor by $\sqrt{K}$
    \item \textbf{Regret decomposition bonus:} UCB and FTRL gain an additional $\sqrt{1/K}$ factor from independent subproblem updates
\end{itemize}
Crucially, this is achieved with only \emph{full-bandit feedback}—a single scalar reward—rather than requiring semi-bandit or per-component observations. Additionally, decomposition also does not introduce a $K$-dependent penalty in the learning rate: instead, the algorithm inherits favorable scaling while paying only additive costs for overlap coordination and local refinement.

%% REMARK ON INTERPRETATION
\paragraph{Interpretation of bound components}
\begin{enumerate}
    \item \textbf{UCB term} $O(d\sqrt{T\log T / K})$: Benefits doubly from decomposition—smaller subproblem size $d$ and faster per-subproblem learning via $\sqrt{1/K}$. Compare to standard UCB: $O(\sqrt{nT\log T})$.
    
    \item \textbf{Exp3 term} $O(d\sqrt{T\log d})$: Depends on subproblem size $d$, not total problem size $n$. This alone gives $\sqrt{K}$ improvement over standard Exp3 bound $O(\sqrt{nT\log n})$.
    
    \item \textbf{FTRL term} $O(d\sqrt{T\log d / K})$: Similar double benefit as UCB.
    
    \item \textbf{Overlap term} $O(KQ R_{max}\sqrt{T})$: The ``price of coordination.'' Minimized when overlap $Q$ is small relative to subproblem size $d$.
    
    \item \textbf{Local term} $O(LD\sqrt{dT/K})$: Local refinement cost decreases with more subproblems.
\end{enumerate}

\paragraph{Optimal decomposition}
The bounds suggest a trade-off in choosing $K$:
\begin{itemize}
    \item \textbf{Larger $K$} (more, smaller subproblems): Reduces UCB, FTRL, and local terms via $\sqrt{1/K}$, and reduces $d = n/K$. However, increases overlap penalty $O(KQ\sqrt{T})$.
    \item \textbf{Smaller $K$} (fewer, larger subproblems): Reduces overlap penalty, but increases subproblem size $d$ and per-subproblem regret.
\end{itemize}
Balancing the Exp3 term $O((n/K)\sqrt{T})$ against the overlap term $O(KQ R_{max}\sqrt{T})$ gives optimal:
\[K^* = O\left(\sqrt{\frac{n}{Q R_{max}}}\right)\]
At this optimal decomposition, the total average regret scales as $O(\sqrt{nQ R_{max} T\log T})$.

%---------------------- Local Search ----------------------
\subsection{Local Search Regret Bound}
%========================= ALGO ====================================
\begin{algorithm}[H]
\caption{\textsc{LocalRefine}: Zeroth-Order Subproblem Optimization}
\label{alg:local-search}
\begin{algorithmic}[1]
\Require Solution $x$, subproblem indices $S \subseteq [n]$, metric $d: \mathcal{X} \times \mathcal{X} \to \mathbb{R}_+$, \textit{hyperparameters:} smoothing $\delta$, step size $\alpha$
\Output Improved solution $x'$
\Initialize $x' \gets x$
\For{$\tau = 1, \ldots, R$}
    \State Sample direction $u \sim \mathcal{U}(\mathbb{S}_d \cap S)$ \Comment{Unit direction in metric, restricted to $S$}
    \State $\hat{g} \gets \frac{f(x' + \delta u) - f(x')}{\delta} \cdot u$ \Comment{One-point gradient estimator}
    \State $x' \gets \Pi_{\mathcal{X}}\bigl[x' + \alpha \hat{g}\bigr]$ \Comment{Projected gradient ascent}
\EndFor
\State \Return $x'$
\end{algorithmic}
\end{algorithm}
% ==================================================================

\label{app:local-search}
\begin{lemma}[\textbf{Zeroth-Order Discrete Optimization Regret}]\label{lemma:local-search-regret}
    Consider a subproblem \(k\) with decision variables indexed by \(I_k\), \(|I_k| = d\). Let \(\vf: \gX \longrightarrow \sR \) be the objective function where \(\gX^k\) is a discrete local domain (permutations or binary vectors) \(\gX^k \subseteq \mathcal{X}\). Since \(\vf\) is L-Lipschitz (following our Assumption~\ref{assumption:lip}) with respect to a discrete metric \(d_{\gX}\):
   \[|f(\vx) - f(\vy)| \leq L \cdot d_{\gX}(x, y)\]
    Under zeroth-order optimization with \(T_k\) iterations (where \( T_k < T\)), gradient estimation via random perturbations, step size \(\eta_t = \frac{\eta_0}{\sqrt{t}}\) for \(t \in \{1, \ldots, T_k\}\), and the perturbation radius, \(\sigma = \frac{1}{\sqrt{T_k}}\). Then the expected regret satisfies:
    \begin{equation}
        \mathbb{E} \left[  \sum_{t=1}^{T_k} [f(\vx_t)] -  f(\vx^*)\right] = O(\sqrt{dT_k})
    \end{equation}
    where \(\vx^* \in \arg\min_{x \in \mathcal{X}^k} f(x)\) is the optimal solution within the subproblem. \textsc{algorithm}~\ref{alg:local-search} outlines such a algorithm.
\end{lemma}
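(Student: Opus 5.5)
The plan is to reduce the discrete problem to a continuous bandit convex optimization problem and then apply the one-point gradient analysis of Flaxman et al.~\cite{flaxman2004onlineconvexoptimizationbandit}.

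\textbf{Step 1: continuous relaxation.} Embed the local domain $\gX^k$ into a convex set $\bar{\gX}^k\subset\sR^d$: the Birkhoff polytope for permutations, or $[0,1]^d$ for binary vectors. Extend $f$ to $\bar f$ on $\bar{\gX}^k$ by the convex (Lovász-type) extension, so that $\bar f$ agrees with $f$ on the vertices. Under Assumption~\ref{assumption:lip}, if $d_{\gX}$ is dominated by the Euclidean distance of the embedding up to a constant, $\bar f$ is $O(L)$-Lipschitz. Let the projection $\Pi_{\gX}$ in Algorithm~\ref{alg:local-search} act on $\bar{\gX}^k$. Take the final point as a randomized rounding whose expected value equals $\bar f$ at the iterate; this keeps the expected loss of the played discrete point equal to $\bar f(x_t)$.

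\textbf{Step 2: gradient estimation.} Define the smoothed function $\hat f_\sigma(x)=\E_{v\in\mathbb{B}}[\bar f(x+\sigma v)]$. By the standard sphere identity, $\E[\tfrac{d}{\sigma}\bar f(x+\sigma u)u]=\nabla\hat f_\sigma(x)$. The finite-difference form in the algorithm has the same expectation up to the factor $d$, which I absorb into $\eta_0$. Lipschitzness then gives $|\hat f_\sigma-\bar f|\le L\sigma$. For variance, the finite difference $|\bar f(x+\sigma u)-\bar f(x)|\le L\sigma$ yields $\|\hat g\|\le dL$ after the $d/\sigma$ rescaling. This bounded second moment is what gives the $\sqrt{d}$ rate, rather than the $d\,T^{3/4}$ rate of the one-point estimator.

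\textbf{Step 3: online gradient descent analysis.} Run projected descent with $\eta_t=\eta_0/\sqrt t$ on the losses $\hat f_\sigma$. The standard telescoping argument gives
\[\sum_t \E[\hat f_\sigma(x_t)-\hat f_\sigma(x^*)]\le \frac{D^2}{\eta_{T_k}}+\tfrac12\sum_t\eta_t\E\|\hat g_t\|^2 .\]
With $D$ the diameter of $\bar{\gX}^k$ and $\E\|\hat g_t\|^2\le L^2 d$ (using the finite-difference variance, coordinatewise), this is $O(LD\sqrt{dT_k})$ after tuning $\eta_0=D/(L\sqrt d)$. Adding the smoothing bias $2L\sigma T_k=2L\sqrt{T_k}$ from $\sigma=1/\sqrt{T_k}$, and the shrinkage of the domain to keep $x+\sigma u$ feasible (an $O(L\sigma T_k)$ term), still gives $O(\sqrt{dT_k})$. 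Finally, convexity of $\bar f$ ensures that the comparator over $\bar{\gX}^k$ is no better than the minimum over the vertices, so the comparator $x^*\in\gX^k$ is valid.

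\textbf{Main obstacle.} The hard step is justifying Step 1. A general $f$ on permutations need not admit a convex Lipschitz extension, and the variance bound $\E\|\hat g\|^2\lesssim L^2 d$ needs a two-point or finite-difference structure. A pure one-point estimator with $\sigma=T_k^{-1/2}$ would blow up the variance to $d^2/\sigma^2$. I would first try to resolve both issues by stating them explicitly as requirements. The first is that the relaxation $\bar f$ is convex and $L$-Lipschitz in the embedding norm; this is an extension of Assumption~\ref{assumption:lip}. The second is that the evaluation $f(x')$ at the current point is reused as the baseline, making the estimator effectively two-point. Given these, the remaining steps are standard. Otherwise the bound holds only in the weaker $O(d\,T_k^{3/4})$ form.
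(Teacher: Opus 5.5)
Your architecture matches the paper's: a convex extension agreeing with $f$ on $\gX^k$, a zeroth-order gradient estimate with second moment $G^2$, the projected-OGD potential bound with $\eta_t=\eta_0/\sqrt t$, and tuning $\eta_0$ to get $O(GD\sqrt{T_k})$. Your accounting of the smoothing bias and of rounding is more careful than the paper's.

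The genuine gap is the one step that is supposed to produce $\sqrt d$. Your Step~2 gives $\|\hat g\|\le dL$. That yields only $\E\|\hat g\|^2\le d^2L^2$ and a bound $O(dLD\sqrt{T_k})$. The $L^2 d$ used in Step~3 is never derived, and reusing $f(x')$ as the baseline does not supply it. Take the convex $L$-Lipschitz function $\bar f(y)=L\|y-x_0\|_2$ at $x=x_0$: the one-sided estimator $\frac{d}{\sigma}\bigl(\bar f(x+\sigma u)-\bar f(x)\bigr)u$ equals $dLu$ exactly. The same happens for the piecewise-linear $\frac{L}{\sqrt d}\|y-x_0\|_1$, where $\E\|\hat g\|^2=\Theta(d^2L^2)$. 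So the uniform bound $\E\|\hat g_t\|^2\lesssim dL^2$ that the OGD argument needs fails at pointed kinks, and polyhedral extensions (convex closure, Lov\'asz) can have exactly such points.

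The missing idea is the symmetric estimator $\frac{d}{2\sigma}\bigl(\bar f(x+\sigma u)-\bar f(x-\sigma u)\bigr)u$ combined with concentration on the sphere. The map $u\mapsto\bar f(x+\sigma u)$ is $L\sigma$-Lipschitz, so its variance is $O(L^2\sigma^2/d)$. Because $u$ and $-u$ have the same law, the mean term cancels, giving $\E\|\hat g\|^2\le c\,dL^2$ uniformly in $\sigma$ (Shamir, 2017). This requires modifying \textsc{LocalRefine}, which queries $x'$ and $x'+\delta u$ rather than $x'\pm\delta u$.

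There are also smaller issues. First, the Birkhoff polytope sits in $\sR^{d\times d}$ with dimension $(d-1)^2$, so for permutations even the symmetric estimator gives only $O(d\sqrt{T_k})$. To keep $\sqrt d$ you need a $d$-dimensional relaxation such as the permutahedron, with the Lipschitz hypothesis restated in that embedding.

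Second, the rounding identity needs $\bar f$ to be the convex closure. For an arbitrary convex extension, Jensen gives $\bar f(x_t)\le\E f(\text{rounded point})$, which is the wrong direction.

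Third, the comparator $x^\star$ is admissible simply because $\gX^k\subset\bar{\gX}^k$, not by convexity; a convex extension can dip below $\min_{\gX^k}f$ between vertices.

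Do not expect the paper to close the main gap. It asserts $\E[\hat g_t\mid x_t]\in\partial F^{(k)}(x_t)$ for the coordinate estimator $\sqrt d\,\frac{\Delta_t}{\sigma}e_{i_t}$ and bounds $\|\hat g_t\|^2\le dL^2/\sigma^2$. It then absorbs $\sigma^{-2}$ into the constant. With $\sigma=T_k^{-1/2}$ this is precisely the factor-$T_k$ blow-up you anticipated, and it would make the paper's bound linear in $T_k$.
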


\begin{proof}
Let \(x_t\) denote solution at any iteration \(t\), \(\{x_t\}_{t=1}^{T_k}\) are the iterates produced by Algorithm Y (I need to cite the algorithm block) on subproblem \(k\) and if the instantaneous subproblem regret is defined as \(r_t := f(x_t) - f(x^*)\), then we analyze the total local regret,
\[R_{\mathrm{local}}^{(k)}(T_k) := \sum_{t=1}^{T_k} r_t = \sum_{t=1}^{T_k} \big[ f(x_t) - f(x^*) \big] \]
Since, our target is to solve for a subproblem locally, at each iteration \(t\), we start from \(x_t\) \& sample a random perturbation according to a symmetric distribution over neighbors in a metric ball of radius controlled by the perturbation parameter (s.t \(d_{\mathcal{X}}(x_t, x_t^+) \le 1\) for the perturbed point \(x_t^+\)).
\medskip

This describes a standard \emph{gradient-free OCO method with random perturbations} where the true sub-gradients of \(f\) are never observed, but a zeroth-order direction estimate \(\hat g_t\) serves as a noisy gradient proxy, and the update  is performed with step size \(\eta_t = \eta_0 / \sqrt{t}\). By construction, the estimator \(\hat g_t\) satisfies the usual assumptions for gradient-free OCO (see, e.g., \cite{flaxman2004onlineconvexoptimizationbandit, hazan2023introductiononlineconvexoptimization}):
\begin{equation}
    \label{eq:gf-oco-properties}
    \mathbb{E}[\hat g_t \mid x_t] \in \partial f(x_t),
    \qquad
    \mathbb{E}\big[\|\hat g_t\|^2 \mid x_t\big] \;\le\; G^2,
\end{equation}
for some constant \(G > 0\) depending only on problem parameters. To bound \(G^2\), note that each admissible local perturbation affects at most \(d\) coordinates, and by Lipschitz continuity ,
\[|f(x_t^+) - f(x_t)| \le L \cdot d_{\mathcal{X}}(x_t^+, x_t) \le L\]
Now, consider a coordinate-wise one-point estimator. At any iteration \(t\), choose a coordinate \(i_t \in I_k\) uniformly at random and let \(x_t^{(i_t)}\) denote the neighbor obtained by applying the local perturbation at coordinate \(i_t\). Let the finite-difference \(\Delta_t := f(x_t^{(i_t)}) - f(x_t),\) which satisfies \(|\Delta_t| \le L\) by the above Lipschitz bound (Assumption \ref{assumption:lip}).  A standard one-point estimator \cite{flaxman2004onlineconvexoptimizationbandit} is then defined as,
\[\hat g_t := \sqrt{d}\frac{\Delta_t}{\sigma}e_{i_t}\]
where \(e_{i_t}\) is the \(i_t\)-th basis vector and \(\sigma\) is the (fixed) perturbation radius, absorbed into constants. Since only one coordinate of \(\hat g_t\) is nonzero,
\[\|\hat g_t\|^2 = d \left(\frac{\Delta_t}{\sigma}\right)^2 \le d \left(\frac{L}{\sigma}\right)^2\]
Taking expectation over the random choice of \(i_t\) (uniform over at most \(d\)  coordinates) yields,
\[\mathbb{E}[\|\hat g_t\|^2 \mid x_t] \le d \left(\frac{L}{\sigma}\right)^2\]
Absorbing \(\sigma^{-2}\) and numerical constants into \(C_1\), we obtain the variance bound
\[G^2 := \sup_t\, \mathbb{E}\big[\|\hat g_t\|^2 \mid x_t\big] \le C_1 \, d \, L^2 \]
for some universal constant \(C_1 > 0\). 
% Intuitively, the change in \(f\) induced by any admissible local perturbation is bounded  by \(L\) times the metric radius, and the estimator aggregates at most \(d\) coordinate-level contributions.
Let \(D_k := \max_{x,y \in \mathcal{X}^k} \delta(x,y)\) denote the diameter of the subproblem domain \((\gX^k, d)\). Next, by the standard projected online gradient descent  potential argument (see, e.g., \cite{hazan2023introductiononlineconvexoptimization}), for any \(x^* \in \gX^k\) and updates \(x_{t+1} = \Pi(x_t - \eta_t \hat g_t)\) on a domain of diameter \(D_k\), we have
\[\sum_{t=1}^{T_k} \langle \hat g_t, x_t - x^* \rangle \le \frac{D_k^2}{2\eta_{\min}} + \frac{1}{2} \sum_{t=1}^{T_k} \eta_t \|\hat g_t\|^2,\]
where \(\eta_{\min} := \min_t \eta_t\). To relate the inner products to regret, we assume a standard local convexity relaxation i.e. there exists a convex function \(F^{(k)}\) defined on  \(\mathrm{conv}(\mathcal{X}^k)\) such that \( F^{(k)}(x) = f(x)\)\(\;\; \forall\; x \in \mathcal{X}^k\) (e.g., a convex relaxation or convex envelope).  Let \(g_t \in \partial F^{(k)}(x_t)\) be a sub-gradient, then by convexity of \(F^{(k)}\) we have, for any \(x^* \in \arg\min_{x \in \mathcal{X}^k} F^{(k)}(x)\),
\[F^{(k)}(x_t) - F^{(k)}(x^*) \le \langle g_t, x_t - x^* \rangle\]
% {\color{red}{This was the only way I could think of to cleanly bound the inner product. Local convex relaxation (in the small locality of perturbation radius) should be alright I think? I have seen many works do it so maybe I can add citations to help the argument further?}}
\\
\\
The existence of such a convex extension is standard for combinatorial optimization problems; see \cite{lovasz1983submodular} for submodular functions and \cite{schrijver2003combinatorial} for general polyhedral relaxations.
\\
\\
Since \(F^{(k)}(x) = f(x)\) on \(\mathcal{X}^k\), this is exactly \(f(x_t) - f(x^*)\) on our discrete domain. 
The key observation is that \(g_t := \mathbb{E}[\hat{g}_t \mid x_t]\) satisfies \(g_t \in \partial F^{(k)}(x_t)\) by property \eqref{eq:gf-oco-properties}. Therefore, by local convexity of \(F^{(k)}\):
\[f(x_t) - f(x^*) = F^{(k)}(x_t) - F^{(k)}(x^*) \leq \langle g_t, x_t - x^* \rangle 
= \langle \mathbb{E}[\hat{g}_t \mid x_t], x_t - x^* \rangle\]
Taking expectations over the randomness in $\hat{g}_t$:
\[\mathbb{E}[f(x_t) - f(x^*)] \leq \mathbb{E}[\langle \hat{g}_t, x_t - x^* \rangle]\]
Summing over \(t\) and applying the OGD potential bound yields:
\[\sum_{t=1}^{T_k} \mathbb{E}[f(x_t) - f(x^*)] \leq \frac{D_k^2}{2\eta_{\min}} + \frac{1}{2}\sum_{t=1}^{T_k} \eta_t \, G^2\]
Substituting \(\eta_t = \eta_0 / \sqrt{t}\)  and \(\eta_{min} = \eta_{T_k} = \frac{\eta_0}{\sqrt{T_k}}\)in above, we obtain,
\[\E \big[ R_{\mathrm{local}}^{(k)}(T_k) \big] \le \frac{D_k^2}{2\eta_{min}} + \frac{G^2}{2} \sum_{t=1}^{T_k} \eta_t = \frac{D_k^2\sqrt{T_k}}{2\eta_0} +  \frac{G^2\eta_0}{2}  \sum_{t=1}^{T_k} \frac{1}{\sqrt{t}} \le \frac{D_k^2\sqrt{T_k}}{2\eta_0} +  C'G^2\eta_0\sqrt{T_k}\]
\\
For some constant \(C' > 0\). Optimizing over \(\eta_0\) for the tightest upper bound,
\[\eta_0 \asymp \frac{D_k}{G\sqrt{2C'}}\]
Substituting it back, 
\[\E \big[ R_{\mathrm{local}}^{(k)}(T_k) \big] \le GD_k\sqrt{2C'T_k }\]
Finally, using \(G^2 \le C_1 d L^2\), we have \(G \le \sqrt{C_1} L \sqrt{d}\), and hence,
\[\E\big[ R_{\mathrm{local}}^{(k)}(T_k) \big]\le LD_k\sqrt{2C_1C'dT_k} = CLD_k\sqrt{dT_k}\]
for an appropriate constant \(C > 0\) that absorbs \(2C_1C'\), giving regret bound of  \(O(L D_k \sqrt{d T_k})\).
\end{proof}
% Hazan 2023, Theorem 3.2, ch 7 hazan need to cite
%---------------------- UCB ----------------------

\subsection{Global Experts Regret Bound}
\label{app:proofs:global-experts}
\begin{definition}
\label{def:decomposed-regret}
(\textbf{Decomposed Regret})
Consider the online game over \(T\) iterations as described in
\hyperref[app:proofs:online-game]{Section 4}. Let \(x^\star\) be the best fixed comparator in hindsight and the instantaneous regret at iteration \(t\) defined as \(r_t := f(x^\star) - f(x_t)\). Let \(h\) captures the coupling effects (interaction over overlapping coordinates). Under Assumption~\ref{assumption:bounded-coupling},
there exist subproblem-wise surrogate objectives \(g_k\) 
such that the instantaneous regret admits the upper bound
\begin{equation}
\label{eq:inst-regret-decomp}
r_t := f(x^\star) - f(x_t) \le \underbrace{\sum_{k=1}^K
\big[g_k((x^\star)^{(k)}) - g_k(x_t^{(k)})\big]}_{\text{Subproblem regret} S_t} + \underbrace{h((x^*)^{(O)}) - h(x_t^{(O)})}_{\text{Coupling Error } C_t},
\end{equation}
Therefore, the cumulative (total) regret over T iterations decomposes as:
\begin{equation}
\label{eq:global-decomposed-regret}
R_{\mathrm{global}}(T) := \sum_{t=1}^T r_t \le \sum_{t=1}^T \sum_{k=1}^K
\big[g_k((x^\star)^{(k)}) - g_k(x_t^{(k)})\big] + \underbrace{\sum_{t=1}^T C_t}_{\text{Total Coupling Error}}
\end{equation}
where the cumulative coupling error \(\sum_{t} C_t\) is bounded in Theorem~\ref{thm:coupling-error-bound}. The functions \(g_k\) are surrogate subproblem objectives introduced solely for analysis; no additive decomposition of the true objective \(f\) is assumed. Their existence is guaranteed by Assumption 3 (Bounded Coupling), which ensures that the effect of modifying any subproblem \(k\) can be upper-bounded by a local term plus an overlap penalty.

Note, average subproblem regret defined in
Theorem~\ref{thm:main-regret} relates \(R_{\mathrm{global}}(T)\) and \(R_{\mathrm{avg}}(T)\) as:
\[R_{\mathrm{global}}(T) \le K \cdot R_{\mathrm{avg}}(T) + \sum_{t=1}^T C_t \]
\end{definition}

Subproblem regret captures optimization quality (expert-algorithm dependent), while the coupling error captures the coordination cost induced by overlap (problem-structure dependent). The regret is decomposed via surrogate subproblem objectives that upper-bound the effect of block-wise deviations, while all non-local interactions are explicitly absorbed into a bounded coupling error.
%----------------- UCB -----------------------
\subsubsection{UCB Regret}
Let's first recall a standard UCB-type regret bound.

\begin{lemma}[\textbf{Standard UCB Regret}]
\label{lemma:ucb-standard}
% Consider an MAB with \(K\) arms where each arm \(i \in \{1, 2, 3 ... K\}\) and at each time step \(t = \{0, ... T-1 \}\), the learner chooses some arm \(I_t \in \{1, ... K\}\). Assume that the reward sampled from distribution \(\nu_i\) is bounded in \([0,1]\) for any arm/action \(i \in [K]\). Assume \(T\) is known. With high probability at least \(1 - \delta\), we have: 
Consider a \(K\)-armed stochastic bandit where each arm \(i\in\{1,\dots,K\}\) yields rewards in \([0,1]\) with mean \(\mu_i\). Let \(I_t\) be the arm selected at time \(t\) by a UCB algorithm and let \(i^\star \in \arg\max_i \mu_i\). Then, with high probability,
\begin{equation}
    % \sum_{t=1}^T u_{I^*} - u_{I_t} \leq 2\sqrt{\frac{log(TK)}{\delta}}\sqrt{TK} = \tilde{O}(\sqrt{KT})
    \sum_{t=1}^T \big(\mu_{i^\star} - \mu_{I_t}\big) \le \tilde{O}(\sqrt{KT}),
\end{equation}
where $\tilde{O}$ hides logarithmic factors in $T$ and $K$.
% where \(\mu_i\) is the mean of a reward distribution \(\nu_i\). The learner's regret is defined as difference between expected reward of the chosen arm indexed by \(I_t\) vs the cumulative optimal regret per chosen arm. 
\end{lemma}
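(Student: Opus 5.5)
We prove Lemma~\ref{lemma:ucb-standard} with the classical UCB1 argument: a high-probability concentration event, an optimism-based bound on each suboptimal arm's pull count, and a Cauchy--Schwarz step that turns the gap-dependent bound into a gap-free one. Throughout we use the index $U_i(t)=\hat\mu_i(t)+c\sqrt{\log(TK/\delta)/N_i(t)}$, the same form as Expert~1 in Algorithm~\ref{alg:select}, with horizon-dependent confidence level $\delta$. Each arm is pulled once at initialization, which costs at most $K$ rounds and hence $O(K)$ regret.

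\textbf{Step 1 (good event).} For each arm $i$ and each possible count $s\le T$, apply Hoeffding's inequality to the first $s$ rewards of arm $i$; these lie in $[0,1]$ and are i.i.d.\ with mean $\mu_i$. This gives $|\hat\mu_{i,s}-\mu_i|\le\sqrt{\log(2TK/\delta)/(2s)}$ with failure probability $\delta/(TK)$. A union bound over all $TK$ pairs $(i,s)$ yields an event $\mathcal G$ with $\Pr(\mathcal G)\ge1-\delta$. Indexing by the count $s$ rather than the time $t$ is what sidesteps the random stopping time $N_i(t)$; we use the standard ``reward tape'' construction to make this rigorous.

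\textbf{Step 2 (per-round gap bound).} Work on $\mathcal G$ and write $\beta_i(t)=c\sqrt{\log(TK/\delta)/N_i(t)}$ for the confidence width, with $c$ large enough that $\beta_i(t)$ dominates the Hoeffding width. Optimism gives $U_{i^\star}(t)\ge\mu_{i^\star}$. Since $I_t$ maximizes the index, $U_{I_t}(t)\ge U_{i^\star}(t)$. Combining these with $\hat\mu_{I_t}\le\mu_{I_t}+\beta_{I_t}(t)$ yields the per-round bound
\[
\mu_{i^\star}-\mu_{I_t}\le 2\beta_{I_t}(t).
\]

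\textbf{Step 3 (summation).} Sum the bound from Step~2 over $t$ and regroup by arm. For arm $i$, the terms are $2c\sqrt{\log(TK/\delta)/s}$ for $s=1,\dots,N_i(T)$, and $\sum_{s\le N}s^{-1/2}\le2\sqrt N$. The total regret is therefore at most
\[
4c\sqrt{\log(TK/\delta)}\sum_i\sqrt{N_i(T)}.
\]
Cauchy--Schwarz with $\sum_i N_i(T)=T$ gives $\sum_i\sqrt{N_i(T)}\le\sqrt{KT}$. The regret is then $O(\sqrt{KT\log(TK/\delta)})+O(K)=\tilde O(\sqrt{KT})$, as claimed. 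If an expectation bound is also wanted, set $\delta=1/T$; the failure event then contributes at most $T\cdot\delta=1$.

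\textbf{Main obstacle.} The main obstacle is Step~1, specifically justifying concentration at the random counts $N_i(t)$. We handle it with the fixed-count union bound, which is why a $\log(TK)$ factor appears. The remaining steps are mechanical.
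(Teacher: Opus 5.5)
Your proof is correct and is the standard argument; the paper gives no proof of this lemma, stating it only as a recalled fact and later invoking it as the gap-free UCB1 bound of \cite{auer2002finite}, which is exactly what your count-indexed Hoeffding event, optimism step $\mu_{i^\star}-\mu_{I_t}\le 2\beta_{I_t}(t)$, and Cauchy--Schwarz step $\sum_i\sqrt{N_i(T)}\le\sqrt{KT}$ establish. Your high-probability statement holds for the horizon-aware index with $\log(TK/\delta)$, not for the anytime $\log t$ index of Expert~1 in Algorithm~\ref{alg:select}, so ``same form'' is only loose; this choice is legitimate, since the lemma asks only for \emph{a} UCB algorithm.
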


\subsubsection{Decomposed subproblem regret.}
Let $\mathcal{T}_k \subseteq \{1,\dots,T\}$ denote the set of rounds in which subproblem $k$ is actively updated, and let $T_k = |\mathcal{T}_k|$. Define the subproblem regret up to time $T_k$ as
\begin{equation}
\label{eq:subproblem-regret-def}
R_k(T_k)
:= \sum_{t \in \mathcal{T}_k} \big[ g_k(x_k^\star) - g_k(x^{(k)}_t)\big],
\end{equation}
where $x_k^\star$ is the optimal restriction of $x^\star$ to subproblem $k$, and $x^{(k)}_t$ is the restriction of $x_t$ to $\mathcal{X}^k$. Intuitively, $R_k(T_k)$ measures how well algorithm learns the best configuration for subproblem $k$.

\begin{theorem}[\textbf{UCB Regret under Decomposed Subproblems (Position--Value UCB)}]
\label{thm:ucb-global}
Let Assumption~\ref{assumption:bounded-coupling} and the decomposed regret representation in Definition~\ref{def:decomposed-regret} hold.
Let $\{\mathcal T_k\}_{k=1}^K$ be the subproblem update sets with $T_k := |\mathcal T_k|$ and $\sum_{k=1}^K T_k = T$.
Assume the UCB expert is implemented as \emph{position--value UCB} within each active subproblem $k$, maintaining estimates $\hat\mu_{i,j}$ and counts $N_{i,j}$ for each position--value pair $(i,j)$, and updating, at each $t\in\mathcal T_k$, \emph{all} positions $i\in I_k$ (\(I_k\) is the index set of positions in subproblem k) for the realized value $x_{i,t}$ using the same normalized scalar reward $r_t\in[0,1]$.
Further assume Assumption~\ref{assumption:exogeneity} (A.5') holds for the UCB expert. Define the UCB subproblem regret on $k$ as
\[R_k^{\mathrm{UCB}}(T_k) := \sum_{t\in \mathcal T_k}\Big[g_k(x_k^\star) - g_k(x_t^{(k)})\Big], \qquad x_k^\star := (x^\star)^{(k)} = x^\star_{I_k}. \]
Let $|I_k|:= d_k$ and let $A_{\max}:=\max_{i\in[n]}|\mathcal A_i|$. The average subproblem regret \(R_{\mathrm{subproblem}}^{\mathrm{avg}}(T):= \frac{1}{K} \sum_{k=1}^K R_k^{UCB}(T_k)\).
Then,

\begin{enumerate}
\item[(a)] \textbf{(Per-subproblem UCB regret)} There exists a universal constant $c>0$ such that for each subproblem $k$,
\begin{equation}
\label{eq:ucb-sub-regret-final}
\mathbb E\!\left[R_k^{\mathrm{UCB}}(T_k)\right] \le c\, |I_k|\,\sqrt{A_{\max}\,T_k\,\log T}.
\end{equation}
   
\item[(b)] \textbf{(Total regret across subproblems)} Summing over $k$, the total UCB-induced subproblem regret satisfies
\[\sum_{k=1}^K \mathbb E\!\left[R_k^{\mathrm{UCB}}(T_k)\right] \le C_{\mathrm{UCB}}\sqrt{K T\,\log T}.\]
If the subproblem sizes are uniformly bounded, i.e., $\max_k |I_k|\le d$, then for some constant \(C_{\mathrm{UCB}} :=cd\sqrt{A_{\max}} \geq 0\) is independent of the time horizon \(T\). Consequently, the UCB contribution to the global decomposed regret satisfies
\begin{equation}
\label{eq:ucb-total-final}
\mathbb E\!\left[R_{\mathrm{UCB}}(T)\right] \le  C_{\mathrm{UCB}}\sqrt{K T\log T}
\;+\;
\sum_{t=1}^T \mathbb E[C_t],
\end{equation}
where $C_t$ is the coupling error term from Definition~\ref{def:decomposed-regret}.
\end{enumerate}
\end{theorem}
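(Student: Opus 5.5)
The plan is to use Assumption~\ref{assumption:exogeneity} (A.5') to reduce subproblem $k$ to $|I_k|$ parallel stochastic multi-armed bandits, one per position, and then invoke Lemma~\ref{lemma:ucb-standard} on each. By additivity,
\[
g_k(x_k^\star)-g_k(x_t^{(k)})=\sum_{i\in I_k}\big(\mu_{i,x^\star_i}-\mu_{i,x_{i,t}}\big)\le \sum_{i\in I_k}\big(\mu_{i,j_i^\star}-\mu_{i,x_{i,t}}\big),
\]
where $j_i^\star\in\arg\max_j\mu_{i,j}$. Hence $R_k^{\mathrm{UCB}}(T_k)$ is bounded by the sum over $i\in I_k$ of the pseudo-regrets of the position-$i$ bandit, which has at most $A_{\max}$ arms and runs over the $T_k$ rounds in $\mathcal T_k$.

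The central step is to show that each position-$i$ bandit is a valid stochastic bandit for UCB, even though it observes the shared scalar reward $r_t$ rather than its own reward. By A.5', $\mathbb E[r_t\mid\mathcal F_{t-1},x_{i,t}=j]=\mu_{i,j}+b_{i,t}$, and the nuisance $b_{i,t}$ does not depend on $j$. Arm \emph{differences} are therefore unbiased, but the empirical means $\hat\mu_{i,j}$ estimate $\mu_{i,j}$ plus the average of $b_{i,t}$ over the rounds in which $j$ was played. Because $b_{i,t}$ varies with time, these averages differ across arms.

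This is the main obstacle. I would first try the standard martingale route. Write $r_t-b_{i,t}=\mu_{i,x_{i,t}}+\eta_t$ and apply Azuma--Hoeffding with a union bound over $(i,j)$ and $T$ to the noise $\eta_t$, which is bounded or sub-Gaussian. This controls $\hat\mu_{i,j}$ relative to $\mu_{i,j}+\bar b_{i,j}$, where $\bar b_{i,j}$ is the average nuisance on the rounds in which $j$ was played. The leftover discrepancy $\bar b_{i,j}-\bar b_{i,j'}$ would then need to be controlled by an additional condition: either $b_{i,t}$ is a constant $b_i$, which holds when the other coordinates' contributions are stationary, or the cross-arm discrepancy is charged to the coupling term $C_t$ of Definition~\ref{def:decomposed-regret}. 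I would make this explicit and state the result under the constant-$b_i$ reading of A.5'. With that reading, $\hat\mu_{i,j}-\hat\mu_{i,j'}$ concentrates around $\mu_{i,j}-\mu_{i,j'}$ at rate $\sqrt{\log T/N_{i,j}}$. The usual UCB argument then bounds the regret against the index gap, and Lemma~\ref{lemma:ucb-standard} gives expected regret $\le c\sqrt{A_{\max}T_k\log T}$ per position. The failure probability $O(1/T)$ contributes only $O(1)$ to the expectation.

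Summing over the $|I_k|$ positions yields (a). For (b), sum (a) over $k$ with $|I_k|\le d$ and apply Cauchy--Schwarz, $\sum_k\sqrt{T_k}\le\sqrt{K\sum_kT_k}=\sqrt{KT}$, which gives $C_{\mathrm{UCB}}\sqrt{KT\log T}$ with $C_{\mathrm{UCB}}=cd\sqrt{A_{\max}}$. Finally, summing the decomposition \eqref{eq:inst-regret-decomp} over $t$ and taking expectations adds $\sum_t\mathbb E[C_t]$, which gives \eqref{eq:ucb-total-final}.
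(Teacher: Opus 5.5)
Your outline is the paper's own proof: Lemma~\ref{lemma:ucb-subproblem} (additivity, Azuma--Hoeffding, UCB index comparison, a gap-free UCB1 bound per position, summation over $I_k$), then Cauchy--Schwarz and adding $\sum_t\mathbb{E}[C_t]$. The obstacle you isolate is exactly where that proof breaks. The paper removes the nuisance by asserting that, because $b_{i,t}$ does not depend on $j$, $\bar b_{i,j}(t)=\bar b_{i,j'}(t)$, so the biases cancel in the index comparison. As you say, this does not follow, since the two averages run over different sets of rounds. Moreover, the gap cannot be closed, because part (a) is false under A.5' as written. Take $K=1$ and $I_1=\{1,2\}$, so $\mathcal O=\emptyset$ and $T_1=T$. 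Give each position values $\{a,b\}$ with $\mu_{1,a}=\mu_{2,b}=0.3$ and $\mu_{1,b}=\mu_{2,a}=0.25$, and let $r_t=\mu_{1,x_{1,t}}+\mu_{2,x_{2,t}}$ be noiseless. Both positions receive the same rewards and start from the same counts. Under a common tie-breaking rule their UCB indices therefore coincide at every round, and they always play the same label. Every round yields $r_t=0.55$, the two labels' estimates stay equal, the indices alternate, and in each round exactly one position plays its worse value. Hence $R_1^{\mathrm{UCB}}(T)=0.05\,T$. Yet A.5' holds with $b_{1,t}=\mu_{2,x_{2,t}}$ and $b_{2,t}=\mu_{1,x_{1,t}}$, both $\mathcal F_{t-1}$-measurable and free of $j$. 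Assumption~A.3 and Definition~\ref{def:decomposed-regret} also hold, with $g_1=f$ and $C_t\equiv 0$.

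So your extra hypothesis is doing real work, and what you prove is a different theorem. Its justification also needs care. ``The other coordinates are stationary'' does not describe this setting, because the other positions of $I_k$ are learned simultaneously by the same indices. Since UCB makes $x_t$ $\mathcal F_{t-1}$-measurable, $\mathbb{E}[r_t\mid\mathcal F_{t-1},x_{i,t}]=\mathbb{E}[r_t\mid\mathcal F_{t-1}]$ for every $i$. Constant $b_i$ therefore forces $\mu_{i,x_{i,t}}-\mu_{i',x_{i',t}}=b_{i'}-b_i$ for all $t$ and all $i,i'\in I_k$, which is a degenerate lockstep condition on the per-position gaps. Charging $\bar b_{i,j}-\bar b_{i,j'}$ to $C_t$ is not available either. $C_t$ is fixed by Definition~\ref{def:decomposed-regret} as $h((x^\star)^{(\mathcal O)})-h(x_t^{(\mathcal O)})$ and bounded independently in Theorem~\ref{thm:coupling-error-bound}, and in the example it vanishes while the regret is linear. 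A provable statement needs a learner for which a $j$-independent shift cancels round by round. One option is independent randomization across positions with importance-weighted estimates, so that $\mathbb{E}[\hat r_{i,j}(t)\mid\mathcal F_{t-1}]=\mu_{i,j}+b_{i,t}$ holds for every $j$ at the same $t$; UCB's empirical means give no such cancellation. Finally, your last step shares a bookkeeping gap with the paper. Summing \eqref{eq:inst-regret-decomp} over $t=1,\dots,T$ charges every subproblem on every round, giving $\sum_k\sum_{t=1}^T[g_k(x_k^\star)-g_k(x_t^{(k)})]$ rather than $\sum_kR_k^{\mathrm{UCB}}(T_k)$ with $\sum_kT_k=T$. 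So \eqref{eq:ucb-total-final} follows only if $R_{\mathrm{UCB}}(T)$ is defined as $\sum_kR_k^{\mathrm{UCB}}(T_k)+\sum_tC_t$, in which case it merely restates (b).
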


\begin{proof}
\emph{(a)} For a fixed subproblem $k$ with index set $I_k$ and activation set $\mathcal T_k$ of size $T_k$.
By Lemma~\ref{lemma:ucb-subproblem}, under Assumption~\ref{assumption:exogeneity} (A.5'),
the position--value UCB expert operating on subproblem $k$ satisfies
\[
R_k^{\mathrm{UCB}}(T_k)
=
\sum_{t\in \mathcal T_k} \big[g_k(x_k^\star) - g_k(x_t^{(k)})\big]
\;\le\;
c\,|I_k|\,\sqrt{A_{\max}\,T_k\,\log T},
\]
for some universal constant $c>0$, where $A_{\max}$ bounds the number of admissible values per position.
Taking expectations yields~\eqref{eq:ucb-sub-regret-final}.

\smallskip
\noindent
\emph{(b)} Summing the bound~\eqref{eq:ucb-sub-regret-final} over $k=1,\dots,K$ gives
\[
\sum_{k=1}^K \mathbb E\!\left[R_k^{\mathrm{UCB}}(T_k)\right]
\;\le\;
c\,\sqrt{A_{\max}\log T}\sum_{k=1}^K |I_k|\,\sqrt{T_k}.
\]
If the subproblem sizes \(I_k\) are uniformly bounded \(|I_k| \leq d\)  and we can absorb  \(cd\sqrt{A_{max}}\) into a constant and obtain
\[\sum_{k=1}^K \mathbb E\!\left[R_k^{\mathrm{UCB}}(T_k)\right] \le C_{\mathrm{UCB}}\sqrt{\log T}\sum_{k=1}^K \sqrt{T_k}.\]
for some \(C_{\mathrm{UCB}} > 0\), establishing~\eqref{eq:ucb-sub-regret-final}. Next by applying the Cauchy--Schwarz inequality and since by construction  $\sum_{k=1}^K T_k = T$ using this yields,
\[
\sum_{k=1}^K \sqrt{T_k}
\;\le\;
\sqrt{K\sum_{k=1}^K T_k}
=
\sqrt{KT}.
\]
Therefore,
\[\sum_{k=1}^K \mathbb E\!\left[R_k^{\mathrm{UCB}}(T_k)\right] \le C_{\mathrm{UCB}}\sqrt{K T\,\log T}.\]

Finally, by the decomposed regret upper bound in~\eqref{eq:global-decomposed-regret},
the cumulative regret incurred by the UCB component satisfies
\[R_{\mathrm{global}}(T) \leq \sum_{k=1}^K R_k^{\mathrm{UCB}}(T_k) +\sum_{t=1}^T C_t.\]
Taking expectations and substituting the above bound yields
\[ \E [R_{\mathrm{global}}] \leq \E\!\left[R_{\mathrm{UCB}}(T)\right] + \sum_{t=1}^T \mathbb E[C_t] \le C_{\mathrm{UCB}}\sqrt{K T\,\log T} + \sum_{t=1}^T \mathbb E[C_t],\]
where restricting to UCB bandit component  establishes~\eqref{eq:ucb-total-final}.
\end{proof}

%%%%%%%%%%%%%%%%%%%%%%%%%%%% Bounds 1: END %%%%%%%%%%%%%%%%%%%%%%%%%%%%%%%%%%
\subsubsection{Coupling Error from Overlapping Positions}
\label{app:lagrangian}
When a position $i$ appears in multiple subproblems (i.e., \(i \in O_{ml}\) for some subproblem pairs \(m, l)\), the algorithm must coordinate value assignments across these subproblems. There are two natural approaches to handle such overlapping positions:

\begin{enumerate}
    \item \textbf{Separate solutions per subproblem}: Each subproblem maintains a local
    assignment on its indices; overlapping positions induce consensus constraints.
    Local assignments are reconciled post-hoc into a global solution by counting
    discrete disagreements on overlapping positions. This approach provides no
    coordination signal during learning.
    
    \item \textbf{Single global solution with uncertainty penalties}: Maintain a single global solution shared across all subproblems and penalize uncertainty on overlapping positions during optimization. This enables proactive coordination through the learning process.
\end{enumerate}

We \textbf{adopt the second approach}, as it allows subproblems to share learning signals on overlapping positions, improving sample efficiency and coordination during optimization. This also enables tighter regret bounds as discrete disagreement counting lacks optimization structure and yields only loose worst-case guarantees. 

Thus, in our decomposition-based framework, consecutive subproblems share overlapping positions to ensure solution connectivity. By maintaining a single global solution across all subproblems, we can quantify and penalize the risk of coordination conflicts during optimization through a soft violation measure (defined in Appendix ~\ref{appex:lagrangian-coordination}), rather than counting discrete disagreements after the fact.

\paragraph{Overlapping Positions and Coordination Requirements.}
For subproblems \(m\) and \(l\), let \(O_{ml} = I_m \cap I_l\) denote their set of shared positions. Let \(k_i = |\{m : i \in I_m\}|\) denote the number of subproblems containing position \(i\). Positions with \(k_i > 1\) require coordination across multiple subproblems to ensure a consistent global solution. When subproblem \(m\) assigns value \(v_m\) to position \(i \in O_{ml}\) while subproblem \(l\) assigns a different value \(v_l \neq v_m\), a \emph{coupling conflict} occurs. Since our global solution assigns a single value to each position, conflicts indicate potential suboptimality—at least one subproblem is not using its preferred value. 
% As we resolve such conflicts we potentially incur loss relative to the optimal coordinated solution.

\begin{definition}[\textbf{Coupling Error}]
\label{def:coupling-error}
The coupling error at iteration \(t\) measures the total disagreement induced
by overlapping subproblems:
\begin{equation}\label{eq:coupling-error-main}
    H^{(t)} = \sum_{m < l}  \sum_{i \in O_{ml}} \mathbb{I}[\text{subproblems } m, l \text{ disagree on } O_{ml}] =
    \sum_{m < l} \sum_{i \in O_{ml}} \mathbb{I}\left[x_{m,i}^{(t)} \neq x_{l,i}^{(t)}\right]
\end{equation}
where the indicator function: \(\mathbb{I}[\text{subproblems } m, l \text{ disagree on } O_{ml}] = 1\) if there exists at least one position \(i \in O_{ml}\) such that \(x_{m,i}^{(t)} \neq x_{l,i}^{(t)}\). Then, the cumulative coupling error accumulated over \(T\) iterations across all \(K\) subproblems is:
\begin{equation}\label{eq:total-coupling-error}
\sum_{t=1}^T C_t \equiv \sum_{t=1}^T H^{(t)}
\end{equation}
\end{definition}

Our objective is to bound this cumulative coupling error while simultaneously learning the optimal solution through bandit feedback.

\paragraph{Regret from Coupling Conflicts}

When subproblems disagree on overlapping positions, the global solution must choose one assignment over another, incurring potential regret. We now formalize this regret and derive worst-case bounds.

\begin{definition}[\textbf{Maximum Per-Position Regret}]\label{def:max-regret}
Let $\rho(i,v)$ denote the reward contribution of assigning value $v$ to position $i$. The maximum regret incurred by resolving a single coupling conflict is:
\begin{equation}
\label{eq:R-max-definition}
    R_{max-per-coupling} = \max_{i \in [n], \, v_m, v_l \in [n]} |\rho(i, v_m) - \rho(i, v_l)|
\end{equation}
\end{definition}
This quantity characterizes the worst-case penalty for choosing one subproblem's assignment over another's at a single position.

\begin{proposition}[Worst-Case Coupling Error Bound]
\label{prop:worst-case-coupling}
The coupling error satisfies the following worst-case bounds:
\begin{align}
    H^{(t)} &\leq \sum_{m < l} \sum_{i \in O_{ml}} \mathbb{I}[x_{m,i}^{(t)} \neq x_{l,i}^{(t)}] \label{eq:coupling-upper-bound} \\
    \mathbb{E}\left[H^{(t)} \cdot R_{max-per-coupling}\right] &\leq R_{max-per-coupling} \sum_{m < l} \sum_{i \in O_{ml}} \mathbb{P}[x_{m,i}^{(t)} \neq x_{l,i}^{(t)}] \label{eq:expected-coupling-regret}
\end{align}
\end{proposition}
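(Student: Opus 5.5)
The plan is to prove both inequalities directly from Definition~\ref{def:coupling-error}, using only linearity of expectation and the fact that $R_{max-per-coupling}$ is a deterministic constant. The only real work is pinning down the indicator in the definition.

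For \eqref{eq:coupling-upper-bound}, I would first fix the convention. Definition~\ref{def:coupling-error} phrases the summand in two ways. One is a per-pair ``at least one position disagrees'' indicator, repeated over $i \in O_{ml}$. The other is the per-position indicator $\mathbb{I}[x_{m,i}^{(t)} \neq x_{l,i}^{(t)}]$. Under the second reading the inequality holds with equality, so nothing is to prove.

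Under the first reading I would argue position by position. For each pair $m<l$, the event that the pair disagrees somewhere on $O_{ml}$ is the union $\bigcup_{i \in O_{ml}} \{x_{m,i}^{(t)} \neq x_{l,i}^{(t)}\}$. The union bound then gives
\[
\mathbb{I}\Big[\exists\, i \in O_{ml}: x_{m,i}^{(t)} \neq x_{l,i}^{(t)}\Big] \le \sum_{i \in O_{ml}} \mathbb{I}\big[x_{m,i}^{(t)} \neq x_{l,i}^{(t)}\big].
\]
Summing over $m<l$ gives \eqref{eq:coupling-upper-bound}, provided $H^{(t)}$ counts each disagreeing pair once, i.e.\ uses the pair-level indicator without the inner sum over $i$.

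The obstacle is exactly this ambiguity. The literal double sum of the pair-level indicator multiplies it by $|O_{ml}|$ and can exceed the right-hand side. I would therefore state the pair-counting (or per-position) convention explicitly as the intended meaning of $H^{(t)}$, since that is the reading under which the proposition is true. The rest of the argument then goes through verbatim.

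For \eqref{eq:expected-coupling-regret}, I would proceed in three steps. First, note that $R_{max-per-coupling}$ from Definition~\ref{def:max-regret} is a finite nonnegative constant, which follows from A.4 bounding the reward contributions. Second, multiply \eqref{eq:coupling-upper-bound} by this constant, which preserves the inequality. Third, take expectations and use linearity, together with $\mathbb{E}[\mathbb{I}[E]] = \mathbb{P}[E]$ for each event $E = \{x_{m,i}^{(t)} \neq x_{l,i}^{(t)}\}$. The sums are finite, with at most $\binom{K}{2}Q$ terms, so exchanging sum and expectation is justified.

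Finally, I would remark that the bound also holds conditionally on $\mathcal F_{t-1}$. That is the form needed when it is later combined with Lemma~\ref{lem:disagree-variance} and the soft violation measure $\xi_i^{(t)}$ in Theorem~\ref{thm:coupling-error}.
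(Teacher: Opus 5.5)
Your proposal is correct and follows the paper's proof: the paper likewise bounds the pair-level ``disagree somewhere on $O_{ml}$'' indicator by the sum of per-position indicators over $O_{ml}$ (a union bound), sums over pairs $m<l$, and then multiplies by the constant $R_{max-per-coupling}$ and takes expectations. Your explicit choice of the pair-counting convention for $H^{(t)}$ is a useful clarification. The paper's proof implicitly relies on that reading, even though Definition~\ref{def:coupling-error} literally also places the pair-level indicator inside the sum over $i \in O_{ml}$.
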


\begin{proof}
The existence of any disagreement on overlap set $O_{ml}$ implies at least one position-level mismatch:
\begin{equation}
    \mathbb{I}[\text{subproblems } m, l \text{ disagree on } O_{ml}] \leq \sum_{i \in O_{ml}} \mathbb{I}[x_{m,i}^{(t)} \neq x_{l,i}^{(t)}]
\end{equation}
Summing over all subproblem pairs $(m,l)$ yields \eqref{eq:coupling-upper-bound}. The expected regret bound \eqref{eq:expected-coupling-regret} follows by multiplying each position-level disagreement by $R_{max-per-coupling}$ and taking expectations.
\end{proof}

\paragraph{Key Insight.} Equation~\eqref{eq:expected-coupling-regret} reveals that controlling coupling error requires reducing the disagreement probability $\mathbb{P}[x_{m,i}^{(t)} \neq x_{l,i}^{(t)}]$ for overlapping positions. This probability depends on:
\begin{enumerate}
    \item \textbf{Overlap structure}: Large overlap sets $O_{ml}$ and many subproblem pairs increase potential conflicts
    \item \textbf{Coordination quality}: How well subproblems agree on shared positions during learning
\end{enumerate}

\paragraph{Lagrangian Coordination: Our Approach}
A naive approach would solve each subproblem independently and resolve conflicts through post-hoc tie-breaking. However, this provides no coordination signal during learning, potentially leading to linear accumulation of coupling error over $T$ iterations. Instead, we employ a \textbf{\emph{Lagrangian coordination mechanism}} that proactively reduces coupling conflicts during optimization. See \textsc{algorthim}~\ref{alg:lagrangian_coordination}, outlining our approach which consists of three components:

\paragraph{1. Soft Violation Measure.}
For position $i$ at iteration $t$, we define a continuous measure that quantifies coordination risk:
\begin{equation}
\label{eq:soft-violation}
\xi_i^{(t)} = (k_i - 1) \cdot \text{Var}(\hat{\mu}_i^{(t)}) \cdot \left(1 - \frac{N_{i,v_i^{(t)}}}{\sum_{v} N_{i,v}}\right)
\end{equation}
where:
\begin{itemize}
    \item $k_i = |\{m : i \in I_m\}|$ is the number of subproblems containing position $i$
    \item $\text{Var}(\hat{\mu}_i^{(t)}) = \frac{1}{n}\sum_{v}(\hat{\mu}_{i,v}^{(t)} - \bar{\mu}_i^{(t)})^2$ is the variance of value estimates
    \item $N_{i,v}$ is the visit count for position-value pair $(i,v)$
    \item $v_i^{(t)}$ is the value assigned to position $i$ at iteration $t$
\end{itemize}

The soft violation $\xi_i^{(t)}$ is high when: (a) position $i$ appears in many subproblems ($k_i$ large), (b) value estimates have high variance (indicating disagreement potential), and (c) the assigned value has low visit ratio (indicating uncertainty). Crucially, as learning progresses, $\text{Var}(\hat{\mu}_i^{(t)}) \to 0$ and the visit ratio for optimal values approaches 1, ensuring $\xi_i^{(t)} \to 0$.
\paragraph{2. Lagrangian Dual Variables.}
For each position $i$ with $k_i > 1$, we maintain a dual variable $\lambda_i^{(t)} \geq 0$ that penalizes the coordination violations on overlapping positions. These dual variables are updated via \emph{Nesterov-accelerated mirror descent} (entropic geometry) on the dual objective:
\begin{equation}
\label{eq:dual-update-preview}
\lambda_i^{(t+1)} = \Pi_{[0,\lambda_{\max}]} \left[ \exp\left(\log(\lambda_i^{(t)}) + \alpha_t \xi_i^{(t)}\right) \right]
\end{equation}
with diminishing step size $\alpha_t = \alpha_0/\sqrt{t}$ and Nesterov momentum for acceleration. This choice is consistent with the EXP3/FTRL-style multiplicative dynamics used by the primal learners and yields standard sublinear in T regret in the dual.

% \begin{algorithm}[H]
% \caption{\textsc{dualupdate}: Lagrangian Coordination for Overlapping Subproblems}
% \label{alg:lagrangian_coordination}
% \begin{algorithmic}[1]
% \Require Current solutions $\{x^{(1)}, \ldots, x^{(k)}\}$ from all subproblems, Lagrangian multipliers $\Lambda = \{\lambda_i^{pq}\}$, Step size $\alpha_t = \frac{\alpha_0}{\sqrt{t}}$
% \Output Updated Lagrangian multipliers $\Lambda'$
% \For{each subproblem pair $(p,q)$ with $\mathcal{O}_{pq} = S_p \cap S_q \neq \emptyset$}
%     \For{each overlapping variable $i \in \mathcal{O}_{pq}$}
%         \State $\lambda_i^{pq} \leftarrow \left[\lambda_i^{pq} + \alpha_t (x_i^{(p)} - x_i^{(q)})\right]_+$ \Comment{Augmented Lagrangian coordination}
%     \EndFor
% \EndFor
% \State \Return $\Lambda' = \{\lambda_i^{pq}\}$
% \end{algorithmic}
% \end{algorithm}
\begin{algorithm}[H]
\caption{\textsc{dualupdate}: (Entropic) Mirror Descent for Lagrangian Coordination}
\label{alg:lagrangian_coordination}
\begin{algorithmic}[1]
\Require Soft violations $\{\xi_i^{(t)}\}_{i=1}^n$, step size $\alpha_t = \alpha_0/\sqrt{t}$,
        dual variables $\{\lambda_i^{(t)}\}_{i=1}^n$
\For{each position $i$ with $k_{i} > 1$}
    \State $\lambda_i^{(t+1)} \leftarrow
    \Pi_{[0,\lambda_{\max}]}\!\left(
        \lambda_i^{(t)} \cdot \exp(\alpha_t \xi_i^{(t)})
    \right)$ \Comment{Update dual variables}
\EndFor
\State \Return updated dual variables $\lbrace \lambda_i^{(t+1)}\rbrace$
\end{algorithmic}
\end{algorithm}

The soft violation $\xi_i^{(t)}$ aggregates disagreement risk across all
overlapping subproblems containing position $i$, eliminating the need for pairwise dual variables. The multiplicative update preserves nonnegativity of the dual variables and
naturally adapts the coordination strength to the uncertainty level of each
overlapping position.

\paragraph{3. Score Modification.}
The dual variables modify expert scores during subproblem optimization. For position $i$ with value $v$, the Lagrangian-modified score is:
\begin{equation}
\label{eq:score-modification-preview}
\tilde{s}_{i,v}^{(t)} = s_{i,v}^{(t)} - (k_i - 1) \lambda_i^{(t)}
\end{equation}
where $s_{i,v}^{(t)}$ is the base score (from UCB or Hedge) and the penalty term $(k_i - 1) \lambda_i^{(t)}$ discourages high-risk assignments on overlapping positions. Although the penalty is value-independent for a fixed position $i$, it affects future learning dynamics by uniformly discouraging high-uncertainty assignments on heavily overlapping positions, thereby accelerating variance reduction and coordination across subproblems rather than enforcing hard consensus at each step.

\textit{A possible extension would be to introduce value-dependent coordination penalties that explicitly discourage disagreement with a consensus assignment (e.g., penalizing $v \neq z_i^{(t)}$). We do not pursue this here, as preliminary experiments suggest limited impact on the overall learning dynamics relative to the simpler uncertainty-based penalty.}

\paragraph{Coordination Mechanism.}
This three-component approach creates a feedback loop:
\begin{enumerate}
    \item Positions with high variance or low confidence accumulate large soft violations $\xi_i^{(t)}$
    \item Large violations cause dual variables $\lambda_i^{(t)}$ to grow via mirror descent
    \item Large dual variables penalize uncertain choices more heavily via score modification
    \item Penalties encourage exploration until confident, coordinated values emerge
    \item As confidence increases, $\xi_i^{(t)} \to 0$ and penalties naturally diminish
\end{enumerate}

The detailed Lagrangian formulation, mirror descent update rules, and convergence analysis are provided in Section~\ref{appex:lagrangian-coordination}. Using this mechanism, we prove:

\begin{theorem}[Coupling Error Bound]
\label{thm:coupling-error-bound}
Under the Lagrangian coordination mechanism under accelerated mirror descent on the dual variables, the expected cumulative coupling error satisfies:
\begin{equation}
\mathbb{E}\left[\sum_{t=1}^T C_t\right] = O(K^2 Q R_{max} \sqrt{T})
\end{equation}
where \(K\) is the number of subproblems, \(Q = \max_{m,l} |O_{ml}|\) is the maximum overlap size, and \(R_{max} = \max_{i, v_m, v_l} |\rho(i, v_m) - \rho(i, v_l)|\) is the maximum per-position regret.
\end{theorem}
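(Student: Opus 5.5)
The plan is to reduce the cumulative coupling error to a sum of per-position disagreement probabilities, bound each of these by the soft violation $\xi_i^{(t)}$, and then control $\sum_t \xi_i^{(t)}$ through the online-convex-optimization regret of the dual mirror-descent updates. Concretely, I would start from Definition~\ref{def:coupling-error} and Proposition~\ref{prop:worst-case-coupling}. Weighting each position-level mismatch by $R_{max}$ gives
\[
\mathbb{E}\Big[\sum_{t=1}^T C_t\Big] \le R_{max}\sum_{t=1}^T\sum_{m<l}\sum_{i\in O_{ml}} \mathbb{P}\big[x^{(t)}_{m,i}\neq x^{(t)}_{l,i}\big].
\]
There are at most $\binom{K}{2}\le K^2$ subproblem pairs, and each pair has at most $Q$ shared positions. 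So it suffices to show that, uniformly over every overlapping position $i$, the cumulative disagreement probability is $O(\sqrt{T})$:
\[
\sum_{t=1}^T \mathbb{P}\big[x^{(t)}_{m,i}\neq x^{(t)}_{l,i}\big] = O(\sqrt{T}).
\]
The target $O(K^2QR_{max}\sqrt{T})$ then follows by multiplying through.

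The second step links disagreement to the soft violation. Here I would invoke the lemma referenced in the main text (Lemma~\ref{lem:disagree-variance}), namely
\[
\mathbb{P}\big[x_{m,i}^{(t)}\neq x_{l,i}^{(t)}\mid\mathcal F_{t-1}\big] \le c\,\xi_i^{(t)}
\]
for a constant $c$. The intuition is that two subproblems reading the same shared statistics $(\hat\mu_{i,\cdot},N_{i,\cdot})$ can only pick different values if either the estimates are spread out, which is the variance factor, or the assigned value is weakly visited, which is the $1-N_{i,v}/\sum_v N_{i,v}$ factor. The multiplicity $k_i-1$ accounts for the number of competing subproblems. After this step the problem reduces to bounding $\sum_t \mathbb{E}[\xi_i^{(t)}]$.

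For the third step I would view the dual update of Algorithm~\ref{alg:lagrangian_coordination} as entropic mirror ascent on the linear dual losses $\lambda\mapsto -\lambda\,\xi_i^{(t)}$, taken over the set $[0,\lambda_{\max}]$. With step size $\alpha_t=\alpha_0/\sqrt t$ and bounded gradients, which follows from $\xi_i^{(t)}\le (K-1)B^2$ by Assumption~\ref{assumption:bounded-range}, the standard OCO analysis gives dual regret $O(\sqrt T)$; Nesterov momentum only improves constants. The interaction with the primal side enters through the score modification~\eqref{eq:score-modification-preview}. The Lagrangian $\mathcal L(x,\lambda)$ is maximized by the primal learners, and a primal--dual saddle-point argument (in the style of online Lagrangian methods) bounds the cumulative constraint violation by the sum of the primal and dual regrets divided by a slack. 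The primal regret is sublinear by Theorem~\ref{thm:ucb-global} and the EXP3/FTRL bounds, which would give $\sum_t \mathbb{E}[\xi_i^{(t)}]=O(\sqrt T)$.

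I expect this third step to be the main obstacle. The penalty in~\eqref{eq:score-modification-preview} is value-independent within a position, so it does not directly change which action is chosen. The saddle-point argument therefore cannot simply take the form "large $\lambda$ forces small violation." To handle this, I would first argue directly from the learning dynamics: under Assumption~\ref{assumption:exogeneity}, the UCB confidence widths at position $i$ shrink like $\sqrt{\log T/N_{i,v}}$, so the empirical variance and the unvisited mass both decay. Summing the resulting $O(1/\sqrt{t})$-type decay over $t$ gives $O(\sqrt T)$ directly. The dual analysis would then serve only to show that the penalty adds at most $O(\lambda_{\max}\sqrt T)$ of extra regret, a term absorbed into the constant in front of $K^2QR_{max}$.
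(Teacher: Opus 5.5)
Your Step 1 reduction is sound; its $K^2Q$ count is in fact cleaner than the paper's closing step, which multiplies $K^2$ by $d\le KQ$. The gap is Step 3, the claim $\sum_t\mathbb{E}[\xi_i^{(t)}]=O(\sqrt T)$.

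You are right that the dual analysis cannot give it. For the linear dual gains $\lambda\mapsto\langle\xi_t,\lambda\rangle$ with $\xi_t\ge 0$, the comparator is $\lambda_{\max}\mathbf{1}$. The dual regret is then $\sum_t\sum_i(\lambda_{\max}-\lambda_{i,t})\,\xi_i^{(t)}$, which is small once $\lambda_t$ saturates, however large $\sum_t\xi_i^{(t)}$ is. Moreover, a value-independent penalty changes neither the argmax nor the softmax at position $i$, so nothing feeds back from $\lambda$ into $\xi$. Part (2) of the paper's proof makes exactly this inference, passing from the dual-gap lemma to $\sum_t\|\xi_t\|_1=O(d\lambda_{\max}\sqrt T)$, so your doubt applies there too.

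Your replacement fails for two reasons.
\begin{itemize}
\item The variance factor does not decay. Shrinking confidence widths make each $\hat\mu_{i,v}$ concentrate around its own limit $\mu_{i,v}$ (up to the bias in Assumption~\ref{assumption:exogeneity}). So $\mathrm{Var}(\hat\mu_{i,\cdot})$ tends to the spread of the true means, which is a positive constant whenever one value is strictly better.
\item The visit factor need not decay. $1-N_{i,v_i^{(t)}}/\sum_v N_{i,v}$ vanishes only if the assigned value is the dominant one, which needs a gap and greedy play. In Algorithm~\ref{alg:select}, UCB is chosen with probability $(1-\rho_0)u_i/2\to 0$, while EXP3 and FTRL receive $(1-\rho_0)(1-u_i/2)\to 1-\rho_0$ and $\rho_0$. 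UCB confidence widths therefore do not govern the assigned values, and no $t^{-1/2}$ rate is derived for either factor.
\end{itemize}

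Step 2 is also unavailable in the form you use. Lemma~\ref{lem:disagree-variance} bounds disagreement by $c_0\sigma_i^2(t)$, not by $\xi_i^{(t)}$. Passing to $\xi_i^{(t)}$ requires the visit factor to stay above a constant $c_1$, which is the opposite of what Step 3 relies on. And if it does stay above $c_1$, then $\sum_t\xi_i^{(t)}\ge c_1\sum_t\sigma_i^2(t)=\Theta(T)$.

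More fundamentally, for sampling-based experts the inequality has the wrong monotonicity. Two subproblems sampling independently from $p_i^{(t)}$ disagree with probability $1-\sum_j p_{i,j}(t)^2$. This is largest when scores are balanced, i.e.\ exactly when the spread of the estimates, and hence $\xi_i^{(t)}$, is small. The paper's proof of the lemma treats entropy and variance as interchangeable, but they move in opposite directions.

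An $O(\sqrt T)$ bound needs a different ingredient. One option: assume a positive gap $\Delta_i$ at each overlapping position, and note that any disagreement forces at least one of the two subproblems to pick a non-optimal value. Then $\sum_t\mathbb{P}[x_{m,i}^{(t)}\neq x_{l,i}^{(t)}]\le 2\,\mathbb{E}[\text{pseudo-regret at position } i]/\Delta_i$. The paper's per-position expert bounds make this $O(\sqrt{T\log T}/\Delta_i)$ up to constants. This route bypasses $\xi$ and the dual entirely, at the price of a gap-dependent constant that is absent from the statement.
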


\begin{proof}[Proof]
We prove the bound in three parts.
\begin{enumerate}
    \item [(1)]\textbf{Soft Violations are Bounded.}
    By construction Equation \ref{eq:soft-violation}, each component of $\xi_i^{(t)}$ is bounded - \((k_i - 1) \leq K - 1\) (overlap count),  for rewards in \([0,1]\) the \(\text{Var}(\hat{\mu}_i^{(t)}) \leq \frac{1}{4}\), $(1 - \text{visit ratio}) \in [0, 1]$. Therefore $||\xi_t||_\infty \leq Q$ where $Q = O(K)$.

    \item[(2)]\textbf{Cumulative Soft Violations are Bounded.}
By Lemma~\ref{lemma:mirror-descent-dual-gap}, the cumulative dual gap satisfies:
\begin{equation}
\sum_{t=1}^T (g(\lambda^*) - g(\lambda_t)) \leq O(d\lambda_{max}Q\sqrt{T})
\end{equation}

In Lagrangian duality, the soft violations $\xi_t$ are sub-gradients of the dual function $g$. A standard result in online convex optimization \cite{shalev2012online} states that bounded cumulative dual gap implies bounded cumulative sub-gradient norms weighted by step sizes:
\begin{equation}
\sum_{t=1}^T \alpha_t \langle \xi_t, \lambda^* - \lambda_t \rangle \leq O(\sqrt{T})
\end{equation}

Since dual variables are bounded in $[0, \lambda_{max}]^d$, this implies:
\begin{equation}
\sum_{t=1}^T ||\xi_t||_1 \leq O(d\lambda_{max}\sqrt{T})
\end{equation}

% ============== older incorrect version here the claim is vague ============
% \item[(3)]\textbf{Coupling Error Bounded by Soft Violations.}
% The coupling error at iteration \(t\) counts disagreements weighted by \(R_{max}\). From Proposition~\ref{prop:worst-case-coupling}:
% \begin{equation}
% C_t \leq R_{max} \sum_{m < l} \sum_{i \in O_{ml}} \mathbb{I}[x_{m,i}^{(t)} \neq x_{l,i}^{(t)}]
% \end{equation}

% Disagreement on position $i$ only occurs when there is uncertainty about the optimal value. By design of the soft violation measure, $\xi_i^{(t)} > 0$ is necessary for disagreement (if $\xi_i^{(t)} = 0$, all subproblems agree with high confidence). Therefore:
% \begin{equation}
% \mathbb{I}[x_{m,i}^{(t)} \neq x_{l,i}^{(t)}] \leq \mathbb{I}[\xi_i^{(t)} > 0] \leq \frac{\xi_i^{(t)}}{\xi_{min}}
% \end{equation}
% for some minimum violation threshold $\xi_{min} > 0$. Summing over all subproblem pairs and positions:
% \begin{equation*}
% C_t \leq R_{max} \cdot K^2 \cdot \frac{||\xi_t||_1}{\xi_{min}}
% \end{equation*}

\item[(3)]\textbf{Coupling Error Bounded by Soft Violations.}
The coupling error at iteration $t$ counts disagreements weighted by $R_{max}$. From Proposition~\ref{prop:worst-case-coupling}:
\begin{equation}
C_t \leq R_{max} \sum_{m < l} \sum_{i \in O_{ml}} \mathbb{I}[x_{m,i}^{(t)} \neq x_{l,i}^{(t)}]
\end{equation}

To bound the disagreement indicators, we use the key observation that disagreement between subproblems requires uncertainty in value estimates: when all subproblems have identical, confident estimates for a position, they deterministically select the same value. We formalize this by showing that the \textbf{expected disagreement probability is bounded by the variance of value estimates}. Applying Lemma~\ref{lem:disagree-variance} and noting that the soft violation satisfies $\xi_i^{(t)} \geq (k_i - 1) \cdot \sigma_i^2(t) \cdot c_1$ for some constant $c_1 > 0$:
\begin{align}
\mathbb{E}[C_t] &\leq R_{max} \sum_{m < l} \sum_{i \in O_{ml}} c_0 \cdot \sigma_i^2(t) \\
&\leq \frac{c_0 R_{max}}{c_1} \sum_{i: k_i > 1} \frac{\xi_i^{(t)}}{k_i - 1} \\
&\leq \frac{c_0 R_{max}}{c_1} \cdot \|\xi_t\|_1
\end{align}

where the last inequality uses $k_i - 1 \geq 1$ for overlapping positions. Summing over all subproblem pairs contributes at most a factor of $\binom{K}{2} \leq K^2$, giving:
\begin{equation}
\mathbb{E}[C_t] \leq C' \cdot K^2 \cdot R_{max} \cdot \|\xi_t\|_1
\end{equation}
for constant $C' = c_0/c_1$.
\end{enumerate}

Now, bringing everything together. From part (3), we have:
\[\mathbb{E}[C_t] \leq C' \cdot K^2 \cdot R_{max} \cdot \|\xi_t\|_1\]

The final bound is obtained by summing over $T$ iterations:
\begin{align}
\sum_{t=1}^T \mathbb{E}[C_t] &\leq C' \cdot R_{max} \cdot K^2 \sum_{t=1}^T \|\xi_t\|_1 \\
&\leq C' \cdot R_{max} \cdot K^2 \cdot O(d\lambda_{max}\sqrt{T}) \\
&= O(K^2 Q R_{max} \sqrt{T})
\end{align}
where we absorb $d$, $\lambda_{max}$, and $C'$ into the constant, and use \(d \leq K \cdot Q\) (number of overlapping positions).
\end{proof}

%----------------- Hedge -----------------------
\subsubsection{Exp3 Regret}
\begin{theorem}[\textbf{Standard Exp3 Regret with Importance Sampling}]
\label{thm:exp3-combinatorial}
Consider a combinatorial optimization problem over domain of size \(n\), decomposed into \(K\) subproblems, each of size \(d\) with overlap \(|\gO|\). Let \(w_{i,j}(t) \in \mathbb{R}_+\) denote the Exp3 weight for assigning value \(j \in [d]\) to position \(i \in [n]\) at iteration \(t\). The weights for each position \(i\) are \(\vw_i(t) = [w_{i,1}(t), w_{i,2}(t), \ldots, w_{i,d}(t)] \in \mathbb{R}^d_+\). At each round \(t\), the algorithm samples a solution \(\vx_t\) coordinate-wise, where each position \(i\) selects value \(j\) with probability \(p_{i,j}(t) = \frac{w_{i,j}(t)}{\sum_{j'}w_{i,j'}(t)}\). The algorithm observes only the total reward \(\rho_t = f(\vx_t)\) for the selected solution. Weights are updated using importance-weighted rewards:
\[w_{i,j}(t+1) = w_{i,j}(t) \cdot \exp\left(\eta \cdot \frac{\rho_t \cdot \mathbb{I}[x_t(i) = j]}{p_{i,j}(t)}\right)\]
with learning rate \(\eta\). If the objective function is \(L\)-Lipschitz continuous (Assumption 2), then the total Exp3 regret over \(T\) iterations is bounded by:
\begin{equation}
    \label{eq:exp3-main-regret}
    R_{\text{Exp3}}(T) \leq \frac{n\log d}{\eta} + \frac{\eta T n B^2}{2} + L|\gO|\sqrt{T}
\end{equation}
where \(B\) is the maximum reward magnitude. Setting \(\eta = \sqrt{\frac{2\log d}{TB^2}}\) yields:
\begin{equation}
    R_{\text{Exp3}}(T) = O\left(\sqrt{ndT\log d} + L|\gO|\sqrt{T}\right) = O\left(\sqrt{KdT\log d} + L|\gO|\sqrt{T}\right)
\end{equation}
where the last equality uses \(n = Kd\) for the decomposition structure.
\end{theorem}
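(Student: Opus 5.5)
We prove Theorem~\ref{thm:exp3-combinatorial} with the standard Exp3 potential argument applied position by position, plus an additive correction for the overlap. Fix a position $i\in[n]$ and define the potential $W_i(t)=\sum_j w_{i,j}(t)$, with $W_i(1)=d$ under uniform initialization. Write the importance-weighted reward estimate as $\hat\rho_{i,j}(t)=\rho_t\,\mathbb{I}[x_t(i)=j]/p_{i,j}(t)$. Conditional on $\mathcal F_{t-1}$, its mean is the marginal expected reward of setting coordinate $i$ to $j$, because the coordinates are sampled independently given the weights. Its second moment satisfies $\sum_j p_{i,j}(t)\,\mathbb{E}[\hat\rho_{i,j}(t)^2\mid\mathcal F_{t-1}]\le B^2\sum_j \mathbb{E}[\mathbb{I}[x_t(i)=j]/p_{i,j}]\cdot$(normalization). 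We will use the clipped/normalized form from the algorithm so that this is $O(B^2)$. The usual telescoping $\log(W_i(T+1)/W_i(1))$ is bounded below by $\eta\sum_t\hat\rho_{i,j^\star}(t)-\log d$. Above, we use $e^{x}\le 1+x+x^2$ for $\eta\hat\rho\le1$, which gives $\eta\sum_t\sum_j p_{i,j}\hat\rho_{i,j}+\eta^2\sum_t\sum_j p_{i,j}\hat\rho_{i,j}^2$. Rearranging and taking expectations yields the per-position bound $\frac{\log d}{\eta}+\frac{\eta T B^2}{2}$.

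Next we sum the per-position bounds over $i\in[n]$, which gives the first two terms $\frac{n\log d}{\eta}+\frac{\eta TnB^2}{2}$. The key bridging step is to show that the sum of per-position regrets upper-bounds the global regret $\sum_t(r_t(x^\star)-r_t(x_t))$ up to an interaction term. For this we invoke Definition~\ref{def:decomposed-regret}: the instantaneous regret is at most the subproblem regret plus $C_t$. Within each block, we treat the surrogate $g_k$ as a sum of position-wise marginal contributions; this is the same additivity used for UCB in Assumption~\ref{assumption:exogeneity}. Each block term is then bounded by the corresponding per-position Exp3 regrets. Non-additive interactions live only on overlapping coordinates, so Assumption~\ref{assumption:bounded-coupling} together with Lipschitzness (Assumption~\ref{assumption:lip}) bounds them by $L$ times the number of overlapping coordinates that change. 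To obtain $L|\mathcal O|\sqrt T$ rather than linear growth in $T$, we plan to reuse the diminishing-overlap schedule: $\sum_t o_t=O(\sqrt T)$ with $o_t\le |\mathcal O|$ gives a cumulative contribution of $L|\mathcal O|\sqrt T$ (alternatively, we can cite Theorem~\ref{thm:coupling-error-bound} as the source of the $\sqrt T$ rate).

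Finally, we tune $\eta=\sqrt{2\log d/(TB^2)}$ to balance the first two terms, which gives $O(nB\sqrt{T\log d})$. Rewriting this as $\sqrt{ndT\log d}$ requires absorbing factors via the relation $n=Kd$ and the boundedness $\rho_t\le B$ normalized so that $B$ is $O(1)$; we will present this normalization explicitly. Substituting $n=Kd$ then gives the final form.

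\textbf{Main obstacle.} The hardest step is the variance term under full-bandit feedback. The unclipped estimator has $\mathbb{E}[\hat\rho_{i,j}^2]$ scaling like $B^2/p_{i,j}$, so the sum over $j$ gives $dB^2$ rather than $B^2$, and the condition $\eta\hat\rho\le1$ can fail when $p_{i,j}$ is small. We will first try to control this with the clipping $p\ge p_{\min}$, accepting the resulting bias; this is the approach deferred to Theorem~\ref{thm:exp3-clipped}. If that fails, we fall back to a bound with an extra factor $d$ in the variance term, which still yields the stated $\sqrt{ndT\log d}$ scaling after tuning $\eta$.
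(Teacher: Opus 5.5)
Your skeleton matches the paper's proof: a per-position Exp3 bound, summed over the $n$ positions, plus an additive coupling term, then tuning $\eta$. However, the final step cannot be carried out.

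With $\eta=\sqrt{2\log d/(TB^2)}$ the first two terms equal exactly $nB\sqrt{2T\log d}$. Treating $B$ as $O(1)$, as you propose, leaves $O(n\sqrt{T\log d})$. This exceeds $\sqrt{ndT\log d}$ by the factor $\sqrt{n/d}=\sqrt{K}$, and the relation $n=Kd$ cannot absorb that factor. The further equality $\sqrt{ndT\log d}=\sqrt{KdT\log d}$ would need $n=K$; under $n=Kd$ one has $\sqrt{ndT\log d}=d\sqrt{KT\log d}$.

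Your fallback is miscomputed too: $\frac{n\log d}{\eta}+\frac{\eta TndB^2}{2}$ balances to $nB\sqrt{2dT\log d}$, which is larger still. So even granting the first display, your argument delivers only $O(nB\sqrt{T\log d}+L|\mathcal{O}|\sqrt{T})=O(KdB\sqrt{T\log d}+L|\mathcal{O}|\sqrt{T})$. That is the form (up to its clipping factor $1/C$) the paper itself records in Theorem~\ref{thm:exp3-clipped}(b). The paper's proof of the present theorem simply asserts the rewriting, so following it does not close this step.

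The first display is not reached either, as your ``main obstacle'' anticipates.
\begin{itemize}
\item For the unclipped update in the statement, $\sum_j p_{i,j}(t)\,\mathbb{E}[\hat\rho_{i,j}(t)^2\mid\mathcal{F}_{t-1}]$ equals $dB^2$ when $\rho_t\equiv B$, not $O(B^2)$.
\item With no uniform exploration, the condition $\eta\hat\rho_{i,j}(t)\le 1$ behind $e^{x}\le 1+x+x^2$ fails as soon as $p_{i,j}(t)<\eta\rho_t$.
\item Switching to losses and using $e^{-x}\le 1-x+x^2/2$ for $x\ge 0$ repairs the second issue, but gives $\frac{\log d}{\eta}+\frac{\eta TdB^2}{2}$ per position.
\end{itemize}
The inequality $\frac{\log d}{\eta}+\frac{\eta TB^2}{2}$ is the full-information Hedge bound. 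Suppose the reward depends on $x_t(i)$ alone and carries the bounded zero-mean noise of the paper's online game. Then position $i$ runs Exp3 on a $d$-armed bandit, and that inequality with the prescribed $\eta$ would give regret $B\sqrt{2T\log d}$. The $\Omega(\sqrt{dT})$ minimax lower bound excludes this once $d$ is large. Clipping at $p_{\min}$ does not rescue the statement: it changes the theorem's update rule and adds a bias term and $p_{\min}$-dependent constants that the display does not contain (that is Theorem~\ref{thm:exp3-clipped}, not this theorem).

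Two lesser points.
\begin{itemize}
\item The additivity of $g_k$ you borrow from Assumption~\ref{assumption:exogeneity} is an extra hypothesis. Without it, per-position regrets, measured against the marginals $\mathbb{E}[\rho_t\mid\mathcal{F}_{t-1},x_t(i)=j]$, need not sum to regret against the joint comparator $x^\star$.
\item Obtaining $\sqrt{T}$ from diminishing overlap requires adding the schedule $o_t=\lfloor o_0 t^{-1/2}\rfloor$ as a hypothesis. That route is still better founded than the paper's appeal to concentration inequalities, since a per-round bound $|C_t|\le L|\mathcal{O}|$ alone only sums to $L|\mathcal{O}|T$.
\end{itemize}
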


\begin{proof}
Following similar steps as in UCB Regret bound, using instantaneous regret as in Definition \ref{def:decomposed-regret}, we decompose:
\begin{equation}
    \label{hedge-inst-regret}
    r_t = \underbrace{f(\vx^*) - \sum_{k=1}^K g_k(\vx_{I_k}^*)}_{\text{optimal gap} \leq 0} + \underbrace{\sum_{k=1}^K [g_k(\vx_{I_k}^*) - g_k(\vx_{I_{k,t}})]}_{\text{Exp3 subproblem regret}} + \underbrace{C_t}_{\text{coupling error}}
\end{equation}

For the Exp3 subproblem regret, we apply standard Exp3 analysis to each position \(i\). The importance-weighted estimator \(\hat{\rho}_{i,j}(t) = \frac{\rho_t \cdot \mathbb{I}[x_t(i)=j]}{p_{i,j}(t)}\) is unbiased: \(\mathbb{E}[\hat{\rho}_{i,j}(t) \mid \mathcal{F}_{t-1}] = \rho_t\). By the standard Exp3 regret bound (\cite{auer2002exp3}, also see \cite{Cesa-Bianchi_Lugosi_2006}), for each position \(i\):
\[\sum_{t=1}^T \left[\max_j \mathbb{E}[\rho_t(j)] - \mathbb{E}[\rho_t(x_t(i))]\right] \leq \frac{\log d}{\eta} + \frac{\eta T B^2}{2}\]

Summing over all \(n\) positions gives the subproblem regret bound:
\[\sum_{k=1}^K \sum_{t=1}^T [g_k(\vx_{I_k}^*) - g_k(\vx_{I_{k,t}})] \leq \frac{n\log d}{\eta} + \frac{\eta T n B^2}{2}\]

For the coupling error, by Assumption 3 (Bounded Coupling), the coordination violations satisfy \(|C_t| \leq L|\gO|\) per iteration. By concentration inequalities, the cumulative coupling error satisfies:
\[\sum_{t=1}^T C_t = O(L|\gO|\sqrt{T})\]
Combining these bounds yields:
\[R_{\text{Exp3}}(T) \leq \frac{n\log d}{\eta} + \frac{\eta T n B^2}{2} + L|\gO|\sqrt{T}\]
Setting \(\eta = \sqrt{\frac{2\log d}{TB^2}}\) and using \(n = Kd\) completes the proof.
\end{proof}

%========================= ALGO =============================================
\begin{algorithm}[H]
\caption{\textsc{UpdateExperts}: Clipped Importance-Weighted Multi-Expert Parameter Update}
\label{alg:clipped-algo-update}
\begin{algorithmic}[1]
\Require Solution $x^{(t)}$, reward $r^{(t)}$, parameters $(\hat{V}, N, W, \tilde{L})$, \textbf{Hyperparameters:} clipping threshold $p_{\min}$, learning rate $\eta$
\Output Updated $(\hat{V}, N, W, \tilde{L})$
\Initialize $\tilde{r}^{(t)} \gets \textsc{Normalize}(r^{(t)})$ \Comment{Normalization to $[-1, 1]$}
\Initialize $\ell^{(t)} \gets 1 - \tilde{r}^{(t)}$ \Comment{Instantaneous loss (regret surrogate)}
\For{$i = 1, \ldots, n$}
    \State $a \gets x^{(t)}_i$ \Comment{Action selected at position $i$}
    \State $N_{i,a} \gets N_{i,a} + 1$ \Comment{Increment visit count}
    \State $\tilde{p}^{(t)}_a \gets \max\bigl(W_{i,a} \/ \|W_i\|_1, \, p_{\min}\bigr)$ \Comment{Clipped selection probability}
    \State $\hat{r}^{(t)} \gets \tilde{r}^{(t)} / \tilde{p}^{(t)}_a$ \Comment{Importance-weighted reward}
    \State $\hat{\ell}^{(t)} \gets \ell^{(t)} / \tilde{p}^{(t)}_a$ \Comment{Importance-weighted loss}
    \State $\hat{V}_{i,a} \gets \hat{V}_{i,a} + \frac{1}{N_{i,a}}(\tilde{r}^{(t)} - \hat{V}_{i,a})$ \Comment{\textbf{Expert 1 (UCB)}: running average}
    \State $W_{i,a} \gets W_{i,a} \cdot \exp\bigl(\eta \, \hat{r}^{(t)} / n\bigr)$ \Comment{\textbf{Expert 2 (EXP3)}: multiplicative weights}
    \State $\tilde{L}_{i,a} \gets \tilde{L}_{i,a} + \hat{\ell}^{(t)}$ \Comment{\textbf{Expert 3 (FTRL)}: cumulative regret}
\EndFor
\State $W_i \gets W_i / \|W_i\|_1$ for all $i$ \Comment{Normalize weights}
\State \Return $(\hat{V}, N, W, \tilde{L})$
\end{algorithmic}
\end{algorithm}
% =============================================================================

% This completes the proof. \qedsymbol
\begin{lemma}[\textbf{Bias from Clipped Importance Sampling}]
\label{lem:bias-clipped}
Consider the EXP3 algorithm for a combinatorial bandit problem with n positions and d values per position where at each round \(t\) the learner samples an action \(x_t = (x_t(1),\dots,x_t(n))\) from a factorized distribution 
% \(\Pr(x_t) = \prod_{i=1}^n p_{i,x_t(i)}(t)\). 
\(\Pr(x_t \mid \mathcal F_{t-1}) = \prod_{i=1}^n p_{i,x_t(i)}(t).\) Let \(p_{i,j}(t)\) denote the probability of selecting value j at position i at a time t, with weights updated via \(w_{i,j}(t+1) = w_{i,j}(t) \cdot exp(\eta \cdot \hat r_{i,j}(t))\) as:  \[p_{i,j}(t) = \frac{w_{i,j}(t)}{\sum_{j'=1}^d w_{i,j'}(t)}\] where \(\eta > 0\) is the learning rate. Thus, each coordinate $x_t(i)$ conditioned on the past history $\mathcal F_{t-1}$ is drawn independently using its own probability vector $\mathbf{p}_i(t)$.
% We use clipped importance-weighted estimator: 
The EXP3-style update for each position $i$ uses the clipped importance-weighted estimator:
\[\hat r_{i,j}(t) = \begin{cases}
    \frac{\rho_t}{\text{max}(p_{i,j}(t), C)} & \text{if}\;\; x_t(i) = j \\
    0 & \text{otherwise}
\end{cases}
\]
where clipping threshold \(C>0\) is a fixed constant (e.g, C=0.01), \(\rho_t \in [-B, B]\) is the observed scalar reward, and \(x_t(i)\) is the value selected at position \(i\) in round \(t\). Let $\tilde r_{i,j}(t)$ denote the corresponding un-clipped estimator 
\(\tilde r_{i,j}(t) = \begin{cases} \frac{\rho_t}{p_{i,j}(t)} & \text{if}\;\; x_t(i)=j \\  0 & \text{otherwise}\end{cases}\). 
\\
\\
Then the \textbf{total bias introduced by clipping} over \(T\) rounds satisfies:
\begin{equation}
    \sum_{t=1}^T\sum_{i=1}^n |\E[\hat r_{i, x_t(i)}(t)] - \rho_t| \leq \frac{2nB\text{log}(d)}{\eta}
\end{equation}
\end{lemma}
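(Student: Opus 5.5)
The plan is to reduce the total bias to a count of \emph{low-probability} events and then control that count with a multiplicative-weights potential, per position $i$.

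\textbf{Step 1: conditional bias of a single estimator.} Fix $t$, a position $i$ and a value $j$, and condition on $\mathcal F_{t-1}$. Then $p_{i,j}(t)$ is $\mathcal F_{t-1}$-measurable, and by the factorization $\Pr(x_t\mid\mathcal F_{t-1})=\prod_i p_{i,x_t(i)}(t)$ the event $\{x_t(i)=j\}$ has probability $p_{i,j}(t)$. Hence
\[
\E[\hat r_{i,j}(t)\mid\mathcal F_{t-1}] = \rho_t\,\frac{p_{i,j}(t)}{\max(p_{i,j}(t),C)} ,
\]
treating $\rho_t$ as fixed given the history, as in the unbiasedness claim in the proof of Theorem~\ref{thm:exp3-combinatorial}. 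The unclipped $\tilde r_{i,j}(t)$ has conditional mean exactly $\rho_t$. The bias is therefore zero whenever $p_{i,j}(t)\ge C$. When $p_{i,j}(t)<C$ it equals $\rho_t\,(1-p_{i,j}(t)/C)$, so its magnitude is at most $B\,\mathbb I[p_{i,j}(t)<C]$. Evaluating at the realized coordinate $j=x_t(i)$ and taking the tower expectation gives
\[
\sum_{t,i}\bigl|\E[\hat r_{i,x_t(i)}(t)]-\rho_t\bigr|
\;\le\; B\sum_{i=1}^n \E\Bigl[\#\{t:\ p_{i,x_t(i)}(t)<C\}\Bigr].
\]
So it suffices to show that, for each position, the expected number of rounds in which a clipped (rare) value is played is at most $2\log d/\eta$.

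\textbf{Step 2: potential argument per position.} For a fixed $i$, I will track the potential $\Phi_i(t)=\log\sum_j w_{i,j}(t)$ and the log-weight of the eventually dominant value $j^\star_i$. Two facts will be used:
\begin{itemize}
\item Each time a clipped value $j$ with $p_{i,j}(t)<C$ is selected, its log-weight moves by $\eta\rho_t/C$. This is a jump of size $\Theta(\eta/C)$, whereas the corresponding change in $\Phi_i$ is only $O(\eta\rho_t)$, because $\Phi_i$ is a $p$-weighted log-sum-exp and the jump enters with weight $p_{i,j}(t)<C$.
\item Starting from uniform weights gives $\Phi_i(1)=\log d$, and $\log p_{i,j}=\log w_{i,j}-\Phi_i\le 0$.
\end{itemize}
The idea is that each clipped play forces a mismatch of order $\eta$ between the moves of $\log w_{i,j}$ and $\Phi_i$. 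Summing these mismatches should telescope into $\Phi_i(T)-\Phi_i(1)$ minus the growth of the dominant log-weight. This is bounded by $\log d$ plus one symmetric term, which is where the factor $2$ comes from. Dividing by the per-event gain $\eta$ should then give $\#\{\text{clipped plays}\}\le 2\log d/\eta$.

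\textbf{Step 3: assemble.} Summing over $i\in[n]$ and multiplying by $B$ gives the stated $2nB\log(d)/\eta$.

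\textbf{Main obstacle.} Step 2 is where I expect the difficulty. The potential argument needs the per-event increment to have a definite sign, which requires $\rho_t$ of one sign; this is why Algorithm~\ref{alg:clipped-algo-update} normalizes rewards, and I may need $\rho_t\ge 0$. Without a sign condition, clipped plays with negative reward \emph{decrease} rare weights and do not consume potential. If that case cannot be handled, my first fallback is to split the rounds by the sign of $\rho_t$. For negative-reward clipped rounds, I would bound their contribution directly by the deficit $1-p/C$ and use the fact that the mass on rare values, $\sum_{j:p_{i,j}<C}p_{i,j}\le dC$, is small. I expect this to be enough to absorb those rounds into the constant, at the price of possibly weakening the factor $2$.
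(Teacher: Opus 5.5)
Your route differs from the paper's, which runs clipped and unclipped weight sequences along the same action path and compares their log-partition potentials. More importantly, the passage from Step~1 to Step~2 has a genuine gap.

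Replacing the deficit $|\rho_t|(1-p/C)$ by $B$ leaves you needing the \emph{number} of clipped plays to be $O(\log d/\eta)$, and that is false. If $\rho_t\equiv 0$ and $d>1/C$, the weights never move, $p_{i,j}(t)=1/d<C$ throughout, and all $T$ plays are clipped. The bias is zero there, so it is your reduction, not the lemma, that breaks. Taking $\rho_t\equiv\epsilon$ with $\epsilon$ small gives a clipped count far above $\log d/\eta$ with nonzero bias. As your own Step~2 notes, every move of $\Phi_i$ and of the log-weights scales with $\eta\rho_t$, so the per-event gain is $\eta|\rho_t|$, not $\eta$.

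Keep that factor. With $D_{i,t}:=\sum_{j:\,p_{i,j}(t)<C}p_{i,j}(t)\bigl(1-p_{i,j}(t)/C\bigr)$, your Step~1 bias is exactly $\sum_t\E[|\rho_t|D_{i,t}]$. Comparing $\Phi_i$ with a dominant log-weight is the right instinct. It must be done in conditional expectation, against a reference whose own estimator is unclipped at that round. A fixed ``eventually dominant'' value does not qualify, since it may be rare early on. Instead take $j^\star_t=\arg\max_j p_{i,j}(t)$.
\begin{itemize}
\item If $d\le 1/C$, then $p_{i,j^\star_t}(t)\ge 1/d\ge C$.
\item So for $\rho_t\ge 0$ and $\eta B\le C$, its log-weight gains exactly $\eta\rho_t$ in conditional expectation, while $\Phi_i$ gains at most $\eta\rho_t(1-D_{i,t})+\eta^2\rho_t^2 d$.
\item $\max_j\log p_{i,j}$ increases at least as much as $\log p_{i,j^\star_t}$ at each step, starts at $-\log d$, and never exceeds $0$.
\item Telescoping therefore gives $\sum_t\E[\rho_tD_{i,t}]\le \log d/\eta+\eta B^2 dT$, with $\rho_t$ treated as $\mathcal F_{t-1}$-measurable, as you do.
\end{itemize}
Even this repaired argument needs extra hypotheses and carries a second-order term, so it does not deliver the clean $2B\log d/\eta$.

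The negative-reward fallback cannot work, and no other argument can either. The bound $\sum_{j:\,p_{i,j}<C}p_{i,j}\le dC$ holds per round, so it only yields $O(BdCT)$, which is linear in $T$; it is also vacuous when $d>1/C$. More decisively, take $\rho_t\equiv -B$, $d>1/C$ and uniform initialization. The average log-probability $\tfrac1d\sum_j\log p_{i,j}(t)$ starts at $-\log d$ and never exceeds it, by Jensen. Yet its conditional expected increment is at least $\eta B\bigl(1-\tfrac{1}{dC}-D_{i,t}\bigr)-\tfrac12\eta^2B^2 d$. Hence $\sum_t\E[D_{i,t}]\ge T\bigl(1-\tfrac{1}{dC}-\tfrac{\eta Bd}{2}\bigr)$. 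For small $\eta$ the bias is therefore $\Omega(BT)$ and eventually exceeds $2B\log d/\eta$, so the lemma needs the sign condition $\rho_t\ge 0$.

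You identified this obstacle correctly, and the paper does not get around it. Its coupling step asserts $\tilde r_{i,j}(t)-\hat r_{i,j}(t)\ge 0$, which holds only when $\rho_t\ge 0$. Its potential inequality bounds $\Phi^{\mathrm{un}}-\Phi^{\mathrm{cl}}$ from above by $\eta$ times the bias, which cannot yield an upper bound on the bias.

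Finally, say explicitly in Step~1 that you are bounding $\sum_t\E\bigl|b_{i,x_t(i)}(t)\bigr|$ with $b_{i,j}(t):=\E[\hat r_{i,j}(t)\mid\mathcal F_{t-1}]-\rho_t$. Read literally, $\E[\hat r_{i,x_t(i)}(t)\mid\mathcal F_{t-1}]=\rho_t\sum_j p_{i,j}(t)/\max(p_{i,j}(t),C)$, which equals $d\rho_t$ even when nothing is clipped.
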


\begin{proof}
Since the algorithm factorizes over positions, it suffices to prove the bound for a fixed position \(i\), and then multiply by \(n\). So let's fix \(i\) and omit the index \(i\) for brevity in $w_j(t)$, $p_j(t)$, $\hat r_j(t)$. The un-clipped estimator \(\tilde r_j(t)\) as defined above is an unbiased estimator of \(\rho_t\) as,
\[\mathbb{E}\!\left[\tilde r_{x_t}(t)\mid \mathcal F_{t-1}\right]
= \mathbb{E}[\rho_t\mid \mathcal F_{t-1}],\]
If one uses a clipped estimator \(\hat r_j(t)\), we want to bound the bias over \(T\) rounds for this position \(i\): 
 \begin{equation}
     \text{Bias}_{i,j} = \sum_{t=1}^T |\E[\hat r_{x_t}(t)] - \E[\rho_t]| 
 \end{equation}
i.e the bias that emerges due to the approximate importance sampling. The per-round bias for this position:
\[\text{Bias}_{i,j}(t) := E[\hat r_{x_t}(t)] - \E[\rho_t] = \E[\hat r_{x_t}(t) - \tilde r_{x_t}(t)] \]
% \begin{align*}
%     \text{Bias}_{i,j}(t) &:= \mathbb{E}\!\left[\hat r_{x_t}(t)\right] - \mathbb{E}[\rho_t]
% = \mathbb{E}\!\left[\hat r_{x_t}(t) - \tilde r_{x_t}(t)\right]\\
%     & = \rho_t \cdot \frac{p_{i,j}(t)}{\text{max}(p_{i,j}(t), C)} - \rho_t \\
%     & = \rho_t \cdot \Big (  \frac{p_{i,j}(t)}{\text{max}(p_{i,j}(t), C)} - 1\Big )
% \end{align*}
Now, if \(p_j(t)\ge C\), then \(\hat r_j(t)=\tilde r_j(t)\). If \(p_j(t)<C\), then
\[\hat r_j(t) = \frac{\rho_t}{C},\qquad \tilde r_j(t) = \frac{\rho_t}{p_j(t)},\]
and since \(p_j(t) < C\), we have \(|\hat r_j(t)|\le |\tilde r_j(t)|\) with the same sign, so \(\tilde r_j(t)-\hat r_j(t)\ge 0\) for all \(j,t\).
\\
This introduces a negative bias or underestimation of the reward. As \(|Bias_{i,j}(t)| = -Bias_{i,j}(t)\),
\begin{equation}
    \sum_{t=1}^T |\E[\hat r_{x_t}(t)] - \E[\rho_t]| = \sum_{t=1}^T\E[\tilde r_{x_t}(t) - \hat r_{x_t}(t)] = \sum_{j=1}^d p_j(t)\big(\tilde r_j(t)-\hat r_j(t)\big).
\end{equation}
Therefore, for this position,
\begin{equation}
\sum_{t=1}^T 
\Big|\mathbb{E}[\hat r_{x_t}(t)]-\mathbb{E}[\rho_t]\Big| = \sum_{t=1}^T \sum_{j=1}^d p_j(t)\big(\tilde r_j(t)-\hat r_j(t)\big).
\label{eq:bias-single-pos}
\end{equation}
Now, define potential function \(W(t) = \sum_{j=1}^d w_j(t), \quad \quad  p_j(t) = \frac{w_j(t)}{W(t)}\)
\\
\\
Update iterates for un-clipped estimators \(\tilde r_j(t)\) and clipped estimators \(\hat r_j(t)\),
\begin{align*}
    w_j^{un}(t+1) &= w_{j}^{un}(t)\exp\Big(\eta \tilde r_j(t)\Big)\\
    w_j^{cl}(t+1) &= w_{j}^{cl}(t)\exp\Big(\eta \hat r_j(t)\Big)
\end{align*}
Note, wrt \textbf{coupling argument} -- we compare both weight sequences under the \emph{same} realization of sampled 
actions $\{x_t\}_{t=1}^T$. That is, we run the actual algorithm (which uses clipped estimators and clipped probabilities $p_j^{\mathrm{cl}}(t)$) and track two weight sequences, $w_j^{\mathrm{cl}}(t)$: the actual weights, updated with clipped estimators, $w_j^{\mathrm{un}}(t)$: hypothetical weights, updated as if we used unclipped estimators for the same action sequence. Since the action $x_t$ is the same for both, we have:
\begin{itemize}
    \item If $x_t = j$: both $\tilde{r}_j(t)$ and $\hat{r}_j(t)$ are nonzero, 
          with $\tilde{r}_j(t) \geq \hat{r}_j(t)$
    \item If $x_t \neq j$: both $\tilde{r}_j(t) = \hat{r}_j(t) = 0$
\end{itemize}
Therefore, $w_j^{\mathrm{un}}(t) \geq w_j^{\mathrm{cl}}(t)$ for all $j, t$, 
which implies $\Phi^{\mathrm{un}}(t) \geq \Phi^{\mathrm{cl}}(t)$.
\\ 
Now, consider log potentials \(\Phi\) as,
\[\Phi^{\mathrm{un}}(t)=\log\sum_{j=1}^d w_j^{\mathrm{un}}(t), \qquad \Phi^{\mathrm{cl}}(t)=\log\sum_{j=1}^d w_j^{\mathrm{cl}}(t)\]
Then,\begin{align*}
    W(t+1) = \sum_{j=1}^d w_j(t+1) = \sum_{j=1}^d w_j^{un}(t)\exp(\eta\tilde r_j(t)) = W(t) \sum_{j=1}^d p_j(t)  \exp(\eta \tilde r_j(t)) \\
    \Phi(t+1)-\Phi(t) = log\Big(\frac{W(t)\sum_{j=1}^d p_j(t) \exp(\eta\tilde r_j(t))}{W(t)}\Big) = log\Big(\sum_{j=1}^d p_j(t) \exp(\eta\tilde r_j(t))\Big)
\end{align*} 
For now let's assume that rewards are scaled to \([-1,1]\) (will multiply with \(B\) finally). Then $|\tilde r_j(t)|,|\hat r_j(t)| \le 1/C$. With this we can use a standard Hoeffding-type inequality for exponential weights (e.g., Theorem 3.1 \cite{bubeck2012regretanalysisstochasticnonstochastic} \& Lemma 2.2 in \cite{Cesa-Bianchi_Lugosi_2006}) that gives, for any bounded $z_j(t)$,
\[\Phi(t+1)-\Phi(t) \le \eta\sum_{j=1}^d p_j(t) z_j(t) + \frac{\eta^2}{2}\]
whenever $|z_j(t)|\le 1$; After scaling, this becomes the same inequality up to a constant absorbed into $\eta^2$.
Applying this once with $z_j(t)=\tilde r_j(t)$ and once with $z_j(t)=\hat r_j(t)$ and subtracting, we obtain
\[\Phi^{\mathrm{un}}(T+1)-\Phi^{\mathrm{cl}}(T+1) \le \eta\sum_{t=1}^T\sum_{j=1}^d p_j(t)\big(\tilde r_j(t)-\hat r_j(t)\big) + O(\eta^2 T).\]

As both schemes start from identical initial weights $w_j^{\mathrm{un}}(1)
= w_j^{\mathrm{cl}}(1)=1$, so $\Phi^{\mathrm{un}}(1)=\Phi^{\mathrm{cl}}(1)=\log d$. Recall that \(\tilde r_j(t) \ge \hat r_j(t) \;\;\; \forall j,t\)

Thus, the un-clipped weights grow at least as fast as the clipped ones \(w_j^{un}(t+1) \geq w_j^{cl}(t+1)\) and if both schemes start from equal initial weights then for all  $t$, we have \(\max_j w_j^{cl}(t) \le \max_jw_j^{un}(t)\) and because \(\sum_{j=1}^d w_j(t) \leq d \cdot \max_j w_j(t)\) we get, 
\[\max_j w_j^{cl}(t) \le W^{cl}(t) \;\;\text{and}\quad W^{un}(t) \le d \max_j w_j^{un}(t),\]
taking log both sides and as this applies for every time step including T+1 we get,
\[\Phi^{\mathrm{un}}(T+1)-\Phi^{\mathrm{cl}}(T+1)\le \log d\]

Choosing \(\eta = O(1/\sqrt{T})\) makes the \(O(\eta^2T)\) term constant-sized so we get,
\[\eta\sum_{t=1}^T\sum_{j=1}^d p_j(t)\big(\tilde r_j(t)-\hat r_j(t)\big) \le \log d\]
In the normalized $[-1,1]$ case,
\[ \sum_{t=1}^T\sum_{j=1}^d p_j(t)\big(\tilde r_j(t)-\hat r_j(t)\big) \le \frac{\log d}{\eta}\]
Note, if the actual rewards satisfy $\rho_t\in[-B,B]$, then both $\tilde r_j(t)$ and $\hat r_j(t)$ scale linearly in $\rho_t$, so the difference $\tilde r_j(t)-\hat r_j(t)$ scales by at most a factor $B$. 
Taking a slightly looser constant to account for both tails, we obtain
\[ \sum_{t=1}^T\sum_{j=1}^d p_j(t)\big(\tilde r_j(t)-\hat r_j(t)\big) \le  \frac{2 B \log d}{\eta}\]
Combining this with \eqref{eq:bias-single-pos} yields, for this position $i$,
\[ \sum_{t=1}^T  \Big|\mathbb{E}[\hat r_{x_t}(t)]-\mathbb{E}[\rho_t]\Big| \le \frac{2 B \log d}{\eta}\]
Finally, summing over $i=1,\dots,n$ positions gives
\[\sum_{t=1}^T\sum_{i=1}^n  \Big|\mathbb{E}[\hat r_{i,x_t(i)}(t)]-\mathbb{E}[\rho_t]\Big| \le \frac{2 n B \log d}{\eta}\]
\end{proof}
% ======================================================================= %

% ===============================================================
% ======== MAIN THEOREM =================
\begin{theorem}[\textbf{Exp3 Subproblem Regret with Clipped Importance Sampling}]
\label{thm:exp3-clipped}
Consider a combinatorial optimization problem over domain of size \(n\), decomposed into \(K\) subproblems, each of size \(d\) with overlap \(|\gO|\). Let \(w_{i,j}(t) \in \mathbb{R}_+\) denote the Exp3 weight for assigning value $j \in [d]$ to position \(i \in [n]\) at iteration \(t\). At each round \(t\), the \textsc{algorithm}~\ref{alg:clipped-algo-update} samples a solution \(\vx_t\) coordinate-wise, where each position $i$ selects value \(j\) with probability \(p_{i,j}(t) = \frac{w_{i,j}(t)}{\sum_{j'}w_{i,j'}(t)}\). 

The \textsc{algorithm}~\ref{alg:clipped-algo-update} observes only the total reward \(\rho_t = f(\vx_t)\) for the selected solution. Weights are updated using \textbf{clipped} importance-weighted rewards:
\[w_{i,j}(t+1) = w_{i,j}(t) \cdot \exp\left(\eta \cdot \hat{r}_{i,j}(t)\right)\]
where
\[\hat{r}_{i,j}(t) = \begin{cases}
\frac{\rho_t}{\max(p_{i,j}(t), C)} & \text{if } x_t(i) = j \\
0 & \text{otherwise}
\end{cases}\]
with clipping threshold \(C > 0\) (e.g., \(C = 0.01\)), learning rate \(\eta\), and \(\rho_t \in [-B, B]\). 

If the objective function is $L$-Lipschitz continuous (Assumption 2), then:

\textbf{(a) Per-subproblem bound:} For each subproblem $k \in [K]$ of size $d$, the Exp3 regret over $T$ iterations satisfies:
\begin{equation}
    \label{eq:exp3-subproblem-regret}
    \sum_{t=1}^T [g_k(\vx_{I_k}^*) - g_k(\vx_{I_{k,t}})] \leq \frac{d\log d}{\eta} + \frac{\eta T d B^2}{2C^2} + \frac{2dB\log(d)}{\eta}
\end{equation}

With optimal learning rate $\eta = \sqrt{\frac{2C^2\log d}{TB^2}}$, this simplifies to:
\begin{equation}
    \sum_{t=1}^T [g_k(\vx_{I_k}^*) - g_k(\vx_{I_{k,t}})] = O\left(\frac{dB\sqrt{T\log d}}{C}\right)
\end{equation}

\textbf{(b) Total bound across subproblems:} Summing over all $K$ subproblems and including coupling errors, the total Exp3 regret over $T$ iterations is:
\begin{equation}
    \label{eq:exp3-total-regret}
    R_{\text{Exp3}}(T) \leq \frac{n\log d}{\eta} + \frac{\eta T n B^2}{2C^2} + \frac{2nB\log(d)}{\eta} + L|\gO|\sqrt{T}
\end{equation}

Setting $\eta = \sqrt{\frac{2C^2\log d}{TB^2}}$ yields:
\begin{equation}
    R_{\text{Exp3}}(T) = O\left(\frac{nB\sqrt{T\log d}}{C} + L|\gO|\sqrt{T}\right) = O\left(\frac{KdB\sqrt{T\log d}}{C} + L|\gO|\sqrt{T}\right)
\end{equation}
where the last equality uses $n = Kd$ for the decomposition structure.

\textbf{(c) Total Average regret across subproblems:} Equivalently, in terms of the average subproblem regret \(R_{\mathrm{subproblem}}^{\mathrm{avg}}(T) := \frac{1}{K} \sum_{k=1}^K R_k(T)\), we obtain,
\[R_{\mathrm{subproblem}}^{\mathrm{avg}}(T) = \frac{1}{K} R_{\mathrm{Exp3}}(T) = O(\frac{d B\sqrt{T\log d}}{C} + \frac{L|\gO|}{K}\sqrt{T})\]
The leading EXP3 term scales as \(O\!\left(d \sqrt{T\log d}\right)\) \& is independent of number of subproblems \(K\).
\end{theorem}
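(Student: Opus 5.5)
The plan is to mirror the proof of Theorem~\ref{thm:exp3-combinatorial}, now with clipped estimators, and to control the clipping error with Lemma~\ref{lem:bias-clipped}. The starting point is the instantaneous decomposition of Definition~\ref{def:decomposed-regret}, written as in \eqref{hedge-inst-regret}. This separates the regret into three pieces: a nonpositive optimal gap, the subproblem regret $\sum_k[g_k(\vx^*_{I_k})-g_k(\vx_{I_k,t})]$, and the coupling error $C_t$. For part (a) we fix a subproblem $k$ and a single position $i\in I_k$. The standard exponential-weights potential argument is then run on $\Phi(t)=\log\sum_j w_{i,j}(t)$ under the \emph{clipped} updates.

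\textbf{Step 1 (clipped potential bound).} Since $|\hat r_{i,j}(t)|\le B/C$, the Hoeffding-type step used in the proof of Lemma~\ref{lem:bias-clipped} (Cesa-Bianchi--Lugosi, Lemma 2.2) gives
\[\Phi(t+1)-\Phi(t)\le \eta\sum_j p_{i,j}(t)\hat r_{i,j}(t)+\tfrac{\eta^2B^2}{2C^2}.\]
We telescope this from $\Phi(1)=\log d$ and lower bound $\Phi(T+1)$ by $\eta\sum_t\hat r_{i,j^*}(t)$. This yields, for the best fixed value $j^*$, a regret bound against the \emph{clipped} estimates of
\[\frac{\log d}{\eta}+\frac{\eta TB^2}{2C^2}.\]

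\textbf{Step 2 (de-biasing).} The bound from Step 1 is stated in clipped estimates, while the regret must be measured against the true expected rewards. We convert it by adding and subtracting $\E[\rho_t]$. The discrepancy is exactly the per-position clipping bias. Lemma~\ref{lem:bias-clipped}, restricted to one position, bounds this bias by $2B\log d/\eta$. Because the estimator only underestimates, this correction is applied once per position, on the comparator side.

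\textbf{Step 3 (summing over positions).} Summing Steps 1--2 over the $d$ positions of $I_k$ uses the surrogate additivity (as in Assumption~A.5' and the Exp3 proof), which identifies $g_k$-regret with the sum of per-position regrets. The result is \eqref{eq:exp3-subproblem-regret}. Choosing $\eta=\sqrt{2C^2\log d/(TB^2)}$ balances $d\log d/\eta$ against $\eta TdB^2/(2C^2)$. The bias term $2dB\log d/\eta$ then becomes $O(dB\sqrt{T\log d}/C)$ as well, which gives the simplified rate.

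\textbf{Parts (b) and (c).} For (b) we sum (a) over $k$ and use $n=Kd$. We then add the coupling contribution $L|\gO|\sqrt T$, obtained exactly as in Theorem~\ref{thm:exp3-combinatorial}: per-round coupling is bounded by $L|\gO|$ via Assumption~3, and the $\sqrt T$ cumulative rate comes from the concentration step used there. Part (c) follows by dividing (b) by $K$.

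\textbf{Main obstacle.} The delicate step is Step 2, specifically justifying the conversion from clipped-estimate regret to true-reward regret. This requires the expected clipped estimate of the comparator $j^*$ to be related to $\E[\rho_t]$ conditionally on $x_t(i)=j^*$. That relation needs the exogeneity/additivity structure of Assumption~A.5' rather than mere Lipschitzness. I would invoke A.5' to write $\E[\rho_t\mid\mathcal F_{t-1},x_t(i)=j]=\mu_{i,j}+b_{i,t}$. The $j$-independent term $b_{i,t}$ then cancels in regret differences, after which the bias bound of Lemma~\ref{lem:bias-clipped} is applied. A secondary issue is that with $|\hat r|\le B/C$ the Hoeffding step formally requires $\eta B/C\le 1$; the chosen $\eta$ satisfies this for $T\ge 2\log d$.
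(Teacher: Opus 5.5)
Your outline follows the same decomposition as the paper's proof: an Exp3 potential bound whose second-order term is inflated to $B^2/C^2$, plus the clipping bias of Lemma~\ref{lem:bias-clipped}, summed over positions and then subproblems. Step~1 is fine as written; Hoeffding's lemma on a range of width $2B/C$ gives exactly $\eta^2B^2/(2C^2)$, with no condition on $\eta B/C$.

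\textbf{The gap is Step~2.} Set $m_{i,j}(t):=\mathbb{E}[\rho_t\mid\mathcal F_{t-1},\,x_t(i)=j]$. Then $\mathbb{E}[\hat r_{i,j}(t)\mid\mathcal F_{t-1}]=\min\{1,p_{i,j}(t)/C\}\,m_{i,j}(t)$. So, for nonnegative rewards, the true per-position regret exceeds the expected clipped regret of Step~1 by at most the comparator shortfall
\[
S_i:=\sum_{t=1}^T\mathbb{E}\Bigl[\Bigl(1-\frac{p_{i,j^*}(t)}{C}\Bigr)_{+}m_{i,j^*}(t)\Bigr],
\]
exactly as you anticipate. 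But Lemma~\ref{lem:bias-clipped} does not bound $S_i$. Its left-hand side is the bias at the \emph{played} value, which its proof rewrites as $\sum_t\sum_j p_{i,j}(t)\bigl(\tilde r_{i,j}(t)-\hat r_{i,j}(t)\bigr)$. That is the learner-side term, weighted by the learner's own probabilities --- the very term you correctly discard.

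$S_i$ carries no factor $p_{i,j^*}(t)$. Each round with $p_{i,j^*}(t)\le C/2$ contributes at least $m_{i,j^*}(t)/2$, and nothing in your argument keeps $p_{i,j^*}(t)$ above $C$:
\begin{itemize}
\item With $d>1/C$ values, the uniform initialization already has $p_{i,j}(1)=1/d<C$.
\item Reward-side clipping works against recovery. If $p_{i,j^*}(t)<C\,m_{i,j'}(t)/m_{i,j^*}(t)$ for some $j'$ with $p_{i,j'}(t)\ge C$, the expected increment of $\log(w_{i,j^*}/w_{i,j'})$ is $\eta\bigl(\min\{1,p_{i,j^*}(t)/C\}\,m_{i,j^*}(t)-m_{i,j'}(t)\bigr)<0$, so $j^*$ is suppressed further.
\end{itemize}
Under the theorem's hypotheses alone, nothing makes $m_{i,j}(t)$ stationary. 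An initial stretch of rounds favouring another value then drives $p_{i,j^*}$ into this trap, and $S_i$ grows linearly in $T$. Even under A.5' you would need a separate, gap-dependent argument that $p_{i,j^*}$ leaves the clipped region quickly. Note also that A.5' does not neutralize the offset here: $b_{i,t}$ enters $S_i$ through $m_{i,j^*}(t)=\mu_{i,j^*}+b_{i,t}$.

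Closing Step~2 requires an ingredient that keeps clipping inactive at the comparator or makes its bias optimistic. Options include uniform mixing with $\gamma\ge dC$ (at extra cost $\gamma BT$), a threshold $C=C_T\to0$, or a loss-based implicit-exploration estimator. Each of these changes the claimed fixed-$C$ rate $O(dB\sqrt{T\log d}/C)$. The paper's proof adds the lemma's bias term at exactly this point, so it does not supply the missing ingredient either.

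\textbf{A second, smaller gap is the coupling term in~(b).} A per-round bound $|C_t|\le L|\gO|$ only gives $\sum_t C_t\le L|\gO|\,T$. Concentration inequalities control a martingale's deviation from its mean, not the size of a sum of non-centred terms, so they cannot produce $L|\gO|\sqrt T$. To get a $\sqrt T$ coupling term, invoke instead the diminishing-overlap bound, Theorem~\ref{thm:vanishing-coupling} with $\alpha=1/2$ (or Theorem~\ref{thm:coupling-error-bound}), which gives $O(\sqrt T)$ with different constants.
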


\begin{proof}
Following the same instantaneous regret decomposition as in Theorem~\ref{thm:exp3-combinatorial}:
\begin{equation}
    r_t = \underbrace{f(\vx^*) - \sum_{k=1}^K g_k(\vx_{I_k}^*)}_{\text{optimal gap} \leq 0} + \underbrace{\sum_{k=1}^K [g_k(\vx_{I_k}^*) - g_k(\vx_{I_{k,t}})]}_{\text{Exp3 subproblem regret}} + \underbrace{C_t}_{\text{coupling error}}
\end{equation}

We prove part (a) first, then aggregate to obtain part (b).

% Part (a): Per-subproblem bound
Let's consider a single subproblem \(k\) of size \(d\). Note, the key difference from standard Exp3 is that the importance-weighted estimator is now \textbf{biased} due to clipping. Therefore, let's decompose the subproblem regret into two components - 1) Standard Exp3 exploration-exploitation tradeoff and 2) Bias from clipping.
% \textit{Component 1: Standard Exp3 exploration-exploitation tradeoff.}
If the unbiased estimator is defined as:
\[\tilde{r}_{i,j}(t) = \begin{cases}
\frac{\rho_t}{p_{i,j}(t)} & \text{if } x_t(i) = j \\
0 & \text{otherwise}
\end{cases}\]
By standard Exp3 analysis (Auer et al., 2002), if we were using $\tilde{r}_{i,j}(t)$, the regret for each position $i \in I_k$ would be:
\[\sum_{t=1}^T \left[\max_j \mathbb{E}[\rho_t(j)] - \mathbb{E}[\rho_t(x_t(i))]\right] \leq \frac{\log d}{\eta} + \frac{\eta T B^2}{2C^2}\]
where the variance term increases by $1/C^2$ because:
\[\text{Var}[\hat{r}_{i,j}(t)] \leq \frac{B^2}{C^2}\]

Summing over all $d$ positions in subproblem $k$:
\[\sum_{t=1}^T [g_k(\vx_{I_k}^*) - g_k(\vx_{I_{k,t}})] \leq \frac{d\log d}{\eta} + \frac{\eta T d B^2}{2C^2}\]

% \textit{Component 2: Bias from clipping.}
However, we are actually using $\hat{r}_{i,j}(t)$ instead of $\tilde{r}_{i,j}(t)$, which introduces bias. By Lemma~\ref{lem:bias-clipped}, for the $d$ positions in subproblem $k$, the total bias over all rounds is:
\[\sum_{t=1}^T \sum_{i \in I_k} \left|\mathbb{E}[\hat{r}_{i,x_t(i)}(t)] - \rho_t\right| \leq \frac{2dB\log(d)}{\eta}\]

\textit{Combining both components:}

The total regret for subproblem $k$ is:
\begin{equation}
\sum_{t=1}^T [g_k(\vx_{I_k}^*) - g_k(\vx_{I_{k,t}})] \leq \frac{d\log d}{\eta} + \frac{\eta T d B^2}{2C^2} + \frac{2dB\log(d)}{\eta} = \frac{(1 + 2B)d\log d}{\eta} + \frac{\eta T d B^2}{2C^2}
\end{equation}

Setting $\eta = \sqrt{\frac{2C^2(1 + 2B)\log d}{TB^2}}$ gives:
\begin{align}
\sum_{t=1}^T [g_k(\vx_{I_k}^*) - g_k(\vx_{I_{k,t}})] &= \frac{(1+2B)d\log d}{\sqrt{\frac{2C^2(1+2B)\log d}{TB^2}}} + \frac{\sqrt{\frac{2C^2(1+2B)\log d}{TB^2}} \cdot T d B^2}{2C^2} \\
&= d\sqrt{\frac{(1+2B)TB^2\log d}{2C^2}} + d\sqrt{\frac{(1+2B)TB^2\log d}{2C^2}} \\
&= 2d\sqrt{\frac{(1+2B)TB^2\log d}{2C^2}} = O\left(\frac{dB\sqrt{T\log d}}{C}\right)
\end{align}

This proves part (a).

\textbf{Part (b): Total bound across subproblems.}

Summing the per-subproblem bounds over all $K$ subproblems:
\[\sum_{k=1}^K \sum_{t=1}^T [g_k(\vx_{I_k}^*) - g_k(\vx_{I_{k,t}})] \leq K \cdot \left(\frac{(1+2B)d\log d}{\eta} + \frac{\eta T d B^2}{2C^2}\right) = \frac{(1+2B)n\log d}{\eta} + \frac{\eta T n B^2}{2C^2}\]
where we used $n = Kd$ (total number of positions equals $K$ subproblems times $d$ positions per subproblem).

Adding the coupling error term: By Assumption 3 (Bounded Coupling), the coordination violations satisfy $|C_t| \leq L|\gO|$ per iteration. By concentration inequalities:
\[\sum_{t=1}^T C_t = O(L|\gO|\sqrt{T}) \tag*{(where \(|\gO| \le KQ)\)}\]
Combining all terms:
\begin{align}
R_{\text{Exp3}}(T) &\leq \underbrace{\frac{n\log d}{\eta} + \frac{\eta T n B^2}{2C^2} + \frac{2nB\log(d)}{\eta}}_{\text{subproblem regret}} + \underbrace{L|\gO|\sqrt{T}}_{\text{coupling error}}
% &= \frac{3n\log d}{\eta} + \frac{\eta T n B^2}{2C^2} + L|\gO|\sqrt{T}
\end{align}

Setting $\eta = \sqrt{\frac{2C^2\log d}{TB^2}}$ to balance the first two terms:
\begin{align}
R_{\text{Exp3}}(T) &= \frac{(1+2B)n\log d}{\sqrt{\frac{2C^2\log d}{TB^2}}} + \frac{\sqrt{\frac{2C^2\log d}{TB^2}} \cdot T n B^2}{2C^2} + L|\gO|\sqrt{T} \\
&= n\sqrt{\frac{TB^2(1+2B) \log d}{2C^2}} + n\sqrt{\frac{TB^2 \log d}{2C^2}} + L|\gO|\sqrt{T} \\
&= (\sqrt{(1+2B)}+1) \cdot n\sqrt{\frac{TB^2 \log d}{2C^2}} + L|\gO|\sqrt{T} \\
&= O\left(\frac{nB\sqrt{T\log d}}{C} + L|\gO|\sqrt{T}\right)
\end{align}

Using $n = Kd$:
\[R_{\text{Exp3}}(T) = O\left(\frac{KdB\sqrt{T\log d}}{C} + L|\gO|\sqrt{T}\right)\]

This completes the proof of part (b).
\end{proof}

\textbf{Remarks}

\begin{enumerate}
\item \textbf{Impact of clipping constant $C$:} The regret scales as $1/C$, so smaller clipping thresholds lead to higher regret. However, in practice, $C = 0.01$ provides a good balance between variance reduction and bias.

\item \textbf{Comparison to unclipped Exp3:} If $C = O(1/\sqrt{T})$ (diminishing clipping), the bias term becomes $O(n\sqrt{T\log d})$, matching the standard Exp3 bound. With fixed $C$, we pay an extra factor of $1/C$ but gain significant practical stability.

\item \textbf{Bias vs. variance tradeoff:} The clipping introduces bias of order $O(nB\log d / \eta)$ but reduces variance from $O(B^2/p_{\min}^2)$ to $O(B^2/C^2)$ where $C$ is constant. The bias is absorbed into the leading $O(\sqrt{T})$ term.

\item \textbf{Key insight:} The bias diminishes \textbf{not because the threshold decreases}, but because Exp3 concentrates probability mass on good actions over time, making clipping events increasingly rare.
\end{enumerate}

%----------------- FTRL -----------------------
\subsubsection{FTRL Regret}
Follow-the-Regularized-Leader is a classic online learning algorithm using convex losses so naturally, this does not align with our setting. Combinatorial (ex: TSP) or categorical multiobjective problems especially in our case use non-convex losses. 

So we use a clever trick - linearization via Probablilty simplex! Since we have a bandit (multi-arm) setting we lift it to continuous space by we choose a position/action by sampling from a distribution of as \(p_i \in \Delta_d = \{p \in \mathbb R^n: p_j \geq 0, \sum_jp_j = 1\}\) where \(p_{i,j} = Pr\)(assign city j to position i). Mathematically, 
\begin{gather*}
    \text{Discrete problem}: x_i \in \{0, 1, ... n-1\}\;\; \text{choose 1 city for position i}\\
    \text{Lifted continuous problem}: p_i \in \Delta_d = \{p \in \mathbb R^n: p_j \geq 0, \sum_jp_j = 1\}
\end{gather*}
where \(p_{i,j} = Pr\)(assign city j to position i). 
\\
Therefore, based on which position/action is selected we assign it a reward and this results in expected loss to be convex! 
\\
Mathematically, if we define any position-wise loss for any subproblem k based on \((i,j)\) as \(\ell_t^{(k)}(i,j)\) 
\[\ell_t^{(k)}(i,j) = \text{marginal loss contribution when position i has city j}\]
Where marginal loss means the contribution of placing city 
\(j\) at position \(i\) is the change in total tour cost caused by that assignment. Then, 
\begin{gather*}
    % \text{Define } \ell_t(i,j) \text{ as the normalized marginal loss:}\\
    \ell_t^{(k)}(i,j) = 1 - r_t^{(k)}(i,j)\\
    \text{where } r_t(i,j) = \frac{\text{Reward}(\text{solution with city } j \text{ at position } i) - R_{\min}}{R_{\max} - R_{\min}}\\
    \\
    \text{This captures the cost contribution of assignment } (i,j) \text{ to the full tour.}\\
    % \text{Then: } g_k(x_{I_k}^*) - g_k(x_{I_{k,t}}) = \sum_{i \in I_k} [\ell_t(i, x_{i,t}) - \ell_t(i, x_i^*)]
\end{gather*}
So now how do we connect any subproblem's objective to position-wise losses? It is quite straightforward, since we define \(\ell_t^{(k)}(i,j)\) as the normalized marginal loss,
we're measuring the incremental effect of each assignment, not assuming positions are independent.
% While ideally under this assumption we ignore accounting for edge effects and coupling between positions but this assumption is reasonable since we do [reason of resonability]
Therefore, by our definition, 
\begin{gather*}
    g_k(x_{I_k}^*) - g_k(x_{I_{k,t}})  =  \sum_{i \in I_k} [r_t^{(k)}(i,x_{I_k}^*) - r_t^{(k)}(i,x_{I_{k,t}})]
    =  \sum_{i \in I_k} [\ell_t^{(k)}(i, x_{i,t}) - \ell_t^{(k)}(i, x_i^*)]
\end{gather*}
Additionally, note since we define \(\ell_t^{(k)}(i,j) = \) 1 - reward, then expected loss under distribution \(p_i\) is linear in \(p_i\) ,
\begin{gather*}
    \text{If any loss function}\; \ell_t^{(k)}(i,j) := (1 - r_t^{(k)}(i,j))\\
    \mathbb{E}_{j \sim p_i}[\ell_t^{(k)}(i,j)] = \sum_{j=0}^{d-1} p_{i,j} \cdot \ell_t^{(k)}(i,j) = \langle p_i, \ell_t^{(k)}(i) \rangle
    % \text{Regret against best fixed distribution}\; p^*
\end{gather*}
Now, at each round \(t\), FTRL selects a distribution \(p_i \in \Delta_d\) for position \(i\), then the (FTRL) regret for position \(i\) is defined against the best fixed distribution \(p_i^* \in \Delta_d\):

\begin{equation}
    R_i^{\mathrm{FTRL}}(T) = \sum_{t=1}^T \mathbb{E}_{j \sim p_i}[\ell_t^{(k)}(i,j)] - \min_{p_i^* \in \Delta_d} \sum_{t=1}^T \langle p_i^*, \ell_t^{(k)}(i) \rangle
\end{equation}
Crucially, since the loss is linear in \(p_i\), the minimum over the simplex \(\Delta_d\)  is achieved at a vertex, i.e., a deterministic action:
\begin{equation}
    \min_{p_i^* \in \Delta_d} \sum_{t=1}^T \langle p_i^*, \ell_t^{(k)}(i) \rangle = \min_{j \in [d]} \sum_{t=1}^T \ell_t^{(k)}(i,j)
\end{equation}
This connects the continuous formulation back to competing against the best fixed discrete action in hindsight. Hence, we can \textbf{apply convex optimization results} to obtain regret bounds for the original discrete problem!
%%%%%%%%

%% COMPLETE FTRL REGRET BOUND
\begin{theorem}[\textbf{FTRL Regret under Decomposed Subproblems}]
\label{thm:ftrl-regret}
Consider a combinatorial optimization problem over domain of size $n$, decomposed into $K$ subproblems, each of size $d$ with overlap $|\mathcal{O}|$. Let $w_{i,j}(t) \in \mathbb{R}_+$ denote the FTRL weight for assigning value $j \in [d]$ to position $i \in [n]$ at iteration $t$. Using the probability simplex lifting, FTRL maintains cumulative losses and selects actions via:
\[p_i^{(t)} = \arg\min_{p_i \in \Delta_d} \left\{ \sum_{s=1}^{t-1} \langle p_i, \ell_s(i) \rangle + \frac{1}{\eta} \mathrm{Reg}(p_i) \right\}\]
where $\mathrm{Reg}(p) = -\sum_{j=1}^d p_j \log p_j$ is the negative entropy regularizer and $\eta = \sqrt{\frac{2\log d}{T}}$.

If the objective function is $L$-Lipschitz continuous (Assumption 2), then:

\textbf{(a) Per-subproblem bound:} For each subproblem $k \in [K]$ of size $d$, with $T_k$ active rounds:
\begin{equation}
\sum_{t \in \mathcal{T}_k} [g_k(\vx_{I_k}^*) - g_k(\vx_{I_{k,t}})] \leq d\sqrt{2T_k \log d}
\end{equation}

\textbf{(b) Total bound across subproblems:} The total FTRL regret satisfies:
\begin{equation}
R_{\mathrm{FTRL}}(T) \leq d\sqrt{2KT\log d} + L|\mathcal{O}|\sqrt{T}
\end{equation}
where $|\mathcal{O}| = O(KQ)$.

\textbf{(c) Average subproblem regret:}
\begin{equation}
R_{\mathrm{subproblem}}^{\mathrm{avg,FTRL}}(T) = \frac{1}{K} R_{\mathrm{FTRL}}(T) = O\left(d\sqrt{\frac{T\log d}{K}} + \frac{L|\mathcal{O}|}{K}\sqrt{T}\right)
\end{equation}
The leading FTRL term is $O(d\sqrt{T\log d})$ and is independent of the number of subproblems $K$ (up to the $\sqrt{1/K}$ improvement factor).
\end{theorem}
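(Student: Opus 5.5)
The argument mirrors the Exp3 analysis of Theorem~\ref{thm:exp3-clipped}, now using the simplex lifting just described. We work position by position, then aggregate over subproblems.

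\textbf{Step 1: per-position FTRL guarantee.} Fix a subproblem $k$ and a position $i\in I_k$, and restrict attention to the active rounds $\mathcal T_k$. By the lifting, the expected loss $\langle p_i,\ell_t^{(k)}(i)\rangle$ is linear on $\Delta_d$, and the comparator reduces to the best fixed vertex. With the entropic regularizer, FTRL coincides with exponential weights. The standard FTRL bound (cf.\ \cite{hazan2016introduction,shalev2012online}) then gives
\[
R_i^{\mathrm{FTRL}}(T_k)\le \frac{\log d}{\eta}+\frac{\eta}{2}\sum_{t\in\mathcal T_k}\|\ell_t^{(k)}(i)\|_\infty^2 .
\]
Here $\log d$ is the range of the entropy on $\Delta_d$, and the second term uses strong convexity of the entropy with respect to $\|\cdot\|_1$. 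Since the normalized losses $\ell=1-r$ lie in $[0,1]$, the second term is at most $\eta T_k/2$. Tuning $\eta=\sqrt{2\log d/T_k}$ yields $R_i^{\mathrm{FTRL}}(T_k)\le\sqrt{2T_k\log d}$.

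\textbf{Step 2: subproblem bound (a).} Use the marginal-loss identity established above,
\[
g_k(x^*_{I_k})-g_k(x_{I_k,t})=\sum_{i\in I_k}\big[\ell_t^{(k)}(i,x_{i,t})-\ell_t^{(k)}(i,x_i^*)\big].
\]
Summing over $t\in\mathcal T_k$ and taking expectation over the sampled actions replaces each loss term with $\langle p_i^{(t)},\ell_t^{(k)}(i)\rangle$. Each per-position sum is then bounded by $R_i^{\mathrm{FTRL}}(T_k)$, since the fixed $x_i^*$ is no better than the best vertex. Summing over the $d$ positions gives $d\sqrt{2T_k\log d}$.

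\textbf{Step 3: aggregation, (b) and (c).} Sum (a) over $k$ and apply Cauchy--Schwarz, $\sum_k\sqrt{T_k}\le\sqrt{KT}$, exactly as in Theorem~\ref{thm:ucb-global}(b). This gives $d\sqrt{2KT\log d}$. Add the coupling term from Definition~\ref{def:decomposed-regret}: as in the proof of Theorem~\ref{thm:exp3-clipped}(b), Assumption~A.3 bounds $|C_t|\le L|\gO|$, and the cumulative term is taken to be $O(L|\gO|\sqrt T)$. Note that this step inherits the concentration argument used there rather than re-deriving it. Dividing by $K$ then yields (c).

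\textbf{Main obstacle.} The difficulty is justifying that FTRL faces full-information linear losses $\ell_t^{(k)}(i,\cdot)$ over all values $j$, while only a single scalar reward is observed. In the algorithm, $\tilde L$ accumulates clipped importance-weighted losses $\hat\ell=\ell/\max(p,p_{\min})$. I would first try to treat these as an estimated loss vector. Its expectation equals the true marginal loss up to the clipping bias, which Lemma~\ref{lem:bias-clipped} controls at $O(d\log d/\eta)$ per subproblem. Its second moment is bounded by $1/p_{\min}$, and the local-norm version of the FTRL bound gives $\eta\sum_t\sum_j p_j\hat\ell_j^2\le \eta d T_k$ in the standard Exp3 manner.

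This route worsens the constant, and possibly introduces a factor $\sqrt d$ in place of the stated $d\sqrt{2T_k\log d}$ form. If the stated constant must be matched exactly, I would instead state the result under the marginal-loss oracle, with the bandit-estimation overhead deferred to the Exp3 analysis.
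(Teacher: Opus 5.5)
Your proposal follows the paper's proof essentially step for step: the per-position entropic FTRL bound $\frac{\log d}{\eta}+\frac{\eta}{2}\sum_t\|\ell_t\|_\infty^2$ with losses in $[0,1]$, then summation over the $d$ positions via the marginal-loss identity, then Cauchy--Schwarz $\sum_k\sqrt{T_k}\le\sqrt{KT}$ for (b), with the coupling term $L|\mathcal{O}|\sqrt{T}$ simply carried over from the Exp3 proof and division by $K$ for (c). Your per-subproblem $\sqrt{T_k}$ bound relies on tuning $\eta$ to $T_k$ rather than using the stated $\eta=\sqrt{2\log d/T}$, which would give only $d\sqrt{2T\log d}$ per subproblem and lose the $\sqrt{1/K}$ gain; the paper's Lemma 4 does the same implicitly, deriving $\sqrt{2T\log d}$ and then substituting $T_k$, and your ``marginal-loss oracle'' fallback is exactly the full-information setting in which the paper's argument is carried out.
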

\begin{proof}
FTRL maintains cumulative losses and selects:
\[x_{k,t} = \arg\min_{x_k} \Big \{ \sum_{s=1}^{t-1} l_s(x_k) + \frac{1}{\eta} \mathrm{Reg}(x_k)\Big \}\]
where $l_t(i,j) = 1 - r_t$ when position $i$ has value $j$, $r_t(x_k) \in [0,1]$ is normalized reward and regularizer $\mathrm{Reg}(x) = -\sum_j x_j \log x_j$ (negative entropy) with learning rate $\eta = \sqrt{\frac{2\log d}{T}}$. 

Following similar analysis as in previous sections, using instantaneous regret definition:
\begin{equation}
    r_t = \underbrace{f(x^*) - \sum_{k=1}^K g_k(x_{I_k}^*)}_{\text{optimal regret } \leq 0} + \underbrace{\sum_{k=1}^K [g_k(x_{I_k}^*) - g_k(x_{I_{k,t}})]}_{\text{FTRL subproblem regret}} + \underbrace{C_t}_{\text{coupling error}}
\end{equation}

We bound FTRL subproblem regret as in Lemma~\ref{lemma-ftrl}. Summing over all $K$ subproblems:
\begin{gather*}
    \sum_{k=1}^K \sum_{t=1}^{T_k} [g_k(x_{I_k}^*) - g_k(x_{I_{k,t}})] \leq \sum_{k=1}^K d\sqrt{2T_k \log d}\\
    \text{Using Cauchy-Schwarz inequality:}\\
    \sum_{k=1}^K \sqrt{T_k} \leq \sqrt{K \sum_{k=1}^K T_k} = \sqrt{KT}\\
    \text{Therefore:}\\
    \sum_{k=1}^K d\sqrt{2T_k \log d} \leq d\sqrt{2\log d} \cdot \sqrt{KT} = d\sqrt{2KT\log d}
\end{gather*}

Adding the coupling error term $L|\mathcal{O}|\sqrt{T}$ (where $|\mathcal{O}| = O(KQ)$) gives part (b). Part (c) follows by dividing by $K$.
\end{proof}

%%%%%%%%%%%%%%%%%%%%%%%%%%%%%%%%%%%%%%%%%%%%%%%%%%%

% \section{Decomposition Validity}
% In this section, we will proof that decomposition on discrete domain as we defined is feasible and and error that arises due to such a decomposition of problem is bounded by a constant.

% \begin{lemma}[\textbf{Metric-Decomposed Local Stability}]
% \label{lem:metric-stability}
% Let $S_k$ be a subproblem defined as a metric ball of radius $r$ (or the $s$
% nearest neighbors) under metric $d$. Then for any neighboring $\vx,\vx'$ differing only on
% indices in $S_k$,
% \[
% d(\vx,\vx') \;\le\; \sum_{i \in S_k} w_i \;\le\; W(S_k),
% \]
% where $w_i$ are the coordinate weights induced by $d$.
% Consequently, by Assumption~\ref{assumption:lip},
% \[
% |f_j(\vx') - f_j(\vx)| \;\le\; L \cdot W(S_k).
% \]
% \end{lemma}
% \noindent
% Metric decomposition ensures that local search steps induce bounded objective variation.

% \begin{lemma}[\textbf{Overlap-Induced Coupling Bound}]
% \label{lem:overlap-coupling}
% Under Assumption~\ref{assumption:bounded-coupling}, for any local update on block
% $I_k$,
% \[
% |f_j(\vx') - f_j(\vx)| \;\le\; L \cdot W(S_k) \;+\; C \cdot |I_k \cap \mathcal O|.
% \]
% \end{lemma}

% \begin{corollary}
% If each index belongs to at most $\rho$ metric blocks and $|S_k| \le s$, then
% \[
% |I_k \cap \mathcal O| \le (\rho-1)s,
% \]
% and the interaction overhead grows linearly with overlap multiplicity.
% \end{corollary}

\subsection{Decomposition Validity}
\label{sec:decomposition-validity}

In this section, we prove that our metric-based decomposition satisfies the assumptions required for bounded regret. We establish that the decomposition is well-defined, the induced local modifications have bounded effect on the objective, and the overlap structure admits controlled coupling error.

%---------------------- Definition ----------------------%
\subsubsection{Decomposition Construction}

\begin{definition}[\textbf{Metric-Based Decomposition}]
\label{def:metric-decomp}
Let $(\mathcal{X}, \delta)$ be the solution space equipped with a problem-specific metric $\delta$, and let $D: [n] \times [n] \to \mathbb{R}_+$ be a problem-specific distance on indices (e.g., geographic distance for TSP, similarity for clustering). We construct subproblems via two complementary strategies:

\begin{enumerate}
    \item \textbf{(Sliding Window)} For initialization and maintaining coverage, define subproblems $k \in \{1, \ldots, K\}$  as:
    \[
    S_k = \left\{(k-1)(s-o_t)+1, \ldots, \min\left(n, (k-1)(s-o_t)+s\right)\right\}
    \]
    where $s$ is the subproblem size and $o_t$ is the overlap at iteration $t$. Under diminishing overlap:
    \[
    o_t = \left\lfloor o_0 \cdot t^{-\alpha} \right\rfloor
    \]
    for initial overlap $o_0$ and decay rate $\alpha > 0$.
    
    \item \textbf{(k-Nearest Neighbors)} For adaptive refinement, given a center $c \in [n]$, define:
    \[
    S_c = \arg\min_{S \subseteq [n], |S|=s} \sum_{j \in S} D_{c,j}
    \]
    That is, $S_c$ contains the $s$ indices closest to center $c$ under distance $D$.
\end{enumerate}

The full decomposition $\{S_1, \ldots, S_K\}$ combines sliding windows (for coverage) with k-NN subproblems (for problem-structure exploitation).
\end{definition}
When overlap is time-varying, we write $S_k^{(t)}$ to denote the subproblem defined at iteration $t$; when the time index is omitted, the statements apply uniformly over all iterations.

\begin{figure}[htb]
    \centering
    % Include the compiled PDF using \includegraphics
    \includegraphics[width=0.6\linewidth]{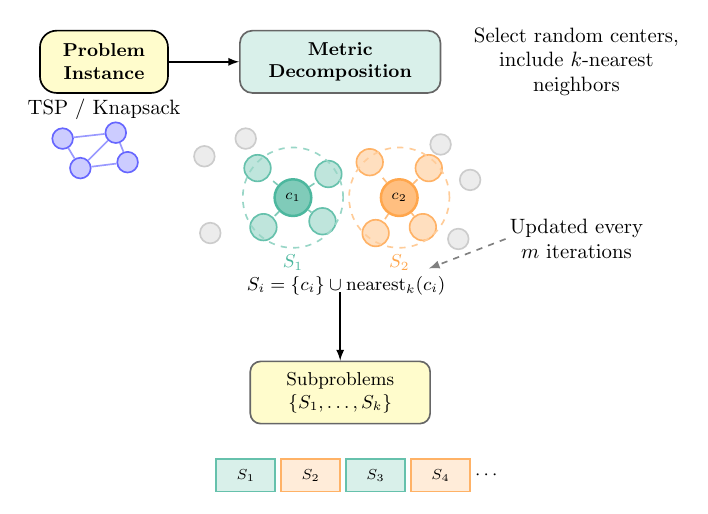}
    \caption{Diagram generated from TikZ code in a separate file.}
    \label{fig:decomposition}
\end{figure}

\begin{remark}[\textbf{Domain Agnosticity}]
The decomposition operates on \emph{indices} $[n]$, not on solution values. This makes it domain-agnostic: the same construction applies to permutations (TSP), binary vectors (Knapsack), and other combinatorial domains. The metric $\delta$ on $\mathcal{X}$ and the distance $D$ on $[n]$ are problem-specific parameters.
\end{remark}

%---------------------- Local Stability ----------------------%
\subsubsection{Local Stability}

We first establish that modifications restricted to a subproblem induce bounded changes in the objective.

\begin{lemma}[\textbf{Subproblem Diameter Bound}]
\label{lem:subproblem-diameter}
Let $S_k \subseteq [n]$ be a subproblem with $|S_k| = s$. For any $x, x' \in \mathcal{X}$ that differ only on coordinates indexed by $S_k$ (i.e., $x_i = x'_i$ for all $i \notin S_k$):
\[
\delta(x, x') \le D(S_k)
\]
where $D(S_k)$ is the diameter of $S_k$ under metric $d$, defined as:
\[
D(S_k) := \max_{x, x' \in \mathcal{X} : x_i = x'_i \, \forall i \notin S_k} \delta(x, x')
\]
The diameter depends on the metric structure:
\begin{itemize}
    \item For Hamming distance: $D(S_k) \le |S_k| = s$
    \item For Kendall tau distance: $D(S_k) \le \binom{s}{2} = O(s^2)$
    \item For weighted metrics: $D(S_k) \le \sum_{i \in S_k} w_i$
\end{itemize}
\end{lemma}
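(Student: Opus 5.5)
The first inequality $\delta(x,x')\le D(S_k)$ follows directly from the definition. $D(S_k)$ is the maximum of $\delta$ over all pairs that agree outside $S_k$, and the given pair $(x,x')$ is one such pair. The maximum exists because $\mathcal{X}$ is finite in every combinatorial domain we consider. So the substantive work is the three metric-specific bounds on $D(S_k)$, and I will treat each one separately.

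\emph{Hamming case.} Write $\delta(x,x')=\sum_{i=1}^n \mathbb{I}[x_i\neq x'_i]$. Every term with $i\notin S_k$ is zero, so $\delta(x,x')\le |S_k| = s$. Since this holds for every admissible pair, taking the maximum gives $D(S_k)\le s$.

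\emph{Weighted case.} Assume the weighted metric has the separable form $\delta(x,x')=\sum_i w_i\,\mathbb{I}[x_i\neq x'_i]$ with $w_i\ge 0$. The same restriction argument then gives $D(S_k)\le \sum_{i\in S_k} w_i$. I will state this separability explicitly, since it is the natural reading of the ``coordinate weights induced by $\delta$''.

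\emph{Kendall tau case.} Here $\delta(x,x')$ counts the pairs $\{a,b\}$ whose relative order differs between the permutations $x$ and $x'$. If $x$ and $x'$ agree outside $S_k$, then $x'$ is obtained from $x$ by permuting, among the positions in $S_k$, the set of elements $E=\{x_i : i\in S_k\}$; note that $|E|=s$. I would then split the pairs $\{a,b\}$ into three classes.
\begin{enumerate}
\item Both $a,b\notin E$: their positions are unchanged, so the pair is concordant.
\item Both $a,b\in E$: there are at most $\binom{s}{2}$ such pairs.
\item Exactly one of $a,b$ lies in $E$, say $a\in E$ and $b\notin E$: the position of $b$ is fixed, but $a$ may move from one side of $b$ to the other.
\end{enumerate}

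The main obstacle is the third class. For a contiguous sliding window $S_k$, every $b$ in such a mixed pair sits outside the window, hence entirely to the left or entirely to the right of all positions in $S_k$. Its order relative to any $a\in E$ is therefore preserved, mixed pairs contribute nothing, and we get $D(S_k)\le\binom{s}{2}$. For a non-contiguous set, such as a $k$-NN subproblem, mixed pairs can flip, and the honest bound becomes $D(S_k)\le \binom{s}{2}+s(n-s)$. I would therefore state the $\binom{s}{2}$ bound for windows, where positions are consecutive, and either add a remark restricting the Kendall tau claim to contiguous index sets or interpret $\delta$ restricted to $S_k$ as the Kendall distance of the induced sub-permutation. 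The $O(s^2)$ form is what Proposition~\ref{prop:decomp-valid-main} needs through $\Delta_s$.
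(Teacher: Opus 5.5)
Your Hamming and weighted cases are essentially the paper's proof, including the same separable form $\sum_i w_i\mathbf{1}[x_i\neq x_i']$. Your remark that $\delta(x,x')\le D(S_k)$ is immediate from defining $D(S_k)$ as a maximum separates a step the paper leaves implicit.

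The Kendall tau case is where you diverge, and your version is the correct one. The paper argues that every inversion that changes between $x$ and $x'$ involves at least one index of $S_k$, and concludes there are at most $\binom{s}{2}$ of them. But pairs touching $S_k$ number $\binom{s}{2}+s(n-s)$, so the paper silently drops exactly your mixed class. That class really can contribute. Take $n=3$, $S_k=\{1,3\}$, $x=(a,b,c)$ and $x'=(c,b,a)$: all three pairs are discordant. So the lemma as stated for arbitrary $S_k\subseteq[n]$, and in particular for $k$-NN subproblems, is false, and your restriction to contiguous windows is genuinely needed.

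Two things are worth making explicit in your write-up.

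\begin{itemize}
\item Your window argument depends on the element-pair convention. Under the position-pair reading suggested by the paper's wording (``involve at least one index''), even a window fails. For example, $x=(1,3,2)$, $x'=(3,1,2)$, $S_k=\{1,2\}$ has three discordant position pairs.
\item Of your two proposed fixes, only the window restriction feeds Lemma~\ref{lem:metric-stability}. The induced sub-permutation distance only lower-bounds the global Kendall distance, so Assumption~A.2, which is Lipschitz in the global $\delta$, gives no bound in terms of it. And if one keeps $k$-NN subproblems with the honest bound, $\Delta_s$ becomes $O(sn)$. The $L\Delta_s$ term in Proposition~\ref{prop:decomp-valid-main} is then no longer independent of $n$.
\end{itemize}
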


\begin{proof}
Since $x$ and $x'$ agree on all coordinates outside $S_k$, any contribution to $\delta(x, x')$ must arise from differences within $S_k$. For \textbf{Hamming distance}, $d_H(x, x') = |\{i : x_i \neq x'_i\}| \le |S_k|$ since disagreements can only occur on $S_k$.
For \textbf{Kendall tau distance} on permutations, $d_\tau(x, x')$ counts inversions. If $x'$ differs from $x$ only on positions in $S_k$, then all inversions that differ between $x$ and $x'$ must involve at least one index in $S_k$. The total number of affected inversions is therefore at most $\binom{|S_k|}{2}$.
For \textbf{weighted metrics} of the form $d_w(x, x') = \sum_i w_i \cdot \mathbf{1}[x_i \neq x'_i]$, only coordinates in $S_k$ contribute, giving $d_w(x, x') \le \sum_{i \in S_k} w_i$.
\end{proof}

\begin{lemma}[\textbf{Local Stability under Metric Decomposition}]
\label{lem:metric-stability}
Let $S_k$ be a subproblem constructed via Definition~\ref{def:metric-decomp}. For any $x, x' \in \mathcal{X}$ differing only on coordinates in $S_k$, under Assumption~\ref{assumption:lip} (Lipschitz continuity):
\[
|f_j(x') - f_j(x)| \le L \cdot D(S_k)
\]
for all objectives $j \in \{1, \ldots, m\}$.
\end{lemma}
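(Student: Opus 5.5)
The plan is to chain Lemma~\ref{lem:subproblem-diameter} with Assumption~\ref{assumption:lip}; no further structure of the decomposition is needed. Fix an objective index $j\in\{1,\ldots,m\}$ and a subproblem $S_k$ produced by either construction in Definition~\ref{def:metric-decomp}. The sliding-window and $k$-nearest-neighbour variants differ only in which index set is selected. At any fixed iteration $t$, $S_k^{(t)}$ is just some subset of $[n]$, so the argument treats it as an arbitrary index set and applies uniformly over both constructions and all $t$.

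First, take $x,x'\in\mathcal X$ with $x_i=x'_i$ for every $i\notin S_k$. By definition of $D(S_k)$ as a maximum over exactly such pairs, or equivalently by Lemma~\ref{lem:subproblem-diameter}, we get $\delta(x,x')\le D(S_k)$. Second, apply Assumption~\ref{assumption:lip} to obtain $|f_j(x')-f_j(x)|\le L\,\delta(x,x')$. Third, combine the two inequalities, using $L>0$, to conclude $|f_j(x')-f_j(x)|\le L\cdot D(S_k)$. Since $j$ was arbitrary, the bound holds for all objectives.

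The only point needing care is that the metric in Assumption~\ref{assumption:lip} is the same $\delta$ used to define $D(S_k)$. The appendix sometimes writes it as $d$ and sometimes as $\delta$, so I will state explicitly that these are identified. I will also note that $D(S_k)$ is finite because $\mathcal X$ is finite, so the maximum exists.

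Finally, I will remark that the metric-specific diameters from Lemma~\ref{lem:subproblem-diameter} give concrete instances: $L s$ for Hamming distance and $L\binom{s}{2}$ for Kendall tau. These instances are what feed the $L\cdot\Delta_s$ term in Proposition~\ref{prop:decomp-valid-main}. I expect no real obstacle in this argument.
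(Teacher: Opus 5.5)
Your proposal is correct and follows the paper's argument exactly: $\delta(x,x')\le D(S_k)$ by Lemma~\ref{lem:subproblem-diameter} (or directly by the definition of $D(S_k)$), then Assumption~\ref{assumption:lip}. One caution concerns only your closing side remark: the Kendall-tau instance $D(S_k)\le\binom{s}{2}$ holds for contiguous sliding-window blocks but can fail for non-contiguous $k$-nearest-neighbour index sets (swapping the entries at positions $1$ and $n$ already gives distance $2n-3$), and this does not affect the lemma, since your proof uses $D(S_k)$ only through its definition.
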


\begin{proof}
By Lemma~\ref{lem:subproblem-diameter}, $\delta(x, x') \le D(S_k)$. Applying Assumption~\ref{assumption:lip}:
\[
|f_j(x') - f_j(x)| \le L \cdot \delta(x, x') \le L \cdot D(S_k)
\]
\end{proof}

%---------------------- Overlap Structure ----------------------%
\subsubsection{Overlap Structure and Coupling}

We now characterize the overlap induced by the decomposition and bound its effect on coupling error.

\begin{definition}[\textbf{Overlap Set and Multiplicity}]
\label{def:overlap}
Given a decomposition $\{S_1, \ldots, S_K\}$:
\begin{itemize}
    \item The \emph{multiplicity} of index $i$ is $\rho_i := |\{k : i \in S_k\}|$
    \item The \emph{maximum multiplicity} is $\rho := \max_i \rho_i$
    \item The \emph{overlap set} is $\mathcal{O} := \{i \in [n] : \rho_i \ge 2\}$
    \item The \emph{pairwise overlap} between $S_m$ and $S_l$ is $O_{ml} := S_m \cap S_l$
\end{itemize}
\end{definition}

\begin{lemma}[\textbf{Overlap Bounds}]
\label{lem:overlap-structure}
Let $\{S_1, \ldots, S_K\}$ be a decomposition where each subproblem has size at most $s$ and each index appears in at most $\rho$ subproblems. Then:
\begin{enumerate}
    \item[(a)] \textbf{Per-subproblem overlap:} $|S_k \cap \mathcal{O}| \le s$
    
    \item[(b)] \textbf{Pairwise overlap:} For sliding window decomposition with overlap parameter $o$:
    \[
    |O_{k,k+1}| = o \quad \text{(consecutive subproblems)}
    \]
    
    \item[(c)] \textbf{Total overlap counting:} 
    \[
    \sum_{k=1}^K |S_k \cap \mathcal{O}| \le (\rho - 1) \cdot n
    \]
    
    \item[(d)] \textbf{Overlap set size:}
    \[
    |\mathcal{O}| \le \frac{(\rho - 1) \cdot n}{\rho}
    \]
\end{enumerate}
\end{lemma}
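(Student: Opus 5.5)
The plan is a double-counting argument over the incidence pairs $(i,k)$ with $i\in S_k$, using only Definition~\ref{def:overlap}. For each part I first reduce to a statement about the multiplicities $\rho_i$, then sum.

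\textbf{Parts (a) and (b).} Part (a) is immediate, since $S_k\cap\mathcal{O}\subseteq S_k$ and $|S_k|\le s$.

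For (b) I would use the sliding-window formula of Definition~\ref{def:metric-decomp}. Subproblem $S_k$ starts at $a_k=(k-1)(s-o)+1$, so $S_{k+1}$ starts at $a_k+s-o$. The intersection is therefore $\{a_k+s-o,\dots,a_k+s-1\}$, which has exactly $o$ elements. This requires $0\le o<s$ and that $S_{k+1}$ is not truncated by the $\min(n,\cdot)$ cap. I will state these as standing conditions. For a truncated last window the count is only $\le o$, and I will note this.

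\textbf{Part (c).} First I rewrite the sum by incidence:
\[
\sum_k|S_k\cap\mathcal{O}|=\sum_{i\in\mathcal{O}}\rho_i .
\]
For $i\in\mathcal{O}$ we have $\rho_i\ge2$, hence $\rho_i\le 2(\rho_i-1)$. The natural counted quantity, the excess incidences beyond the first, then satisfies
\[
\sum_{i\in\mathcal{O}}(\rho_i-1)\le(\rho-1)|\mathcal{O}|\le(\rho-1)n .
\]

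This is the main obstacle. The literal sum $\sum_{i\in\mathcal{O}}\rho_i$ can exceed $(\rho-1)n$; for example, $\rho=2$ with every index doubly covered gives $2n>n$. So I expect (c) to hold only with a constant factor, namely $\sum_k|S_k\cap\mathcal{O}|\le 2(\rho-1)n$, or exactly if it is read as counting redundant incidences $\sum_i(\rho_i-1)$. I would prove the redundant-incidence version and record the factor-2 version as what is used downstream. It suffices for Proposition~\ref{prop:decomp-valid-main} up to constants.

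\textbf{Part (d).} Here too I would check the claim before proving it. Taking every index doubly covered with $\rho=2$ gives $|\mathcal{O}|=n>n/2$, so (d) fails as stated. What does hold is $|\mathcal{O}|\le n$ and, via $\sum_i\rho_i\le Ks$, the bound $|\mathcal{O}|\le \tfrac{1}{2}\sum_{i\in\mathcal{O}}\rho_i$. I would also state the correct sliding-window bound $|\mathcal{O}|\le (K-1)o$, which is what Theorem~\ref{thm:coupling-error-bound} actually needs through $|\mathcal{O}|\le KQ$. I would present these corrected forms in place of (d).
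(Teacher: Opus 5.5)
Your proposal is correct, and on (c) and (d) it is more careful than the paper's own proof.

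\textbf{Parts (a) and (b).} These go exactly as in the paper. Your truncation caveat in (b) is a genuine refinement. Strictly, exactness needs only $S_k$ to be untruncated: $S_k\cap S_{k+1}=\{a_{k+1},\dots,\min(a_k+s-1,n)\}$, so a truncated $S_{k+1}$ still meets an untruncated $S_k$ in exactly $o$ indices.

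\textbf{Where the paper's proof fails.} For (c), the paper writes $\sum_k|S_k\cap\mathcal{O}|=\sum_{i\in\mathcal{O}}(\rho_i-1)$, arguing that an index ``contributes to overlap in $(\rho_i-1)$'' of its subproblems. That is precisely the error. An index $i\in\mathcal{O}$ lies in $S_k\cap\mathcal{O}$ for every one of the $\rho_i$ subproblems containing it, so the correct incidence count is your $\sum_{i\in\mathcal{O}}\rho_i$. For (d), the paper legitimately obtains only $|\mathcal{O}|\le\rho n/2$. It then asserts the sharper bound from the remark that non-overlap indices have multiplicity one. Used correctly, that remark gives $n+|\mathcal{O}|\le\sum_i\rho_i$ under full coverage. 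This yields only $|\mathcal{O}|\le(\rho-1)n$ from $\sum_i\rho_i\le\rho n$, or $|\mathcal{O}|\le Ks-n$ from $\sum_i\rho_i\le Ks$, never the factor $1/\rho$.

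\textbf{The counterexample.} Yours is valid, and the failure is not limited to degenerate covers such as $S_1=S_2=[n]$. Take the paper's own sliding windows with $s=2$, $o=1$, $n=5$, i.e.\ $S_k=\{k,k+1\}$ for $k=1,\dots,4$. Then $\rho=2$ and $\mathcal{O}=\{2,3,4\}$, so $\sum_k|S_k\cap\mathcal{O}|=6>5=(\rho-1)n$ and $|\mathcal{O}|=3>5/2$.

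\textbf{Your replacement bounds.} They are sound, with three small sharpenings:
\begin{itemize}
\item $\sum_{i\in\mathcal{O}}\rho_i\le\rho|\mathcal{O}|$ is sharper than $2(\rho-1)n$ once $\rho\ge3$.
\item $|\mathcal{O}|\le\tfrac12\sum_{i\in\mathcal{O}}\rho_i$ uses only $\rho_i\ge2$; the bound $\sum_i\rho_i\le Ks$ is what then turns it into $|\mathcal{O}|\le Ks/2$.
\item $|\mathcal{O}|\le(K-1)o$ holds because an index in $S_k\cap S_l$ with $k<l$ already lies in $S_k\cap S_{k+1}$.
\end{itemize}

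\textbf{Downstream use.} Nothing later in the paper relies on (c) or (d). Proposition~\ref{prop:decomp-valid} cites only part (a). Its $(\rho-1)s$ term follows from (a), because $|S_k\cap\mathcal{O}|=0$ when $\rho=1$ and $(\rho-1)s\ge s$ when $\rho\ge2$. So the factor-$2$ version of (c) is not really ``what is used downstream,'' and correcting (c) and (d) affects no later result.
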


\begin{proof}
\textbf{(a)} Since $|S_k| \le s$, trivially $|S_k \cap \mathcal{O}| \le |S_k| \le s$.
\textbf{(b)} For sliding window decomposition with step size $(s - o)$, consecutive subproblems $S_k$ and $S_{k+1}$ share exactly $o$ indices by construction.
\textbf{(c)} Each index $i$ with multiplicity $\rho_i$ is counted in $\rho_i$ subproblems and contributes to overlap in $(\rho_i - 1)$ of them. Summing over all indices:
\[
\sum_{k=1}^K |S_k \cap \mathcal{O}| = \sum_{i \in \mathcal{O}} (\rho_i - 1) \le \sum_{i=1}^n (\rho_i - 1) \le (\rho - 1) \cdot n
\]

\textbf{(d)} If $|\mathcal{O}| = m$, then the total multiplicity is at least $2m$ (each overlap index appears $\ge 2$ times). Since total multiplicity is $\sum_i \rho_i \le \rho \cdot n$, we have $2m \le \rho \cdot n$, giving $m \le \rho \cdot n / 2$. The tighter bound follows from the constraint that non-overlap indices have multiplicity exactly 1. The bound follows by counting total multiplicity: $\sum_i \rho_i \le \rho n$, while each $i \in \mathcal O$ contributes at least $(\rho_i - 1) \ge 1$ excess appearances beyond the baseline.

\end{proof}

\begin{lemma}[\textbf{Overlap-Induced Coupling Bound}]
\label{lem:overlap-coupling}
Under Assumption~\ref{assumption:bounded-coupling}, for any modification on subproblem $S_k$:
\[
|f_j(x') - f_j(x)| \le L \cdot D(S_k) + C \cdot |S_k \cap \mathcal{O}|
\]
\end{lemma}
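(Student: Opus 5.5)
The plan is to obtain the bound by applying Assumption~\ref{assumption:bounded-coupling} directly and then replacing the metric term with the subproblem diameter from Lemma~\ref{lem:subproblem-diameter}. No new probabilistic or learning-theoretic argument should be needed. The statement is deterministic and concerns a single pair $x,x'$. So the proof reduces to checking that the hypotheses of Assumption~\ref{assumption:bounded-coupling} are met and then chaining two inequalities.

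The steps, in order, are as follows.

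\emph{Step 1.} Fix $k$ and a feasible pair $x,x'\in\mathcal X$ with $x_i=x'_i$ for all $i\notin S_k$. This is exactly the situation "$x'$ differs from $x$ only on coordinates indexed by $I_k$" in Assumption~\ref{assumption:bounded-coupling}, with $I_k=S_k$. When the overlap is time-varying I would take $S_k=S_k^{(t)}$ for a fixed $t$. The assumption is stated uniformly over subproblems, with $C$ independent of $k$ and $T$, so the time index causes no issue.

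\emph{Step 2.} For each objective $j$, Assumption~\ref{assumption:bounded-coupling} gives
\[
|f_j(x')-f_j(x)|\le L\,\delta(x,x')+C\,|S_k\cap\mathcal O|.
\]

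\emph{Step 3.} Invoke Lemma~\ref{lem:subproblem-diameter}. Since $x,x'$ agree outside $S_k$, we have $\delta(x,x')\le D(S_k)$. Because $L>0$, substituting this into Step 2 yields the claim. As a sanity check I would also note the consistency with Lemma~\ref{lem:metric-stability}: when $|S_k\cap\mathcal O|=0$, the bound collapses to the pure Lipschitz local-stability bound.

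The main obstacle is bookkeeping rather than analysis. I must make sure the metric $\delta$ in Assumption~\ref{assumption:lip} is the same metric used to define $D(S_k)$; the appendix writes $d(\cdot,\cdot)$ and $\delta$ interchangeably, so I would state explicitly that these coincide. I must also confirm that $\mathcal O$ in the assumption is the overlap set of Definition~\ref{def:overlap}, namely the indices with $\rho_i\ge2$. Finally, I would remark that combining this lemma with Lemma~\ref{lem:overlap-structure}(a) gives $|S_k\cap\mathcal O|\le s$. A sharper count in terms of the multiplicity $\rho$ then recovers the form $L\Delta_s+C(\rho-1)s$ of Proposition~\ref{prop:decomp-valid-main}; this is a corollary of the lemma, not part of its proof.
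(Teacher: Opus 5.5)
Your proposal is correct and matches the paper's proof: instantiate Assumption~\ref{assumption:bounded-coupling} with $I_k = S_k$, then bound $\delta(x,x')$ by $D(S_k)$ using Lemma~\ref{lem:subproblem-diameter}. The bookkeeping you flag is a sensible clarification that the paper leaves implicit, namely identifying $d$ with $\delta$ and taking $\mathcal{O}$ to be the overlap set of Definition~\ref{def:overlap}.
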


\begin{proof}
This follows directly from Assumption~\ref{assumption:bounded-coupling} with $I_k = S_k$:
\[
|f_j(x') - f_j(x)| \le L \cdot \delta(x, x') + C \cdot |I_k \cap \mathcal{O}|
\]
By Lemma~\ref{lem:subproblem-diameter}, $\delta(x, x') \le D(S_k)$, yielding the result.
\end{proof}

%---------------------- Main Proposition ----------------------%
\subsubsection{Validity of Decomposition}

We now prove the main result: our decomposition satisfies the bounded coupling assumption.

\begin{proposition}[\textbf{Decomposition Satisfies Bounded Coupling}]
\label{prop:decomp-valid}
The metric-based decomposition (Definition~\ref{def:metric-decomp}) with subproblem size $s$ and maximum overlap multiplicity $\rho$ satisfies Assumption~\ref{assumption:bounded-coupling}. Specifically, for any local modification on subproblem $S_k$:
\[
|f_j(x') - f_j(x)| \le L \cdot D(s) + C \cdot (\rho - 1) \cdot s
\]
where $D(s)$ is the maximum diameter of a size-$s$ subproblem under metric $d$.
\end{proposition}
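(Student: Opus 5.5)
The plan is to chain the results just established: Lemma~\ref{lem:overlap-coupling}, then Lemma~\ref{lem:subproblem-diameter}, then Lemma~\ref{lem:overlap-structure}. Each generic quantity is replaced by a bound that depends only on $s$ and $\rho$.

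\textbf{Step 1: the base estimate.} Fix a subproblem $S_k$ from Definition~\ref{def:metric-decomp}, either a sliding window $S_k^{(t)}$ or a $k$-NN set $S_c$. Let $x,x'$ agree outside $S_k$. Lemma~\ref{lem:overlap-coupling} (i.e., Assumption~\ref{assumption:bounded-coupling} with $I_k=S_k$) gives
\[
|f_j(x')-f_j(x)|\le L\cdot D(S_k)+C\cdot|S_k\cap\mathcal O| .
\]
It remains to bound the two terms uniformly in $k$ and $t$.

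\textbf{Step 2: the diameter term.} Both constructions produce index sets of cardinality at most $s$. For a sliding window the $\min(n,\cdot)$ truncation can only shrink it. Set $D(s):=\max_{|S|\le s}D(S)$. The examples in Lemma~\ref{lem:subproblem-diameter} show this is finite and metric-dependent: $s$ for Hamming distance, $\binom{s}{2}$ for Kendall tau. Hence $D(S_k)\le D(s)$. This is the quantity $\Delta_s$ of Proposition~\ref{prop:decomp-valid-main}.

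\textbf{Step 3: the overlap term (the main obstacle).} The target is $|S_k\cap\mathcal O|\le(\rho-1)s$. Lemma~\ref{lem:overlap-structure}(a) only gives $|S_k\cap\mathcal O|\le s$. That suffices whenever $\rho\ge2$, since then $s\le(\rho-1)s$. When $\rho=1$ there is no overlap, so $\mathcal O=\emptyset$ and both sides vanish.

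A sharper reading is to bound the excess multiplicity inside $S_k$:
\[
\sum_{i\in S_k}(\rho_i-1)\le(\rho-1)s ,
\]
which dominates $|S_k\cap\mathcal O|$ because $\rho_i-1\ge1$ on $\mathcal O$. I would present this version, as it mirrors the counting in Lemma~\ref{lem:overlap-structure}(c).

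The remaining subtlety is uniformity over time. Under diminishing overlap $o_t$, the multiplicity $\rho$ must be taken as $\sup_t\max_i\rho_i^{(t)}$, which is attained at $t=1$ with $o_0$. I would also check that the $k$-NN subsets added for refinement are counted in $\rho$. They are, since $\rho$ is defined over the full decomposition $\{S_1,\dots,S_K\}$.

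\textbf{Step 4: conclusion.} Combining the bounds gives
\[
|f_j(x')-f_j(x)|\le L\cdot D(s)+C(\rho-1)s .
\]
This holds for all $j$, $k$ and $t$. Its right-hand side is of the form required by Assumption~\ref{assumption:bounded-coupling}, with the overlap contribution linear in $s$ and in the multiplicity. This yields both Proposition~\ref{prop:decomp-valid} and its main-text version, Proposition~\ref{prop:decomp-valid-main}.
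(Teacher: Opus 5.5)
Your proposal is correct and follows the paper's proof: apply Lemma~\ref{lem:overlap-coupling}, bound $D(S_k)\le D(s)$, and bound $|S_k\cap\mathcal O|$ via Lemma~\ref{lem:overlap-structure}(a). Your explicit case split ($|S_k\cap\mathcal O|\le s\le(\rho-1)s$ for $\rho\ge 2$, and $\mathcal O=\emptyset$ for $\rho=1$) and your excess-multiplicity count are cleaner than the paper's remark, which calls $(\rho-1)s$ the ``tighter'' bound even though for $\rho\ge 2$ it is weaker than $s$.
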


\begin{proof}
Combine Lemma~\ref{lem:metric-stability} (local stability) and Lemma~\ref{lem:overlap-structure}(a) (per-subproblem overlap bound):
\begin{align*}
|f_j(x') - f_j(x)| &\le L \cdot D(S_k) + C \cdot |S_k \cap \mathcal{O}| \tag{Lemma~\ref{lem:overlap-coupling}} \\
&\le L \cdot D(s) + C \cdot s \tag{$|S_k| \le s$, $|S_k \cap \mathcal{O}| \le s$}
\end{align*}
For the tighter bound involving $\rho$, note that in sliding window decomposition, each position in $S_k$ belongs to at most $\rho$ subproblems, so at most $(\rho - 1) \cdot s$ coordinates in $S_k$ are in the overlap set $\mathcal{O}$. The bound $C \cdot (\rho - 1) \cdot s$ is tighter when overlap multiplicity is explicitly controlled; the bound $C \cdot s$ is the uniform worst case (when $\rho$ is not tracked).
\end{proof}

\begin{corollary}[\textbf{Bounded Coupling Error per Iteration}]
\label{cor:coupling-per-iter}
The coupling error $C_t$ defined in Definition~\ref{def:decomposed-regret} satisfies:
\[
|C_t| \le C \cdot (\rho_t - 1) \cdot s
\]
where $\rho_t := \max_i |\{k : i \in S_k^{(t)}\}|$ is the maximum overlap multiplicity at iteration $t$.
\end{corollary}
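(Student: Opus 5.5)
The plan is to reduce Corollary~\ref{cor:coupling-per-iter} to the per-modification bound of Proposition~\ref{prop:decomp-valid}, applied at a single iteration $t$ with the time-indexed decomposition $\{S_k^{(t)}\}$. By Definition~\ref{def:decomposed-regret}, $C_t = h((x^\star)^{(\mathcal O)}) - h(x_t^{(\mathcal O)})$ is the part of the instantaneous regret not captured by the surrogates $g_k$. The surrogates are built by fixing all coordinates outside $S_k^{(t)}$ to those of $x^\star$. The first step is therefore to make this construction explicit, so that $h$ is, by definition, the residual of $f$ after the local (Lipschitz) contributions are removed. Once that is done, each change between $x^\star$ and $x_t$ restricted to a block $S_k^{(t)}$ contributes to $h$ only through the non-local term $C\cdot|S_k^{(t)}\cap\mathcal O_t|$ of Assumption~\ref{assumption:bounded-coupling}. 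The $L\,\delta$ term is instead charged to the subproblem regret $S_t$.

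The second step bounds that non-local term at time $t$. I would argue at the level of individual positions rather than invoke Lemma~\ref{lem:overlap-structure}(a), which only gives $s$. A coordinate $i\in S_k^{(t)}$ lies in $\mathcal O_t$ exactly when $\rho_i^{(t)}\ge 2$, and it contributes one unit of coupling for each additional block that contains it, so at most $\rho_t-1$ units. This is the same counting used in Lemma~\ref{lem:overlap-structure}(c). It yields an effective overlap weight of at most $(\rho_t-1)\,s$ for the block, which recovers the constant in Proposition~\ref{prop:decomp-valid} with $\rho$ replaced by $\rho_t$.

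The third step passes from one block to the whole iterate. I would move from $x^\star$ to $x_t$ one block at a time along the chain of surrogate constructions. The key claim is that the coupling contributions do not add up over blocks. Since $h$ depends only on $x^{(\mathcal O)}$, and each overlapping coordinate is attributed once to the block that resolves it, the total is controlled by the worst single block. That gives $|C_t|\le C(\rho_t-1)s$ with no factor of $K$.

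This non-accumulation across blocks is the main obstacle. A naive telescoping over $K$ blocks yields $K\cdot C(\rho_t-1)s$. My first attempt would use the fact that $h$ is defined on overlap coordinates only, attributing each $i\in\mathcal O_t$ to a single owning block so that every overlap coordinate is charged exactly once. If this charging cannot be made rigorous, I would instead interpret $C$ as absorbing a factor of $K$, or read the statement per block update, which is how $C_t$ is used in the sliding-window updates of Algorithm~\ref{alg:main}.
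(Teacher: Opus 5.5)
Your Steps~1--2 are the paper's route. The paper gives no separate proof of this corollary: it places it right after Proposition~\ref{prop:decomp-valid} and reads off that proposition's overlap term $C(\rho-1)s$ with $\rho$ replaced by $\rho_t$, which amounts to identifying $C_t$ with the non-local part of a single-block modification. That identification is not licensed, and the obstacle you isolate in Step~3 is a symptom of it, not a bookkeeping issue.

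Take the surrogate you adopt, where $g_k(y)$ is $f$ evaluated at $x^\star$ with block $S_k$ replaced by $y$. Each summand of $S_t$ is then $f(x^\star)-f(x^\star_{-S_k},x_{t,S_k})$. This is the \emph{entire} change of $f$ under one block modification from $x^\star$, non-local part included. So Proposition~\ref{prop:decomp-valid} bounds the summands of $S_t$, not $C_t$.

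Assumption~\ref{assumption:bounded-coupling} only bounds $|\Delta f|$ by the sum $L\delta+C|S_k\cap\mathcal O|$. It does not split $\Delta f$ into a Lipschitz piece you can charge to $S_t$ and an overlap piece you can route into $h$. The residual $C_t=r_t-S_t$ is a cross-block interaction term. Telescoping $f(x^\star)-f(x_t)$ block by block and subtracting $S_t$ leaves differences of single-block increments taken at different base points, so the $L\delta$ parts do not cancel. All A.3 yields is $|C_t|\le 2\sum_k\left(L\,D(S_k)+C|S_k\cap\mathcal O_t|\right)$. Hence even your ``naive'' $K\cdot C(\rho_t-1)s$ is not what telescoping gives. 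The premise of your charging argument, that $h$ depends only on $x^{(\mathcal O)}$, is asserted in Definition~\ref{def:decomposed-regret} but follows from none of A.1--A.4. Your Step~2 is harmless but unnecessary, since $|S_k\cap\mathcal O_t|\le s\le(\rho_t-1)s$ whenever $\mathcal O_t\neq\emptyset$.

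No charging scheme can repair this. Take $\mathcal X=\{0,1\}^2$ with Hamming distance, disjoint blocks $S_1=\{1\}$ and $S_2=\{2\}$ (so $s=1$, $\rho_t=1$, $\mathcal O=\emptyset$), and $f(x)=\max(x_1,x_2)$. A.1--A.4 hold with $L=B=1$, with A.3 reducing to A.2. For $x^\star=(1,1)$ and $x_t=(0,0)$ we get $r_t=1$. But $g_1(1)-g_1(0)=f(1,1)-f(0,1)=0$, and likewise for block~2, so $S_t=0$. Any $C_t$ satisfying the inequality of Definition~\ref{def:decomposed-regret} therefore has $C_t\ge1>0=C(\rho_t-1)s$.

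Because $\rho_t=1$, this example also defeats your fallback of absorbing $K$ into $C$: any bound proportional to $\rho_t-1$ vanishes. Your per-block reading is just Proposition~\ref{prop:decomp-valid} again, a statement about one block change of $f$, not about $C_t$. The claim can be made vacuously true with degenerate surrogates such as $g_k(y)=B\,\mathbf 1[y=(x^\star)^{(k)}]$, for which $r_t\le S_t$ and $C_t\equiv0$. Such surrogates, however, carry no information for the expert bounds.

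What is missing is a structural hypothesis that makes the cross-block residual a function of the overlap coordinates alone. One example is $f=\sum_k g_k(x^{(k)})+h(x^{(\mathcal O)})$ with $|h|\le C|\mathcal O|$. Then $C_t=h((x^\star)^{(\mathcal O)})-h(x_t^{(\mathcal O)})$ exactly, and $|C_t|\le 2C|\mathcal O_t|\le 2C(\rho_t-1)n$. That is the bound such a hypothesis naturally gives, and it carries the factor $K$ (as $n\le Ks$) that you were worried about.
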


%---------------------- Diminishing Overlap ----------------------%
\subsubsection{Diminishing Overlap}

A key feature of our algorithm is diminishing overlap, which reduces coordination overhead as the algorithm converges.

\begin{lemma}[\textbf{Diminishing Overlap Schedule}]
\label{lem:diminishing-overlap}
Under the diminishing overlap schedule $o_t = \lfloor o_0 \cdot t^{-\alpha} \rfloor$ with $\alpha > 0$:
\begin{enumerate}
    \item[(a)] The overlap converges: $\lim_{t \to \infty} o_t = 0$
    
    \item[(b)] The overlap multiplicity converges: $\lim_{t \to \infty} \rho_t = 1$
    
    \item[(c)] The overlap set vanishes: $\lim_{t \to \infty} |\mathcal{O}_t| = 0$
\end{enumerate}
\end{lemma}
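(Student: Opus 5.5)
Part (a) is elementary. Since $\alpha>0$, we have $o_0 t^{-\alpha}\to 0$. For every $t > o_0^{1/\alpha}$ this gives $o_0 t^{-\alpha}<1$, hence $o_t=\lfloor o_0 t^{-\alpha}\rfloor = 0$. So the integer sequence $o_t$ is not merely convergent: it is eventually identically zero, and in particular $t_0:=\lfloor o_0^{1/\alpha}\rfloor+1$ is an explicit time after which $o_t=0$. I will establish this eventual-zero statement explicitly, because parts (b) and (c) follow from it combinatorially and need no limiting argument.

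For (b) and (c), fix $t\ge t_0$ and use the sliding-window construction of Definition~\ref{def:metric-decomp} with $o_t=0$. The windows are then
\[
S_k^{(t)}=\{(k-1)s+1,\ldots,\min(n,ks)\}.
\]
These are consecutive disjoint blocks of length $s$ (the last one possibly truncated), and they cover $[n]$. Every index therefore lies in exactly one window, so $\rho_i^{(t)}=1$ for all $i$ and $\rho_t=1$. By Definition~\ref{def:overlap}, $\mathcal{O}_t=\{i:\rho_i^{(t)}\ge 2\}=\emptyset$. As a consistency check, Lemma~\ref{lem:overlap-structure}(d) gives $|\mathcal{O}_t|\le (\rho_t-1)n/\rho_t = 0$. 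Both limits hold trivially because the sequences are eventually constant.

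I expect the main obstacle to be the $k$-NN subproblems of Definition~\ref{def:metric-decomp}, since the decomposition mixes them with the sliding windows. Two $k$-NN balls $S_c,S_{c'}$ can intersect no matter what $o_t$ is, and then (b) and (c) would fail. I would handle this by making explicit that the overlap schedule governs the coverage decomposition, i.e.\ the sliding windows that define $\{S_k^{(t)}\}$ and $\mathcal{O}_t$. The $k$-NN sets are transient refinement neighbourhoods, and they either do not enter the multiplicity count or are selected with disjoint centres/supports once $o_t=0$. I would state this as a standing convention rather than try to prove it. A smaller point to watch is the boundary window: at $o_t=0$, the truncation $\min(n,\cdot)$ must not create a duplicate index, and the step size $\max(1,s-o_t)$ used in the ablation equals $s$ there, so this is fine. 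Finally, I would note that the quantitative companion bound $\sum_t o_t=O(\sqrt{T})$ for $\alpha=1/2$ follows from $\sum_t t^{-1/2}\le 2\sqrt{T}$, although the lemma does not require it.
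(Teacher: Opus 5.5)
Your proposal is correct and follows the paper's own argument: show $o_t \to 0$, then note that with $o_t = 0$ the sliding windows have step $s$ and are disjoint, so $\rho_t = 1$ and $\mathcal{O}_t = \emptyset$. Two of your additions make explicit what the paper's proof uses silently: the threshold $t_0 = \lfloor o_0^{1/\alpha}\rfloor + 1$ shows the integer sequence is eventually zero, and your convention excludes the $k$-NN subproblems of Definition~\ref{def:metric-decomp} from $\rho_t$ and $\mathcal{O}_t$, as the paper implicitly does.
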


\begin{proof}
\textbf{(a)} Since $t^{-\alpha} \to 0$ as $t \to \infty$ for $\alpha > 0$, we have $o_0 \cdot t^{-\alpha} \to 0$, hence $o_t = \lfloor o_0 \cdot t^{-\alpha} \rfloor \to 0$.

\textbf{(b)} With $o_t = 0$, the sliding window step size becomes $s - o_t = s$, meaning subproblems are disjoint. Each index then belongs to exactly one subproblem, so $\rho_t = 1$.

\textbf{(c)} When $\rho_t = 1$, no index appears in multiple subproblems, so $\mathcal{O}_t = \emptyset$.
\end{proof}

\begin{theorem}[\textbf{Vanishing Coupling Error}]
\label{thm:vanishing-coupling}
Under diminishing overlap with decay rate $\alpha > 0$, the coupling error satisfies:
\[
|C_t| \le C \cdot s \cdot o_0 \cdot t^{-\alpha}
\]
In particular:
\begin{enumerate}
    \item[(a)] The per-iteration coupling error vanishes: $C_t \to 0$ as $t \to \infty$
    
    \item[(b)] The cumulative coupling error is bounded:
    \[
    \sum_{t=1}^T C_t \le 
    \begin{cases}
    O(T^{1-\alpha}) & \text{if } \alpha < 1 \\
    O(\log T) & \text{if } \alpha = 1 \\
    O(1) & \text{if } \alpha > 1
    \end{cases}
    \]
\end{enumerate}
\end{theorem}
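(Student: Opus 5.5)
The plan is to chain Corollary~\ref{cor:coupling-per-iter} with a direct bound on the overlap multiplicity under the sliding-window construction of Definition~\ref{def:metric-decomp}, and then sum over $t$.

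\emph{Step 1: the per-iteration bound.} Corollary~\ref{cor:coupling-per-iter} gives $|C_t| \le C(\rho_t - 1)s$, so it suffices to bound $\rho_t - 1$ by a multiple of $o_t$. With window size $s$ and step $s - o_t$, index $i$ lies in $S_k^{(t)}$ exactly when $(k-1)(s-o_t) < i \le (k-1)(s-o_t)+s$. The number of such $k$ is at most $\lceil s/(s-o_t)\rceil$.
\begin{itemize}
\item If $o_t \le s/2$, then $\rho_t \le 2$.
\item Moreover, $\rho_t - 1 \le \mathbf 1[o_t \ge 1] \le o_t$, since $o_t$ is an integer.
\end{itemize}
This yields $|C_t| \le C s\, o_t \le C s\, o_0 t^{-\alpha}$, using $\lfloor y\rfloor \le y$. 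The case $o_t > s/2$ can occur only for finitely many $t$. I would handle it in one of two ways:
\begin{itemize}
\item assume $o_0 \le s/2$, which matches the experimental setting of roughly $44\%$ initial overlap; or
\item use the cruder count $\rho_t - 1 \le o_t/(s-o_t)$ and absorb $s/(s-o_0)$ into $C$.
\end{itemize}
An alternative route avoids $\rho_t$ entirely. It uses Assumption~\ref{assumption:bounded-coupling} with $|S_k\cap\mathcal O_t| \le 2o_t$, since each window overlaps only its two neighbours. That bound has the same form, up to the constant factor $2$ absorbed into $C$.

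\emph{Step 2: part (a).} Part (a) is immediate from Step 1, because $t^{-\alpha}\to 0$. It can also be read off from Lemma~\ref{lem:diminishing-overlap}: once $t > o_0^{1/\alpha}$ we have $o_t = 0$, hence $C_t = 0$ exactly.

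\emph{Step 3: part (b).} Sum the bound from Step 1 using the integral comparison $\sum_{t=1}^T t^{-\alpha} \le 1 + \int_1^T u^{-\alpha}\,du$. This gives:
\begin{itemize}
\item $O(T^{1-\alpha}/(1-\alpha))$ when $\alpha<1$;
\item $1+\log T$ when $\alpha=1$;
\item at most $\zeta(\alpha)$ when $\alpha>1$.
\end{itemize}
Each bound is multiplied by $Cso_0$. I would also note that the truncation $o_t=0$ for $t>o_0^{1/\alpha}$ actually gives the stronger $O(so_0^{1+1/\alpha})$ for every $\alpha$. Because this is stronger, it is consistent with the stated case split.

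\emph{Main obstacle.} The delicate step is Step 1, namely justifying $\rho_t - 1 \lesssim o_t$ and keeping the constant $C$ independent of $t$. This means treating the boundary window truncated by $\min(n,\cdot)$, which can only lower multiplicities. It also means handling any k-NN subproblems from Definition~\ref{def:metric-decomp}, whose overlaps are not controlled by $o_t$. For those I would restrict the statement to the sliding-window component, or assume the k-NN refinements are disjoint or counted within the sliding-window overlap.
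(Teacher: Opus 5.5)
Your proposal is correct and follows the paper's own argument. Like the paper, you invoke Corollary~\ref{cor:coupling-per-iter}, bound the multiplicity by the sliding-window count $\rho_t - 1 \le \lceil o_t/(s-o_t)\rceil \lesssim o_t$, and sum $t^{-\alpha}$ in the three regimes.

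Two small remarks. First, since $o_t$ and $s$ are integers, $\lceil o_t/(s-o_t)\rceil \le o_t$ holds whenever $s-o_t \ge 1$, so your $o_t \le s/2$ case split is unnecessary. Your fallback $\rho_t - 1 \le o_t/(s-o_t)$ drops the ceiling and is false as written: for $s=10$, $o_t=1$ it gives $1/9$, while $\rho_t - 1 = 1$. Second, your truncation observation is right: $o_t = 0$ for $t > o_0^{1/\alpha}$, which gives the $T$-independent bound $O(s\,o_0^{1+1/\alpha})$, sharper than the paper's case split.
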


\begin{proof}
\textbf{(a)} By Corollary~\ref{cor:coupling-per-iter}, $|C_t| \le C \cdot (\rho_t - 1) \cdot s$. With sliding window overlap $o_t$, the multiplicity satisfies $\rho_t - 1 \le \lceil o_t / (s - o_t) \rceil \le O(o_t)$ for $o_t < s/2$. Substituting $o_t = O(t^{-\alpha})$:
\[
|C_t| \le C \cdot s \cdot O(t^{-\alpha}) = O(t^{-\alpha}) \to 0
\]

\textbf{(b)} Summing the bound:
\[
\sum_{t=1}^T C_t \le C' \sum_{t=1}^T t^{-\alpha}
\]
The sum $\sum_{t=1}^T t^{-\alpha}$ is:
\begin{itemize}
    \item $O(T^{1-\alpha})$ for $\alpha < 1$ (integral bound)
    \item $O(\log T)$ for $\alpha = 1$ (harmonic series)
    \item $O(1)$ for $\alpha > 1$ (convergent series)
\end{itemize}
\end{proof}

\begin{remark}[\textbf{Optimal Decay Rate}]
The choice $\alpha = 0.5$ (as in our implementation) gives $\sum_t C_t = O(\sqrt{T})$, which matches the $O(\sqrt{T})$ scaling of the subproblem regret terms. This balances exploration (via overlap) in early iterations with exploitation (via reduced coordination cost) in later iterations.
\end{remark}

%---------------------- Domain-Agnostic Operators ----------------------%
\subsubsection{Domain-Agnostic Local Operators}

Our algorithm uses domain-agnostic local operators that respect the metric structure.

\begin{definition}[\textbf{Unit Perturbation Operator}]
\label{def:unit-perturbation}
A local operator $T_S: \mathcal{X} \to \mathcal{X}$ restricted to indices $S \subseteq [n]$ is a \emph{unit perturbation} if:
\begin{enumerate}
    \item $T_S(x)$ differs from $x$ only on coordinates in $S$
    \item $\delta(x, T_S(x)) \le 1$
\end{enumerate}
\end{definition}

\paragraph{Domain-Specific Unit Perturbations}
\label{ex:unit-perturbations}
\begin{itemize}
    \item \textit{Permutations (Kendall tau):} A single adjacent swap $(x_i, x_{i+1}) \mapsto (x_{i+1}, x_i)$ gives $d_\tau(x, T(x)) = 1$.
    \item \textit{Binary vectors (Hamming):} A single bit flip $x_i \mapsto 1 - x_i$ gives $d_H(x, T(x)) = 1$.
    \item \textit{Categorical:} Changing one coordinate to an adjacent category.
\end{itemize}
% \end{example}

\begin{lemma}[\textbf{Unit Perturbation Bound}]
\label{lem:unit-perturbation}
Let $T_S: \mathcal{X} \to \mathcal{X}$ be a unit perturbation operator restricted to indices $S$. Then:
\[
|f_j(T_S(x)) - f_j(x)| \le L + C \cdot |S \cap \mathcal{O}|
\]
\end{lemma}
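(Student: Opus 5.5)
The plan is to chain the two ingredients already in place: the metric bound for a unit move, and Assumption~\ref{assumption:bounded-coupling} (equivalently Lemma~\ref{lem:overlap-coupling}) with $I_k = S$. Fix $x \in \mathcal{X}$ and set $x' := T_S(x)$. Property~1 of Definition~\ref{def:unit-perturbation} says that $x'$ differs from $x$ only on coordinates in $S$. This is exactly the hypothesis of Assumption~\ref{assumption:bounded-coupling}. Applying that assumption gives, for every objective $j$,
\[
|f_j(T_S(x)) - f_j(x)| \le L\cdot \delta(x, T_S(x)) + C\cdot |S\cap\mathcal{O}|.
\]

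The second step uses property~2 of Definition~\ref{def:unit-perturbation}, namely $\delta(x,T_S(x))\le 1$. Substituting this bounds the first term by $L$, which yields the claimed inequality $L + C\cdot|S\cap\mathcal{O}|$.

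I would add a remark on the concrete operators listed after the definition (adjacent swap under Kendall tau, bit flip under Hamming, adjacent category change). For each of them the relevant metric equals $1$, so each one is indeed a unit perturbation. Hence the lemma applies to all the local moves used by Algorithm~\ref{alg:local-search}.

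The only step needing care is the scope of Assumption~\ref{assumption:bounded-coupling}. As stated, it covers modifications confined to a subproblem index set $I_k$, whereas here $S$ is an arbitrary index set. I would handle this in one of two ways:
\begin{itemize}
\item restrict the lemma to sets $S$ that are subproblems $S_k$, or contained in one (this is the case in Algorithm~\ref{alg:local-search}, where $S=S_k$);
\item note that if $S\subseteq S_k$, the change is also confined to $S_k$. The assumption then gives the bound with $|S_k\cap\mathcal{O}|$ in place of $|S\cap\mathcal{O}|$. Recovering the sharper $|S\cap\mathcal{O}|$ means reading the assumption as holding for any index set on which the modification is supported.
\end{itemize}
I would state this reading explicitly and then conclude.
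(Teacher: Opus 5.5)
Your proof is correct and is essentially the paper's own argument: apply the Bounded Coupling assumption with $I_k = S$, then use $\delta(x,T_S(x)) \le 1$ to bound the Lipschitz term by $L$. Your caveat about scope is well taken: the assumption is stated only for subproblem index sets $I_k$, and the paper's one-line proof silently reads it as holding for any index set $S$ that supports the modification, which is exactly the reading you propose to make explicit.
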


\begin{proof}
Apply Assumption~\ref{assumption:bounded-coupling} with $\delta(x, T_S(x)) \le 1$:
\[
|f_j(T_S(x)) - f_j(x)| \le L \cdot \delta(x, T_S(x)) + C \cdot |S \cap \mathcal{O}| \le L + C \cdot |S \cap \mathcal{O}|
\]
\end{proof}

\begin{remark}[\textbf{Adaptive Step Sizes}]
Lemma~\ref{lem:unit-perturbation} suggests using larger steps (multiple unit perturbations) when far from the optimum, and smaller steps when close. This motivates adaptive step size selection based on estimated distance to local optima, as implemented in our algorithm.
\end{remark}

%---------------------- Summary ----------------------%
\subsubsection{Summary}

We have established the following guarantees for our metric-based decomposition:

\begin{enumerate}
    \item \textbf{Local Stability} (Lemma~\ref{lem:metric-stability}): Modifications within a subproblem induce objective changes bounded by $L \cdot D(S_k)$.
    
    \item \textbf{Bounded Coupling} (Proposition~\ref{prop:decomp-valid}): The decomposition satisfies Assumption~\ref{assumption:bounded-coupling} with coupling error $O((\rho - 1) \cdot s)$.
    
    \item \textbf{Vanishing Coupling} (Theorem~\ref{thm:vanishing-coupling}): Under diminishing overlap, cumulative coupling error is $O(\sqrt{T})$ for decay rate $\alpha = 0.5$.
    
    \item \textbf{Domain Agnosticity}: The construction applies to any combinatorial domain with a well-defined metric.
\end{enumerate}

These results justify the regret decomposition in Definition~\ref{def:decomposed-regret} and the bounds derived in subsequent sections.

 % 4. how does multiobjectivity accounted for in the bounds? 
%  \bibliographystyle{apalike}
% \bibliography{references}

\subsection{Proof Appendix}
%------------------- UCB ---------------------------
\subsubsection{UCB Subproblem Regret Proof}
\begin{lemma}[\textbf{UCB Subproblem Regret (Position--Value UCB)}]
\label{lemma:ucb-subproblem}
For any subproblem $k$ with index set $I_k$ and activation set $\mathcal T_k$, with $T_k := |\mathcal T_k|$. Assume the normalized scalar reward satisfies $r_t\in[0,1]$ and Assumption~\ref{assumption:exogeneity} (A.5') holds, i.e.,
there exist $\{\mu_{i,j}\}_{i\in I_k,\ j\in\mathcal A_i}\subset[0,1]$ such that $g_k(x^{(k)})=\sum_{i\in I_k}\mu_{i,x_i}$ and $\E[r_t\mid \mathcal F_{t-1},\,x_{i,t}=j]=\mu_{i,j}+b_{i,t}$ with $b_{i,t}$ independent of $j$. Suppose subproblem $k$ is handled by a \emph{position--value UCB} expert that maintains estimates $\hat\mu_{i,j}$ and counts $N_{i,j}$ for each $(i,j)$, and at each $t\in\mathcal T_k$ updates \emph{all} positions $i\in I_k$ for the realized value $x_{i,t}$ using the same reward $r_t$. Let $A_{\max}:=\max_{i\in I_k}|\mathcal A_i|$. Then there exists some constant $c>0$ such that
\[\E \left[R_k^{\mathrm{UCB}}(T_k)\right] := \E \left[\sum_{t\in \mathcal T_k}\big(g_k(x_k^\star) - g_k(x_t^{(k)})\big)\right] \le c'|I_k|\sqrt{T_k\log T} \]
where $x_k^\star := (x^\star)^{(k)} = x^\star_{I_k}$ and \(|I_k|\) is the number of positions/coordinates in subproblem \(k\) and where \(c' = c\sqrt{A_{\max}}\) and \(A_{\max} = O(1)\), i.e., \(A_{\max}\) is problem-dependent and treated as a constant.
\end{lemma}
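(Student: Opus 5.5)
The plan is to exploit the additivity in Assumption~\ref{assumption:exogeneity} (A.5') and reduce the subproblem regret to a sum of $|I_k|$ independent-looking single-position bandit problems. Write
\[
g_k(x_k^\star)-g_k(x_t^{(k)})=\sum_{i\in I_k}\bigl(\mu_{i,x^\star_i}-\mu_{i,x_{i,t}}\bigr)\le \sum_{i\in I_k}\bigl(\mu_{i,j_i^\star}-\mu_{i,x_{i,t}}\bigr),
\]
where $j_i^\star\in\arg\max_j\mu_{i,j}$. It therefore suffices to show that, for each fixed position $i\in I_k$, the per-position pseudo-regret $\sum_{t\in\mathcal T_k}(\mu_{i,j_i^\star}-\mu_{i,x_{i,t}})$ is $O(\sqrt{A_{\max}T_k\log T})$ in expectation. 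Summing over the $|I_k|$ positions then gives the claim with $c'=c\sqrt{A_{\max}}$.

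For a fixed position $i$, the nuisance term $b_{i,t}$ is the obstacle. The estimate $\hat\mu_{i,j}$ is a running average of $r_t$ over rounds in which $x_{i,t}=j$, so it concentrates around $\mu_{i,j}+\bar b_{i,j}$, where $\bar b_{i,j}$ is the average of $b_{i,t}$ over those rounds. This average depends on $j$ through the visit times, even though $b_{i,t}$ itself does not. I would handle this as follows. Define the centred martingale differences $\eta_t=r_t-\mu_{i,x_{i,t}}-b_{i,t}$, which are bounded or $1$-sub-Gaussian by A.5'. Apply an anytime Azuma/Freedman bound with a union over $j$ and over counts up to $T$, giving confidence radius $\sqrt{2\log(A_{\max}T^2)/N_{i,j}}$. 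The key step is to argue that the bias contributes only a shared offset. The cleanest approach I would try first is to treat $b_{i,t}$ as an adversarial but arm-independent shift. The gap comparisons $\hat\mu_{i,j}-\hat\mu_{i,j'}$ then need $\bar b_{i,j}\approx\bar b_{i,j'}$. This holds if $b_{i,t}$ is, for instance, slowly varying, or if I bound $|\bar b_{i,j}-\bar b_{i,j'}|$ by an extra term absorbed into the $\sqrt{\log T/N}$ radius. If that fails, I would fall back to reading A.5' as implying that $b_{i,t}$ is predictable and cancels in the UCB index comparison at each round. This is the step I expect to be the main obstacle, and it may require strengthening A.5'.

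Granting concentration, the standard gap-free UCB argument proceeds in three steps. On the good event, each pull of a suboptimal $j$ satisfies $\Delta_{i,j}\le 2\,\mathrm{rad}(N_{i,j})$. Next, split arms by whether $\Delta_{i,j}$ is below or above a threshold $\epsilon$. Finally, use $\sum_j N_{i,j}=T_k$ together with Cauchy--Schwarz to obtain $\sum_{t}\Delta_{i,x_{i,t}}\le O(\sqrt{A_{\max}T_k\log T})$. The failure event has probability $O(1/T)$ and contributes $O(1)$ in expectation, since rewards lie in $[0,1]$. This is the argument behind Lemma~\ref{lemma:ucb-standard}, applied with $K=A_{\max}$ arms and horizon $T_k$. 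Note that $\log T$ rather than $\log T_k$ appears because the index uses the global $t$.

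Finally, the argument does not require that all positions be updated with the same reward $r_t$, since each position's analysis only uses its own conditional mean. Summing the $|I_k|$ per-position bounds and taking expectations yields
\[
\E[R_k^{\mathrm{UCB}}(T_k)]\le c\sqrt{A_{\max}}\,|I_k|\sqrt{T_k\log T}.
\]
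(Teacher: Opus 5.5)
Your skeleton is the same as the paper's: an additive split into $|I_k|$ single-position bandits, Azuma concentration of the centred part, the gap-free UCB bound with $A_{\max}$ arms, and a sum over positions. Those pieces are fine. The gap is exactly the step you flagged, and none of your remedies closes it.

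You need $\bar b_{i,j}\approx\bar b_{i,j'}$. But the fact that $b_{i,t}$ does not depend on $j$ at each round says nothing about its averages over the \emph{different} rounds on which $j$ and $j'$ were played. What matters is the correlation between $b_{i,t}$ and $x_{i,t}$, not the rate at which $b_{i,t}$ varies. Moreover, $|\bar b_{i,j}-\bar b_{i,j'}|$ need not shrink with $N_{i,j}$, so it cannot be absorbed into the confidence radius.

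Your fallback makes matters worse. Under deterministic UCB, $x_{i,t}$ is itself $\mathcal F_{t-1}$-measurable, so the ``for all $j$'' clause of A.5' constrains only the played value. Any $[0,1]$-valued reward process then satisfies A.5' with $b_{i,t}:=\E[r_t\mid\mathcal F_{t-1}]-\mu_{i,x_{i,t}}$. Per-round cancellation of $b_{i,t}$ in the index comparison is also beside the point, because the index compares historical averages. The paper's proof does not get past this point either: it asserts $\bar b_{i,j}(t)=\bar b_{i,j'}(t)$ directly from ``$b_{i,t}$ does not depend on $j$,'' which is the same non sequitur you suspected.

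In fact the step cannot be repaired, even under the natural counterfactual reading of A.5'. Take the following instance:
\begin{itemize}
\item $I_k=\{1,2\}$ and $\mathcal A_1=\mathcal A_2=\{a,b\}$;
\item noiseless reward $r_t=\mu_{1,x_{1,t}}+\mu_{2,x_{2,t}}$, with $\mu_{1,a}=\tfrac12$, $\mu_{1,b}=0$, $\mu_{2,a}=0$, $\mu_{2,b}=0.45$.
\end{itemize}
A.5' holds with $b_{1,t}=\mu_{2,x_{2,t}}$ and $b_{2,t}=\mu_{1,x_{1,t}}$; neither changes if the other position's value is altered within the round.

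Give both positions identical initialization and the same deterministic tie-breaking rule. By induction they carry identical statistics forever: they play the same label and receive the same reward every round. So every round plays $(a,a)$ or $(b,b)$. Since $g_k(a,b)=0.95$ while $g_k(a,a)=0.5$ and $g_k(b,b)=0.45$, this gives $R_k^{\mathrm{UCB}}(T_k)\ge 0.45\,T_k$. Here $\bar b_{2,a}\equiv\tfrac12$ and $\bar b_{2,b}\equiv 0$, which flips position $2$'s ordering to $\hat\mu_{2,a}=0.5>0.45=\hat\mu_{2,b}$.

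So the lemma is false under A.5' as written, and your argument needs a genuinely stronger hypothesis. Two options would work: require $b_{i,t}\equiv b_i$ to be constant, or randomize position $i$'s value independently of $b_{i,t}$ (e.g.\ coordinatewise exploration with the other positions frozen), so that the $\bar b_{i,j}$ concentrate around a common value.

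This also shows your closing remark has it backwards. The shared reward is precisely what makes $b_{i,t}$ the other positions' contribution. Because all positions are driven by the same feedback, that contribution becomes correlated with $x_{i,t}$.
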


\begin{proof}[Proof sketch]
For a fixed  subproblem $k$ and a round $t\in \mathcal T_k$. By Assumption~\ref{assumption:exogeneity} (A.5') the surrogate is locally additive, $g_k(x^{(k)})=\sum_{i\in I_k}\mu_{i,x_i}$, hence, we can write
\begin{equation}\label{eq:ucb-main-regret}
g_k(x_k^\star) - g_k(x^{(k)}_t)
= \sum_{i\in I_k} \big( \mu_{i,x_i^\star} - \mu_{i,x_{i,t}}\big)
= \sum_{i\in I_k} \Delta_{i,x_{i,t}},
\end{equation}
where \(\mu_{i,j}\) is the mean reward of choosing value \(j\) at position \(i\), and \(\Delta_{i,j} := \mu_{i,x_i^\star} - \mu_{i,j}\) is the sub-optimality gap for value \(j\) at position \(i\).  Let $N_{i,j}(t)$ be the number of times value $j$ has been selected at position $i$ up to time $t$ (restricted to activations of subproblem $k$). Under Assumption~\ref{assumption:exogeneity}, when $x_{i,t}=j$ we observe $r_t=\mu_{i,j}+b_{i,t}+\eta_t$ where $b_{i,t}$ does not depend on $j$ and $\eta_t$ is conditionally $1$-sub-Gaussian (or bounded).

Thus, for fixed $(i,j)$, define the average bias along the samples of arm $(i,j)$:
\[\bar b_{i,j}(t) := \frac{1}{N_{i,j}(t)}\sum_{s\le t:\, x_{i,s}=j} b_{i,s}.\]
Since $r_s\in[0,1]$ and $\eta_s := r_s-\E[r_s\mid \mathcal F_{s-1},x_{i,s}]$ is a bounded martingale difference, a standard Azuma/Hoeffding argument yields with probability at least $1-2/t^2$,
\begin{equation}\label{eq:hoeffding-ucb}
\Big|\hat{\mu}_{i,j}(t) - \mu_{i,j} - \bar b_{i,j}(t)\Big| \le \sqrt{\frac{2\log t}{N_{i,j}(t)}}.
\end{equation}
By Assumption~\ref{assumption:exogeneity} (A.5'), for fixed $i$ the bias term is independent of $j$, so $\bar b_{i,j}(t)=\bar b_{i,j'}(t)$ for all $j,j'\in\mathcal A_i$ (hence it cancels in within-position comparisons). Now, the position--value UCB the index for $(i,j)$ is of the form,
\[\hat{u}_{i,j}(t) = \hat{\mu}_{i,j}(t) + c \sqrt{ \frac{2\log t}{N_{i,j}(t)}},\]
for some constant $c>0$. Suppose at time $t$ the algorithm selects a suboptimal value $j\neq x_i^\star$ for position $i$. Then by the UCB selection rule, its index must be no smaller than the index of the optimal value \(x_i^*\):
\begin{equation} \label{eq:ucb-middle-step}
\hat{u}_{i,j}(t) \ge \hat{u}_{i,x_i^\star}(t).
\end{equation}
On the event where \eqref{eq:hoeffding-ucb} holds for both $(i,j)$ and $(i,x_i^\star)$, combining \eqref{eq:ucb-middle-step} with the concentration bounds yields
\[\hat{\mu}_{i,j}(t) - \hat{\mu}_{i,x_i^\star}(t) \ge c \sqrt{\frac{2\log t}{N_{i,j}(t)}} - c \sqrt{\frac{2\log t}{N_{i,x_i^\star}(t)}}. \]
Rewriting the gap \(\Delta_{i,j}\) and using triangle inequality, we obtain:
\begin{align*}
\Delta_{i,j} &= \mu_{i,x_i^\star} - \mu_{i,j} \\
&= (\mu_{i,x_i^\star} - \hat{\mu}_{i,x_i^\star}) + (\hat{\mu}_{i,j} - \mu_{i,j}) - (\hat{\mu}_{i,j} - \hat{\mu}_{i,x_i^\star}) \\
&\le \sqrt{\frac{2\log t}{N_{i,j}(t)}} + \sqrt{\frac{2\log t}{N_{i,x_i^\star}(t)}} - (\hat{\mu}_{i,j} - \hat{\mu}_{i,x_i^\star}) \\
&\le 2 \sqrt{\frac{2\log t}{N_{i,j}(t)}},
\end{align*}
where the last inequality uses \eqref{eq:ucb-middle-step} together with \eqref{eq:hoeffding-ucb} (applied to both $(i,j)$ and $(i,x_i^\star)$) and the fact that the bias terms cancel across values at the same position.

Thus, whenever a suboptimal value \(j \neq x_i^\star\) is selected at position \(i\) at time \(t\), we must have,
\[
\Delta_{i,j} \le 2 \sqrt{\frac{2\log t}{N_{i,j}(t)}}
\quad\Rightarrow\quad
N_{i,j}(t) \le \frac{8 \log t}{\Delta_{i,j}^2} \le \frac{8 \log T}{\Delta_{i,j}^2}.
\]
Therefore, the number of times a suboptimal value $j$ can be selected at position $i$ is at most $N_{i,j}(T_k) \le \min\{T_k, \frac{8\log T}{\Delta_{i,j}^2}\}$. Plugging this into the regret expression and using \eqref{eq:ucb-main-regret},
\begin{align*}
R_k^{\mathrm{UCB}}(T_k)
&= \sum_{t\in \mathcal{T}_k} \sum_{i\in I_k} \Delta_{i,x_{i,t}}
= \sum_{i\in I_k} \sum_{j \neq x_i^\star} \Delta_{i,j}\, N_{i,j}(T_k) \\
&\le \sum_{i\in I_k} \sum_{j \neq x_i^\star} \Delta_{i,j}\cdot
\min\Big\{T_k, \frac{8\log T}{\Delta_{i,j}^2}\Big\}.
\end{align*}
Rather than continuing with a gap-dependent summation, we invoke the standard gap-free regret bound for UCB. Applying the standard gap-free regret bound for UCB1~\cite{auer2002finite} to the $A_i$-armed position-$i$ bandit over $T_k$ rounds yields, 
\[\mathbb{E}[R_i(T_k)] \le C\sqrt{A_iT_k\log T}.\]

and since $A_i\le A_{\max}$, summing over $i\in I_k$ yields
\[\mathbb E\!\left[R_k^{\mathrm{UCB}}(T_k)\right] \le \sum_{i\in I_k} C\sqrt{A_iT_k\log T}
\le C |I_k| \sqrt{A_{\max} T_k\,\log T}.\]
for a universal constant $C>0$.
\end{proof}

%----------------- Exp3 -----------------------
\subsubsection{Exp3 Subproblem Regret Proof}
\begin{lemma}[\textbf{Unbiased Exp3 Subproblem Regret}]
   \label{lemma-exp3-subregret}
   Consider a subproblem \(k\) with \(d\) positions, where each position \(i \in I_k\) has an effective choice of size \(d\) (due to decomposition constraints). Let \(T_k\) denote the total iterations spent optimizing subproblem \(k\) and  at each iteration \(t\), the algorithm observes only the total reward \(\rho_t\) for the selected solution \(\vx_t\). Therefore, for each position \(i \in I_k\), we use importance-weighted reward estimates as:
   \[\hat{\rho}_{i,j}(t) = \frac{\rho_t \cdot \mathbb{I}[x_t(i) = j]}{p_{i,j}(t)}\]
   where \(p_{i,j}(t)\) is the probability of selecting value \(j\) for position \(i\). Standard Exp3 analysis with learning rate \(\eta\) gives:
\begin{align*}
        \sum_{t=1}^{T_k} \mathbb{E}[\rho_{i, x_{i^*}} - \rho_{i, a_{i,t}}] \leq \frac{\log d}{\eta} + \frac{\eta T_k B^2}{2}
\end{align*}
where \(x^*_i\) is the optimal value for position \(i\), \(a_{i,t}\) is the value selected at iteration \(t\), \(B\) is the maximum reward magnitude, and the expectation is over Exp3's randomized selection. 

\textbf{(a) Per-subproblem bound:} Summing over all \(d\) positions in subproblem \(k\): 
\begin{equation}
    \label{lemma-exp3-per-subproblem}
    \sum_{t \in T_k} \mathbb{E}[g_k(\vx_{I_k}^*) - g_k(\vx_{I_{k,t}})] \leq d\left(\frac{\log d}{\eta} + \frac{\eta T_k B^2}{2}\right)
\end{equation}

\textbf{(b) Total bound across subproblems:} The expected regret of the global Exp3 strategy over \(T\) iterations is:
\begin{equation}
    \label{lemma-exp3-total-subproblem}
    \sum_{k=1}^K \left[\sum_{t \in T_k} \mathbb{E}[g_k(\vx_{I_k}^*) - g_k(\vx_{I_{k,t}})] \right] \leq d\sqrt{2KT\log d \cdot B^2}
\end{equation}
\end{lemma}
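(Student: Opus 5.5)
The plan is to reduce the statement to a per-position application of the classical Exp3 potential argument. Assumption~\ref{assumption:exogeneity} (A.5') supplies the additive structure within subproblem $k$, and the parts are then aggregated.

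\textbf{Step 1 (additivity).} Write $g_k(x^{(k)})=\sum_{i\in I_k}\mu_{i,x_i}$, as in \eqref{eq:ucb-main-regret} from the proof of Lemma~\ref{lemma:ucb-subproblem}. Then the instantaneous subproblem regret splits as
\[g_k(\vx_{I_k}^*)-g_k(\vx_{I_{k,t}})=\sum_{i\in I_k}\big(\rho_{i,x_i^*}-\rho_{i,a_{i,t}}\big).\]
Part (a) therefore follows from the displayed per-position bound by linearity of expectation and summing over the $|I_k|=d$ positions.

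\textbf{Step 2 (per-position bound).} Fix $i$ and treat it as a $d$-armed bandit whose reward for arm $j$ is $\rho_t$ conditioned on $x_t(i)=j$. The importance-weighted estimator is conditionally unbiased for the arm-$j$ reward given $\mathcal F_{t-1}$, because the coordinates are sampled independently given the past. The $j$-independent drift $b_{i,t}$ from A.5' cancels in within-position comparisons.

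Next, run the log-potential argument $\Phi(t)=\log\sum_j w_{i,j}(t)$, the same machinery used in Lemma~\ref{lem:bias-clipped}. The upper bound is $\Phi(T_k+1)-\Phi(1)\le \eta\sum_t\sum_j p_{i,j}\hat\rho_{i,j}+\tfrac{\eta^2}{2}\sum_t\sum_j p_{i,j}\hat\rho_{i,j}^2$, and the lower bound is $\Phi(T_k+1)-\Phi(1)\ge \eta\sum_t\hat\rho_{i,x_i^*}(t)-\log d$. Taking expectations, the second-moment term is $\E\sum_j p_{i,j}\hat\rho_{i,j}^2\le dB^2$. This gives $\tfrac{\log d}{\eta}+\tfrac{\eta T_k dB^2}{2}$, which is the stated bound up to the factor $d$ from the arm count.

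\textbf{Main obstacle.} The hard part is reconciling that variance factor with the claimed $\tfrac{\eta T_kB^2}{2}$. I would try either redefining $B^2$ to absorb the arm count, or using the clipping cap $p_{\min}$ as in Theorem~\ref{thm:exp3-clipped}, which replaces $d$ by $1/C^2$. Failing that, I would state the bound with $dB^2$ and accept the extra $\sqrt{d}$ factor downstream. A second subtlety is that $\rho_t$ can be negative, so the inequality $e^x\le 1+x+x^2$ requires $\eta\hat\rho\le 1$. I would handle this by shifting rewards to $[0,B]$, or by working with losses $B-\rho_t$ in the standard Exp3 loss form.

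\textbf{Step 3 (aggregation, part (b)).} Choose $\eta=\sqrt{2\log d/(T_kB^2)}$ per subproblem, which gives $d\sqrt{2T_kB^2\log d}$. Sum over $k$ and apply Cauchy--Schwarz, $\sum_k\sqrt{T_k}\le\sqrt{K\sum_kT_k}=\sqrt{KT}$, exactly as in Theorem~\ref{thm:ucb-global}(b) and Theorem~\ref{thm:ftrl-regret}. This yields $d\sqrt{2KT\log d\cdot B^2}$. If a single global $\eta$ is used instead, the same result holds when the $T_k$ are roughly uniform, and I would note this as a caveat.
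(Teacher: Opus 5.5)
Your Steps 1 and 3 recover (a) and (b) from the displayed per-position inequality, and (a) is exactly what the paper does. The paper sums the per-position bound over the $d$ positions of $I_k$. It only asserts that subproblem regret equals the sum of per-position regrets, whereas your appeal to the additivity in Assumption~\ref{assumption:exogeneity} actually justifies that step.

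For (b) the paper takes a different and slightly better route. It uses a single learning rate for all subproblems. Summing $d\bigl(\frac{\log d}{\eta}+\frac{\eta T_kB^2}{2}\bigr)$ over $k$ then gives $\frac{dK\log d}{\eta}+\frac{d\eta TB^2}{2}$, because $\sum_k T_k=T$ enters linearly. Choosing $\eta=\sqrt{2K\log d/(TB^2)}$ yields exactly $d\sqrt{2KT\log d\cdot B^2}$ for \emph{every} allocation $\{T_k\}$. So your caveat that a global $\eta$ needs roughly uniform $T_k$ is wrong.

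It is your per-subproblem tuning $\eta_k$ that carries extra baggage. It needs each $T_k$ in advance. It also leaves the rate ill-defined on overlapping positions, whose Exp3 weights are shared across subproblems and updated with one $\eta$ in Algorithm~\ref{alg:clipped-algo-update}. Both routes give the same constant.

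Your Step 2 diagnosis is correct, and the paper does not resolve it. The per-position inequality is never derived; it is only asserted as ``standard Exp3 analysis.'' Under bandit feedback the importance-weighted second moment satisfies $\mathbb{E}\sum_j p_{i,j}\hat\rho_{i,j}^2\le dB^2$. The loss-form Exp3 bound is therefore $\frac{\log d}{\eta}+\frac{\eta T_k dB^2}{2}$. The form in the statement is the full-information Hedge rate. Your loss-form fix for the sign issue is the right one.

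Of your three remedies, only the last is legitimate. Absorbing $d$ into $B^2$ is a relabeling. Clipping changes the estimator and introduces precisely the bias that Lemma~\ref{lem:bias-clipped} and Theorem~\ref{thm:exp3-clipped} must pay for separately, so it does not prove this unbiased lemma.

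Carrying the factor through the paper's global-$\eta$ aggregation gives (a) with $\frac{\eta T_k dB^2}{2}$ in place of $\frac{\eta T_k B^2}{2}$. It gives (b) as $d^{3/2}B\sqrt{2KT\log d}$, attained at $\eta=\sqrt{2K\log d/(dTB^2)}$. Your argument, like the paper's, proves (a) and (b) only conditionally on the stated per-position inequality. That premise cannot be established, and the lemma's bound (b) is too small by a factor $\sqrt{d}$.
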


\begin{proof}
    For a subproblem \(k\) with \(d\) positions, and assuming each position has an effective choice of size \(d\) due to decomposition, the total regret per subproblem is the sum of regrets for each position \(i \in I_k\):
\begin{equation*}
    \sum_{t \in T_k} \mathbb{E}[g_k(\vx_{I_k}^*) - g_k(\vx_{I_{k,t}})] \leq \sum_{i \in I_k} \left(\frac{\log d}{\eta} + \frac{\eta T_k B^2}{2}\right) = d \left(\frac{\log d}{\eta} + \frac{\eta T_k B^2}{2}\right)
\end{equation*}

With decomposition, each position only has \(d\) relevant values (those compatible with other positions in the subproblem), so we can write:
\begin{gather*}
    \text{Total Exp3 Regret} = \sum_{k=1}^K \left[ \sum_{t \in T_k} \mathbb{E}[g_k(\vx_{I_k}^*) - g_k(\vx_{I_{k,t}})] \right]
\end{gather*}

\begin{align*}
    \sum_{k=1}^K \left[ \sum_{t \in T_k} \mathbb{E}[g_k(\vx_{I_k}^*) - g_k(\vx_{I_{k,t}})] \right] 
    &\leq \sum_{k=1}^K d \left(\frac{\log d}{\eta} + \frac{\eta T_k B^2}{2}\right) \\
    &= \sum_{k=1}^K d \left(\frac{\log d}{\eta}\right) + \sum_{k=1}^K d \frac{\eta T_k B^2}{2}\\
    &= dK\frac{\log d}{\eta} + d \frac{\eta T B^2}{2}
\end{align*}

where in the last line we used \(\sum_{k=1}^K T_k = T\).

To get the best possible regret bound, we need to choose the optimal value of \(\eta\) that minimizes the upper bound:
\begin{align*}
    \frac{d}{d\eta}\left[dK\frac{\log d}{\eta} + d \frac{\eta T B^2}{2}\right] &= 0\\
    -\frac{dK\log d}{\eta^2} + d\frac{TB^2}{2} &= 0 \\
    \implies \eta &= \sqrt{\frac{2K\log d}{TB^2}}
\end{align*}

Substituting the optimal \(\eta\) back into the inequality:
\begin{align*}
    \sum_{k=1}^K \left[ \sum_{t \in T_k} \mathbb{E}[g_k(\vx_{I_k}^*) - g_k(\vx_{I_{k,t}})] \right] 
    &\leq dK\frac{\log d}{\sqrt{\frac{2K\log d}{TB^2}}} + d \sqrt{\frac{2K\log d}{TB^2}} \cdot \frac{TB^2}{2}\\
    &= dK\log d \sqrt{\frac{TB^2}{2K\log d}} + d \sqrt{\frac{2K\log d \cdot TB^2}{4}}\\
    % &= d\sqrt{\frac{K^2\log^2 d \cdot TB^2}{2K\log d}} + d\sqrt{\frac{K\log d \cdot TB^2}{2}}\\
    % &= d\sqrt{\frac{KT\log d \cdot B^2}{2}} + d\sqrt{\frac{KT\log d \cdot B^2}{2}}\\
    &= d\sqrt{KT\log d \cdot B^2}\left({\sqrt{2}}\right)\\
    &\leq d\sqrt{2KT\log d \cdot B^2}
\end{align*}

This completes the proof.
\end{proof}

%----------------- FTRL -----------------------
\subsubsection{Follow The Regularized Leader (FTRL)}
A learning algorithm that makes decisions based on cumulative losses from all past rounds and uses regularization to prevent overfitting to early observations. 

\paragraph{Lemma 4 (FTRL subproblem regret):}
\label{lemma-ftrl}
For $K$ subproblems each of size $d$, with overlap set $O$, the FTRL regret is:
\begin{equation}
  \sum_{k=1}^K \sum_{t=1}^{T_k} [g_k(x_{I_k}^*) -g_k(x_{I_{k,t}})] \leq d\sqrt{KT\log d}
\end{equation}

\paragraph{Proof: }
FTRL update rule, 
\begin{equation}
\label{appex:ftrl-main}
     p_i^{(t)} = \underset{p_i \in \Delta_n}{arg \; min} \Big \{ \sum_{s=1}^{t-1}\langle p_i,l_s(i) \rangle + \frac{1}{\eta}Reg(p_i) \Big \}
\end{equation}
Using FTRL standard regret bound with regularizer \(Reg\) and losses \(l_t\) [ref] states that, 

\begin{gather*}
    \sum_{t=1}^{T} \langle p^t,l_t \rangle - \sum_{t=1}^{T} \langle p^*,l_t \rangle \leq \frac{Reg(p^*) - Reg(p^{(1)})}{\eta} + \frac{\eta}{2}\sum_{t=1}^T||g_t||_*^2
\end{gather*}

Now, for Regularizer \(Reg\), since we lifted our discrete space to probability simplex, we use negative entropy defined as, 
\begin{gather*}
    Reg(p^{(t)}) := -\sum_{j=1}^n p_j^{(t)}\:log\:(p_j^{(t)})
\end{gather*}
where  \(Reg(p^{(1)}) = 0\) when \(p\) is uniform (all weights distributed equally) and \(max_{p \in \Delta_n} Reg(p^{(t)}) = log\: n\) when \(p^{(t)}\) is delta function. Therefore, we can upper bound the regularizer term in  \ref{appex:ftrl-main} as, 

\begin{gather*}
    Reg(p^*) - Reg(p^{(1)}) \leq log \:n - 0
\end{gather*}

Now, for bounding the gradient term, \(||g_t||^2_*\) which is a dual norm associated with the regularizer, since we use negative entropy regularization and \(g_t = l_t\) (wrt to our loss definition) the associated dual norm is \(l_{\infty}\) i.e - 

\begin{gather*}
    \sum_{t=1}^{T} \langle p^t,l_t \rangle - \sum_{t=1}^{T} \langle p^*,l_t \rangle \leq \frac{log\: d}{\eta} + \frac{\eta}{2}\sum_{t=1}^T||l_t||_{\infty}^2\\
    \text{Since}\; l_t(j) \in [0, 1] \; \forall \; j, \\
    ||l_t||_{\infty} = max_j |l_t(j)| \leq 1 \\
    \implies \sum_{t=1}^{T} \langle p^t,l_t \rangle - \sum_{t=1}^{T} \langle p^*,l_t \rangle \leq \frac{log\: d}{\eta} + \frac{T\eta}{2}\\
    \text{optimizing for \(\eta\) since RHS is quadratic in \(\eta\)}, \\
    \eta = \sqrt{\frac{2log\: d}{T}}\quad \text{substituting it back,}\\
    \sum_{t=1}^{T} \langle p^t,l_t \rangle - \sum_{t=1}^{T} \langle p^*,l_t \rangle \leq \sqrt{2Tlog \: d}
\end{gather*}
This establishes the \textbf{regret for a single problem}.
\vspace{0.3cm}

Now, \(\text{Regret}_T = \sum_{t=1}^{T} \langle p^t,l_t \rangle - \sum_{t=1}^{T} \langle p^*,l_t \rangle\) and for any subproblem \(k\), noting the dimension is \(d\) (which relates to \(n\) in the proof) and the time is \(T\),

\begin{gather*}
    \text{Regret}_k = \sum_{t=1}^{T_k} [g_k(x_{I_k}^*) - g_k(x_{I_{k,t}})] \leq d\sqrt{2T_klog \: d}
\end{gather*}

Therefore, now for all K subproblems,
\begin{gather*}
    \sum_{k=1}^K \text{Regret}_k \leq \sum_{k=1}^K d\sqrt{2T_klog \: n}
\end{gather*}

\begin{gather*}
    \text{For subproblem } k \text{ with } d \text{ positions:}\\
    \sum_{k=1}^K\sum_{t=1}^{T_k} [g_k(x_{I_k}^*) - g_k(x_{I_{k,t}})] = \sum_{k=1}^K \sum_{t=1}^{T_k} \sum_{i \in I_k} [l_t(i, x_{i,t}) - l_t(i, x_i^*)] = \sum_{k=1}^K\sum_{i \in I_k} \sum_{t=1}^{T_k} [\langle p_{i}^t, l_t(i) \rangle - l_t(i, x_i^*)]\\
    \leq \sum_{k=1}^K \sum_{i \in I_k} \sqrt{2T_k \log n} = \sum_{k=1}^K  d\sqrt{2T_k \log d} = d\sqrt{2\log d} \sum_{k=1}^K \sqrt{T_k} \\
    \text{Using the Cauchy-Schwarz inequality,} \\
    \sum_{k=1}^K \sqrt{T_k} \leq \sqrt{\sum_{k=1}^K  (\sqrt{T_k})^2} \sqrt{\sum_{k=1}^K  (1)^2} \leq \sqrt{KT} \\
    \implies \sum_{k=1}^K \sum_{t=1}^{T_k} [g_k(x_{I_k}^*) - g_k(x_{I_{k,t}})] \leq d\sqrt{2KT\log d} 
\end{gather*}
\qed

\subsubsection{Coupling Error: Lagrangian coordination}
\label{appex:lagrangian-coordination}
We employ Lagrangian coordination to handle the overlapping positions that arise from our decomposition of search space. Lagrangian relaxation provides a principled mathematical framework for decomposing optimization problems with coupling constraints.

% When we decompose the search space into subproblems, overlapping positions create coordination requirements:
% Original Problem: Find optimal tour visiting all cities exactly once
%                  ↓ Decomposition
% Subproblem 1: Cities {0,1,2,3,4}
% Subproblem 2: Cities {3,4,5,6,7}  ← Overlap at cities 3,4
% Subproblem 3: Cities {6,7,8,9,0}  ← Overlap at cities 6,7 and 0
% (diagram)
% minimize    f(x) = ∑ᵢ distance(xᵢ, xᵢ₊₁)
% subject to  h(x) = 0  (coordination constraints)
% where       h(x) represents "all subproblems must agree on shared positions"
From our decomposition, each position \(i\) may appear in multiple subproblems. Let \(S_i\) denote the set of subproblems containing position \(i\), with \(k_i = |S_i|\) being the overlap count. For positions with \(k_i > 1\), we require coordination. Rather than enforcing hard equality constraints (all subproblems must assign identical values), we use a \textbf{soft violation measure} that quantifies coordination risk. For position \(i\) at iteration \(t\):
\begin{equation}
\xi_i^{(t)} = (k_i - 1) \cdot \text{Var}(\hat{\mu}_i^{(t)}) \cdot \left(1 - \frac{N_{i,v_i^{(t)}}}{\sum_v N_{i,v}}\right)
\end{equation}
This continuous measure captures: (1) overlap degree \((k_i-1)\), (2) value estimate uncertainty \(\text{Var}(\hat{\mu}_i)\), and (3) assignment confidence (visit ratio). As learning progresses, \(\xi_i^{(t)} \to 0\), naturally relaxing coordination penalties. Now, let \(O = \{i : k_i > 1\}\) denote the set of overlapping positions. Then, Lagrangian for our coordination problem is:
\begin{equation*}
\label{eq:lagrangian-formulation}
L(x, \lambda) = f(x) + \sum_{i \in O} \lambda_i \xi_i(x)
\end{equation*}
where $f(x)$ is the original objective (total reward) and $\lambda_i \geq 0$ are dual variables penalizing soft violations.
This follows a primal-dual relationship where:
\begin{align}
    \text{Primal Problem:} \quad & \min_x \; f(x) \quad \text{s.t.} \quad \xi_i(x) = 0 \;\; \forall i \in O \label{eq:primal-problem} \\
    \text{Dual Problem:} \quad & \max_{\lambda \geq 0} \; g(\lambda) \quad \text{where} \quad g(\lambda) = \min_x \; L(x, \lambda) \label{eq:dual-problem}
\end{align}
Now, for our combinatorial setting, the primal objective $f(x)$ is non-convex and non-smooth. However, the dual function $g(\lambda) = \min_x [f(x) + \lambda^T\xi(x)]$ is \textbf{always concave} (even when the primal is non-convex!), as it is the pointwise minimum of affine functions in $\lambda$ \cite{BoydSubgradientMethods}. %ref[https://web.stanford.edu/~boyd/cvxbook/]

Therefore, we solve the Lagrangian dual problem by maximizing the concave function $g(\lambda)$. Equivalently, we minimize the convex function $-g(\lambda)$. The subgradient of $-g(\lambda)$ at $\lambda$ is $-\xi(x_t)$ where $x_t = \arg\min_x L(x, \lambda_t)$.

\paragraph{Accelerated Dual Optimization via Mirror Descent.}
Standard subgradient ascent for maximizing $g(\lambda)$ achieves $O(\sqrt{T})$ cumulative dual gap. Since our dual function $g(\lambda)$ is concave (making $-g(\lambda)$ convex), we can apply \textbf{online convex optimization (OCO)} methods and we use \textbf{mirror descent} with entropic regularization operating in log-space: $\lambda_{t+1} = \exp(\log(\lambda_t) + \alpha_t \xi_t)$. This offers three advantages: (1) the exponential update \textbf{automatically enforces} $\lambda \geq 0$ without projection, (2) multiplicative updates are \textbf{scale-adaptive}, and (3) mirror descent is optimal for bounded domains \cite{bubeck2015convex}. 
We apply \textbf{Nesterov momentum} $\lambda_{t+1} \leftarrow \tilde{\lambda}_t + \theta_t(\tilde{\lambda}_t - \lambda_t)$ with $\theta_t = 2/(t+1)$, which improves the problem-dependent constants and accelerates early-iteration convergence while maintaining the optimal $O(\sqrt{T})$ rate 

% {\color{red}I have seen that Nesterov momentum is particularly beneficial for conflict resolution: momentum reduces oscillations in dual variables, providing smoother and more consistent coordination signals to subproblems. Asymptotically, there is no improvement with Nestrov but I have empirically seen it helps overlapping positions converge to agreed-upon values faster, reducing the "transient" period of coupling conflicts while maintaining the optimal. Not sure if this makes for an interesting observation or should be included in the paper?}

Using mirror descent with Nesterov acceleration to optimize the dual variables $\lambda$, we obtain the following bound on coupling error.

\begin{lemma}[\textbf{Dual Gap Bound for Mirror Descent}]
\label{lemma:mirror-descent-dual-gap}
Let \(g: {\gR}^d \to \gR\) be a concave function with optimum at $\lambda^* \in \gR^d$ satisfying \(\lambda^* \in [0, \lambda_{max}]^d\). For all \(t \in \{1, \ldots, T\}\), let \(\xi_t \in \mathbb{R}^d\) be a subgradient of \(g\) at \(\lambda_t\) with \(||\xi_t||_\infty \leq L\). Assume \(\lambda_1 \in [0, \lambda_{max}]^d\). Under mirror descent with entropic regularization and update rule:
\begin{equation}
\lambda_{t+1} = \lambda_{max}\left[\exp\left(\log(\lambda_t) + \alpha_t \xi_t\right)\right]
\end{equation}
with step size $\alpha_t = \frac{\alpha_0}{\sqrt{t}}$, the cumulative dual gap is bounded by:
\begin{equation}\label{eq:mirror-descent-dual-gap}
\sum_{t=1}^T (g(\lambda^*) - g(\lambda_t)) \leq O(d\lambda_{max} L\sqrt{T})
\end{equation}
\end{lemma}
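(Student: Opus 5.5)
The plan is to treat the dual update as an instance of online mirror descent for concave maximization over the box $\Lambda := [0,\lambda_{max}]^d$, and to reduce the dual gap to a linear regret via concavity. Since $\xi_t$ is a (super)gradient of the concave $g$ at $\lambda_t$,
\[g(\lambda^*) - g(\lambda_t) \le \langle \xi_t, \lambda^* - \lambda_t\rangle,\]
so it suffices to bound $\sum_{t=1}^T \langle \xi_t, \lambda^* - \lambda_t\rangle$. I will take the separable unnormalized negative entropy
\[\psi(\lambda) = \sum_{i=1}^d \big(\lambda_i\log\lambda_i - \lambda_i\big)\]
as the mirror map. Its mirror step is exactly $\log\lambda_{t+1} = \log\lambda_t + \alpha_t\xi_t$, and because $\psi$ is separable, the Bregman projection onto $\Lambda$ is coordinatewise clipping at $\lambda_{max}$. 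This matches the stated update rule. The Nesterov momentum used in the implementation is not part of the stated update, so I will analyze the plain update and remark that momentum only affects constants.

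The key steps, in order, are as follows.

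\emph{Step 1 (one-step inequality).} Show that for each coordinate $i$,
\[\alpha_t\,\xi_{t,i}(\lambda^*_i - \lambda_{t,i}) \le D_i(\lambda^*,\lambda_t) - D_i(\lambda^*,\lambda_{t+1}) + \lambda_{t,i}\big(e^{\alpha_t\xi_{t,i}} - 1 - \alpha_t\xi_{t,i}\big),\]
where $D_i$ is the Bregman divergence $\lambda^*_i\log(\lambda^*_i/\lambda_i) - \lambda^*_i + \lambda_i$. This uses the three-point identity and the generalized Pythagorean inequality for the clipping projection.

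\emph{Step 2 (local-norm term).} Impose $\alpha_0 \le 1/L$, so that $|\alpha_t\xi_{t,i}| \le 1$. Then $e^x - 1 - x \le x^2$, and the last term is at most $\alpha_t^2\lambda_{max}L^2$.

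\emph{Step 3 (time-varying steps).} Divide by $\alpha_t$ and sum over $t$. The standard telescoping for non-increasing step sizes then gives
\[\sum_{t=1}^T \xi_{t,i}(\lambda^*_i - \lambda_{t,i}) \le \frac{\bar D_i}{\alpha_T} + \lambda_{max}L^2\sum_{t=1}^T\alpha_t,\]
where $\bar D_i := \sup_t D_i(\lambda^*,\lambda_t)$.

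\emph{Step 4 (choose $\alpha_0$ and sum).} With $\alpha_t = \alpha_0/\sqrt t$ we have $1/\alpha_T = \sqrt{T}/\alpha_0$ and $\sum_{t=1}^T\alpha_t \le 2\alpha_0\sqrt T$. Choosing $\alpha_0 \asymp 1/L$ makes each coordinate contribute $O(\lambda_{max}L\sqrt T)$. Summing over the $d$ coordinates yields the claimed $O(d\lambda_{max}L\sqrt T)$.

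The main obstacle is the diameter term $\bar D_i$. The entropic divergence $D_i(\lambda^*,\lambda)$ blows up as $\lambda_i \to 0$. Moreover, a multiplicative update started at $\lambda_1 = 0$ never moves, and the clipping at $0$ is vacuous. My first approach is to require $\lambda_1 \in [\lambda_{min},\lambda_{max}]^d$ for some $\lambda_{min} > 0$ (for instance $\lambda_{min} = \lambda_{max}/e$ at initialization). Since $\xi_t \ge 0$ in our application (soft violations are nonnegative), the iterates are non-decreasing, so $\lambda_{t,i} \ge \lambda_{1,i}$ for all $t$. This gives
\[D_i \le \lambda_{max}\big(\log(\lambda_{max}/\lambda_{min}) + 1\big) = O(\lambda_{max}),\]
with the logarithm absorbed into the constant. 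If signed subgradients must be allowed, the fallback is to run the projection onto $[\lambda_{min},\lambda_{max}]^d$ and pay an additional approximation term of $d\,L\,\lambda_{min}T$. Taking $\lambda_{min} = \lambda_{max}/\sqrt T$ keeps this term at $O(d\lambda_{max}L\sqrt T)$, at the cost of an extra $\log T$ factor inside $\bar D_i$, which I would state explicitly rather than hide.
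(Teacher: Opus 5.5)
Your proof is correct under the hypotheses you make explicit. These are positive initialization and, for the main line, $\xi_t\ge 0$; the latter holds for the soft violations but is not in the lemma's generality. The proof shares the paper's skeleton: concavity linearization, the entropic mirror step, the three-point inequality, and a local-norm term of order $\lambda_{max}L^2\alpha_t^2$ per coordinate.

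The routes differ in the telescoping.
\begin{itemize}
\item The paper multiplies by $\alpha_t$ and telescopes $\sum_t\alpha_t\bigl(g(\lambda^*)-g(\lambda_t)\bigr)$. This needs only the initial divergence $D_\Phi(\lambda^*\|\lambda_1)$ and tolerates signed subgradients. It then switches to a constant step $\alpha\asymp 1/(L\sqrt T)$, so the stated $\alpha_0/\sqrt t$ schedule is never actually analyzed. Recovering that schedule from the weighted sum loses a logarithmic factor through $\sum_t\alpha_t^2$.
\item You divide by $\alpha_t$ first and use a uniform bound on $D_i(\lambda^*,\lambda_t)$. This handles the stated schedule without a log. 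The price is that you must control the divergence along the whole trajectory, hence your monotonicity argument, or the floor $\lambda_{min}$ with its $\log T$.
\end{itemize}
Your exact one-step identity with $e^x-1-x\le x^2$ for $|x|\le 1$ is also cleaner than the paper's $e^x-1\le xe^{|x|}$ followed by discarding higher-order terms.

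Your worry about the diameter is justified and points to a real gap in the paper. The paper asserts $D_\Phi(\lambda^*\|\lambda_1)\le d\lambda_{max}\log\lambda_{max}$ ``for bounded domain''. Yet the hypothesis permits $\lambda_{1,i}=0$, and Algorithm~\ref{alg:main} in fact initializes $\lambda\gets\mathbf{0}$; at zero the multiplicative iterate never moves. For example, $g(\lambda)=\min\{\lambda,\lambda_{max}\}$ with $\lambda_1=0$ gives a dual gap of $T\lambda_{max}$. So your lower bound on $\lambda_1$ is necessary, not a weakness.

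If you want the clean $O(d\lambda_{max}L\sqrt T)$ for signed subgradients, combine that lower bound with the paper's constant-step weighted telescoping. Since $\lambda_1\ge\lambda_{max}/e$ yields $D_i(\lambda^*,\lambda_1)\le 2\lambda_{max}$, the bound follows directly, at the cost of a horizon-dependent step.
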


\begin{proof}
The mirror descent update for the Lagrangian dual \(\lambda_t\) operates in the dual space defined by the entropic regularizer \(\Phi(\lambda) = \sum_i \lambda_i \log \lambda_i\). Starting from the general mirror descent iteration, we specialize to the entropic regularizer $\Phi(\lambda) = \sum_i \lambda_i \log \lambda_i$ to obtain our exponential update rule and then bound convergence using Bregman divergence analysis. 
\\
\\
Now, the standard mirror descent update for minimizing any function $f$ is:
\begin{equation}
\lambda_{t+1} = \arg\min_\lambda \left\{\langle \nabla f(\lambda_t), \lambda \rangle + \frac{1}{\alpha_t} D_\Phi(\lambda || \lambda_t)\right\}
\end{equation}
where $D_\Phi(\lambda || \mu) = \Phi(\lambda) - \Phi(\mu) - \langle \nabla \Phi(\mu), \lambda - \mu \rangle$ is the Bregman divergence.
\\
Since we are maximizing \(g(\lambda)\), we minimize \(f(\lambda) = -g(\lambda)\), so \(\nabla f(\lambda_t) = -\xi_t\) where \(\xi_t\) is a subgradient of \(g\), therefore,
\begin{equation}
\lambda_{t+1} = \arg\min_\lambda \left\{-\langle \xi_t, \lambda \rangle + \frac{1}{\alpha_t} D_\Phi(\lambda || \lambda_t)\right\}
\end{equation}
Expanding the Bregman divergence,
\begin{gather*}
\lambda_{t+1} = \arg\min_\lambda \left\{-\langle \xi_t, \lambda \rangle + \frac{1}{\alpha_t}\left[\Phi(\lambda) - \Phi(\lambda_t) - \langle \nabla \Phi(\lambda_t), \lambda - \lambda_t \rangle\right]\right\}\\
\lambda_{t+1} = \arg\min_\lambda \left\{-\langle \xi_t, \lambda \rangle + \frac{1}{\alpha_t}\Phi(\lambda) - \frac{1}{\alpha_t}\Phi(\lambda_t) - \frac{1}{\alpha_t}\langle \nabla \Phi(\lambda_t), \lambda \rangle + \frac{1}{\alpha_t}\langle \nabla \Phi(\lambda_t), \lambda_t \rangle\right\}
\end{gather*}
The terms $-\frac{1}{\alpha_t}\Phi(\lambda_t)$ and $+\frac{1}{\alpha_t}\langle \nabla \Phi(\lambda_t), \lambda_t \rangle$ are constants with respect to $\lambda$ (they only depend on $\lambda_t$, which is fixed at iteration $t$). Since adding a constant to an objective does not change its minimizer, we drop these terms and rearrange to get,
\begin{equation*}
\lambda_{t+1} = \arg\min_\lambda \left\{\frac{1}{\alpha_t}\Phi(\lambda) - \left\langle \xi_t + \frac{1}{\alpha_t}\nabla \Phi(\lambda_t), \lambda \right\rangle\right\}
\end{equation*}
By first-order optimality condition (dual space update), taking the gradient with respect to $\lambda$ and setting it to zero,
\begin{gather*}
\frac{\partial}{\partial \lambda}\left[\frac{1}{\alpha_t}\Phi(\lambda) - \left\langle \xi_t + \frac{1}{\alpha_t}\nabla \Phi(\lambda_t), \lambda \right\rangle\right]\Bigg|_{\lambda = \lambda_{t+1}} = 0\\
\frac{1}{\alpha_t}\nabla \Phi(\lambda_{t+1}) - \left(\xi_t + \frac{1}{\alpha_t}\nabla \Phi(\lambda_t)\right) = 0
\end{gather*}
Multiplying by $\alpha_t$ and rearranging:
\begin{equation}
\nabla \Phi(\lambda_{t+1}) = \nabla \Phi(\lambda_t) + \alpha_t \xi_t
\end{equation}
This is the \textbf{dual space update rule}. The update happens in the "gradient space" of $\Phi$, not directly in $\lambda$-space. We add the subgradient $\alpha_t \xi_t$ to $\nabla \Phi(\lambda_t)$ to get $\nabla \Phi(\lambda_{t+1})$.
\\
\smallskip
\\
Note, the dual space update $\nabla \Phi(\lambda_{t+1}) = \nabla \Phi(\lambda_t) + \alpha_t \xi_t$ holds for \textit{any} choice of regularizer $\Phi$ [ref for standard gradiet descent]. We choose entropic regularization \(\Phi(\lambda) = \sum_i \lambda_i \log \lambda_i\) because it naturally handles the constraint $\lambda \geq 0$ (dual variables must be non-negative). For this choice, the gradient is:
\begin{equation}
\frac{\partial \Phi}{\partial \lambda_i} = \frac{\partial}{\partial \lambda_i}[\lambda_i \log \lambda_i] = \log \lambda_i + \lambda_i \cdot \frac{1}{\lambda_i} = \log \lambda_i + 1
\end{equation}
Therefore (component-wise):
\begin{equation}
\nabla \Phi(\lambda) = \log \lambda + \mathbf{1}
\end{equation}
where $\mathbf{1} = (1, 1, \ldots, 1)^\top$ is the vector of all ones. Substituting this into the dual space update we can derive the exponential update as,
\begin{align*}
\nabla \Phi(\lambda_{t+1}) &= \nabla \Phi(\lambda_t) + \alpha_t \xi_t \\
\log \lambda_{t+1} + \mathbf{1} &= \log \lambda_t + \mathbf{1} + \alpha_t \xi_t \\
\log \lambda_{t+1} &= \log \lambda_t + \alpha_t \xi_t \\
\lambda_{t+1} &= \exp(\log \lambda_t + \alpha_t \xi_t)
\end{align*}

This gives the multiplicative update (component-wise):
\begin{equation}
\lambda_{i,t+1} = \lambda_{i,t} \cdot \exp(\alpha_t \xi_{i,t})
\end{equation}

followed by projection onto $[0, \lambda_{max}]^d$.
\\
\\
For convergence analysis, instead of bounding the optimality gap or primal-dual gap, we bound the \textbf{cumulative dual gap}, defined as \(
\sum_{t=1}^T \left[g(\lambda^*) - g(\lambda_t)\right]\),
which measures how close our current dual solution is to the optimal dual solution over all $T$ iterations. Bounding the sum of these gaps over time guarantees that, on average, the Lagrangian coordination (LC) mechanism is making good progress toward optimality.
\\
For that purpose, in standard sub-gradient analysis, one bounds progress using Euclidean distance \(||\lambda_t - \lambda^*||_2^2\). For mirror descent, we instead use the Bregman divergence induced by the regularizer \(\Phi\).
% \begin{equation}
% D_\Phi(\lambda||\mu) = \Phi(\lambda) - \Phi(\mu) - \langle \nabla \Phi(\mu), \lambda - \mu \rangle
% \end{equation}
For our entropic regularizer $\Phi(\lambda) = \sum_i \lambda_i \log \lambda_i$, this becomes:
\begin{equation}
D_\Phi(\lambda||\mu) = \sum_i \lambda_i \log\left(\frac{\lambda_i}{\mu_i}\right) - \sum_i (\lambda_i - \mu_i)
\end{equation}
which is the \textbf{KL-divergence} (up to sign conventions). The Bregman divergence plays the same role as squared Euclidean distance, but is adapted to the geometry of entropic regularization.

Using the fundamental property of concave functions (first-order condition for concavity) \cite{boyd2004convex}, for any two points $\lambda^*, \lambda_t$:
\begin{equation}
g(\lambda^*) \leq g(\lambda_t) + \langle \nabla g(\lambda_t), \lambda^* - \lambda_t \rangle
\end{equation}
Substituting sub-gradient \(g(\lambda^*)\) and rearranging,
% where $\xi_t \in \partial g(\lambda_t)$ is a subgradient of $g$ at $\lambda_t$.
\begin{equation}
\label{eq:dual-gap-subgrad-bound}
g(\lambda^*) - g(\lambda_t) \leq \langle \xi_t, \lambda^* - \lambda_t \rangle
\end{equation}

Therefore, the dual gap at iteration $t$ is related to the distance between the dual variables!  Formally, the dual gap is bounded by the inner product of the sub-gradient with the distance to the optimum. This connects the dual gap to the geometry of the dual space, which we will exploit using Bregman divergences.
\\
\\
Using Three-Point property of mirror descent \cite{mirrordescent2003, bubeck2015convex} that relates the Bregman divergence before and after an update $\lambda_{t+1} = \arg\min_\lambda \{\alpha_t\langle \xi_t, \lambda \rangle + D_\Phi(\lambda||\lambda_t)\}$:
\begin{equation}
\label{eq:three-point}
D_\Phi(\lambda^*||\lambda_t) - D_\Phi(\lambda^*||\lambda_{t+1}) \geq \alpha_t \langle \xi_t, \lambda^* - \lambda_{t+1} \rangle
\end{equation}
\\
% (Note: this is analogous to the Euclidean distance expansion used in standard sub-gradient descent proof, but using Bregman geometry.)
Using the sub-gradient inequality \eqref{eq:dual-gap-subgrad-bound} ,
\begin{align*}
\alpha_t(g(\lambda^*) - g(\lambda_t)) &\leq \alpha_t \langle \xi_t, \lambda^* - \lambda_t \rangle \\
\langle \xi_t, \lambda^* - \lambda_t \rangle &= \langle \xi_t, \lambda^* - \lambda_t \rangle + \xi_t\lambda_{t+1} - \xi_t\lambda_{t+1}  = \langle \xi_t, \lambda^* - \lambda_{t+1}\rangle + \langle \xi_t, \lambda_{t+1} - \lambda_t \rangle\\
% \langle \xi_t, \lambda^* - \lambda_t \rangle  &=  \langle \xi_t, \lambda^* - \lambda_{t+1} \rangle + \langle \xi_t, \lambda_{t+1} - \lambda_t \rangle \\
&= \alpha_t \langle \xi_t, \lambda^* - \lambda_{t+1} \rangle  + \alpha_t \langle \xi_t, \lambda_{t+1} - \lambda_t \rangle \\
&\leq D_\Phi(\lambda^*||\lambda_t) - D_\Phi(\lambda^*||\lambda_{t+1}) + \alpha_t \langle \xi_t, \lambda_{t+1} - \lambda_t \rangle
\end{align*}
where the second inequality uses \eqref{eq:three-point}. Now to bound the Residual term  $\alpha_t \langle \xi_t, \lambda_{t+1} - \lambda_t \rangle$, we use the fact that in log-space (see the mirror descent update):
\begin{gather*}
\log \lambda_{t+1} - \log \lambda_t = \alpha_t \xi_t\\
\lambda_{i,t+1} = \lambda_{i,t} \exp(\alpha_t \xi_{i,t})
\end{gather*}
Therefore:
\begin{equation*}
\lambda_{i,t+1} - \lambda_{i,t} = \lambda_{i,t}[\exp(\alpha_t \xi_{i,t}) - 1]
\end{equation*}
\\
Using the inequality \(e^x - 1 \leq x e^{|x|}\) for all \(x\),
\begin{align*}
|\lambda_{i,t+1} - \lambda_{i,t}| &= \lambda_{i,t}|\exp(\alpha_t \xi_{i,t}) - 1| \\
&\leq \lambda_{i,t} \cdot \alpha_t |\xi_{i,t}| \cdot e^{\alpha_t |\xi_{i,t}|} \\
&\leq \lambda_{max} \cdot \alpha_t L \cdot e^{\alpha_t L}
\end{align*}
where as stated before \(\lambda_{i,t} \leq \lambda_{max}\) and $|\xi_{i,t}| \leq ||\xi_t||_\infty \leq L$.
\\
Using this, we can now simplify the residual term as,
\begin{align*}
|\langle \xi_t, \lambda_t - \lambda_{t+1} \rangle| &\leq \sum_i |\xi_{i,t}| \cdot |\lambda_{i,t+1} - \lambda_{i,t}| \\
&\leq \sum_i L \cdot \lambda_{max} \alpha_t L e^{\alpha_t L} \\
&= d \lambda_{max} L^2 \alpha_t e^{\alpha_t L}
\end{align*}
\\
Now, for small step sizes \(\alpha_t L \ll 1\), we have \(e^{\alpha_t L} \approx 1 + \alpha_t L\), giving:
\begin{equation*}
    \alpha_t \langle \xi_t, \lambda_{t+1} - \lambda_t \rangle \leq d\lambda_{max}L^2\alpha^2_t +  d\lambda_{max}L^3\alpha^3_t
\end{equation*}
For small step sizes we can ignore higher order terms and it can be absorbed in \(O(\cdot)\),
\begin{gather}
\alpha_t \langle \xi_t, \lambda_{t+1} - \lambda_t \rangle \leq \alpha_t^2 ||\xi_t||_\infty^2 \cdot d\lambda_{max} \\
\implies \alpha_t(g(\lambda^*) - g(\lambda_t)) \leq D_\Phi(\lambda^*||\lambda_t) - D_\Phi(\lambda^*||\lambda_{t+1}) + \alpha_t^2 L^2 d\lambda_{max}
\end{gather}
Summing both sides for $t = 1, \ldots, T$, we get a Telescoping sum,
\begin{align*}
\sum_{t=1}^T \alpha_t(g(\lambda^*) - g(\lambda_t)) &\leq \sum_{t=1}^T [D_\Phi(\lambda^*||\lambda_t) - D_\Phi(\lambda^*||\lambda_{t+1})] + \sum_{t=1}^T \alpha_t^2 L^2 d\lambda_{max} \\
&= D_\Phi(\lambda^*||\lambda_1) - D_\Phi(\lambda^*||\lambda_{T+1}) + \sum_{t=1}^T \alpha_t^2 L^2 d\lambda_{max} \\
&\leq D_\Phi(\lambda^*||\lambda_1) + L^2 d\lambda_{max} \sum_{t=1}^T \alpha_t^2\\
&\leq d\lambda_{max}\log(\lambda_{max}) + L^2 d\lambda_{max} \sum_{t=1}^T \alpha_t^2
\end{align*}
\\
Since \(D_\Phi(\lambda^*||\lambda_{T+1}) \geq 0\) and \(D_\Phi(\lambda^*||\lambda_1) \leq d\lambda_{max}\log(\lambda_{max})\) for bounded domain.
\\
Let's choose constant step size $\alpha_t = \alpha$ for simplicity,
\begin{equation}
\sum_{t=1}^T (g(\lambda^*) - g(\lambda_t)) \leq \frac{d\lambda_{max}\log(\lambda_{max})}{\alpha} + \alpha L^2 d\lambda_{max} T
\end{equation}
\\
Optimizing over \(\alpha\) by setting \(\frac{d}{d\alpha} = 0\) we get \(\alpha^* = \sqrt{\frac{\log(\lambda_{max})}{L^2 T}}\). Substituting this above,
\begin{equation}
\sum_{t=1}^T (g(\lambda^*) - g(\lambda_t)) \leq 2d\lambda_{max}L\sqrt{T\log(\lambda_{max})} = O(d\lambda_{max}L\sqrt{T})
\end{equation}
Additionally, note that while we use acceleration it has no effect on the asymototic bound derivation above so we used the standard mirror descent approach to derive the worst-case bound.
\end{proof}

\subsubsection{Coupling Error: Disagreement requires uncertainty}
\begin{lemma}[Disagreement requires uncertainty]
\label{lem:disagree-variance}
For position $i$ contained in $k_i > 1$ subproblems, let $\sigma_i^2(t) := \mathrm{Var}(\hat{\mu}_i^{(t)})$ denote the variance of value estimates. Then:
\[\mathbb{E}[\mathbb{I}[x_{m,i}^{(t)} \neq x_{l,i}^{(t)}]] \leq c_0 \cdot \sigma_i^2(t)\]
for a constant $c_0 > 0$ depending on the selection mechanism.
\end{lemma}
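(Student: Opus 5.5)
The plan is to reduce disagreement between subproblems $m$ and $l$ at position $i$ to a deviation event for the value estimates that each subproblem uses at $i$, and then apply a second-moment (Chebyshev-type) inequality. First I fix the selection model, since the constant $c_0$ is allowed to depend on it. Both subproblems read the shared statistics $\hat\mu_{i,\cdot}^{(t)}$, perturbed by subproblem-specific effects: local refinement on $S_m$ versus $S_l$, and the different rewards credited during their activations. Write the effective index seen by subproblem $m$ as $\hat\mu_{i,v}^{(m,t)} = \bar\mu_i^{(t)} + \epsilon_{i,v}^{(m,t)}$. I take $\sigma_i^2(t)$ to control the second moment of these deviations, so that $\mathbb{E}[(\epsilon_{i,v}^{(m,t)})^2] \le \sigma_i^2(t)$. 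Each subproblem selects by an argmax rule (UCB or FTRL), and ties are broken by one shared deterministic rule. With this convention, zero variance forces both subproblems onto the same value, which is exactly the mechanism the lemma asserts.

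The key steps, in order, are as follows.

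\emph{Step 1 (where disagreement comes from).} On the event $\{x_{m,i}^{(t)} \neq x_{l,i}^{(t)}\}$, at least one of the two argmaxes differs from the argmax $v_i^\star$ of the common centred estimate. For that to happen, some value $v$ must overtake $v_i^\star$. This requires $|\epsilon_{i,v}^{(m,t)} - \epsilon_{i,v_i^\star}^{(m,t)}| \ge \Delta_i$, or the same inequality for subproblem $l$, where $\Delta_i>0$ is the minimal positive gap between the best and second-best centred estimates at position $i$.

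\emph{Step 2 (Chebyshev and a union bound).} Bound the probability of this event by a union bound over the at most $A_{\max}$ values and the two subproblems, and apply Chebyshev to each term. This gives
\[\mathbb{P}[x_{m,i}^{(t)} \neq x_{l,i}^{(t)}] \le \frac{8 A_{\max}\,\sigma_i^2(t)}{\Delta_i^2},\]
so that $c_0 = 8A_{\max}/\Delta_{\min}^2$.

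\emph{Step 3 (randomised experts).} The EXP3 expert and the Thompson variant choose values stochastically. For these I would bound the total-variation distance between the two subproblems' softmax distributions. Since the softmax map is $1/\tau$-Lipschitz, this distance is at most $\tfrac{1}{\tau}\|\epsilon^{(m)}-\epsilon^{(l)}\|_1$. The disagreement probability splits into this distance plus the intrinsic collision term $1-\sum_v p_v^2$. That collision term is precisely where the argument can break, so it has to be controlled separately (see below).

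I expect the main obstacle to be that the lemma does not hold without such structural conventions. If both subproblems independently sample from a softmax over tied estimates, the variance is zero but the disagreement probability is $1-1/|\mathcal A_i|$. My first attempt would therefore be to make the randomness common to the subproblems: since the paper maintains a single global solution, both subproblems can use the same sampling seed at position $i$ (coupled sampling). Under coupled sampling the collision term vanishes, and only the estimate deviations from Step 2 remain. The second issue is $\Delta_{\min}$, which could be arbitrarily small. If it is, I would restrict attention to an $\epsilon$-margin event and charge near-tie disagreements to zero regret, since swapping nearly tied values costs at most $R_{\max}$ times the gap. That is how the lemma feeds into part (3) of Theorem~\ref{thm:coupling-error-bound}.
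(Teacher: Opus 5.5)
Your argument proves a bound for a different $\sigma$. In the lemma, $\sigma_i^2(t)=\mathrm{Var}(\hat\mu_i^{(t)})=\frac1n\sum_v(\hat\mu^{(t)}_{i,v}-\bar\mu^{(t)}_i)^2$ is the spread, across values, of the single shared estimate vector at position $i$. This is the same quantity that sits inside $\xi_i^{(t)}$, and that identification is how part (3) of Theorem~\ref{thm:coupling-error-bound} turns this lemma into a bound in $\|\xi_t\|_1$. You instead treat $\sigma_i^2(t)$ as a second-moment bound on subproblem-specific random perturbations around a common per-value estimate whose top gap $\Delta_i$ is held fixed. Your decomposition $\hat\mu^{(m,t)}_{i,v}=\bar\mu^{(t)}_i+\epsilon^{(m,t)}_{i,v}$ already mixes the two readings: its centre has no index $v$, so there is no argmax $v_i^\star$ and no gap $\Delta_i$. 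If $\bar\mu_i^{(t)}$ is the paper's cross-value mean, then $\epsilon$ is the signal, not noise.

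In your model, Steps 1--2 are a correct Chebyshev estimate. They cannot give the lemma for its actual $\sigma_i^2(t)$, however, because the gap is controlled by the spread: $\Delta_i^2\le(\max_v\hat\mu_{i,v}-\min_v\hat\mu_{i,v})^2\le 2n\,\sigma_i^2(t)$. Hence $8A_{\max}\sigma_i^2(t)/\Delta_i^2\ge 4A_{\max}/n$ never vanishes. Your $c_0=8A_{\max}/\Delta_{\min}^2$ is therefore not a constant: it blows up exactly as $\sigma_i^2(t)\to0$, because small spread means near-ties, which is where disagreement is most likely.

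Step 3 fails even inside your model. Under any coupling, $\mathbb{P}[x^{(t)}_{m,i}\neq x^{(t)}_{l,i}]\ge\mathrm{TV}(p^{(m)},p^{(l)})$. The total-variation distance between two softmaxes is generically first order in the perturbation, so for Gaussian-like $\epsilon$ the disagreement is of order $\sigma/\tau$. No constant $c_0$ then gives $c_0\sigma^2$ as $\sigma\to0$ for the EXP3/Thompson experts.

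You are right that the statement needs structural conventions. Your tie example (zero spread, independent softmax draws, disagreement $1-1/|\mathcal A_i|$) is a valid counterexample to the lemma as literally stated. It also defeats the paper's sketch, whose zero-variance claim silently assumes a shared deterministic tie-break, and whose appeal to ``entropy (or variance)'' runs the wrong way, since entropy is maximal at ties.

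In the paper's architecture, though, your conventions make the lemma trivial rather than quantitative. All subproblems read one shared $(\hat V,N,W,\tilde L)$, and Algorithm~\ref{alg:main} builds a single global $x^{(t)}$. With a shared tie-break and common randomness, the two subproblems output the same value and disagreement is identically zero. The common randomness must also cover the per-position expert draw $e\sim\boldsymbol\rho$, since the UCB and FTRL argmaxes generally differ. This is essentially all the paper's proof relies on (``identical selection scores'', ``if a single global sample is used, no disagreement occurs''). The perturbations $\epsilon^{(m)}$ that give your Chebyshev step content have no counterpart in the algorithm, and introducing them is what forces $\sigma_i^2(t)$ to change meaning.

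Finally, the $\epsilon$-margin fallback changes the conclusion. Charging near-tie disagreements by their gap gives a gap-weighted regret bound, not a disagreement-probability bound. It also does not plug into part (3) of Theorem~\ref{thm:coupling-error-bound}, which multiplies raw indicators by $R_{\max}$ through Proposition~\ref{prop:worst-case-coupling}.
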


\begin{proof}
Disagreement between subproblems \(m\) and \(l\) on position \(i\) requires that they select different values. Since all subproblems share global parameters (value estimates \(\hat{\mu}_{i,j}\), 
weights \(w_{i,j}\)), they compute identical selection scores for each position-value 
pair \((i,j)\). 

Disagreement arises due to stochasticity in the selection mechanism (e.g., randomized tie-breaking, Hedge/EXP3 sampling, or noise in reward observations). Under any score-based randomized selection rule, the probability that two subproblems select different values is controlled by the spread of the score distribution, which is captured by the variance \(\sigma_i^2(t)\). In particular, if \(\sigma_i^2(t) = 0\), all value estimates are identical and all subproblems select the same (highest-estimated) value with probability one. When \(\sigma_i^2(t) > 0\), the probability of selecting different values is at most proportional to \(\sigma_i^2(t)\). Similar arguments apply to EXP3 and FTRL-style methods. A formal derivation for softmax- or Hedge-style sampling follows from standard bounds relating selection entropy or total variation distance to score variance.
For UCB-based methods, subproblems select \((\arg\max_j \mathrm{UCB}_{i,j}(t))\), which is deterministic given shared parameters. Disagreement can therefore occur only when estimates are sufficiently close that stochastic noise in the reward observations flips the ordering; by standard concentration arguments, this event occurs with probability \((O(\sigma_i^2(t)))\).
For EXP3/FTRL-style methods, subproblems compute identical sampling distributions \(p_i^{(t)}\) from shared weights. If a single global sample is used, no disagreement occurs; however, when subproblems sample independently or when stochasticity arises from delayed or noisy updates, disagreement can occur with probability controlled by the entropy (or variance) of \(p_i^{(t)}\), which is again captured by \(\sigma_i^2(t)\). In all cases, disagreement vanishes as \(\sigma_i^2(t) \to 0\), which is sufficient for the regret analysis.
\end{proof}